\documentclass[11pt]{article}
\usepackage[left=2.54cm,top=2.54cm,right=2.54cm,bottom=2.54cm]{geometry}
\usepackage{longtable}

\input{macros/preamble.tex}

\usepackage[colorlinks=false,]{hyperref}
\usepackage{rotating}
\usepackage{xcolor}
\usepackage{enumitem}
\usepackage{mathrsfs}

\newcommand{\spn}{\operatorname{sp}}

\newcommand{\Pn}{\mathbb P_n}

\newcommand{\TV}{\operatorname{TV}}
\allowdisplaybreaks

\begin{document}
\title{\huge Optimal Value Inference for Reinforcement Learning}
\author{
    Nan Lu \quad
    Ethan Lee \quad
    James M. Robins \quad
    David Simchi-Levi \quad
    Junwei Lu 
}
\date{}
\maketitle

\begin{abstract}
We study offline inference for the optimal value in reinforcement learning under finite state and action spaces. Two new nuisances are derived as fixed points of a self-induced Bellman equation, in which we approximate the maximum Bellman operator by its softmax correspondence. We propose a debiased estimator through the Neyman orthogonality and establish its asymptotic normality under diverging horizons even when the behavior policy changes with time, as long as the nuisances have the statistical rates that can be achieved by many machine learning methods. We provide a concrete estimating procedure for these nuisances and show they can lead to valid inference. Synthetic experiments validate the numerical performance of our inference method, and we implement it in real-life decision-making problems, including bike repositioning and AI agentic tool use.
\end{abstract}

\paragraph{Keywords:}
    Reinforcement learning, optimal value inference, model evaluation, self-induced Bellman equation, infinite horizon, softmax smoothing.

\section{Introduction}\label{sec:intro}
Modern decision-making systems make decisions over increasingly long time horizons, generating interacting samples that can potentially inform future policy improvement. A language model agent, for example, interacts with users, tools, and environments through multistep sequences of observations and actions \citep{chang2024agentboard}.
Similar sequential decision problems arise in recommendation, online experimentation, dynamic pricing, and inventory management. A recommendation platform may want to quantify the engagement attainable over the long run under an optimized policy \citep{ie2019slateq,chen2019generative}. A pricing platform wants to assess the revenue achievable under an optimized pricing rule \citep{zhao2026dynamicpricing}, while a retailer wants to understand the long-run operating cost attainable under an optimized replenishment policy \citep{bai2026semiopenloop}. Across these settings, decision-making systems generate the trajectories of observed states, actions, and rewards implementing a given policy, which may not be optimal. This raises the core inferential question we aim to address in this paper: can we infer the optimal policy value given observational trajectories of decision-making processes generated from a given policy that is not optimal? Before deploying a new policy, a decision-maker may want to know not only how one candidate policy would perform, but how much value could be achieved by optimizing future decisions.
Modern agentic systems also make the temporal scale of this question increasingly important. Language-model agents increasingly solve tasks through extended sequences of reasoning, tool use, and environment interaction, and controlled experiments show that increasing task horizon alone can induce severe training instability through exploration difficulties and credit-assignment challenges \citep{kim2026longhorizon}. Extended sequential search and adaptation also arise in demanding domains such as formal mathematical reasoning \citep{hubert2026olympiad}. The trajectory length therefore need not remain fixed as more complex tasks are considered, motivating an inferential framework that allows the observation horizon to diverge.

Statistical inference using observational reinforcement learning (RL) trajectories has been considered in two directions.
Off-policy evaluation (OPE) estimates the value of a prespecified target policy using data generated by a different behavior policy \citep{li2011unbiased,jiang2016doubly,kallus2020double}, while offline reinforcement learning uses previously collected data to learn a policy with small suboptimality \citep{jin2021pessimism}. Performance guarantees for a learned policy typically characterize its loss relative to the optimum. However, neither standard off-policy evaluation nor offline reinforcement learning directly provides valid uncertainty quantification for the unknown optimal value. Therefore, in this paper, we ask what observational trajectories can reveal about the best long-run performance attainable in the underlying environment while providing valid uncertainty quantification. Specifically, we study the optimal average reward $J^*$, the best long-run average reward attainable by a stationary Markov policy. A confidence interval for the optimal value quantifies uncertainty about this quantity and, together with an evaluation of the current policy, can help assess the potential for further improvement. Throughout, we focus on finite state and action spaces.

Inference for $J^*$ is more challenging than off-policy evaluation because optimality itself is part of the target. Exact and near ties between actions create nonsmoothness and can make the optimal decision rule unstable to small perturbations and optimal-value inference nonregular, analogous to nonregularity under nonunique optimal treatment decisions \citep{robins2004optimal,luedtke2016statistical}. In a Markov decision process (MDP), this difficulty is compounded by recursion because action maximization enters the Bellman equation that determines continuation values and future decisions. Existing smoothing-based approaches to optimal value inference first estimate the optimal action-value function under the Bellman optimality equation and only then introduce smoothing. \citet{whitehouse2025inference} apply softmax smoothing to the resulting optimal-value functional, whereas \citet{wei2025characterization} constructs a softmax policy from the estimated optimal action-value function and evaluates the induced policy using standard off-policy inference machinery \citep{kallus2020double}. Valid inference for the value of a data-dependent policy can be obtained through suitable sample splitting when the learned policy is sufficiently close to optimality \citep{shi2022statistical}.
However, requiring the learned policy to approach optimality at a sufficiently fast rate can be restrictive in practice, especially when further policy optimization is itself difficult. Deep networks can lose plasticity under continued learning, including in reinforcement learning \citep{dohare2024loss}, while recent work on LLM
mathematical-reasoning post-training documents that subsequent RL improvement can be limited by reduced RL plasticity and premature convergence \citep{ding2026rethinking}. The main purpose of conducting optimal value inference is to assess how far the current policy is from optimality. The statistical question of how much optimal value is achievable in the environment should therefore be separated from the algorithmic question of whether a particular training procedure has already learned a near-optimal policy. Our goal is to make this separation possible by constructing an optimal value inference method that is robust to errors in optimal policy estimation.

The key idea of our proposed method is that, instead of deriving the optimal policy estimator from softmax, we directly smooth the Bellman operator itself and estimate the optimal value from a self-induced Bellman equation. Let the Bellman optimality equation be $J^*+ Q^*(s,a) = R^*(s,a) + \EE[\max_{a' \in \mathcal{A}} Q^*(S',a') | S=s, A = a]$, where $(S,A,S')$ is an MDP transition tuple, $\mathcal{A}$ is the action space, $R^*$ is the true reward function, $Q^*$ is the optimal action-value function, and $J^*$ is the optimal value. Our self-induced Bellman equation approximates the maximum by the softmax: $J_{\beta}+ Q_{\beta}(s,a) = R^*(s,a) + \EE[\mathrm{Softmax}_{\beta}[Q_{\beta}(S',\cdot)] | S=s, A = a]$, where $\beta$ is a positive smoothing parameter of the softmax, and we solve $(Q_\beta, J_\beta)$ from the equation as the approximation of $(Q^*, J^*)$; see Section~\ref{sec:est} for details. Our insight is that $(Q_\beta, J_\beta)$ are the key nuisances for valid inference with double machine learning properties.
We develop a Neyman-orthogonal score statistic such that the remainder error of its linear representation depends on estimation errors for $(Q_\beta, J_\beta)$ and an adjoint weight function, without assuming a rate for the optimal policy estimator. Moreover, we show that the proposed statistic yields valid inference with potentially nonstationary trajectories. In particular, we allow the behavior policy to vary over time, which is common in modern decision-making processes such as large language model training. For $N$ independent trajectories, each observed over $H$ transitions, under regular mixing conditions, our inference method yields $\sqrt{NH}$-rate asymptotic normality when either $N$ or $H$ diverges by using the underlying martingale property of the decision-making process. To estimate the nuisances, we propose a regularized minimax estimator for the nonlinear self-induced Bellman solution and a plug-in empirical risk minimization estimator for the adjoint weight. We establish finite-sample rates for the nuisance estimators that are sufficient to ensure valid inference for the optimal value.
We summarize our contributions in the paper as follows.

\noindent{\bf Double machine learning for optimal value inference.}
We introduce a self-induced Bellman equation and characterize its solution as the key nuisances. We then construct the corresponding normalized adjoint weight despite the singular linearization. Based on these objects, we develop a new Neyman-orthogonal score statistic.  We establish asymptotic normality under second-order nuisance-rate conditions without additional requirements to consistently estimate the optimal policy. We further provide a semiparametric efficiency characterization.

\noindent{\bf Regularized minimax estimator for nuisances.}
The nonlinear self-induced Bellman equation creates a nonstandard nuisance estimation problem. We develop a regularized minimax estimator based on Bellman residuals and test functions, together with a plug-in empirical risk minimization estimator for the adjoint weight. We establish error rates using all transitions from dependent trajectories, which verify the second-order nuisance conditions required for valid inference.

\noindent{\bf Long-horizon inference under nonstationary behavior.}
Our inference theory accommodates time-varying behavior policies and diverging observation horizons within a unified framework. We formulate the adjoint equation under the pooled one-step behavior policy rather than relying on a stationary state distribution as in \citet{liu2018breaking}. At the same time, the self-induced Bellman equation makes the Neyman-orthogonal score statistic have a martingale-difference sequence as the leading term, so cross-time covariances vanish and the trajectory-level variance scales as $H^{-1}$. Combining these structures, we establish $\sqrt{NH}$-rate asymptotic normality under regular mixing conditions by applying a triangular-array martingale central limit theorem.

\subsection{Related Work}

\paragraph{Optimal decision rules and nonregular value inference.}

A broad literature studies optimal treatment rules, empirical welfare maximization, and policy learning in contextual and finite-stage settings
\citep{manski2004statistical,kitagawa2018who,athey2021policy,zhou2023offline}.
For optimal sequential treatment decisions, \citet{robins2004optimal} develops optimal structural nested models based on blip functions, which encode treatment contrasts when subsequent treatments follow an optimal regime. Further developments include methods for optimal treatment and testing strategies \citep{robins2008estimation}, dynamic treatment regimes \citep{murphy2001marginal,orellana2010dynamic}, and, more recently, pessimistic policy learning under limited action coverage \citep{zhou2023optimizing}.
These approaches primarily concern contextual or finite-stage longitudinal decisions, whereas we study the globally optimal long-run average reward in a Markov decision process.

Nonregularity of optimized values is closely tied to nonuniqueness of the optimal decision.
\citet{robins2004optimal} shows that exceptional laws, under which the optimal treatment is nonunique with positive probability, lead to nonregular inference for optimal treatment regimes. \citet{luedtke2016statistical} give necessary and sufficient conditions for pathwise differentiability of the optimal value, establish regular and asymptotically efficient estimation when the parameter is pathwise differentiable, and develop root-$n$ inference that remains valid in nonregular cases.
Related work studies targeted learning of optimal dynamic treatment values \citep{van2015targeted} and margin conditions for nearly indifferent decisions \citep{mammen1999smooth}.
Related asymptotic inference results have also been developed for online contextual bandits and preference-based decision problems
\citep{chen2021,chen2021statistical,shen2024doubly,lu2026contextual}.

More recently, smoothing has been used to regularize optimized-value inference.
\citet{whitehouse2025inference} apply softmax smoothing to maximum-type irregular functionals, primarily for static treatment rules with an extension to a two-stage dynamic regime.
For Markov decision processes, \citet{wei2025characterization} studies optimal value inference under policy nonuniqueness and considers a sample-split procedure that applies softmax to a $Q$-function estimated from the Bellman equation before fixed-policy evaluation.
These approaches introduce smoothing after solving the underlying optimization problem.
In contrast, we smooth the Bellman recursion itself and jointly determine the induced policy and its continuation values through a self-induced policy--value fixed point.

\paragraph{Off-policy evaluation and reinforcement learning inference.}

Off-policy evaluation has developed doubly robust and semiparametric methods for contextual bandits and reinforcement learning
\citep{dudik2011doubly,jiang2016doubly,zhan2021offpolicy,liu2018breaking,kallus2020double,kallus2022efficiently}.
In infinite-horizon MDPs, \citet{shi2022statistical} establish asymptotically normal value estimators for fixed and data-dependent policies using sequential sample splitting, while \citet{shi2024off} study robust and efficient off-policy inference under confounding.
Related work provides finite-sample minimax analyses of offline reinforcement learning
\citep{uehara2021finite}
and studies variance reduction and stability in policy evaluation
\citep{han2026variance}.

Our setting differs because the optimized long-run average reward is itself the inferential target and action maximization is internal to its Bellman recursion.
Fixed-policy orthogonality protects the evaluation stage against first-order nuisance error conditional on the target policy, but does not by itself remove error from learning that policy through a nonsmooth optimal Bellman map.
Our self-induced construction instead incorporates policy generation and policy evaluation into the same smooth fixed point before orthogonalization.
We also allow time-varying behavior policies and develop inference under both fixed and diverging trajectory horizons.
The resulting construction builds on the broader literature on Neyman-orthogonal scores, cross-fitting, and orthogonal statistical learning
\citep{chernozhukov2018double,dai2023orthogonalized,foster2023orthogonal}.

\paragraph{Average-reward reinforcement learning.}

Average-reward reinforcement learning builds on classical Bellman theory
\citep{mahadevan1996average}, with recent advances in feasible Q-learning, nonasymptotic convergence, and span-based sample complexity
\citep{pmlr-v238-jin24b,chen2026non,zurek2024span}.
On the inference side, \citet{liao2022batch} develop doubly robust and semiparametrically efficient estimation of stationary policy values and optimize over a prespecified stochastic policy class.
We instead target the global optimum and smooth the Bellman optimization itself before orthogonalization.

\paragraph{Sequential decisions in operations and model evaluation.}

Sequential decision problems are also central in operations and management science, including dynamic pricing and inventory control
\citep{gallego1994optimal,chen2022primal},
inventory systems with cyclic demand
\citep{gong2024bandits},
nonstationary decision environments
\citep{cheung2020reinforcement,mao2020model},
and queueing network control under long-run average objectives
\citep{dai2022}.
These works primarily focus on learning and deploying effective policies. Our work addresses a complementary offline statistical question, using observational trajectories to quantify uncertainty about the best achievable long-run performance before a newly optimized policy or agent is deployed.

\paragraph{Paper organization.}
Section~\ref{sec:formu} introduces the reinforcement learning setup and notation. Section~\ref{sec:est} defines the self-induced Bellman equation, establishes existence and uniqueness under identification, and analyzes the approximation error $J^*-J_\beta$. Section~\ref{sec:if} develops inference theory based on the orthogonal score and establishes central limit theorems under fixed and diverging horizon regimes. Section~\ref{sec:nuisance-estimation} develops estimators and rate guarantees for the self-induced Bellman solution and the adjoint weight. Section~\ref{sec:numerical} presents the synthetic experiments and two empirical analyses of bike repositioning and agentic tool use. Section~\ref{sec:discussion} concludes the paper. Proofs are deferred to the appendix.

\section{Problem Setup}\label{sec:formu}

Let $\cS$ and $\cA$ denote the state and action spaces. We observe $N$ independent trajectories generated under a common initial-state law and a common trajectory law induced by the behavior policy. Let $H$ denote the number of observed transitions. For trajectory index $i$ and time $h=0,\ldots,H-1$, let $S_{i,h}\in\cS$ and $A_{i,h}\in\cA$ denote the state and action at time $h$, and let $R_{i,h+1}$ be the reward observed after taking $A_{i,h}$ in state $S_{i,h}$. The $i$-th $H$-transition trajectory is
\[
\tau_i=(S_{i,0},A_{i,0},R_{i,1},S_{i,1},\ldots,A_{i,H-1},R_{i,H},S_{i,H}),
\qquad i=1,\ldots,N.
\]
We suppress the trajectory index $i$ for a generic trajectory.
The environment is a time-homogeneous Markov process, such that for every measurable set $B\subseteq\RR\times\cS$,
\[
\PP\!\left(
(R_{h+1},S_{h+1})\in B
\mid
S_0,A_0,R_1,\ldots,S_h,A_h
\right)
=
\PP\!\left(
(R_{h+1},S_{h+1})\in B
\mid
S_h,A_h
\right),
\qquad h=0,\ldots,H-1.
\]
For a generic one-step transition, write $(S,A)$ for the current state--action pair and $(R,S')$ for the subsequent reward and state. Define the conditional mean reward by
$R^*(s,a)\coloneqq\EE[R_h\mid S_{h-1}=s,A_{h-1}=a].$ The transition kernel is $
P(s'\mid s,a)\coloneqq\PP(S'=s'\mid S=s,A=a).$
We also use the transition kernel as a linear operator from state functions to state-action functions, with $
(Ph)(s,a)
\coloneqq
\EE[h(S')\mid S=s,A=a]
=
\sum_{s'\in\cS}P(s'\mid s,a)h(s'),$ where $h:\cS\to\RR. $ 
The Markov decision process is specified by $\cS$, $\cA$, the transition kernel $P$, and the mean reward function $R^*$. We consider the finite tabular specialization, where $\cS$ and $\cA$ are finite.
For any measure $\nu$ on the common domain of $f$ and $g$, write $\langle f,g\rangle_\nu:=\int fg\,d\nu$ and $\|f\|_{\nu,2}^2:=\int f^2\,d\nu$. For $q:\cS\times\cA\to\RR$ and a measure $\nu$ on $\cS$, we also use the shorthand $\|q\|_{\nu,2}^2:=\sum_{s\in\cS}\nu(s)\sum_{a\in\cA}q(s,a)^2$.
For finite-domain functions, write $\|f\|_\infty:=\max_x|f(x)|$, with $x$ ranging over the domain of $f$.

Let $\cP(\cA)$ denote the set of probability distributions on $\cA$.
 A stationary Markov policy is a map $\pi\colon\cS\to\cP(\cA)$. Let $\Pi$ denote the collection of all stationary Markov policies. We write $\pi(a\mid s)$ for the probability of choosing action $a$ at state $s$.
The behavior policy may vary across time and is denoted by $\pi^b=\{\pi_h^b\}_{h=0}^{H-1}$, where each $\pi_h^b\in\Pi$. Thus, $\pi^b$ is generally nonstationary, although each decision rule $\pi_h^b$ is Markov.
In the stationary behavior case, the decision rules $\{\pi_h^b\}_{h=0}^{H-1}$ are identical across $h$.
Let $\EE_\pi$ denote expectation under the trajectory law induced by policy $\pi$ together with the Markov decision process above. By default, $\EE$ denotes expectation under the same Markov decision process with behavior data law induced by $\pi^b$.
For $\pi\in\Pi$, define the average reward from initial state $s$ by
\[
J^\pi(s)
\coloneqq
\lim_{T\to\infty}
\EE_\pi\!\left[
\frac{1}{T}\sum_{h=1}^{T}R_h
\,\Bigm|\, S_0=s
\right].
\]
For a finite-state, time-homogeneous MDP with bounded rewards, this limit exists for every $\pi\in\Pi$ and $s\in\cS$. 
\begin{assumption}[Irreducibility]\label{ass:irreducible}
For every $\pi\in\Pi$, the policy-induced transition kernel $
P^\pi(s'\mid s)
\coloneqq
\sum_{a\in\mathcal A}\pi(a\mid s)P(s'\mid s,a) $
is irreducible.
\end{assumption}
This condition is the analogue of Assumption~1 in \citet{liao2022batch}. Under Assumption~\ref{ass:irreducible}, $J^\pi(s)$ is independent of the initial state $s$; see \citet[Theorem~8.2.6]{puterman2014markov}. We therefore write $J^\pi$ without ambiguity. We impose Assumption~\ref{ass:irreducible} throughout the paper.
For a fixed policy $\pi\in\Pi$, consider the Ces\`aro relative action value at $(s,a)$
\[
\lim_{T\to\infty}
\frac{1}{T}\sum_{t=1}^T
\EE_\pi\!\left[
\sum_{k=1}^t\bigl(R_k-J^\pi\bigr)
\,\Bigm|\, S_0=s,A_0=a
\right].
\]
Here the process follows $\pi$ from the next decision onward. 
The Ces\`aro form allows us to define the relative value without requiring aperiodicity; see \citet{liao2022batch}.
The Bellman representation of a relative value is invariant to adding a constant. We remove this ambiguity by requiring zero arithmetic mean over the state-action space and define the anchored space
\[
\cQ
\coloneqq
\left\{
Q:\cS\times\cA\to\RR:
\frac{1}{|\cS||\cA|}\sum_{s\in\cS}\sum_{a\in\cA}Q(s,a)=0
\right\}.
\]
For each $\pi\in\Pi$, let $Q^\pi\in\cQ$ be the zero-average representative of this probabilistic relative value, obtained by subtracting its arithmetic average over $\cS\times\cA$.
Classical theory for average rewards gives the following local recursive characterization of the same population object:
\begin{equation}
J^\pi+Q^\pi(s,a)
=
R^*(s,a)+\sum_{s'\in\cS}P(s'\mid s,a)
\sum_{a'\in\cA}\pi(a'\mid s')Q^\pi(s',a'),
\qquad (s,a)\in\cS\times\cA.
\label{eq:policy-evaluation}
\end{equation}

Our parameter of interest is the optimal average reward $
J^*
\coloneqq
\max_{\pi\in\Pi} J^\pi.$
By \citet[Theorem~8.1.2]{puterman2014markov}, there exists an optimal stationary Markov policy $ \pi^*\in\argmax_{\pi\in\Pi}J^\pi, $ and hence $J^*=J^{\pi^*}$.
Under Assumption~\ref{ass:irreducible}, all optimal stationary policies share the same anchored relative action-value function. We denote this uniquely defined common function by $Q^*\in\cQ$, so $Q^\pi=Q^*$ for every $\pi\in\Pi$ with $J^\pi=J^*$. This agreement does not require uniqueness of the optimal policy, and the common relative action-value function satisfies the Bellman optimality equation
\begin{equation}
J^*+Q^*(s,a)
=
R^*(s,a)
+\sum_{s'\in\cS}P(s'\mid s,a)\max_{b\in\cA}Q^*(s',b),
\qquad (s,a)\in\cS\times\cA.
\label{eq:opt-bellman}
\end{equation} 
We show this characterization in Appendix~\ref{app:setup-proofs}.

\section{Self-Induced Bellman Equation}\label{sec:est}
We note that each $Q^*(s,a)$ contains information about future states and rewards through continuation values generated recursively using the action maximum.
A natural way to consider optimal value inference is to first estimate $Q^*$ from the Bellman equation and then apply softmax. Define the population post-hoc softened policy by
\[
\pi_\beta^\circ(a\mid s)
\coloneqq
\frac{\exp(\beta Q^*(s,a))}{\sum_{b\in\cA}\exp(\beta Q^*(s,b))},
\qquad s\in\cS,\ a\in\cA.
\]
Given an estimator $\widehat Q^*$ of $Q^*$, the corresponding learned softened policy is
\[
\widehat\pi_\beta(a\mid s)
\coloneqq
\frac{\exp(\beta \widehat Q^*(s,a))}{\sum_{b\in\cA}\exp(\beta \widehat Q^*(s,b))},
\qquad s\in\cS,\ a\in\cA.
\]
The learned policy can then be evaluated by ordinary off-policy evaluation for a fixed policy \citep{wei2025characterization}.
However, this post-hoc route does not resolve the recursive difficulty. Each $Q^*(s,a)$ already summarizes future rewards, but its continuation values are generated recursively through action maximization. Applying softmax afterward changes the action probabilities used at future states without recomputing the continuation values under the softened policy. The policy itself is valid, but it leads to a separate policy evaluation problem. 

The continuation value mismatch has a direct inferential consequence. Let $\widehat J$ denote an off-policy estimate of the value $J^{\widehat\pi_\beta}$ of the learned softened policy. We have the decomposition
\[
\widehat J-J^*
=
\underbrace{
\widehat J-J^{\widehat\pi_\beta}}_{\text{policy evaluation error}}
+
\underbrace{
J^{\widehat\pi_\beta}-J^{\pi_\beta^\circ}}_{\text{upstream policy learning error}}
+
\underbrace{ J^{\pi_\beta^\circ}-J^*
}_{\text{post-hoc approximation bias}}.
\]
The first term is ordinary policy evaluation error. Conditional on the learned target policy, orthogonality for fixed policy evaluation can remove the first-order sensitivity of this term to evaluation nuisances, such as the policy evaluation $Q$-function and weighting nuisances, but it does not automatically eliminate the second term. Controlling the upstream policy learning error generally requires additional regularity, stability, regret or value control. 
The map defining $Q^*$ through the Bellman optimality equation is governed by the nonsmooth Bellman equation with a maximum and can remain nonregular at nonunique optima.

We instead smooth the Bellman problem by requiring policy generation and continuation value evaluation to be determined jointly. Specifically, we seek a single action-value function whose softened policy is also the policy under which that same action-value function evaluates continuation values. To formalize this construction, for any action-value function $Q\colon\cS\times\cA\to\RR$, define the softmax policy
\[
\pi_{\beta,Q}(a\mid s)
\coloneqq
\frac{\exp(\beta Q(s,a))}{\sum_{b\in\cA}\exp(\beta Q(s,b))},
\qquad s\in\cS,\ a\in\cA,
\]
and the associated softmax-smoothed value map
\[
V_\beta(Q)(s)
\coloneqq
\sum_{a\in\cA}\pi_{\beta,Q}(a\mid s)Q(s,a),
\qquad s\in\cS.
\]
With these definitions, the smoothed Bellman recursion can be closed through a stationary self-consistent fixed point. The resulting pair $(Q_\beta,J_\beta)$ is defined by the self-induced average-reward Bellman equation
\begin{equation}
J_\beta + Q_\beta(s,a)
=
R^*(s,a)+\EE\!\left[V_\beta(Q_\beta)(S')\mid S=s,A=a\right],
\qquad (s,a)\in\cS\times\cA.
\label{eq:self}
\end{equation}
Rather than learning a policy and then evaluating it, \eqref{eq:self} smooths the optimal value problem itself. We next study the resulting pair $(Q_\beta,J_\beta)$ and establish its basic properties. 

\begin{definition}[Self-induced Bellman solution]
A \emph{self-induced Bellman solution} is any pair $(Q_\beta,J_\beta)$, with $Q_\beta\colon\cS\times\cA\to\RR$ and $J_\beta\in\RR$, satisfying \eqref{eq:self}. The corresponding policy is
$
\pi_\beta\coloneqq\pi_{\beta,Q_\beta}.
$
\end{definition}
The next lemma shows that $Q_\beta$ is identifiable only up to an additive constant.

\begin{lemma}[Shift invariance]\label{lem:shift-invariance}
Suppose $(Q_\beta,J_\beta)$ satisfies \eqref{eq:self}. Then, for every constant $c\in\RR$, the pair $(Q_\beta+c\one,J_\beta)$ also satisfies \eqref{eq:self}, where $\one$ denotes the all-ones function on $\cS\times\cA$.
\end{lemma}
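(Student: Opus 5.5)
The plan is to verify directly that both sides of \eqref{eq:self} transform compatibly under the shift $Q\mapsto Q+c\one$. The argument has two ingredients. First, the softmax policy is invariant under constant shifts. Second, the smoothed value map is shift-equivariant. Once both are in hand, the identity follows by substitution.

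\emph{Step 1 (policy invariance).} For any $Q$ and $c\in\RR$, I would factor $\exp(\beta c)$ out of the numerator and the denominator:
\[
\pi_{\beta,Q+c\one}(a\mid s)=\frac{e^{\beta c}\exp(\beta Q(s,a))}{e^{\beta c}\sum_{b\in\cA}\exp(\beta Q(s,b))}=\pi_{\beta,Q}(a\mid s).
\]
This holds for every $s\in\cS$ and $a\in\cA$.

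\emph{Step 2 (value equivariance).} Using Step 1 and the fact that $\sum_{a}\pi_{\beta,Q}(a\mid s)=1$, I would compute
\[
V_\beta(Q+c\one)(s)=\sum_{a\in\cA}\pi_{\beta,Q}(a\mid s)\bigl(Q(s,a)+c\bigr)=V_\beta(Q)(s)+c.
\]

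\emph{Step 3 (substitution).} Since $P(\cdot\mid s,a)$ is a probability distribution, the transition operator preserves constants. This gives
\[
\EE\!\left[V_\beta(Q_\beta+c\one)(S')\mid S=s,A=a\right]=\EE\!\left[V_\beta(Q_\beta)(S')\mid S=s,A=a\right]+c.
\]
Adding $c$ to both sides of \eqref{eq:self} then yields
\[
J_\beta+\bigl(Q_\beta(s,a)+c\bigr)=R^*(s,a)+\EE\!\left[V_\beta(Q_\beta+c\one)(S')\mid S=s,A=a\right]
\]
for all $(s,a)\in\cS\times\cA$. This is exactly \eqref{eq:self} for the pair $(Q_\beta+c\one,J_\beta)$, which proves the claim.

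There is no genuine obstacle here. The only point requiring care is that the shift invariance of the softmax in Step 1 is what makes the extra constant $c$ appear exactly once on the right-hand side. This is what cancels the $c$ on the left, so that $J_\beta$ is left unchanged.
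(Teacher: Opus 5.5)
Your proof is correct and matches the paper's argument essentially step for step. The softmax normalization cancels the common factor $e^{\beta c}$, so $V_\beta$ shifts by exactly $c$, and substituting into \eqref{eq:self} balances the added constant on both sides while leaving $J_\beta$ unchanged.
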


We present the proof in Appendix~\ref{app:self-induced-proofs}. As with the relative action-value functions for fixed policies defined in Section~\ref{sec:formu}, we remove the shift invariance by seeking $Q_\beta$ in the anchored space $\cQ$. Existence of a self-induced Bellman solution is not immediate because the policy depends on the unknown $Q_\beta$. The following result applies Brouwer's fixed-point theorem.

\begin{theorem}[Existence of a self-induced solution]\label{thm:self-induced-existence}
Under Assumption~\ref{ass:irreducible}, for every smoothing parameter $\beta>0$, there exists a scalar $J_\beta\in\RR$ and an anchored function $Q_\beta\in\cQ$ satisfying \eqref{eq:self}.
\end{theorem}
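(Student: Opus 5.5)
We construct a continuous self-map on a compact convex set whose fixed points are exactly self-induced Bellman solutions, and then apply Brouwer's theorem, as the paper indicates. For each $Q\in\cQ$, the softmax policy $\pi_{\beta,Q}\in\Pi$ is a stationary Markov policy. By Assumption~\ref{ass:irreducible}, the policy evaluation equation \eqref{eq:policy-evaluation} for $\pi=\pi_{\beta,Q}$ has a unique solution $(Q^{\pi_{\beta,Q}},J^{\pi_{\beta,Q}})$ with $Q^{\pi_{\beta,Q}}\in\cQ$. Define
\[
T(Q)\coloneqq Q^{\pi_{\beta,Q}}.
\]
If $Q=T(Q)$, then \eqref{eq:policy-evaluation} with $\pi=\pi_{\beta,Q}$ reads $J+Q(s,a)=R^*(s,a)+\EE[V_\beta(Q)(S')\mid S=s,A=a]$, which is exactly \eqref{eq:self} with $J_\beta=J^{\pi_{\beta,Q}}$. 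So it suffices to find a fixed point of $T$ on $\cQ$.

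\textbf{Step 1: continuity of $T$.} The map $Q\mapsto\pi_{\beta,Q}$ is continuous, since softmax is smooth. It then suffices to show that $\pi\mapsto(Q^\pi,J^\pi)$ is continuous on $\Pi$. I would write \eqref{eq:policy-evaluation}, together with the anchoring constraint, as a square linear system
\[
M(\pi)\,(Q,J)=(R^*,0)
\]
in the $|\cS||\cA|+1$ unknowns, with coefficients depending continuously (indeed affinely) on $\pi$. Under irreducibility this system has a unique solution for every $\pi$: $J$ is pinned down as the stationary average reward, and $Q$ is pinned down up to a constant, which the anchoring removes. Uniqueness of the solution means $M(\pi)$ is invertible for every $\pi$. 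Matrix inversion is continuous on invertible matrices, so the solution depends continuously on $\pi$.

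The uniqueness claim needs a short argument. Suppose $(Q,J)$ solves the homogeneous system, so $J\one+Q=P^\pi_{SA}Q$, where $P^\pi_{SA}$ is the state-action kernel. Averaging against a stationary distribution of $P^\pi_{SA}$ gives $J=0$. Then $Q$ is harmonic, so $Q(s,a)=(Ph)(s,a)$ with $h(s)=\sum_{a'}\pi(a'\mid s)Q(s,a')$, and $h$ is harmonic for $P^\pi$. Irreducibility of $P^\pi$ on the finite set $\cS$ makes $h$ constant. Hence $Q=Ph$ is constant, and anchoring forces $Q=0$.

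\textbf{Step 2: a compact convex invariant set.} This is the main obstacle, since $\cQ$ is unbounded and we need an a priori bound on $\|T(Q)\|_\infty$ that is uniform in $Q$. My first approach is to use compactness of $\Pi$. The set $\Pi$ is a finite product of simplices, hence compact, and $\pi\mapsto Q^\pi$ is continuous on all of $\Pi$ by Step 1, not just on softmax policies. Therefore
\[
C\coloneqq\sup_{\pi\in\Pi}\|Q^\pi\|_\infty<\infty.
\]
Let $K\coloneqq\{Q\in\cQ:\|Q\|_\infty\le C\}$. This set is nonempty, compact and convex, being the intersection of a subspace with a box, and $T$ maps all of $\cQ$, in particular $K$, into $K$. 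The delicate point is that Step 1's invertibility must hold for every $\pi\in\Pi$, including policies on the boundary of the simplex that put zero mass on some actions. This is where Assumption~\ref{ass:irreducible} for all $\pi\in\Pi$ is essential, and why the continuity argument must be done on all of $\Pi$ rather than only on softmax policies.

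\textbf{Step 3: conclude.} Brouwer's fixed-point theorem applied to $T\colon K\to K$, working in the finite-dimensional space $\cQ$, gives $Q_\beta\in K$ with $T(Q_\beta)=Q_\beta$. Setting $J_\beta\coloneqq J^{\pi_{\beta,Q_\beta}}$ then yields the pair $(Q_\beta,J_\beta)$ satisfying \eqref{eq:self}, with $Q_\beta\in\cQ$ as required.
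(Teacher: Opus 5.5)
Your proposal is correct and follows the paper's proof essentially step for step. It uses the same map $Q\mapsto Q^{\pi_{\beta,Q}}$, continuity of policy evaluation via invertibility of the anchored linear system, the uniform sup-norm bound from compactness of $\Pi$, and Brouwer on the resulting anchored box. Your kernel argument (averaging against a stationary distribution to get $J=0$, then using harmonicity of $h$ under the irreducible $P^\pi$) supplies the nonsingularity claim that the paper simply asserts from Assumption~\ref{ass:irreducible}.
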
 

\noindent
To state a condition that identifies the nonlinear fixed point, set $C_{\cA}\coloneqq1+2\log|\cA|$. A state-wise signed action aggregation operator $D$ is called \emph{allowable} if $(Df)(s)=\sum_{a\in\cA}w_s(a)f(s,a)$. For every $s\in\cS$, the weights satisfy $\sum_{a\in\cA}w_s(a)=1$ and $\sum_{a\in\cA}|w_s(a)|\le C_{\cA}$.

\begin{assumption}
\label{ass:uniform-signed-aggregation-identification}
For every allowable state-wise signed action aggregation operator $D$, the implication $ -j\one-q+PDq=0, q\in\cQ, j\in\RR,$ implies $q=0$ and $j=0$.
\end{assumption}

\begin{remark}
Assumption~\ref{ass:uniform-signed-aggregation-identification} rules out nontrivial perturbation directions in $Q$ that are invisible to the first-order variation of the Bellman residual, after the additive constant ambiguity has been removed by restricting $q\in\cQ$. This quotient-space identification is natural in average-reward MDPs, where relative value functions are identified only up to additive constants; see \citet[Theorem~1]{wan2021learning}.
A stronger but more interpretable sufficient condition uses one-step overlap. With $ \|\mu-\nu\|_{\TV} \coloneqq \frac{1}{2} \sum_{s'\in\cS} |\mu(s')-\nu(s')| $, define the Dobrushin coefficient by $
\rho(P)
\coloneqq
\sup_{(s,a),(\widetilde s,\widetilde a)}
\left\|
P(\cdot\mid s,a)
-
P(\cdot\mid \widetilde s,\widetilde a)
\right\|_{\TV}. $
Thus, $\rho(P)$ measures the largest total variation discrepancy between one-step transition laws across state-action pairs.
In particular, $C_{\cA}\rho(P)<1$ is sufficient for Assumption~\ref{ass:uniform-signed-aggregation-identification}; see Proposition~\ref{prop:rho-sufficient}. The one-step overlap condition is commonly used in the average-reward MDP literature; see, for example, \citet{chen2026non} and references therein.
\end{remark}

\begin{proposition}[Uniqueness of the anchored self-induced solution]
\label{prop:self-induced-uniqueness}
Suppose Assumptions~\ref{ass:irreducible} and \ref{ass:uniform-signed-aggregation-identification} hold.
Then, for every $\beta>0$, the self-induced Bellman equation \eqref{eq:self} admits a unique anchored solution $ (Q_\beta,J_\beta)\in\cQ\times\RR.$
\end{proposition}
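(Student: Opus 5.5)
Existence is already supplied by Theorem~\ref{thm:self-induced-existence}, so only uniqueness needs proof. I will argue by a mean-value/difference argument. Suppose $(Q_1,J_1)$ and $(Q_2,J_2)$ in $\cQ\times\RR$ both satisfy \eqref{eq:self}. Subtracting the two equations, the reward terms cancel and we obtain
\[
-(J_1-J_2)\one-(Q_1-Q_2)+P\bigl(V_\beta(Q_1)-V_\beta(Q_2)\bigr)=0 .
\]
The task is to write $V_\beta(Q_1)-V_\beta(Q_2)=D(Q_1-Q_2)$ for some allowable state-wise signed aggregation operator $D$. Once this holds, Assumption~\ref{ass:uniform-signed-aggregation-identification} with $q=Q_1-Q_2\in\cQ$ (the anchored space is linear) and $j=J_1-J_2$ gives $q=0$ and $j=0$.

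To construct $D$, fix $s$ and regard $V_\beta(Q)(s)$ as the smooth function $v(x)=\sum_a \sigma_a(x)x_a$ of $x=Q(s,\cdot)\in\RR^{|\cA|}$, where $\sigma=\mathrm{softmax}(\beta x)$. Its gradient is
\[
\partial_{x_a}v=\sigma_a\bigl(1+\beta(x_a-\bar x)\bigr),\qquad \bar x=\sum_b\sigma_b x_b .
\]
By the fundamental theorem of calculus along the segment $x_t=x_2+t(x_1-x_2)$,
\[
v(x_1)-v(x_2)=\sum_a w_s(a)(x_1-x_2)_a,\qquad w_s(a)=\int_0^1\partial_a v(x_t)\,dt .
\]
These weights depend on $Q_1,Q_2$, but the assumption quantifies over every allowable $D$, so this dependence is harmless.

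Next I check that the weights are allowable. For the sum, at each point
\[
\sum_a\partial_a v=\sum_a\sigma_a+\beta\Bigl(\sum_a\sigma_a x_a-\bar x\Bigr)=1,
\]
so $\sum_a w_s(a)=1$ after integrating. For the absolute sum, at each point
\[
\sum_a|\partial_a v|\le 1+\beta\sum_a\sigma_a|x_a-\bar x| .
\]
This is the main obstacle: the bound must be uniform in $x$ and $\beta$. I plan to use the entropy identity. With $m=\max_a x_a$ we have $\beta\sigma_a x_a=\log\sigma_a+\beta m+\log Z'$, where $Z'=\sum_b e^{\beta(x_b-m)}\in[1,|\cA|]$, so $\beta(x_a-\bar x)=\log\sigma_a+\mathcal H(\sigma)$ with $\mathcal H$ the Shannon entropy. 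Then
\[
\beta\sum_a\sigma_a|x_a-\bar x|=\sum_a\sigma_a\bigl|\log\sigma_a+\mathcal H\bigr|\le \sum_a\sigma_a(-\log\sigma_a)+\mathcal H=2\mathcal H\le 2\log|\cA| .
\]
Hence $\sum_a|\partial_a v|\le 1+2\log|\cA|=C_{\cA}$ pointwise. Since $\bigl|\int_0^1 g\,dt\bigr|\le\int_0^1|g|\,dt$, the integrated weights satisfy $\sum_a|w_s(a)|\le C_{\cA}$. So $D$ is allowable, and this also explains the constant $C_{\cA}$ appearing in the assumption.

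Applying Assumption~\ref{ass:uniform-signed-aggregation-identification} then yields $Q_1=Q_2$ and $J_1=J_2$, which combined with Theorem~\ref{thm:self-induced-existence} proves the proposition. Assumption~\ref{ass:irreducible} enters only through the existence result.
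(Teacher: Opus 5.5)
Your proposal is correct and follows essentially the same route as the paper: subtract the two equations, write $V_\beta(Q_1)-V_\beta(Q_2)$ through the exact secant (integrated-gradient) operator, check that it is allowable via the row-sum identity and the entropy bound $1+2\log|\cA|$ (the paper's Lemmas~\ref{lem:if-omega-tv-bound} and~\ref{lem:softmax-calculus}), and then invoke Assumption~\ref{ass:uniform-signed-aggregation-identification}. The only slip is a typo in your entropy step: the identity should read $\beta x_a=\log\sigma_a+\beta m+\log Z'$, not $\beta\sigma_a x_a=\log\sigma_a+\beta m+\log Z'$; the conclusion $\beta(x_a-\bar x)=\log\sigma_a+\mathcal H$ that you draw from it is correct.
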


\begin{remark}\label{rem:self-induced-is-policy-evaluation}
Any solution of \eqref{eq:self} satisfies the ordinary Bellman equation for its induced policy.
Let $(Q_\beta,J_\beta)$ be any solution of \eqref{eq:self}, and let $\pi_\beta$ be its induced policy.
Substituting $ V_\beta(Q_\beta)(s) $ into \eqref{eq:self} gives
\[
J_\beta+Q_\beta(s,a)
=
R^*(s,a)+\sum_{s'\in\cS}P(s'\mid s,a)
\sum_{a'\in\cA}\pi_\beta(a'\mid s')Q_\beta(s',a'),
\qquad (s,a)\in\cS\times\cA.
\]
Thus, the self-induced equation is the ordinary Bellman equation for $\pi_\beta$. If $Q_\beta\in\cQ$, uniqueness of the anchored evaluation pair for a fixed policy gives $ (Q_\beta,J_\beta)=(Q^{\pi_\beta},J^{\pi_\beta}). $ Hence $Q_\beta$ both generates the softmax policy and evaluates its continuation values, while $J_\beta$ is the policy's ordinary long-run average reward under the original reward objective. This self-consistency removes the population-level split in the post-hoc route while keeping policy dependence inside the same smooth fixed-point formulation.
\end{remark}

We next compare the optimal pair $(Q^*,J^*)$ defined in Section~\ref{sec:formu} with the self-induced smoothed pair $(Q_\beta,J_\beta)$.
Define the smallest positive optimality gap by
\[
\Gamma
\coloneqq
\min
\left\{
\max_{b\in\cA}Q^*(s,b)-Q^*(s,a):
Q^*(s,a)<\max_{b\in\cA}Q^*(s,b)
\right\},
\]
with $\Gamma=+\infty$ if every action is optimal at every state. Otherwise, finiteness of $\cS$ and $\cA$ gives $\Gamma>0$.

\begin{theorem}\label{thm:exp}
Suppose that Assumptions~\ref{ass:irreducible} and~\ref{ass:uniform-signed-aggregation-identification} hold, and let $(Q_\beta,J_\beta)$ be the anchored self-induced solution of Proposition~\ref{prop:self-induced-uniqueness}. There exists a constant $C<\infty$, independent of $\beta$, such that, for every $\beta>0$, we have
\[
0\le J^*-J_\beta\le C e^{-\beta\Gamma}.
\]
If $\Gamma=+\infty$, then $J_\beta=J^*$ for every $\beta>0$.
\end{theorem}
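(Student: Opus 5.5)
The lower bound comes from Remark~\ref{rem:self-induced-is-policy-evaluation}. The upper bound comes from a two-stage stability argument built on Assumption~\ref{ass:uniform-signed-aggregation-identification}: first a crude $O(1/\beta)$ bound on $Q_\beta-Q^*$, then an exponential bound fed back in.

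\emph{Lower bound and the case $\Gamma=+\infty$.} By Remark~\ref{rem:self-induced-is-policy-evaluation}, $J_\beta=J^{\pi_\beta}$ with $\pi_\beta\in\Pi$. Hence $J_\beta\le \max_{\pi\in\Pi}J^\pi=J^*$. If $\Gamma=+\infty$, then $Q^*(s,\cdot)$ is constant in $a$ for each $s$. The softmax average then equals the maximum, $V_\beta(Q^*)(s)=\max_b Q^*(s,b)$, so \eqref{eq:opt-bellman} says that $(Q^*,J^*)\in\cQ\times\RR$ solves \eqref{eq:self}. Proposition~\ref{prop:self-induced-uniqueness} then gives $(Q_\beta,J_\beta)=(Q^*,J^*)$.

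\emph{A uniform stability constant.} Let $\mathcal D_+$ be the set of operators $(Df)(s)=\sum_a w_s(a)f(s,a)$ with $w_s$ a probability vector. Each such $D$ is allowable, since $\sum_a|w_s(a)|=1\le C_{\cA}$, and $\mathcal D_+$ is compact. For each $D$, the linear map $(q,j)\mapsto -j\one-q+PDq$ on $\cQ\times\RR$ maps into functions on $\cS\times\cA$, a space of the same dimension. By Assumption~\ref{ass:uniform-signed-aggregation-identification} it is injective, hence invertible. The inverse depends continuously on $D$, so compactness yields $K<\infty$ such that, for every $D\in\mathcal D_+$,
\[
\|q\|_\infty+|j|\le K\,\|-j\one-q+PDq\|_\infty .
\]
We apply this with $q=Q_\beta-Q^*$ and $j=J_\beta-J^*$. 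Subtracting \eqref{eq:opt-bellman} from \eqref{eq:self} gives
\[
-j\one-q=-P\bigl(V_\beta(Q_\beta)-\max_b Q^*(\cdot,b)\bigr).
\]

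\emph{Step 1 (crude bound).} Write $V_\beta(Q_\beta)-\max Q^*=\bigl(\max Q_\beta-\max Q^*\bigr)+\bigl(V_\beta(Q_\beta)-\max Q_\beta\bigr)$.
\begin{itemize}
\item The first bracket equals $D_1q$, where $D_1$ has state-wise weights that are convex combinations of point masses at the argmaxes. This holds because $\max Q_\beta-\max Q^*$ lies between $q(s,a^*_\beta)$ and $q(s,a^*)$.
\item The second bracket lies in $[-\log|\cA|/\beta,0]$, by the entropy bound for softmax.
\end{itemize}
Hence $\|q\|_\infty+|j|\le K\log|\cA|/\beta$, so $\delta_\beta\coloneqq\|Q_\beta-Q^*\|_\infty\to0$.

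\emph{Step 2 (exponential bound).} Decompose instead
\[
V_\beta(Q_\beta)(s)-\max_bQ^*(s,b)=\sum_a\pi_\beta(a\mid s)q(s,a)-\sum_a\pi_\beta(a\mid s)\bigl(\max_bQ^*(s,b)-Q^*(s,a)\bigr).
\]
The first term is $D_2q$ with $D_2\in\mathcal D_+$ given by the weights $\pi_\beta$. The second term involves only actions that are suboptimal for $Q^*$, each with gap at least $\Gamma$. For such an action,
\[
\pi_\beta(a\mid s)\le \exp\bigl(-\beta(Q_\beta(s,a^*)-Q_\beta(s,a))\bigr)\le e^{-\beta(\Gamma-2\delta_\beta)} .
\]
The gaps are bounded by $2\|Q^*\|_\infty$, so the second term is $O(e^{-\beta(\Gamma-2\delta_\beta)})$. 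The stability inequality then gives $\|q\|_\infty+|j|\le C_1e^{-\beta(\Gamma-2\delta_\beta)}$.

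\emph{Bootstrapping.} For $\beta\ge\beta_0$, Step 1 gives $\delta_\beta\le\Gamma/4$, so Step 2 yields $\delta_\beta\le C_1e^{-\beta\Gamma/2}$. Then $\beta\delta_\beta\le C_1\beta e^{-\beta\Gamma/2}$ is bounded uniformly in $\beta\ge\beta_0$. Substituting back gives $e^{-\beta(\Gamma-2\delta_\beta)}\le C_2e^{-\beta\Gamma}$, hence $J^*-J_\beta=|j|\le Ce^{-\beta\Gamma}$. For $\beta<\beta_0$, we use the trivial bound $J^*-J_\beta\le2\|R^*\|_\infty\le 2\|R^*\|_\infty e^{\beta_0\Gamma}e^{-\beta\Gamma}$ and enlarge $C$.

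\emph{Main obstacle.} The delicate step is the uniform invertibility constant $K$. It requires that Assumption~\ref{ass:uniform-signed-aggregation-identification} covers every aggregation that arises, which holds because all of them lie in the compact set $\mathcal D_+$. It also requires that $K$ not depend on $\beta$. The remaining subtlety is the bootstrap: Step 1 alone only yields the rate $e^{-\beta(\Gamma-2\delta)}$, and the exact exponent $\Gamma$ follows only after the improved bound on $\delta_\beta$ is fed back in.
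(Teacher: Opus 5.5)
Your proof is correct, and it takes a genuinely different route from the paper's.

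\textbf{The paper's route.} Proposition~\ref{prop:large-beta-approx-tie} compares $V_\beta(Q_\beta)$ with $V_\beta(Q^*)$ rather than with $\max_b Q^*$. The exact secant identity of Lemma~\ref{lem:softmax-calculus} gives
\[
-j_\beta\one-q_\beta+PD_{\beta;Q_\beta,Q^*}q_\beta=P(V^*-V_\beta(Q^*)).
\]
The forcing term depends only on $Q^*$ and is at most $(|\cA|-1)B_\Delta e^{-\beta\Gamma}$ outright. The uniform inverse of Lemma~\ref{lem:uniform-sup-inverse} then yields $\|Q_\beta-Q^*\|_\infty+|J_\beta-J^*|\le Ce^{-\beta\Gamma}$ for all $\beta>0$ in a single step. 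There is no bootstrap and no separate small-$\beta$ regime, and the case $\Gamma=+\infty$ falls out because the forcing term vanishes. The price is that the secant weights are signed. So the paper needs the entropy envelope $\sum_a|\omega_{\beta,Q}(a\mid s)|\le C_{\cA}$ and the signed form of Assumption~\ref{ass:uniform-signed-aggregation-identification}.

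\textbf{Your route.} Your decomposition keeps every aggregation nonnegative: the weights are $\pi_\beta$ or mixtures of argmax point masses. The cost is a forcing term that depends on $\pi_\beta$, hence on $Q_\beta$. That is why you need the crude $\log|\cA|/\beta$ bound and the bootstrap to recover the exact exponent $\Gamma$.

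In exchange, you only use invertibility of $(q,j)\mapsto -j\one-q+PDq$ on $\cQ\times\RR$ for stochastic $D$. These are Poisson systems of the state--action chain under a stationary policy. Under Assumption~\ref{ass:irreducible} that chain is unichain for every policy. Multiplying by its stationary law forces $j=0$, and harmonic functions of a unichain are constant, so anchoring gives $q=0$. Hence the invertibility you need holds automatically. Your estimate is therefore valid for any anchored solution of \eqref{eq:self}, without the signed part of Assumption~\ref{ass:uniform-signed-aggregation-identification}; that part then matters only for uniqueness. Your $\Gamma=+\infty$ case also follows directly from Step 2, since the forcing term is identically zero, so uniqueness is not needed there either.

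One small slip: in your closing paragraph, the rate $e^{-\beta(\Gamma-2\delta_\beta)}$ comes from Step 2, not Step 1.
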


\section{Inference for the Optimal Value}\label{sec:if}
We now construct an orthogonal score for inference on $J_\beta$ that removes the first-order effect of estimating the nuisance functions.
Unless stated otherwise, $c$ and $C$ denote positive finite constants whose values may change from line to line and do not depend on $N$, $H$, or $\beta$. Constants that may depend on $\beta$ carry an explicit $\beta$ subscript.
Let $Z_t=(S_t,A_t,R_{t+1},S_{t+1})$ denote the one-step transition at time $t$.
Let $\bar P$ denote the pooled one-step behavior law obtained by averaging over the $H$ time points, with $\bar\mu$ and $\bar\nu$ denoting its $(S,A)$- and $S'$-marginals, respectively. We suppress their dependence on $H$.
\subsection{Adjoint Weights and Orthogonal Scores}\label{sec:if-orth-decomp}
We first characterize the first-order variation of the smoothed continuation value map with respect to $Q$. For any action-value function $Q\colon\cS\times\cA\to\RR$, define
\begin{equation}\label{eq:omega}
\omega_{\beta,Q}(a\mid s)
\coloneqq
\pi_{\beta,Q}(a\mid s)
\Bigl[1+\beta(Q(s,a)-V_\beta(Q)(s))\Bigr].
\end{equation}
These weights are the coordinates of the directional derivative of $V_\beta$ at $Q$. Writing $D_QV_\beta(Q)(s)[q]$ for this derivative in direction $q$, we have $D_QV_\beta(Q)(s)[q]=\sum_{a\in\cA}\omega_{\beta,Q}(a\mid s)q(s,a)$. Since $V_\beta(Q)$ averages $Q$ under the $Q$-dependent policy $\pi_{\beta,Q}$, this derivative includes both the direct perturbation of the action values and the perturbation of the softmax probabilities induced by changing $Q$. In particular, for every $s\in\cS$, $ \sum_{a\in\cA}\omega_{\beta,Q}(a\mid s)=1, $ although the individual weights need not be nonnegative.
Evaluating these weights at $Q_\beta$, define the derivative aggregation operator
\[
(D_\beta q)(s)
\coloneqq
\sum_{a\in\cA}
\omega_{\beta,Q_\beta}(a\mid s)q(s,a).
\]
Thus, $D_\beta$ is a state-wise signed action aggregation operator whose weights sum to one.
The row-sum and signed-envelope bounds make $D_\beta$ allowable in the sense defined in Section~\ref{sec:est}; see Lemma~\ref{lem:if-omega-tv-bound}.

\begin{assumption}\label{ass:pooled-support}

The pooled state-action and next-state marginals satisfy $\min_{(s,a)\in\cS\times\cA}\bar\mu(s,a)>0$ and $\min_{s'\in\cS}\bar\nu(s')>0$, uniformly over $H$.
\end{assumption}

The derivative operator $D_\beta$ characterizes the first-order variation of the Bellman residual with respect to $Q$.
For any square-integrable one-step function $f$, write $\EE_{\bar P}[f(Z)]:=\int f\,d\bar P=H^{-1}\sum_{t=0}^{H-1}\EE[f(Z_t)]$.
We call a square-integrable function
$\lambda_\beta\colon\cS\times\cA\to\RR$
an \emph{adjoint weight} associated with $D_\beta$ if
\[\EE_{\bar P}[\lambda_\beta(S,A)]=1 \text{   and   }
\EE_{\bar P}\!\left[
\lambda_\beta(S,A)
\left(
q(S,A)-(D_\beta q)(S')
\right)
\right]
=0
\]
for every $q\in\cQ$.
The second condition requires the adjoint weight to annihilate the first-order effect of perturbing $Q_\beta$ in any admissible direction. Since $D_\beta$ captures both the direct perturbation of $Q_\beta$ inside the continuation value and the induced perturbation of the softened policy $\pi_{\beta,Q_\beta}$, the orthogonality condition removes their combined first-order effect.

\begin{proposition}[Existence and uniqueness of the adjoint weight]
\label{prop:adjoint-existence}
Suppose Assumptions~\ref{ass:uniform-signed-aggregation-identification} and \ref{ass:pooled-support} hold.
Then there exists a unique adjoint weight $\lambda_\beta\colon\cS\times\cA\to\RR$ associated with $D_\beta$.
\end{proposition}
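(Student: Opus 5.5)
The plan is to reduce both defining conditions of the adjoint weight to finite-dimensional linear algebra on $\RR^{\cS\times\cA}$, and then use Assumption~\ref{ass:uniform-signed-aggregation-identification} to get invertibility of a square linear map.

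First I would rewrite the two conditions in terms of $\bar\mu$ and $P$. The environment is time-homogeneous, so for every $t$ the conditional law of $S_{t+1}$ given $(S_t,A_t)=(s,a)$ is $P(\cdot\mid s,a)$, whatever the time-varying behavior policy is. Averaging over $t$ shows that under $\bar P$ the law of $S'$ given $(S,A)=(s,a)$ is again $P(\cdot\mid s,a)$, at least wherever $\bar\mu(s,a)>0$, which by Assumption~\ref{ass:pooled-support} is everywhere. Set $g(s,a)\coloneqq\bar\mu(s,a)\lambda(s,a)$ and use the Euclidean inner product $\langle\cdot,\cdot\rangle$ on $\RR^{\cS\times\cA}$. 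Then
\[
\EE_{\bar P}[\lambda(S,A)]=\langle g,\one\rangle,
\qquad
\EE_{\bar P}\bigl[\lambda(S,A)\bigl(q(S,A)-(D_\beta q)(S')\bigr)\bigr]=\langle g,\,q-PD_\beta q\rangle .
\]
So $\lambda$ is an adjoint weight if and only if $\langle g,\one\rangle=1$ and $\langle g,q-PD_\beta q\rangle=0$ for all $q\in\cQ$. Since $\bar\mu>0$ pointwise, $\lambda\mapsto g$ is a bijection, and square integrability is automatic on a finite space. It therefore suffices to show that exactly one $g$ satisfies these two conditions.

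Next I would define the linear map $T\colon\cQ\times\RR\to\RR^{\cS\times\cA}$ by $T(q,j)\coloneqq j\one+q-PD_\beta q$. Both spaces have dimension $|\cS||\cA|$, because $\cQ$ has codimension one. By Lemma~\ref{lem:if-omega-tv-bound}, $D_\beta$ is allowable. Assumption~\ref{ass:uniform-signed-aggregation-identification} says that $T(q,j)=0$ forces $q=0$ and $j=0$, since $T(q,j)=0$ is the same equation $-j\one-q+PD_\beta q=0$ up to sign. Hence $T$ is injective, and therefore a linear bijection.

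The two conditions on $g$ combine into the single requirement
\[
\langle g,T(q,j)\rangle=j\quad\text{for all }(q,j)\in\cQ\times\RR.
\]
Indeed, the left side equals $j\langle g,\one\rangle+\langle g,q-PD_\beta q\rangle$; taking $q=0$ gives the normalization, and then taking $j=0$ gives orthogonality, while the converse is immediate. Because $T$ is a bijection, this requirement says exactly that $g$ represents the linear functional $x\mapsto \mathrm{pr}_{\RR}(T^{-1}x)$ on $\RR^{\cS\times\cA}$. By the Riesz representation in finite dimensions, such a $g$ exists and is unique. Setting $\lambda_\beta\coloneqq g/\bar\mu$ then gives both existence and uniqueness of the adjoint weight.

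The only delicate points are bookkeeping. One is justifying that the pooled conditional next-state law equals $P$ even though the behavior policy is nonstationary; this is where time-homogeneity and positivity of $\bar\mu$ enter. The other is confirming that $D_\beta$ is allowable, so that Assumption~\ref{ass:uniform-signed-aggregation-identification} applies; I would simply invoke Lemma~\ref{lem:if-omega-tv-bound} for this. The condition on $\bar\nu$ in Assumption~\ref{ass:pooled-support} is not needed for this argument.
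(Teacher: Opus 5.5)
Your proof is correct, but it organizes the linear algebra differently from the paper. The paper works with the square map $I-PD_\beta$ on all of $\RR^{\cS\times\cA}$. It uses Assumption~\ref{ass:uniform-signed-aggregation-identification} with $j=0$ to get $\ker(I-PD_\beta)=\operatorname{span}\{\one\}$, and deduces by rank--nullity that the left nullspace is one-dimensional. It then uses the assumption a second time, with $j\neq 0$, to rule out $\one\in\operatorname{Range}(I-PD_\beta)$, so the left null vector has nonzero total mass and can be normalized. In that argument $\bar\mu\lambda_\beta$ is the normalized left null vector, and uniqueness comes from one-dimensionality.

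You instead border the operator, $T(q,j)=j\one+(I-PD_\beta)q$ on $\cQ\times\RR$, which is $-\cL_{D_\beta}$ from Lemma~\ref{lem:uniform-sup-inverse}. You apply the assumption once to get injectivity, hence bijectivity. You then fold normalization and orthogonality into the single identity $\langle g,T(q,j)\rangle=j$, so existence and uniqueness arrive together as a Riesz representer.

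Your route is shorter, and it uses the same bijectivity the paper later relies on in Lemma~\ref{lem:uniform-dual-identification}. Paired with the uniform inverse bound of Lemma~\ref{lem:uniform-sup-inverse}, it immediately gives the uniform bound on $\lambda_\beta$ in Corollary~\ref{cor:target-adjoint-bound}, since $|\langle g,x\rangle|=|\mathrm{pr}_{\RR}T^{-1}x|\le C\|x\|_\infty$ uniformly over allowable operators. The paper's route instead makes visible that $\bar\mu\lambda_\beta$ is a signed invariant measure of the state--action kernel $PD_\beta$, which gives the density-ratio interpretation of the adjoint weight.

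Your side remarks are also correct: the pooled conditional law of $S'$ given $(S,A)$ is $P$ despite nonstationary behavior, and the $\bar\nu$ part of Assumption~\ref{ass:pooled-support} is not used.
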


For a transition $Z=(S,A,R,S')$ and a generic weight function $\lambda\colon\cS\times\cA\to\RR$, define
\[
\varphi_\beta(Z;J,Q,\lambda)
\coloneqq
J+\lambda(S,A)(R-J+V_\beta(Q)(S')-Q(S,A)).
\]
When the nuisance arguments are omitted, we write $\varphi_\beta(Z):=\varphi_\beta(Z;J_\beta,Q_\beta,\lambda_\beta)$.
The adjoint weight removes first-order sensitivity to nuisance perturbations.

\begin{proposition}[Neyman orthogonality]
\label{prop:neyman-orthogonality}
Suppose that Assumptions~\ref{ass:uniform-signed-aggregation-identification} and \ref{ass:pooled-support} hold. Then the map $(J,Q,\lambda)\mapsto\EE_{\bar P}[\varphi_\beta(Z;J,Q,\lambda)]$ has vanishing first-order directional derivatives at $(J_\beta,Q_\beta,\lambda_\beta)$. Specifically, for every $j\in\RR$, every $q\in\cQ$, and every square-integrable $\ell$, we have
\[
\begin{aligned}
\left.\frac{d}{dt}\EE_{\bar P}[\varphi_\beta(Z;J_\beta+tj,Q_\beta,\lambda_\beta)]\right|_{t=0}&=0,\\
\left.\frac{d}{dt}\EE_{\bar P}[\varphi_\beta(Z;J_\beta,Q_\beta+tq,\lambda_\beta)]\right|_{t=0}&=0,\\
\left.\frac{d}{dt}\EE_{\bar P}[\varphi_\beta(Z;J_\beta,Q_\beta,\lambda_\beta+t\ell)]\right|_{t=0}&=0.
\end{aligned}
\]
\end{proposition}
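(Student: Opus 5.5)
The plan is to compute the three directional derivatives one at a time. Each is linear in the perturbation direction, so each should reduce to one of three facts: the normalization $\EE_{\bar P}[\lambda_\beta(S,A)]=1$, the annihilation property of $\lambda_\beta$ from Proposition~\ref{prop:adjoint-existence}, or the self-induced equation \eqref{eq:self}. Proposition~\ref{prop:adjoint-existence} guarantees the existence of $\lambda_\beta$ under Assumptions~\ref{ass:uniform-signed-aggregation-identification} and \ref{ass:pooled-support}. We also need $(Q_\beta,J_\beta)$ to exist, which Theorem~\ref{thm:self-induced-existence} provides. All expectations are finite sums over the finite state and action spaces, with rewards integrable, so derivatives and expectations can be interchanged freely.

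\emph{Direction $J$.} Since $\varphi_\beta$ is affine in $J$,
\[
\frac{d}{dt}\EE_{\bar P}[\varphi_\beta(Z;J_\beta+tj,Q_\beta,\lambda_\beta)]=j\bigl(1-\EE_{\bar P}[\lambda_\beta(S,A)]\bigr)=0 .
\]
This uses only the normalization of the adjoint weight.

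\emph{Direction $\lambda$.} The derivative equals $\EE_{\bar P}[\ell(S,A)(R-J_\beta+V_\beta(Q_\beta)(S')-Q_\beta(S,A))]$. To show it vanishes, we condition on $(S,A)$ inside each time-$t$ term of the pooled law $\bar P$. By the time-homogeneous Markov property, $\EE[R_{t+1}\mid S_t,A_t]=R^*(S_t,A_t)$ and $\EE[V_\beta(Q_\beta)(S_{t+1})\mid S_t,A_t]=(PV_\beta(Q_\beta))(S_t,A_t)$, whatever the time-varying behavior policy. Hence the conditional mean of the residual is $R^*-J_\beta+PV_\beta(Q_\beta)-Q_\beta$, which is identically zero by \eqref{eq:self}. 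Averaging over $t$ then gives zero, and this holds for every square-integrable $\ell$.

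\emph{Direction $Q$.} For fixed $s'$, $Q\mapsto V_\beta(Q)(s')$ is a smooth function of finitely many coordinates. By the derivative identity stated after \eqref{eq:omega},
\[
\frac{d}{dt}V_\beta(Q_\beta+tq)(s')\big|_{t=0}=(D_\beta q)(s').
\]
I would verify this directly: differentiating the softmax gives $\partial_t\pi_{\beta,Q_\beta+tq}(a\mid s)=\beta\pi_\beta(a\mid s)(q(s,a)-\sum_b\pi_\beta(b\mid s)q(s,b))$, and collecting terms produces the weights $\omega_{\beta,Q_\beta}$. The $Q$-derivative of $\EE_{\bar P}[\varphi_\beta]$ is therefore
\[
\EE_{\bar P}\bigl[\lambda_\beta(S,A)\bigl((D_\beta q)(S')-q(S,A)\bigr)\bigr],
\]
which is the negative of the defining annihilation condition of the adjoint weight, so it equals zero for every $q\in\cQ$.

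The only step needing care is this $Q$-direction calculation, specifically checking that the chain-rule derivative of $V_\beta$ matches $D_\beta$ exactly, including the policy-perturbation term. The remaining steps are bookkeeping. One subtlety is the restriction to $q\in\cQ$. It is harmless because the adjoint condition is only imposed on $\cQ$. Moreover, for a constant direction $q=c\one$ we have $D_\beta(c\one)=c$, since the weights sum to one, so that direction contributes zero anyway, consistent with Lemma~\ref{lem:shift-invariance}.
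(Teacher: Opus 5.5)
Your proposal is correct and follows the paper's proof: compute the three directional derivatives, then show each vanishes by the adjoint normalization $\mathbb{E}_{\bar P}[\lambda_\beta(S,A)]=1$, the adjoint annihilation condition, and the conditional-mean-zero property of the residual implied by \eqref{eq:self}. Your extra details, namely the per-time conditioning under the pooled law with time-varying behavior and the explicit softmax-derivative check that $D_QV_\beta(Q_\beta)[q]=D_\beta q$, are what the paper leaves implicit by appealing to Lemma~\ref{lem:softmax-calculus}.
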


For the cross-fitted estimator, split the $N$ independent trajectories into a fixed number $K\ge2$ of folds $I_1,\ldots,I_K$, where $I_k$ is the set of trajectory indices in fold $k$. We allow any fixed $K$ for generality and assume $ cN\le |I_k|\le CN$ for $k=1,\ldots,K$.
Taking $K=2$ is sufficient for the theoretical results. Let $\mathcal D_{-k}\coloneqq\{\tau_i:i\notin I_k\}$ denote the training trajectories outside fold $k$.
Let $\widehat J_{-k}$, $\widehat Q_{-k}$, and $\widehat\lambda_{-k}$ be the estimators trained on $\cD_{-k}$.
For each fold $k$, we take $\widehat Q_{-k}$ to be the anchored representative of the estimated action-value function. 
For notation simplicity, we suppress the $\beta$ subscript for these nuisance estimators. With $Z_{i,t}=(S_{i,t},A_{i,t},R_{i,t+1},S_{i,t+1})$, $t=0,\ldots,H-1$, the cross-fitted estimator is
\[
\widehat J_\beta
\coloneqq
\frac1N\sum_{k=1}^K\sum_{i\in I_k}
\frac1H\sum_{t=0}^{H-1}
\varphi_\beta(Z_{i,t};\widehat J_{-k},\widehat Q_{-k},\widehat\lambda_{-k}).
\]

\subsection{Theoretical Results on Inference}\label{sec:if-clts}

To derive the asymptotic normality of the estimator $\widehat J_\beta$, we first impose the following assumptions.

\begin{assumption}
\label{ass:if-envelope}

There exists $C<\infty$ such that $|R|\le C$ almost surely.
\end{assumption}

\begin{assumption} \label{ass:optimal-bellman-innovation-nondegenerate}
Let $V^*(s)\coloneqq\max_{b\in\cA}Q^*(s,b)$, and set $v_{\min}^*\coloneqq\min_{(s,a)\in\cS\times\cA}\Var(R+V^*(S')\mid S=s,A=a)$. Assume $v_{\min}^*>0$.
\end{assumption}

This condition requires $R+V^*(S')$ to have positive conditional variance at every state-action pair. The variation may arise from either reward noise or random state transitions, so deterministic rewards are allowed. The condition only excludes state-action pairs for which $R+V^*(S')$ is conditionally deterministic. A similar condition is used in \citet{shi2022statistical}.

\begin{assumption}
  \label{ass:if-consistency}
The nuisance estimators satisfy \[
\max_{1\le k\le K}
\left(
|\widehat J_{-k}-J_\beta|
+\|\widehat Q_{-k}-Q_\beta\|_{\bar\nu,2}
+\|\widehat\lambda_{-k}-\lambda_\beta\|_{\bar\mu,2}
\right)
=o_p(1).\]
\end{assumption}

To state the inference results, we introduce a shorthand for the fold-specific nuisance remainder term.
For each fold $k$, define
\begin{equation}\label{eq:bk}
\begin{split}
b_k
\coloneqq{}&
\beta\|\widehat Q_{-k}-Q_\beta\|_{\bar\nu,2}^{2}
+\|\widehat\lambda_{-k}-\lambda_\beta\|_{\bar\mu,2}
|\widehat J_{-k}-J_\beta| \\
&\quad+ \|\widehat\lambda_{-k}-\lambda_\beta\|_{\bar\mu,2}
\|\widehat Q_{-k}-Q_\beta\|_{\bar\nu,2} +
\beta\|\widehat\lambda_{-k}-\lambda_\beta\|_{\bar\mu,2}
\|\widehat Q_{-k}-Q_\beta\|_{\bar\nu,2}^{2}.
\end{split}
\end{equation}
For the fixed-$H$ central limit theorem, we impose the following nuisance estimation rate.
\begin{assumption}[Fixed-$H$ second-order rate]
  \label{ass:if-fixed-rate}
The fold-specific nuisance remainder terms satisfy $ \max_{1\le k\le K} b_k =o_p(N^{-1/2}).$
\end{assumption}

For the diverging-$H$ regime, we impose the following assumptions to control the remainder term and the time dependence of the behavior chain.
\begin{assumption}[Diverging-$H$ second-order rate]\label{ass:if-growing-rate}
  The nuisance remainder terms defined in \eqref{eq:bk} satisfy
$\max_{1\le k\le K} b_k =o_p\bigl((NH)^{-1/2}\bigr).$
\end{assumption}

\begin{assumption}[Behavior-process mixing]
\label{ass:behavior-process-mixing}
Let $X_t:=(S_t,A_t)$. For a trajectory of horizon $H$, define the behavior-process strong-mixing coefficient at lag $m$ by
\[
\alpha_H^b(m)
:=
\sup_{0\le t\le H-m-1}
\sup_{\substack{
A\in\sigma(X_0,\ldots,X_t)\\
B\in\sigma(X_{t+m},\ldots,X_{H-1})
}}
\left|
\PP(A\cap B)-\PP(A)\PP(B)
\right|,
\qquad
1\le m\le H-1.
\]
Assume $\sup_H\sum_{m=1}^{H-1}\alpha_H^b(m)<\infty$, uniformly over the relevant horizons. 
\end{assumption}

\begin{remark}
\label{rem:doeblin-overlap}

Assumption~\ref{ass:behavior-process-mixing} imposes uniform summability of the mixing coefficients for the potentially nonstationary behavior process. Related mixing conditions appear in \citet[Assumption~3]{jenish2009central}, \citet[Assumption~6]{bai2012joint}, and \citet[Condition~A3(ii)]{shi2022statistical}.
\end{remark}

For both regimes, define
\[
\xi_{i,t}
\coloneqq
\varphi_\beta(Z_{i,t})-J_\beta
=
\lambda_\beta(S_{i,t},A_{i,t})
(R_{i,t+1}-J_\beta+V_\beta(Q_\beta)(S_{i,t+1})
-Q_\beta(S_{i,t},A_{i,t})).
\]
Set $\sigma^2\coloneqq H^{-1}\sum_{t=0}^{H-1}\EE[\xi_{i,t}^2]=\EE_{\bar P}[\lambda_\beta(S,A)^2\Var(R+V_\beta(Q_\beta)(S')\mid S,A)]$. We suppress the dependence of $\sigma$ on $\beta$ and $H$. By the Bellman equation, $\xi_{i,t}$ forms a martingale-difference sequence under the natural trajectory filtration; see Appendix~\ref{sec:app-martingale-qv}. Thus, $\Var(H^{-1}\sum_{t=0}^{H-1}\xi_{i,t})=\sigma^2/H$. We show in Proposition~\ref{prop:bellman-innovation-variance-lower-bounds} of the appendix that $\sigma^2$ is uniformly bounded away from zero and infinity whenever $\beta$ is sufficiently large relative to $1/v_{\min}^*$. This yields the following fixed and diverging horizon inference result.
\begin{theorem}[Inference under fixed and diverging horizons]
\label{thm:clt-N}
Suppose Assumptions~\ref{ass:irreducible}, \ref{ass:uniform-signed-aggregation-identification}, \ref{ass:pooled-support}, \ref{ass:if-envelope}, \ref{ass:optimal-bellman-innovation-nondegenerate} and~\ref{ass:if-consistency} hold and $\beta\ge C/v_{\min}^*$ for a sufficiently large constant $C$. Consider either of the following regimes:
\begin{enumerate}[label=(\roman*)]
\item $H$ is fixed, $N\to\infty$, and Assumption~\ref{ass:if-fixed-rate} holds.
\item $H\to\infty$, and Assumptions~\ref{ass:if-growing-rate} and~\ref{ass:behavior-process-mixing} hold.
\end{enumerate}
In either regime, we have
\[
\frac{\sqrt{NH}(\widehat J_\beta-J_\beta)}{\sigma}
\xrightarrow{d}
N(0,1).
\]
Moreover, if $\sqrt{NH}e^{-\beta\Gamma}\to0$, then we have
\[
\frac{\sqrt{NH}(\widehat J_\beta-J^*)}{\sigma}
\xrightarrow{d}
N(0,1).
\]
\end{theorem}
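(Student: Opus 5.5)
We decompose $\widehat J_\beta-J_\beta$ into a leading martingale term, an empirical-process term and a second-order bias term, and then treat the two horizon regimes with the appropriate CLT. For each fold $k$ we write
\[
\frac{1}{|I_k|H}\sum_{i\in I_k}\sum_{t}\varphi_\beta(Z_{i,t};\widehat J_{-k},\widehat Q_{-k},\widehat\lambda_{-k})-J_\beta
=\frac{1}{|I_k|H}\sum_{i\in I_k}\sum_t\xi_{i,t}+T_{1,k}+T_{2,k}.
\]
Here $T_{2,k}=\EE_{\bar P}[\varphi_\beta(Z;\widehat J_{-k},\widehat Q_{-k},\widehat\lambda_{-k})]-J_\beta$ is the bias, with the expectation taken conditionally on $\cD_{-k}$. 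The term $T_{1,k}$ is the centered fluctuation of $\varphi_\beta(\cdot;\widehat\eta_{-k})-\varphi_\beta(\cdot;\eta_\beta)$ over the fold, and the identity $\EE_{\bar P}[\varphi_\beta(Z)]=J_\beta$ holds because $\EE[\xi_{i,t}]=0$. Since the fold sizes are comparable and $\sum_k|I_k|/N=1$, it suffices to show that $T_{1,k}$ and $T_{2,k}$ are $o_p((NH)^{-1/2})$ (respectively $o_p(N^{-1/2})$ when $H$ is fixed), and that $\sqrt{NH}\,(NH)^{-1}\sum_{i,t}\xi_{i,t}/\sigma\to N(0,1)$.

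\textbf{Bias term.} Expand $T_{2,k}$ around $\eta_\beta=(J_\beta,Q_\beta,\lambda_\beta)$. Proposition~\ref{prop:neyman-orthogonality} kills the first-order terms, so only the exact remainder survives. Write $\widehat\lambda=\lambda_\beta+\ell$, $\widehat Q=Q_\beta+q$, $\widehat J=J_\beta+j$. We have $\EE_{\bar P}[\lambda_\beta(R-J_\beta+V_\beta(Q_\beta)(S')-Q_\beta)]=0$ by the Bellman equation \eqref{eq:self}, and the linear parts vanish by the adjoint conditions. What remains is: (a) the Taylor remainder of $V_\beta$, $V_\beta(Q_\beta+q)-V_\beta(Q_\beta)-D_\beta q$, which is bounded pointwise by $C\beta\max_a q(s',a)^2$ because the Hessian of the softmax value is $O(\beta)$; (b) the cross terms $\EE_{\bar P}[\ell\,(-j+V_\beta(\widehat Q)(S')-V_\beta(Q_\beta)(S')-q)]$. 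By Cauchy--Schwarz and Assumption~\ref{ass:pooled-support}, which lets us pass between $\bar\mu$, $\bar\nu$ and sup norms on the finite space, these are bounded by $b_k$ in \eqref{eq:bk}, using the Lipschitz property of $V_\beta$ for the $\ell q$ term. We also need $\lambda_\beta$ bounded, which follows from Proposition~\ref{prop:adjoint-existence} on the finite space. Assumptions~\ref{ass:if-fixed-rate} and~\ref{ass:if-growing-rate} then give the required orders. The fact that $\widehat J_{-k}$ also appears as the additive $J$ is absorbed exactly, since $\EE_{\bar P}\lambda_\beta=1$.

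\textbf{Empirical process term.} Conditional on $\cD_{-k}$, the nuisances are fixed, and the trajectories in $I_k$ are i.i.d. Hence $\Var(T_{1,k}\mid\cD_{-k})\le |I_k|^{-1}\Var\bigl(H^{-1}\sum_t g(Z_{t})\bigr)$, where $g=\varphi_\beta(\cdot;\widehat\eta)-\varphi_\beta(\cdot;\eta_\beta)$. For fixed $H$ this is at most $C\,\EE_{\bar P}g^2/N$. In the diverging regime, the within-trajectory covariances are controlled by Davydov's inequality: the difference $g(Z_t)$ depends on $Z_t$, which involves $S_{t+1}$, so we split $g$ into a part measurable in $X_t$ plus a martingale-difference part. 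The first part is handled by Assumption~\ref{ass:behavior-process-mixing} together with boundedness (Assumption~\ref{ass:if-envelope}), which give $\sum_m\alpha(m)<\infty$. The second part is orthogonal across $t$. This yields $\Var(H^{-1}\sum_t g)\le C\|g\|_\infty^2/H$. By Assumption~\ref{ass:if-consistency} and finiteness of the spaces, $\|g\|_\infty=o_p(1)$, so $T_{1,k}=o_p((NH)^{-1/2})$ by conditional Chebyshev.

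\textbf{Leading term and the main obstacle.} The $\xi_{i,t}$ are bounded martingale differences with respect to the filtration generated by trajectories $1..i-1$ and the first $t$ steps of trajectory $i$. Arranged lexicographically into a triangular array normalized by $\sqrt{NH}\sigma$, we apply the martingale CLT. The Lindeberg condition is immediate from boundedness, given $\sigma^2$ bounded below (Proposition~\ref{prop:bellman-innovation-variance-lower-bounds}, using $\beta\ge C/v^*_{\min}$). The main obstacle is the conditional variance condition:
\[
(NH\sigma^2)^{-1}\sum_{i,t}\EE[\xi_{i,t}^2\mid\mathcal F_{i,t-1}]\to1
\]
in probability, where $\EE[\xi_{i,t}^2\mid\cdot]=v(X_{i,t}):=\lambda_\beta^2\Var(R+V_\beta(Q_\beta)(S')\mid X)$ is a bounded function of $X_{i,t}$. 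For fixed $H$ this is the law of large numbers over $i$. For $H\to\infty$ we show that $\Var\bigl((NH)^{-1}\sum_{i,t}v(X_{i,t})\bigr)\le C(NH)^{-1}\sum_m\alpha_H^b(m)\to0$, again via the covariance inequality for strong mixing, uniformly over the nonstationary behavior chain. Its mean equals $\sigma^2$ by definition. Finally, the second display follows from Theorem~\ref{thm:exp}: $\sqrt{NH}|J^*-J_\beta|/\sigma\le C\sqrt{NH}e^{-\beta\Gamma}\to0$, and Slutsky's lemma concludes.
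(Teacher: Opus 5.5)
Your proposal is correct and follows essentially the same route as the paper's proof. Both use the same split of the cross-fitted estimator into a leading term, a fluctuation term and a bias term; the same exact second-order expansion bounded by $b_k$; the same martingale/predictable split of the diverging-$H$ fluctuation, with the strong-mixing covariance bound for the predictable part; a lexicographically ordered martingale CLT whose quadratic variation is stabilized by that same mixing bound; and Theorem~\ref{thm:exp} with Slutsky for $J^*$.

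One justification needs fixing: the bound on $\lambda_\beta$ (and on $Q_\beta,J_\beta$, hence on $\xi_{i,t}$) must hold uniformly in $\beta$ and $H$, because $\beta$ must grow for the second display and $H\to\infty$ in regime (ii). That uniform bound comes from Corollaries~\ref{cor:target-adjoint-bound} and~\ref{cor:app-smoothed-target-bounded}, not from Proposition~\ref{prop:adjoint-existence} and finiteness of the state--action space.
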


The Bellman equation makes the leading score increments a martingale difference sequence, so their cross-time covariances vanish and the variance of the trajectory-level leading term scales as $H^{-1}$. When $H$ diverges, nuisance-induced empirical fluctuations are controlled by decomposing them into a martingale and a predictable term, with Assumption~\ref{ass:behavior-process-mixing} used for the variation stabilization. Together, these arguments yield $\sqrt{NH}$-rate asymptotic normality; see Appendices~\ref{sec:app-growing-oracle}--\ref{sec:app-martingale-qv}.

\begin{remark}
\label{rem:efficiency-interpretation}
When the optimal action is unique at every state, $J^*$ is locally regular and our estimator attains its semiparametric efficiency bound as the smoothing bias becomes negligible. At nonunique optima, however, $J^*$ is generally not pathwise differentiable and need not admit a classical efficient influence function \citep{luedtke2016statistical}. For the smoothed target $J_\beta$, the estimator admits a semiparametric efficiency characterization under fixed horizons; see Appendix~\ref{sec:app-efficiency-pe-limit}.
\end{remark}

\begin{remark}
\label{rem:large-beta-pe}
Let $\pi^{\mathrm{unif}}$ denote the policy that assigns equal probability to the maximizers of $Q^*(s,\cdot)$ at each state $s$. Let $D_{\pi^{\mathrm{unif}}}$ be the ordinary policy evaluation averaging operator for this policy, and let $\lambda^{\mathrm{unif}}$ be its normalized adjoint weight under the same pooled behavior law. Under Assumptions~\ref{ass:irreducible}, \ref{ass:uniform-signed-aggregation-identification}, \ref{ass:pooled-support}, and~\ref{ass:if-envelope}, as $\beta\to\infty$, we have $\pi_\beta\to\pi^{\mathrm{unif}}$ and $D_\beta\to D_{\pi^{\mathrm{unif}}}.$ Moreover, $\|\lambda_\beta-\lambda^{\mathrm{unif}}\|_{\bar\mu,2}\to0$ and
\[
\left|
\sigma^2
-
\EE_{\bar P}\!\left[
\{\lambda^{\mathrm{unif}}(S,A)\}^2
\Var\!\left(R+V^*(S')\mid S,A\right)
\right]
\right|
\to0.
\]
Thus, when optimal actions are tied, the inference quantities approach their ordinary policy evaluation counterparts for the uniform policy over those actions. In particular, the limiting influence function is exactly the efficient influence function for the fixed policy $\pi^{\rm unif}$; see Appendix~\ref{sec:app-efficiency-pe-limit}.
\end{remark}

For feasible inference, we use the following variance estimator in both regimes. For $i\in I_k$, define the fold-specific $
\widehat\xi_{i,t}
\coloneqq
\varphi_\beta(
Z_{i,t};
\widehat J_{-k},
\widehat Q_{-k},
\widehat\lambda_{-k}
)
-
\widehat J_{-k}. $
Set
\[
\widehat\sigma
\coloneqq
\left(
\frac1{NH}
\sum_{k=1}^K
\sum_{i\in I_k}
\sum_{t=0}^{H-1}
\widehat\xi_{i,t}^2
\right)^{1/2}.
\]
For fixed $H$, consistency follows by averaging the bounded trajectory-level quantities $H^{-1}\sum_{t=0}^{H-1}\widehat\xi_{i,t}^2$ across independent trajectories. When $H$ grows, consistency follows from the quadratic variation argument under Assumption~\ref{ass:behavior-process-mixing}. 

\begin{corollary}
\label{cor:studentized-optimal-value}
Suppose the conditions for the $J^*$ inference of Theorem~\ref{thm:clt-N} hold under the corresponding regime. Then we have
\[
\frac{\widehat\sigma}{\sigma}
\xrightarrow{p}1,
\qquad
\frac{\sqrt{NH}(\widehat J_\beta-J^*)}{\widehat\sigma}
\xrightarrow{d}N(0,1).
\]
Consequently, for $\alpha\in(0,1)$, let $z_{1-\alpha/2}$ denote the $1-\alpha/2$ standard normal quantile. In either regime, an asymptotically valid $100(1-\alpha)$\% confidence interval for $J^*$ is
\[
\left[
\widehat J_\beta
-z_{1-\alpha/2}\frac{\widehat\sigma}{\sqrt{NH}},
\;
\widehat J_\beta
+z_{1-\alpha/2}\frac{\widehat\sigma}{\sqrt{NH}}
\right].
\]
\end{corollary}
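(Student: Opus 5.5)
The plan is to prove $\widehat\sigma^2/\sigma^2\xrightarrow{p}1$. Since Theorem~\ref{thm:clt-N} already gives $\sqrt{NH}(\widehat J_\beta-J^*)/\sigma\xrightarrow{d}N(0,1)$, the studentized statement then follows from Slutsky's lemma. The confidence interval is an immediate consequence: its coverage probability equals $\PP(|\sqrt{NH}(\widehat J_\beta-J^*)/\widehat\sigma|\le z_{1-\alpha/2})\to1-\alpha$. By Proposition~\ref{prop:bellman-innovation-variance-lower-bounds}, when $\beta\ge C/v^*_{\min}$ the variance $\sigma^2$ is bounded away from zero and infinity uniformly in $H$. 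It therefore suffices to show $\widehat\sigma^2-\sigma^2=o_p(1)$.

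I will first separate estimated from oracle innovations. For $i\in I_k$, write $\widehat\xi_{i,t}=\xi_{i,t}+\Delta_{i,t}$, where
\[
\Delta_{i,t}=(\widehat\lambda_{-k}-\lambda_\beta)(S_{i,t},A_{i,t})\,e_{i,t}+\widehat\lambda_{-k}(S_{i,t},A_{i,t})\bigl(\widehat e_{i,t}-e_{i,t}\bigr),
\]
and $e$, $\widehat e$ denote the Bellman residuals $R-J+V_\beta(Q)(S')-Q(S,A)$ at the true and estimated nuisances. Then
\[
\widehat\sigma^2=\frac1{NH}\sum\xi_{i,t}^2+\frac2{NH}\sum\xi_{i,t}\Delta_{i,t}+\frac1{NH}\sum\Delta_{i,t}^2 .
\]
By Cauchy--Schwarz, it is enough to show that the oracle average converges to $\sigma^2$ and that $\frac1{NH}\sum\Delta_{i,t}^2=o_p(1)$.

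For the nuisance term, I will condition on $\cD_{-k}$, which is independent of fold $k$ by cross-fitting. The conditional expectation of $\frac1{|I_k|H}\sum_{i\in I_k}\sum_t\Delta_{i,t}^2$ is $\EE_{\bar P}[\Delta^2]$ evaluated at the fitted nuisances. Three facts bound this quantity. First, $V_\beta$ is $1$-Lipschitz in sup norm, since its derivative weights have $\ell_1$ mass at most $C_\cA$ by Lemma~\ref{lem:if-omega-tv-bound}. Second, on a finite space, norms under $\bar\mu$ and $\bar\nu$ dominate sup norms up to the constants in Assumption~\ref{ass:pooled-support}. Third, rewards are bounded by Assumption~\ref{ass:if-envelope}. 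Together these give
\[
\EE_{\bar P}[\Delta^2]\le C_\beta\bigl(\|\widehat\lambda_{-k}-\lambda_\beta\|^2_{\bar\mu,2}+\|\widehat\lambda_{-k}\|^2_\infty(|\widehat J_{-k}-J_\beta|^2+\|\widehat Q_{-k}-Q_\beta\|^2_{\bar\nu,2})\bigr).
\]
This is $o_p(1)$ by Assumption~\ref{ass:if-consistency}, because $\|\widehat\lambda_{-k}\|_\infty$ is $O_p(1)$ on the finite space. Conditional Markov's inequality then transfers this bound to the empirical average. The argument is the same for fixed and diverging $H$, since only first moments are needed.

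The oracle term is the main obstacle. For fixed $H$, the variables $H^{-1}\sum_t\xi_{i,t}^2$ are i.i.d.\ across $i$ and bounded, so the law of large numbers gives convergence to $\sigma^2$. For $H\to\infty$, I will write $\xi_{i,t}^2=g(X_{i,t})+\eta_{i,t}$, where $g(s,a)=\lambda_\beta(s,a)^2\Var(R+V_\beta(Q_\beta)(S')\mid s,a)$ and $\eta_{i,t}$ is a bounded martingale difference with respect to the trajectory filtration. The martingale part has variance $O(1/(NH))$ because its increments are orthogonal. For the predictable part $\frac1{NH}\sum g(X_{i,t})$, the mean is exactly $\sigma^2$. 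Its variance is bounded by $\frac{C}{NH}\sum_m\alpha^b_H(m)$ via the covariance inequality for bounded variables under strong mixing, and Assumption~\ref{ass:behavior-process-mixing} makes this $O(1/(NH))$. This is the quadratic-variation argument already used in Appendix~\ref{sec:app-martingale-qv}, and I would cite it there. Chebyshev's inequality then gives consistency, which completes the proof.
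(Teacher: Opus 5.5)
Your proposal is correct and follows essentially the paper's route. You replace $\widehat\xi_{i,t}$ by $\xi_{i,t}$ in cross-fitted mean square, with Cauchy--Schwarz for the cross term; you get oracle quadratic-variation consistency by Chebyshev for fixed $H$ and by the martingale-plus-mixing argument of Proposition~\ref{prop:rqv-sufficient-route} for diverging $H$; and you finish with Slutsky using $\sigma^2\ge c$. Your direct secant bound on $V_\beta$ is even slightly more economical than the Taylor-based Lemma~\ref{lem:score-perturbation-bound}: it needs only Assumption~\ref{ass:if-consistency}, not also $\beta\|\widehat Q_{-k}-Q_\beta\|_{\bar\nu,2}^2=o_p(1)$.

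Two small corrections. First, the weight envelope makes $V_\beta$ $C_{\cA}$-Lipschitz in sup norm, not $1$-Lipschitz, because the signed weights can have $\ell_1$ mass above one. Second, the $J^*$ conclusion forces $\beta\to\infty$ whenever $\Gamma<\infty$, so your constant must be uniform in $\beta$. It is, by Corollaries~\ref{cor:app-smoothed-target-bounded} and~\ref{cor:target-adjoint-bound}, so write $C$ rather than $C_\beta$. For the same reason, treat the fixed-$H$ oracle average as a uniformly bounded triangular array handled by Chebyshev, rather than by a fixed-law law of large numbers.
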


\section{Nuisance Estimation}\label{sec:nuisance-estimation}
\subsection{Self-Induced Bellman Pair Estimation}\label{sec:tabular-fast-rate}
We first study nuisance estimation under an independent one-step sampling scheme, which isolates the statistical difficulty of estimating the nonlinear self-induced Bellman solution and the adjoint weight. Sections~\ref{sec:tabular-fast-rate} and~\ref{sec:plug-in-adjoint-erm} establish nuisance estimation rates in this setting, including an explicit finite-sample bound for the Bellman pair. Section~\ref{sec:dependent-trajectory-nuisance-rates} then returns to the trajectory setting and shows that the same estimators can be trained on all observed transitions under the behavior-process mixing condition used for diverging horizon inference.
Throughout Sections~\ref{sec:tabular-fast-rate}--\ref{sec:plug-in-adjoint-erm}, let $Z_i=(S_i,A_i,R_i,S_i')$, $i=1,\ldots,n$, denote i.i.d. observations of individual transitions. A similar one-step sampling setup is used by \citet{kallus2022efficiently} to derive nuisance estimation rates. For trajectory data, such independent transition samples can be obtained by drawing one time point independently from each trajectory. Under uniform sampling over $0,\ldots,H-1$, the resulting transitions are i.i.d. with common law $\bar P$.
We write $\EE$ for expectation under this one-step sampling setup, let $\Pn$ denote the empirical measure over $Z_1,\ldots,Z_n$, and let $\bar\mu$ be the state-action marginal of the common one-step law. We assume that $\bar\mu(s,a)>0$ for every $(s,a)\in\mathcal S\times\mathcal A$. When the common one-step law is the pooled law from Section~\ref{sec:if}, this $\bar\mu$ is its state-action marginal.

We first develop a minimax estimator of $(Q_\beta,J_\beta)$ defined in Section~\ref{sec:est}.
Estimating this pair from one-step data requires converting a conditional Bellman restriction into observable unconditional moments. A single transition gives a noisy realized residual. We therefore minimize a regularized minimax criterion over test functions of the realized Bellman residual.
The additional difficulty is that the residual is nonlinear in $Q$, because the next state value map $V_\beta(Q)$ is self-induced. We show that $Q_\beta$ and $J_\beta$ are uniformly bounded in $\beta$; see Corollary~\ref{cor:app-smoothed-target-bounded}. We therefore choose fixed bounds $B_Q,B_J<\infty$ such that $\|Q_\beta\|_\infty\le B_Q$ and $|J_\beta|\le B_J$ for all relevant $\beta$. We optimize over the resulting fixed bounded subset of $\cQ\times\RR$, on which the continuous empirical criterion admits an exact minimizer.

For $(Q,J)\in\cQ\times\mathbb R$, define the conditional Bellman residual $m_{Q,J}(s,a):=\EE[R-J+V_\beta(Q)(S')-Q(S,A)\mid S=s,A=a]$.
The self-induced Bellman equation is equivalent to the conditional moment restriction $m_{Q_\beta,J_\beta}(s,a)=0$ for $(s,a)\in\mathcal S\times\mathcal A$.
For a fixed constant $B_G>0$, define the uniformly bounded test-function class $\cG:=\{g:\mathcal S\times\mathcal A\to\mathbb R:\|g\|_\infty\le B_G\}$.
Because the state-action space is finite, $\cG$ is a finite-dimensional, symmetric, star-shaped function class (that is, $g\in\cG$ implies $-g\in\cG$, and $g\in\cG$ implies $tg\in\cG$ for every $t\in[0,1]$; see \citet[Chapter~14]{wainwright2019high}).
For a fixed regularization tuning parameter $\rho>0$, the minimax estimator is any minimizer satisfying
\[
\begin{aligned}
(\widehat Q,\widehat J)
&\in
\argmin_{\substack{Q\in\cQ,\ J\in\RR\\ \|Q\|_\infty\le B_Q,\ |J|\le B_J}}\widehat L_n(Q,J),\\
\widehat L_n(Q,J)
&\coloneqq
\sup_{g\in\cG}
\biggl\{
2\Pn\!\left[g(S,A)(R-J+V_\beta(Q)(S')-Q(S,A))\right]
-\rho\Pn[g(S,A)^2]
\biggr\}.
\end{aligned}
\]

To establish the finite-sample rate, we first obtain quantitative identification through the Bellman residual using an exact secant representation of the nonlinear residual; see Proposition~\ref{prop:bellman-residual-identification} in Appendix~\ref{app:fast-rate-proofs}.
We combine the residual identification bound with the regularized empirical criterion to obtain a finite-sample rate.
\begin{theorem}
\label{thm:fast_rate}
Suppose Assumptions~\ref{ass:uniform-signed-aggregation-identification} and~\ref{ass:if-envelope} hold. Then there exists $C<\infty$, independent of $n$, $\beta$, and $\delta$, such that, for any $n\ge2$ and $\delta\in(0,1)$, with probability at least $1-\delta$, we have
\[
\|\widehat Q-Q_\beta\|_{\bar\mu,2}+|\widehat J-J_\beta|
\le
C
\sqrt{\frac{|\cS||\cA|\log n+\log(1/\delta)}{n}}.
\]
\end{theorem}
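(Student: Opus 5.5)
The proof has three steps: compute the population criterion in closed form, turn the identification condition into a residual lower bound, and control the empirical criterion uniformly over a finite-dimensional class.

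\textbf{Step 1: population criterion.} Write the realized residual as $e_{Q,J}(Z)\coloneqq R-J+V_\beta(Q)(S')-Q(S,A)$. Because $\cG$ is the full sup-norm ball on the finite set $\cS\times\cA$, maximize the population analogue $L(Q,J)=\sup_{g\in\cG}\{2\EE[g\,e_{Q,J}]-\rho\EE[g^2]\}$ pointwise in $(s,a)$. The pointwise maximizer is $g=\Pi_{B_G}(m_{Q,J}/\rho)$, where $\Pi_{B_G}$ clips to $[-B_G,B_G]$. This gives
\[
L(Q,J)\ge c\min\{\|m_{Q,J}\|_{\bar\mu,2}^2,\ \|m_{Q,J}\|_{\bar\mu,2}\}.
\]
Since $R$, $Q$ and $J$ are bounded, $\|m_{Q,J}\|_\infty$ is bounded. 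Choosing $B_G\ge \sup\|m\|_\infty/\rho$ therefore gives the exact identity $L=\|m_{Q,J}\|_{\bar\mu,2}^2/\rho$, with $L(Q_\beta,J_\beta)=0$.

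\textbf{Step 2: quantitative identification.} We need
\[
\|Q-Q_\beta\|_{\bar\mu,2}+|J-J_\beta|\le C\|m_{Q,J}\|_{\bar\mu,2},
\]
uniformly in $\beta$, and this is Proposition~\ref{prop:bellman-residual-identification}. The sketch uses the secant identity
\[
V_\beta(Q)-V_\beta(Q_\beta)=\int_0^1 D_QV_\beta(Q_\beta+u(Q-Q_\beta))[Q-Q_\beta]\,du=\bar D(Q-Q_\beta).
\]
The averaged weights $\int_0^1\omega_{\beta,Q_u}\,du$ sum to one. Their absolute sum is at most $\sup_u\sum_a|\omega|\le 1+2\log|\cA|=C_\cA$ (Lemma~\ref{lem:if-omega-tv-bound}), so $\bar D$ is allowable.

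Hence $m_{Q,J}=-(J-J_\beta)\one-(Q-Q_\beta)+P\bar D(Q-Q_\beta)$. Assumption~\ref{ass:uniform-signed-aggregation-identification} makes the linear map $(q,j)\mapsto -j\one-q+PDq$ injective on $\cQ\times\RR$ for every allowable $D$. The set of allowable $D$ is compact, since the weights lie in a compact polytope-like set, and injectivity is stable under continuity. A compactness argument then gives a uniform lower bound on the smallest singular value, independent of $\beta$. Finiteness and $\min\bar\mu>0$ let us pass between $\|\cdot\|_\infty$ and $\|\cdot\|_{\bar\mu,2}$.

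\textbf{Step 3: uniform empirical control.} This is the main obstacle, because it needs the fast $\sqrt{d/n}$ rate rather than $n^{-1/4}$. Use the standard localized minimax argument (as in Dikkala et al.; Wainwright Ch.~14). The estimator satisfies $\widehat L_n(\widehat Q,\widehat J)\le \widehat L_n(Q_\beta,J_\beta)$. Taking $g=0$ shows the left-hand side is nonnegative. For any $g$, the right-hand side equals $\sup_g\{2\Pn[g\,e_{Q_\beta,J_\beta}]-\rho\Pn g^2\}$, whose population version vanishes.

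Plug the clipped $\tilde g$ from Step 1 at $(\widehat Q,\widehat J)$ into the left-hand side. Then compare the empirical processes
\[
(\Pn-\EE)[g\,e_{Q,J}]\quad\text{and}\quad(\Pn-\EE)[g^2]
\]
uniformly over the parametric class indexed by $(Q,J,g)$. This class has dimension $O(|\cS||\cA|)$, its elements are bounded, and the map is Lipschitz in the parameters, with Lipschitz constant growing with $\beta$. The $\beta$-dependence enters only through a $\log$ covering number, via $\log(\beta n)$. To keep $C$ independent of $\beta$, avoid Lipschitz in $\beta$ entirely: instead bound $V_\beta(Q)$ through the $1$-Lipschitz property of the softmax average in $\|\cdot\|_\infty$ ($|V_\beta(Q)-V_\beta(Q')|\le C_\cA\|Q-Q'\|_\infty$ via the secant weights). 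Then an $\epsilon$-net at scale $1/n$ costs $|\cS||\cA|\log n$.

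Bernstein plus a union bound over the net, combined with a localization and peeling argument, yields a critical radius $\delta_n\asymp\sqrt{(|\cS||\cA|\log n+\log(1/\delta))/n}$. The localization uses the variance of $g\,e$, which is bounded by $\|g\|_{\bar\mu,2}^2$. This gives $\|m_{\widehat Q,\widehat J}\|_{\bar\mu,2}\le C\delta_n$, and Step 2 converts it into the stated bound.
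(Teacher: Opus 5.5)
Your proposal is correct. Step~2 is the paper's identification argument: the secant weights make $D_{\beta;Q,Q_\beta}$ allowable, and compactness of the allowable weight arrays gives the $\beta$-free inverse of Lemma~\ref{lem:uniform-sup-inverse}, hence Proposition~\ref{prop:bellman-residual-identification}. Where you differ is in how you close the empirical argument.

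\textbf{The paper's route.} It proves a deterministic localization lemma (Lemma~\ref{lem:bellman-localization}). Evaluating the estimated criterion at $g/2$ and the true criterion at $-g/2$ gives a basic inequality in the residual \emph{increment} $e_{\widehat Q,\widehat J}-e_{Q_\beta,J_\beta}$. It then inserts the small test function $g_\eta=\eta_n m_{\widehat Q,\widehat J}/\|m_{\widehat Q,\widehat J}\|_{\bar\mu,2}$, which lies in $\cG$ for any fixed $B_G$. This requires a product-process bound $|(\Pn-\EE)[g(e_{Q,J}-e_{Q_\beta,J_\beta})]|\le C\eta_n\|g\|_{\bar\mu,2}(\|Q-Q_\beta\|_{\bar\mu,2}+|J-J_\beta|)+C\eta_n^2$, localized in both $g$ and the parameter error.

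\textbf{Your route.} You insert the clipped population maximizer $\tilde g=\Pi_{B_G}(m_{\widehat Q,\widehat J}/\rho)$. Its norm is already at most $\|m_{\widehat Q,\widehat J}\|_{\bar\mu,2}/\rho$, so an un-differenced bound $|(\Pn-\EE)[g\,e_{Q,J}]|\le C\eta_n\|g\|_{\bar\mu,2}$ suffices. That bound follows from boundedness of $e_{Q,J}$, the normalization $g\mapsto g/\|g\|_{\bar\mu,2}$ (which replaces peeling), and a $1/n$-net over the bounded $(Q,J)$ domain. You also need $\widehat L_n(Q_\beta,J_\beta)\le C\eta_n^2$, which is the same completing-the-square step as in the paper and needs a lower empirical-norm comparison. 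Together these give $c\|m\|_{\bar\mu,2}^2\le C\eta_n\|m\|_{\bar\mu,2}+C\eta_n^2$.

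\textbf{What each buys.} Your route is shorter in the tabular case. The paper's differenced, eventwise formulation is reused verbatim for Theorem~\ref{thm:dependent-trajectory-nuisance-rates}. It is also the form that gives local-complexity rates beyond finite classes, where a uniform bound on raw residuals would pay for the global complexity of the $Q$-class.

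\textbf{Two small repairs.} First, $B_G$ is a fixed constant in the theorem, so you cannot choose it large. The clipped choice works for any $B_G$: with $M\ge\sup_{Q,J}\|m_{Q,J}\|_\infty$, pointwise $2\tilde g m-\frac{3}{2}\rho\tilde g^2\ge\min\{1/(2\rho),B_G/(2M)\}m^2$. This also absorbs the factor $3/2$ from the empirical-norm comparison.

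Second, $V_\beta$ is not $1$-Lipschitz in $\|\cdot\|_\infty$; the Boltzmann average is not a non-expansion, because the derivative weights $\omega_{\beta,Q}$ can be negative. It is only $C_{\cA}$-Lipschitz through the signed secant weights. That is the bound you actually use, and it keeps the net size free of $\beta$.
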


\subsection{Adjoint Weight Estimation}\label{sec:plug-in-adjoint-erm}
We continue under the one-step sampling setup of Section~\ref{sec:tabular-fast-rate}. Let $\lambda_\beta$ denote the adjoint weight associated with $D_\beta$.  We first derive an error bound for $\widehat\lambda_\beta$ in terms of the estimation error of a generic first-stage estimator $\widehat Q$ of $Q_\beta$, and then specialize the result to the Bellman estimator developed in Section~\ref{sec:tabular-fast-rate}. To estimate $\lambda_\beta$, we construct an empirical analogue of its adjoint moment equation.
Because the inner supremum ranges over the anchored space $\cQ$, the empirical quadratic term may be degenerate along poorly sampled directions. We therefore use the stabilized norm $\|q\|_\tau^2:=\Pn[q(S,A)^2]+(n|\cS||\cA|)^{-1}\sum_{s\in\cS}\sum_{a\in\cA}q(s,a)^2$, for $q\in\cQ$. The empirical quadratic term $\Pn[q(S,A)^2]$ can vanish for a nonzero element of
$\cQ$.  
The added term makes $\|\cdot\|_\tau$ a norm on $\cQ$ and keeps the empirical suprema finite on every sample path.

Recall the derivative weights $\omega_{\beta,Q_\beta}$ from \eqref{eq:omega}. The population criterion for $\lambda$ is
\[
G_\beta(\lambda)
\coloneqq
\sup_{q\in\cQ}
\left\{2\EE\!\left[
\lambda(S,A)
\left(
q(S,A)-\sum_{a\in\cA}\omega_{\beta,Q_\beta}(a\mid S')q(S',a)
\right)
\right]-\|q\|_{\bar\mu,2}^2\right\}
+(\EE[\lambda(S,A)]-1)^2.
\]
In particular, $G_\beta(\lambda_\beta)=0$, and the criterion measures violation of the adjoint moment equation and the normalization constraint. The adjoint weight $\lambda_\beta$ is uniformly bounded over $\beta$ and the relevant horizons; see Corollary~\ref{cor:target-adjoint-bound}.
Choose a sufficiently large fixed constant $B_\Lambda$ and define $\Lambda:=\{\lambda:\mathcal S\times\mathcal A\to\mathbb R:\|\lambda\|_\infty\le B_\Lambda\}$, so that $\lambda_\beta\in\Lambda$.
This criterion is infeasible because the derivative weights evaluated at $Q_\beta$ are unknown. We therefore replace them by derivative weights evaluated at the first stage estimator $\widehat Q$. Define $\widehat\omega_\beta(a\mid s):=\pi_{\beta,\widehat Q}(a\mid s)[1+\beta(\widehat Q(s,a)-V_\beta(\widehat Q)(s))]$.
Replacing $\omega_{\beta,Q_\beta}$ in the empirical criterion with $\widehat\omega_\beta$ yields the feasible criterion
\[
\widehat G_\beta(\lambda)
\coloneqq
\sup_{q\in\cQ}
\left\{
2\Pn\!\left[
\lambda(S,A)
\left(
q(S,A)-\sum_{a\in\cA}\widehat\omega_\beta(a\mid S')q(S',a)
\right)
\right]
-\|q\|_\tau^2
\right\}
+(\Pn\lambda-1)^2.
\]
We estimate the adjoint weight by any exact minimizer $\widehat\lambda_\beta\in\argmin_{\lambda\in\Lambda}\widehat G_\beta(\lambda)$. The estimator obeys the following generic bound. We present the proof in Appendix~\ref{app:plug-in-adjoint-erm-proofs}.

\begin{theorem}\label{thm:main}
Suppose Assumption~\ref{ass:uniform-signed-aggregation-identification} holds and let $\widehat Q$ be a generic first-stage estimator of $Q_\beta$ satisfying $\|\widehat Q-Q_\beta\|_{\bar\mu,2}=O_P(r_Q)$, where $r_Q\to0$. Then we have
\[
\|\widehat\lambda_\beta-\lambda_\beta\|_{\bar\mu,2}
=
O_P\left( 
\left[
\sqrt{\frac{\log n}{n}}
+\beta r_Q
\right]^{1/2}
\right).
\]
\end{theorem}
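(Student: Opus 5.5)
The plan is to treat $\widehat\lambda_\beta$ as an approximate minimizer of a criterion that identifies $\lambda_\beta$ quadratically, and to split the deviation into a population identification part and two perturbation parts.

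\textbf{Population identification.} Define the linear map $T\lambda$ as the Riesz representer, in $L^2(\bar\mu)$ restricted to $\cQ$, of
\[
q\mapsto \EE[\lambda(S,A)(q(S,A)-(D_\beta q)(S'))].
\]
Then the population criterion evaluates in closed form as
\[
G_\beta(\lambda)=\|\Pi_{\cQ}T\lambda\|_{\bar\mu,2}^2+(\EE\lambda-1)^2,
\]
where the supremum over $q$ of $2\langle u,q\rangle-\|q\|^2$ equals $\|u\|^2$. Since $G_\beta(\lambda_\beta)=0$ and the map is linear in $\lambda-\lambda_\beta$, we get
\[
G_\beta(\lambda)=\|\Pi_\cQ T(\lambda-\lambda_\beta)\|^2+(\EE[\lambda-\lambda_\beta])^2 .
\]
Assumption~\ref{ass:uniform-signed-aggregation-identification}, applied to the allowable $D_\beta$ (Lemma~\ref{lem:if-omega-tv-bound}), says the adjoint operator $q\mapsto q-PD_\beta q$, augmented with the constant direction, is injective on $\cQ\times\RR$. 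In finite dimensions its transpose is therefore surjective onto the relevant dual. So the map $\lambda\mapsto(\Pi_\cQ T\lambda,\EE\lambda)$ is injective, and we obtain
\[
G_\beta(\lambda)\ge c_\beta\|\lambda-\lambda_\beta\|_{\bar\mu,2}^2 .
\]
Here $\bar\mu>0$ makes all norms equivalent. Uniformity of $c$ in $\beta$ is where the uniform envelope $C_\cA$ in the assumption matters: a compactness argument over the compact set of allowable $D$ gives a uniform lower bound on the smallest singular value.

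\textbf{Uniform comparison of $\widehat G_\beta$ and $G_\beta$ over $\Lambda$.} I would introduce the intermediate empirical criterion $\widetilde G_\beta$, which uses the true weights $\omega_{\beta,Q_\beta}$. I would then show two bounds.

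\emph{(a) Sampling error.} We need
\[
\sup_{\lambda\in\Lambda}|\widetilde G_\beta(\lambda)-G_\beta(\lambda)|=O_P\bigl(\sqrt{\log n/n}\bigr).
\]
Everything is finite-dimensional: the inner supremum over $q$ is attained at a $q$ bounded by a constant, because $\|\cdot\|_\tau$ dominates $\Pn q^2$, and with high probability $\Pn q^2\ge c\|q\|_{\bar\mu,2}^2$ since $\Pn(s,a)\to\bar\mu(s,a)$ at rate $\sqrt{\log n/n}$. The extra stabilization term contributes only $O(1/n)$. So a union/covering bound over bounded finite-dimensional classes for $\lambda$ and $q$, together with concentration of empirical cell frequencies and of the products $\lambda(S,A)q(S',a)$, gives the rate.

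\emph{(b) First-stage error.} We need
\[
\sup_\lambda|\widehat G_\beta(\lambda)-\widetilde G_\beta(\lambda)|=O_P(\beta r_Q).
\]
The softmax map $Q\mapsto\omega_{\beta,Q}$ is Lipschitz with constant $O(\beta)$ in the relevant norm, because $\omega$ involves $\pi_{\beta,Q}$ and $\beta(Q-V_\beta(Q))$, whose product is controlled using the bounds on $\pi\beta|Q-V|$. Hence the linear functionals differ by
\[
B_\Lambda\,\|q\|\,\beta\,\|\widehat Q-Q_\beta\|,
\]
and the suprema over bounded $q$ differ by the same order. Converting $\bar\mu$-norm control of $\widehat Q$ into control at next states uses Assumption~\ref{ass:pooled-support}/the positivity of $\bar\mu$ on the finite space.

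\textbf{Conclusion and main obstacle.} Combining, with $\epsilon=\sqrt{\log n/n}+\beta r_Q$, the basic inequality gives
\[
c\|\widehat\lambda_\beta-\lambda_\beta\|^2\le G_\beta(\widehat\lambda_\beta)\le \widehat G_\beta(\widehat\lambda_\beta)+O_P(\epsilon)\le \widehat G_\beta(\lambda_\beta)+O_P(\epsilon)\le G_\beta(\lambda_\beta)+O_P(\epsilon)=O_P(\epsilon).
\]
This yields the square-root rate in Theorem~\ref{thm:main}; the square root appears because I only control the criterion, not its gradient. The step I expect to be hardest is the uniform-in-$\beta$ quadratic identification constant. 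I would try first to derive it directly from Assumption~\ref{ass:uniform-signed-aggregation-identification} via compactness of the allowable operator set. If that fails, I would fall back on $\beta$-dependent constants absorbed into the $O_P$.
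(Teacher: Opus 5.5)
Your proposal is correct and follows essentially the same route as the paper: quadratic identification of $G_\beta$ through the Riesz representer and the dual of a uniform primal inverse obtained by compactness over the allowable family (Lemmas~\ref{lem:uniform-sup-inverse} and~\ref{lem:uniform-dual-identification}), then a reference empirical criterion with the true weights, controlled by localizing the inner supremum to a fixed ball plus a finite-dimensional covering bound (Lemma~\ref{lem:delta}), then an $O(\beta)$-Lipschitz plug-in perturbation of the derivative weights (Lemma~\ref{lem:plugin}), and finally the basic-inequality comparison (Lemma~\ref{lem:adjoint-criterion-comparison}). The compactness argument you name as your first choice does give a constant uniform in $\beta$, so your $\beta$-dependent fallback is not needed.
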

The contribution from replacing $Q_\beta$ by $\widehat Q$ is of order $\beta r_Q$, since the derivative weight map is $O(\beta)$-Lipschitz in $Q$ on bounded sets.
Combining the adjoint bound above with the Bellman estimator $\widehat Q$ from Theorem~\ref{thm:fast_rate} yields the following rate.
\begin{corollary}
\label{cor:density-first-stage}
Suppose the conditions of Theorems~\ref{thm:fast_rate} and \ref{thm:main} hold. Then we have
\[
\|\widehat\lambda_\beta-\lambda_\beta\|_{\bar\mu,2}
=
O_P\left( 
\left[
\sqrt{\frac{\log n}{n}}
+
\beta
\sqrt{\frac{\log n}{n}}
\right]^{1/2}
\right).
\]
\end{corollary}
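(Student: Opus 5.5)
The plan is to substitute the rate of Theorem~\ref{thm:fast_rate} into the generic bound of Theorem~\ref{thm:main}. The only things to check are that the high-probability statement gives an $O_P$ rate $r_Q$ with $r_Q\to0$, and that the norm in Theorem~\ref{thm:fast_rate} is the one required by Theorem~\ref{thm:main}.

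\textbf{Step 1: convert the tail bound into an $O_P$ rate.} Set $r_Q:=\sqrt{\log n/n}$, and fix $\varepsilon\in(0,1)$. Apply Theorem~\ref{thm:fast_rate} with $\delta=\varepsilon$. With probability at least $1-\varepsilon$,
\[
\|\widehat Q-Q_\beta\|_{\bar\mu,2}\le C\sqrt{\frac{|\cS||\cA|\log n+\log(1/\varepsilon)}{n}}.
\]
Since $n\ge2$, we have $\log n\ge\log 2$. Hence the right-hand side is at most $C\bigl(|\cS||\cA|+\log(1/\varepsilon)/\log 2\bigr)^{1/2}r_Q$, which is $M_\varepsilon r_Q$ for a constant $M_\varepsilon$ that does not depend on $n$. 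This is exactly $\|\widehat Q-Q_\beta\|_{\bar\mu,2}=O_P(r_Q)$, and clearly $r_Q\to0$.

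\textbf{Step 2: check the norm.} Theorem~\ref{thm:fast_rate} is stated in the $\|\cdot\|_{\bar\mu,2}$ norm, under the same one-step sampling law used in Theorem~\ref{thm:main}. Its hypothesis therefore holds verbatim.

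\textbf{Step 3: apply Theorem~\ref{thm:main}.} With $r_Q=\sqrt{\log n/n}$, Theorem~\ref{thm:main} gives the displayed bound
\[
\Bigl[\sqrt{\tfrac{\log n}{n}}+\beta\sqrt{\tfrac{\log n}{n}}\Bigr]^{1/2}.
\]

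\textbf{Possible subtlety.} The only delicate point is the dependence on $\beta$. The $O_P$ constant in Step~1 is independent of $\beta$, because $C$ in Theorem~\ref{thm:fast_rate} is. So if $\beta$ is allowed to vary with $n$, the statement stays meaningful as long as Theorem~\ref{thm:main}'s $O_P$ constant is also uniform in $\beta$. I would rely on that uniformity as stated there. In particular, the $\beta$-dependence is recorded explicitly in the factor $\beta r_Q$.

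Sample splitting is not needed for this argument, since Theorem~\ref{thm:main} allows a generic first-stage estimator $\widehat Q$.
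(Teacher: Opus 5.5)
Your proposal is correct and follows the paper's proof, which simply substitutes the rate $O_P\bigl(\sqrt{|\cS||\cA|\log n/n}\bigr)$ from Theorem~\ref{thm:fast_rate} into Theorem~\ref{thm:main}. Your conversion of the high-probability bound into an $O_P$ statement, and your checks of the norm and of $\beta$-uniformity, spell out steps the paper leaves implicit.
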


\subsection{Nuisance Estimation from Dependent Trajectories}
\label{sec:dependent-trajectory-nuisance-rates}
We now return to the trajectory setting of Section~\ref{sec:if} and use all observed transitions for nuisance estimation. We form the Bellman and adjoint estimators by replacing $\Pn$ in Sections~\ref{sec:tabular-fast-rate} and~\ref{sec:plug-in-adjoint-erm} with the empirical average over the $NH$ observed transitions, and replacing $n$ by $NH$ in the stabilizing term of $\|\cdot\|_\tau$.  The resulting nuisance estimators satisfy the following rates.

\begin{theorem}
\label{thm:dependent-trajectory-nuisance-rates}
Suppose $NH\to\infty$. Under Assumptions~\ref{ass:uniform-signed-aggregation-identification}, \ref{ass:pooled-support}, \ref{ass:if-envelope}, and~\ref{ass:behavior-process-mixing}, the estimator satisfies
$$
\|\widehat Q-Q_\beta\|_{\bar\mu,2}
+
|\widehat J-J_\beta|
=
O_P((NH)^{-1/2}).
$$
Using $\widehat Q$ as the first-stage estimator, we have
$$
\|\widehat\lambda_\beta-\lambda_\beta\|_{\bar\mu,2}
=
O_P
\left(
\sqrt{1+\beta}\,(NH)^{-1/4}
\right).
$$
\end{theorem}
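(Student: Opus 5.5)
The plan is to rerun the proofs of Theorems~\ref{thm:fast_rate} and~\ref{thm:main} with the i.i.d.\ empirical measure $\Pn$ replaced by the pooled empirical measure $\mathbb P_{NH}:=(NH)^{-1}\sum_{i=1}^N\sum_{t=0}^{H-1}\delta_{Z_{i,t}}$. The population side of both arguments is deterministic and carries over unchanged once $\EE$ is read as $\EE_{\bar P}$ and $\bar\mu$ as the pooled state-action marginal. This includes the secant-based residual identification (Proposition~\ref{prop:bellman-residual-identification}), the fact that $G_\beta(\lambda_\beta)=0$, and the quadratic growth of $G_\beta$ around $\lambda_\beta$, which follows from Assumption~\ref{ass:uniform-signed-aggregation-identification} and the lower bound on $\bar\mu$ in Assumption~\ref{ass:pooled-support}. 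The only stochastic input that changes is a uniform deviation bound for $(\mathbb P_{NH}-\bar P)f$ over the relevant classes. Since $\cS\times\cA$ is finite, every criterion is a polynomial or softmax expression in finitely many empirical cell averages. I therefore reduce everything to the finite-dimensional empirical vectors
\[
\widehat\mu(s,a),\quad \widehat r(s,a):=\mathbb P_{NH}[R\,\mathbf 1\{(S,A)=(s,a)\}],\quad \widehat p(s,a,s'):=\mathbb P_{NH}[\mathbf 1\{(S,A,S')=(s,a,s')\}].
\]
The key claim is that each of these deviates from its $\bar P$-mean by $O_P((NH)^{-1/2})$.

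\textbf{Step 1 (concentration of cell averages).} Fix a bounded one-step function $f$. Independence across trajectories gives $\Var(\mathbb P_{NH}f)=N^{-1}\Var(H^{-1}\sum_t f(Z_{i,t}))$, so it suffices to show $\Var(\sum_t f(Z_{i,t}))\le C H$ uniformly in $H$.

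I split $f(Z_t)=\EE[f(Z_t)\mid X_t]+\{f(Z_t)-\EE[f(Z_t)\mid X_t]\}$, where $X_t=(S_t,A_t)$.
\begin{itemize}
\item The second piece is a martingale difference with respect to the trajectory filtration, because the environment is time-homogeneous Markov. Its sum therefore has variance at most $C H$.
\item The first piece is a bounded function of $X_t$. Davydov's covariance inequality gives $|\Cov(g(X_t),g(X_{t+m}))|\le 4\|g\|_\infty^2\alpha^b_H(m)$. Summing over lags, Assumption~\ref{ass:behavior-process-mixing} bounds the variance by $C H\bigl(1+\sup_H\sum_m\alpha_H^b(m)\bigr)$.
\end{itemize}
Chebyshev's inequality and a union bound over the finitely many cells then give $\max\bigl(|\widehat\mu-\bar\mu|,|\widehat r-\bar r|,|\widehat p-\bar p|\bigr)=O_P((NH)^{-1/2})$. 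Quadratic-in-$g$ terms such as $\mathbb P_{NH}[g^2]$ are also linear in $\widehat\mu$, so they are covered as well.

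\textbf{Step 2 (Bellman pair).} Write $\widehat L_{NH}(Q,J)$ as a continuous function of $(\widehat\mu,\widehat r,\widehat p)$ and $(Q,J)$. On the bounded parameter set the map is Lipschitz in these empirical vectors, with a constant depending only on $B_Q,B_J,B_G$; this holds because $V_\beta(Q)$ is bounded by $\|Q\|_\infty$ and no $\beta$-dependence enters linearly in the data. The sup over $g$ is explicit, with maximizer $g=\mathrm{clip}(\widehat m/(\rho\widehat\mu))$ cellwise, where $\widehat m$ is the empirical residual. Hence
\[
\sup_{Q,J}|\widehat L_{NH}(Q,J)-L(Q,J)|=O_P((NH)^{-1/2}),
\]
and in fact $\widehat L_{NH}(Q,J)\asymp\|\widehat m_{Q,J}\|^2$ for small residuals. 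The basic inequality $\widehat L_{NH}(\widehat Q,\widehat J)\le\widehat L_{NH}(Q_\beta,J_\beta)$ then bounds the empirical residual at the minimizer. Rather than using a uniform-deviation argument directly on $L$, which would only give an $(NH)^{-1/4}$ rate, I argue at the residual level:
\[
\|m_{\widehat Q,\widehat J}\|_{\bar\mu,2}\le\|\widehat m_{\widehat Q,\widehat J}\|+\sup\|\widehat m-m\|\lesssim\|\widehat m_{Q_\beta,J_\beta}\|+O_P((NH)^{-1/2})=O_P((NH)^{-1/2}).
\]
The residual identification of Proposition~\ref{prop:bellman-residual-identification} then converts this into the stated bound on $\|\widehat Q-Q_\beta\|_{\bar\mu,2}+|\widehat J-J_\beta|$.

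\textbf{Step 3 (adjoint weight).} Replay the proof of Theorem~\ref{thm:main} with $n$ replaced by $NH$. That argument used $\Pn$ only through uniform deviations over $\lambda\in\Lambda$ and over $q$ in a bounded set after normalization by $\|q\|_\tau$. In the finite setting these deviations are again Lipschitz functionals of $(\widehat\mu,\widehat p)$, so Step 1 gives $O_P((NH)^{-1/2})$ in place of $\sqrt{\log n/n}$. The plug-in error is $\beta r_Q$ with $r_Q=(NH)^{-1/2}$ from Step 2. The quadratic growth of $G_\beta$ yields
\[
\|\widehat\lambda_\beta-\lambda_\beta\|_{\bar\mu,2}^2=O_P\bigl((1+\beta)(NH)^{-1/2}\bigr),
\]
which is the claim.

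\textbf{Main obstacle.} I expect the hardest part to be handling the sup over $q\in\cQ$ in $\widehat G_\beta$ with the stabilized norm. The supremum must be reduced to a quadratic form whose coefficient matrix is a function of $\widehat\mu$ and $\widehat p$, and its perturbation must be controlled uniformly, using that $\widehat\mu$ is bounded below with probability tending to one by Assumption~\ref{ass:pooled-support} and Step 1. The remaining care is to check that the linear-in-$\beta$ Lipschitz constant of $\omega_{\beta,Q}$ enters only through the first-stage term, and not through the empirical-process term.
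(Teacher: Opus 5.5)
Your proposal is correct. Steps~1 and~3 essentially coincide with the paper's proof. Step~1 is Lemma~\ref{lem:dependent-pooled-root-nh}: a martingale-difference piece, a strong-mixing covariance bound for the predictable piece, independence across trajectories, and Chebyshev. Step~3 is the paper's replay of Theorem~\ref{thm:main} through Lemma~\ref{lem:adjoint-criterion-comparison}. There the empirical deviation is $O_P((NH)^{-1/2})$ with a $\beta$-free constant, and the plug-in error is $O_P(\beta r_Q)$ by the Hessian bound. The paper handles your ``main obstacle'' more crudely than you propose: on the empirical norm-comparison event it restricts the inner supremum over $q$ to a fixed deterministic ball, rather than working with an explicit quadratic form.

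The genuine difference is Step~2. The paper never computes the inner supremum. It verifies the four hypotheses of the deterministic localization Lemma~\ref{lem:bellman-localization}: residual identification, $\widehat L_{N,H}(Q_\beta,J_\beta)=O_P((NH)^{-1})$, empirical norm comparison, and a normalized product-process bound of order $\eta\|g\|_{\bar\mu,2}e_{Q,J}+\eta^2$. It then tests against the residual-aligned function $g_\eta\propto m_{\widehat Q,\widehat J}$.

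You instead exploit the tabular structure directly, in two ways.
\begin{itemize}
\item The criterion is an explicit cellwise Huber-type function of the empirical conditional residual. Write $\widehat L(Q,J)=\sum_x\widehat\mu(x)\psi(\widehat m_{Q,J}(x))$, with $\psi(u)=u^2/\rho$ for $|u|\le\rho B_G$ and $\psi$ linear beyond.
\item The residual deviation is linear in the cell averages. In $\widehat m_{Q,J}-m_{Q,J}$ the $-J-Q(x)$ terms cancel exactly, and the remaining coefficients are bounded by $\|V_\beta(Q)\|_\infty\le B_Q$. So $\sup_{Q,J}\|\widehat m_{Q,J}-m_{Q,J}\|=O_P((NH)^{-1/2})$ uniformly in $\beta$, with no localization.
\end{itemize}

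The one step to make explicit is the transfer from criterion to residual. Note that $\psi$ is nondecreasing in $|u|$ and at least $\rho B_G^2$ outside $[-\rho B_G,\rho B_G]$. Once $\min_x\widehat\mu(x)$ is bounded below, a criterion value of order $(NH)^{-1}$ therefore forces every cell into the quadratic regime, for both $(Q_\beta,J_\beta)$ and $(\widehat Q,\widehat J)$. The basic inequality then gives $\|\widehat m_{\widehat Q,\widehat J}\|_{\widehat\mu,2}\le\|\widehat m_{Q_\beta,J_\beta}\|_{\widehat\mu,2}$ exactly.

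The two routes buy different things. Yours is shorter and more transparent for finite $\mathcal S\times\mathcal A$, and it makes clear why a uniform bound on the residual map, rather than on $\widehat L$ itself, is what gives the parametric rate. The paper's route reuses the machinery behind Theorem~\ref{thm:fast_rate}. It remains valid when the inner supremum has no closed form, or when only a localized (rather than uniform parametric) deviation bound for the residual increments is available. That is the template one would need beyond the tabular case.
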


\begin{remark}
\label{rem:dependent-nuisance-verifies-if}
Theorem~\ref{thm:fast_rate} gives a finite-sample high-probability bound under independent one-step sampling, whereas Theorem~\ref{thm:dependent-trajectory-nuisance-rates} uses all $NH$ dependent transitions and gives an asymptotic $O_P((NH)^{-1/2})$ Bellman pair rate under Assumption~\ref{ass:behavior-process-mixing}.
Consider trajectory-level cross-fitting with a fixed number of folds. Each training sample contains a fixed fraction of the $N$ independent trajectories, so Theorem~\ref{thm:dependent-trajectory-nuisance-rates} applies foldwise with the same $NH$ rate. If the conditions of Theorem~\ref{thm:dependent-trajectory-nuisance-rates} hold and $\beta=o(\sqrt{NH})$, then the all-transition nuisance estimators satisfy Assumptions~\ref{ass:if-consistency} and~\ref{ass:if-growing-rate}; see Appendix~\ref{app:dependent-inference-verification}. If, in addition, $\sqrt{NH}e^{-\beta\Gamma}\to0$, then the smoothing-bias condition in Theorem~\ref{thm:clt-N} also holds. Thus, for fixed $\Gamma>0$, a simple sufficient range for valid optimal value inference is
$$
\log(NH)\ll\beta\ll\sqrt{NH}.
$$
\end{remark}
\section{Numerical Results}\label{sec:numerical}

\subsection{Synthetic Experiments}\label{sec:num-synthetic}

The synthetic study uses an MDP with $|\cS|=100$ states and $|\cA|=100$ actions. Five actions share the optimum in every state, while a designated runner-up is separated from them, so the design represents the nonregular setting induced by nonunique optimal actions.
Each Monte Carlo sample contains $N$ independent trajectories observed for $H$ transitions. We consider a stationary behavior policy mixture and linear and polynomial time-varying behavior policy mixtures with comparable average overlap. We use an intuitive model-based rule to calibrate $\beta$, selecting the smallest candidate whose smoothed value is within $0.5\%$ of the corresponding model benchmark. We use $\beta=4$ based on the simulated MDP. We evaluate coverage of $J^*$, root mean squared error (RMSE), confidence interval (CI) length, and the distribution of the studentized error.
All feasible methods use trajectory-level cross-fitting, with trajectories split into two folds. Proposed denotes our proposed method. Oracle uses the same orthogonal score but substitutes the population quantities $(Q_\beta,J_\beta,\lambda_\beta)$. It is infeasible and serves only to isolate the finite-sample cost of nuisance estimation. The doubly robust comparator, denoted Hardmax-DR, first estimates an action-value function from the Bellman optimality equation and forms the corresponding deterministic greedy policy, then refits the relative action value and average reward for that fixed learned policy and applies a fixed-policy doubly robust correction. The Hardmax sequential DR comparator instead evaluates the deterministic greedy policy induced by the fitted action-value function using temporal-difference corrections weighted by cumulative importance ratios.

\begin{figure}[!t]
\centering
\begin{tabular}{@{}ccc@{}}
\begin{minipage}{0.31\textwidth}
\centering
\small\textbf{(a) Coverage versus $H$}\\[-1pt]
\includegraphics[width=\textwidth]{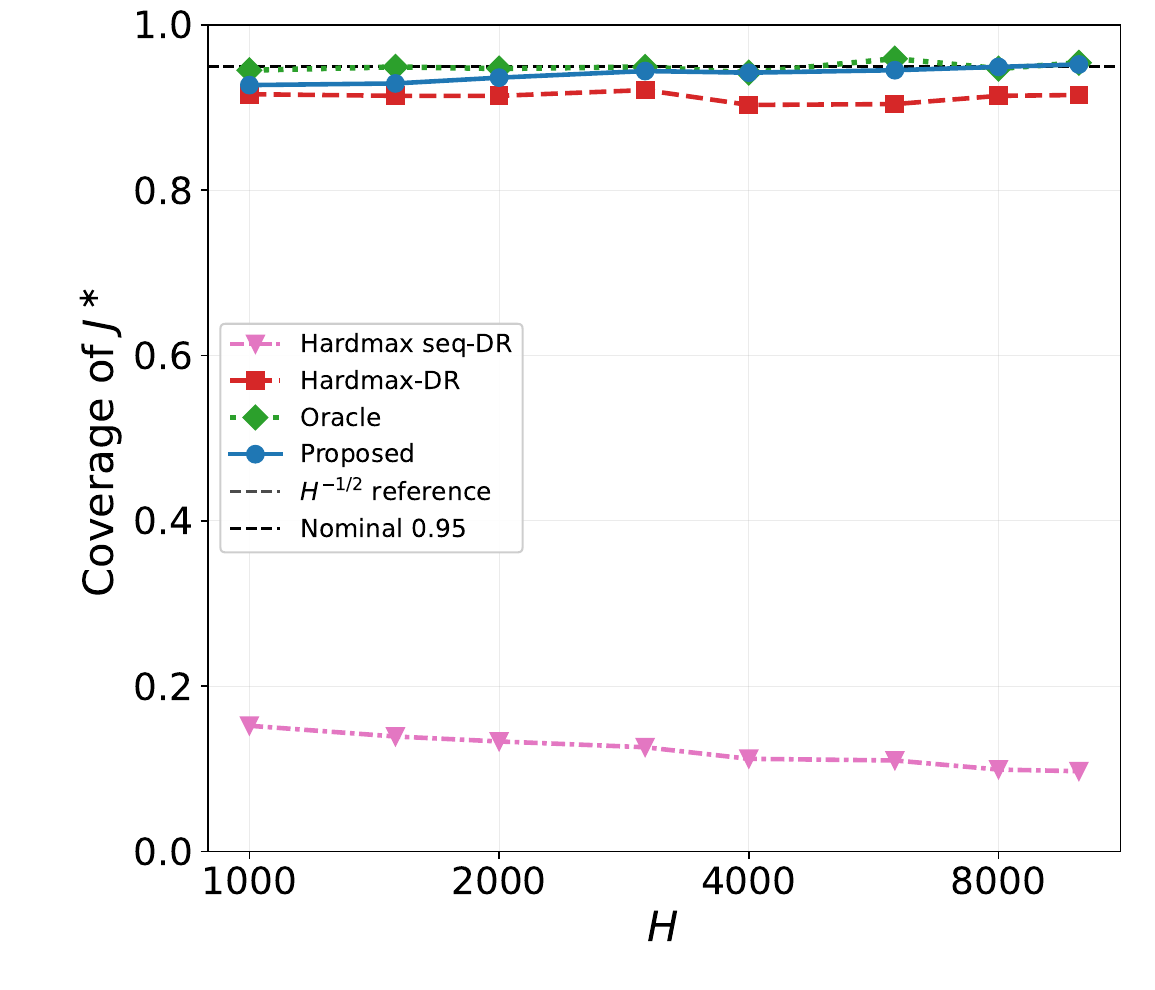}
\end{minipage} &
\begin{minipage}{0.31\textwidth}
\centering
\small\textbf{(b) RMSE versus $H$}\\[-1pt]
\includegraphics[width=\textwidth]{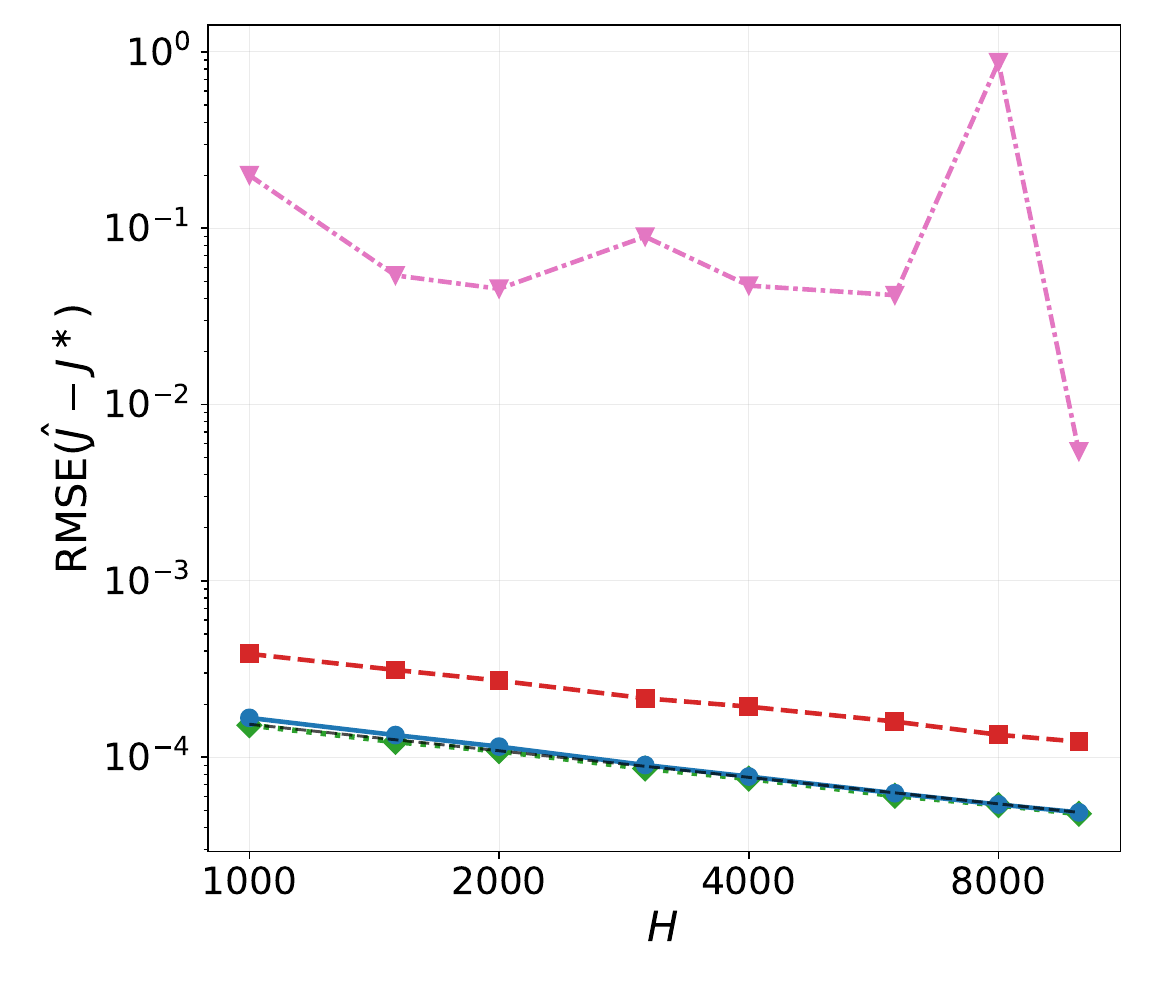}
\end{minipage} &
\begin{minipage}{0.31\textwidth}
\centering
\small\textbf{(c) CI length versus $H$}\\[-1pt]
\includegraphics[width=\textwidth]{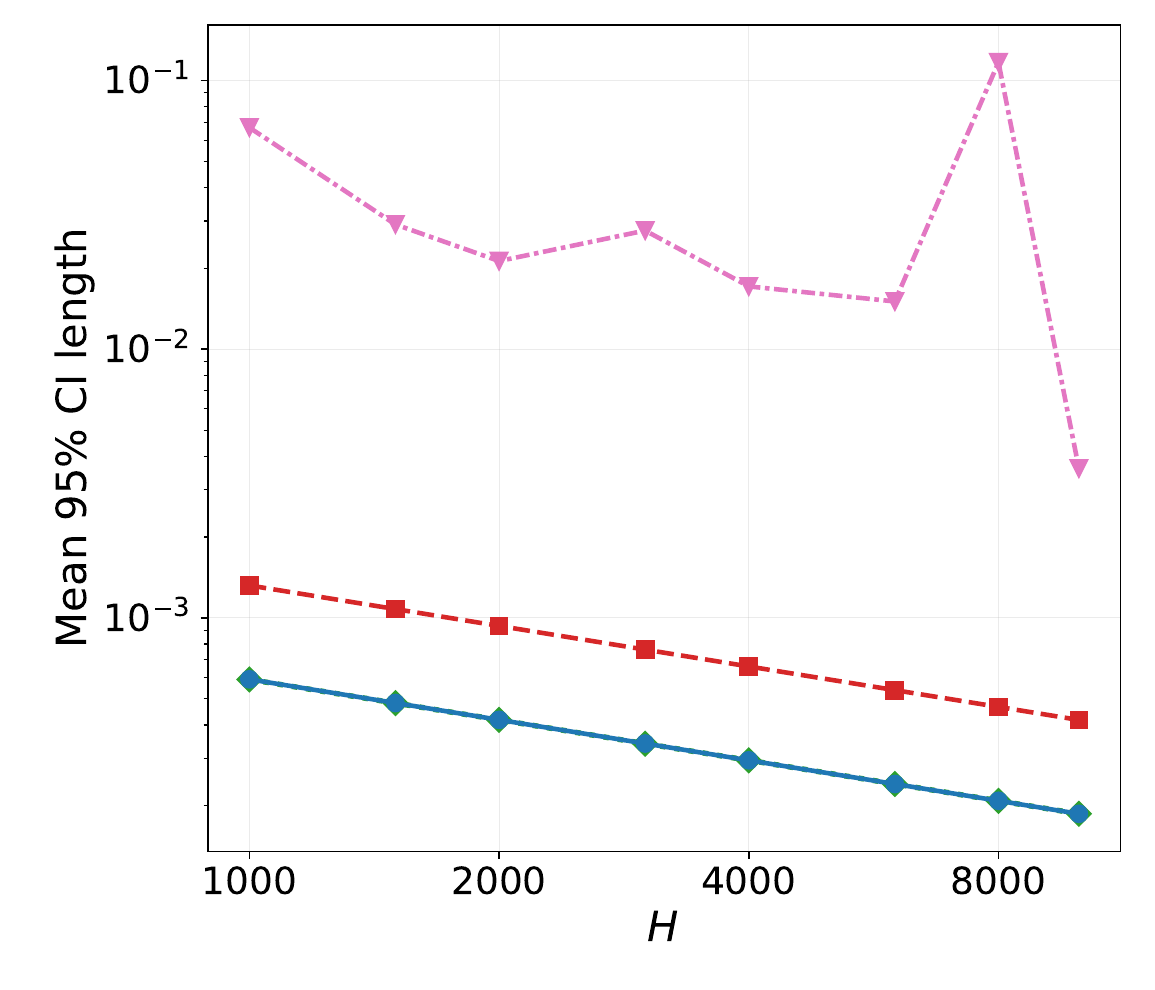}
\end{minipage}\\[-2pt]
\begin{minipage}{0.31\textwidth}
\centering
\small\textbf{(d) Coverage versus $N$}\\[-1pt]
\includegraphics[width=\textwidth]{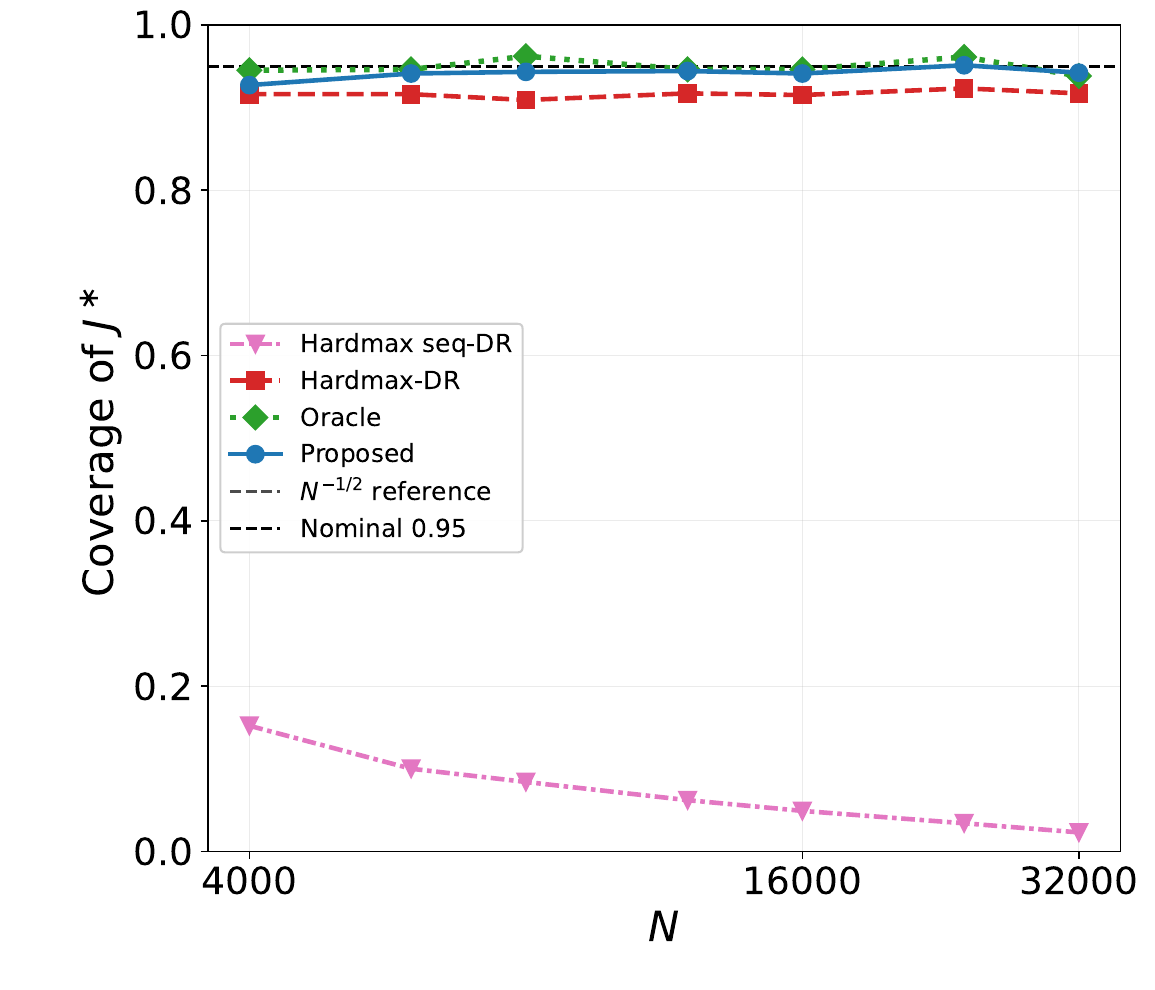}
\end{minipage} &
\begin{minipage}{0.31\textwidth}
\centering
\small\textbf{(e) RMSE versus $N$}\\[-1pt]
\includegraphics[width=\textwidth]{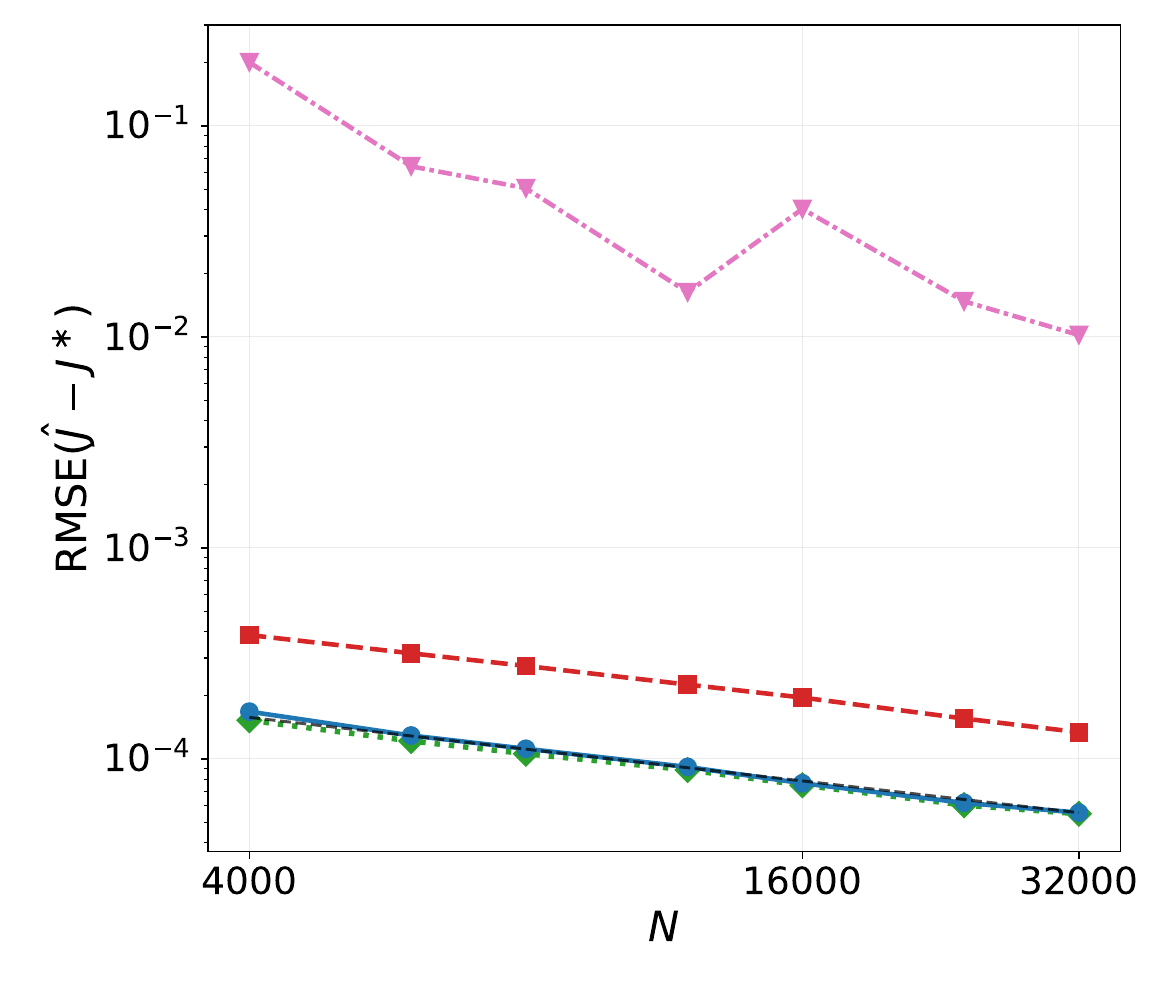}
\end{minipage} &
\begin{minipage}{0.31\textwidth}
\centering
\small\textbf{(f) CI length versus $N$}\\[-1pt]
\includegraphics[width=\textwidth]{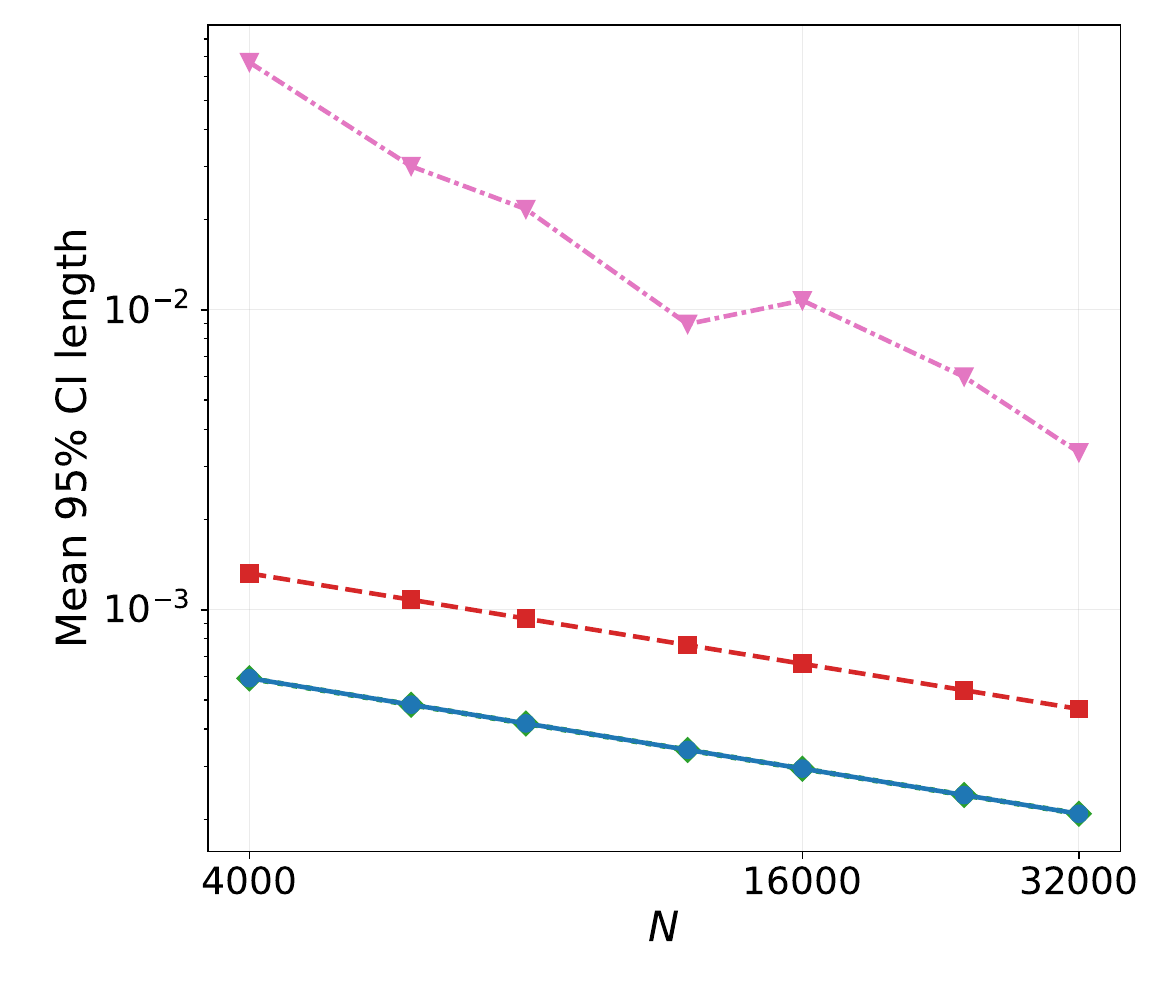}
\end{minipage}
\end{tabular}
\caption{Finite-sample performance across trajectory lengths and sample sizes. The top row varies trajectory length $H$ at $N=4{,}000$, and the bottom row varies $N$ at $H=1{,}000$. Columns show Monte Carlo coverage of $J^*$, RMSE relative to $J^*$, and confidence interval length.
}
\label{fig:synthetic-stress-main}
\end{figure}

In Figure~\ref{fig:synthetic-stress-main}, the top row varies the trajectory length $H$ at fixed $N$, while the bottom row varies $N$ at fixed $H$. Proposed remains close to Oracle as either source of information increases. Its RMSE decreases approximately at the $H^{-1/2}$ and $N^{-1/2}$ reference rates, respectively, while coverage stays near the nominal level. Hardmax-DR shows persistent undercoverage, while the sequential Hardmax comparator is substantially more variable and can have errors and intervals orders of magnitude larger.
Figure~\ref{fig:synthetic-methods} varies $H$ at $N=4{,}000$. Recall that $\pi^{\mathrm{unif}}$ is uniform over the five tied optimal actions, and set $u_t=t/(H-1)$. The fixed, linear, and polynomial regimes use $\rho_t=0.5$, $\rho_t=0.2+0.6u_t$, and $\rho_t=0.2+0.6(3u_t^2-2u_t^3)$, respectively. In each regime, the behavior policy is $\pi_t^b(a\mid s)=(1-\rho_t)/|\mathcal A|+\rho_t\pi^{\mathrm{unif}}(a\mid s)$. All three schedules have average mixture weight $0.5$ and hence comparable average overlap. Proposed remains close to Oracle across all three regimes, with stable coverage and steadily decreasing RMSE and interval length as $H$ grows. In contrast, the sequential Hardmax comparator becomes especially unstable under time-varying behavior policies, with large spikes in RMSE and interval length.
At $N=32{,}000$ and $H=4{,}000$, the studentized errors in Figure~\ref{fig:synthetic-hist} are centered near zero and closely follow the standard normal benchmark under the fixed, linear, and polynomial behavior policy regimes. 

\begin{figure}[!t]
\centering
\begin{tabular}{@{}ccc@{}}
\begin{minipage}{0.31\textwidth}
\centering
\small\textbf{(a) Fixed: coverage}\\[-1pt]
\includegraphics[width=\textwidth]{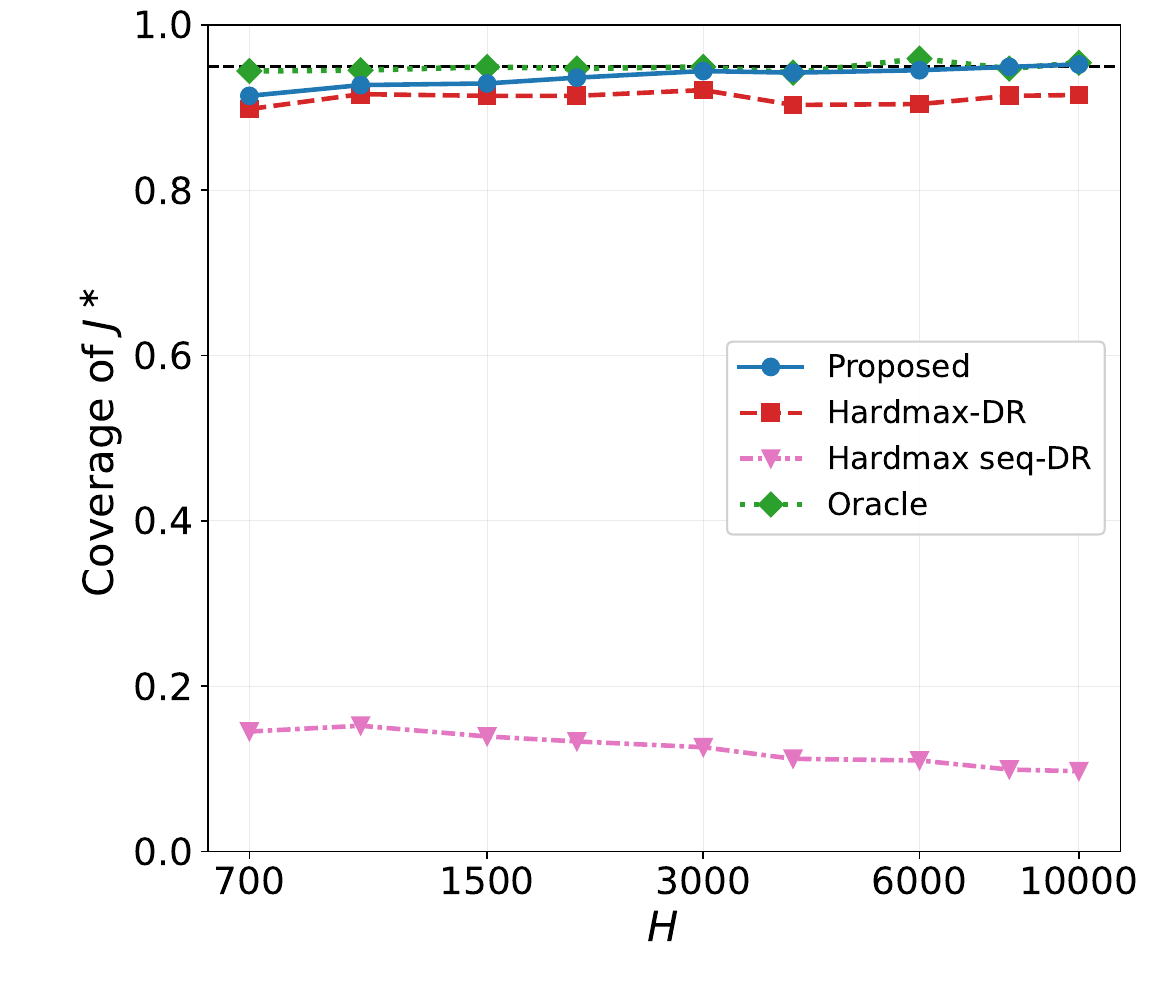}
\end{minipage} &
\begin{minipage}{0.31\textwidth}
\centering
\small\textbf{(b) Fixed: RMSE}\\[-1pt]
\includegraphics[width=\textwidth]{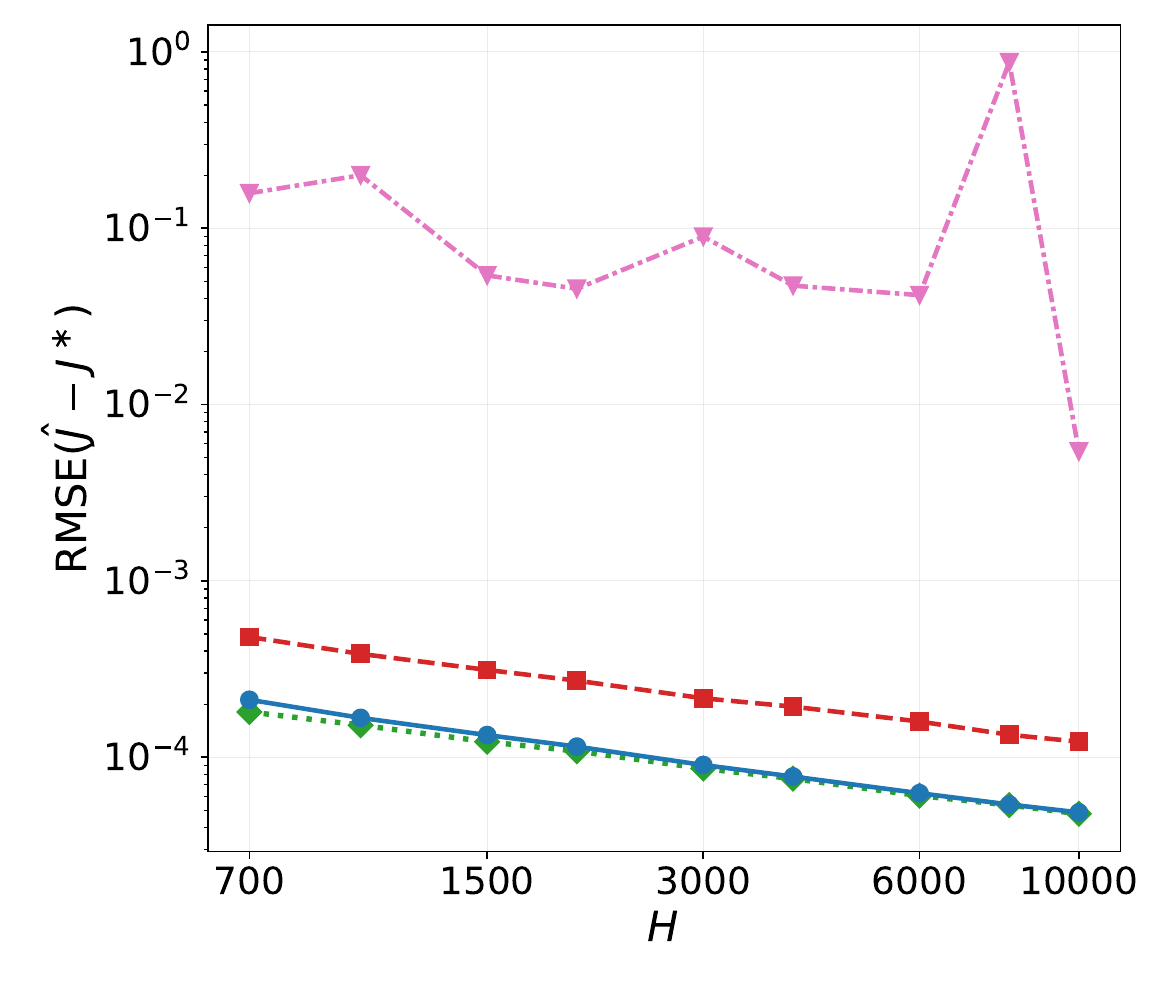}
\end{minipage} &
\begin{minipage}{0.31\textwidth}
\centering
\small\textbf{(c) Fixed: CI length}\\[-1pt]
\includegraphics[width=\textwidth]{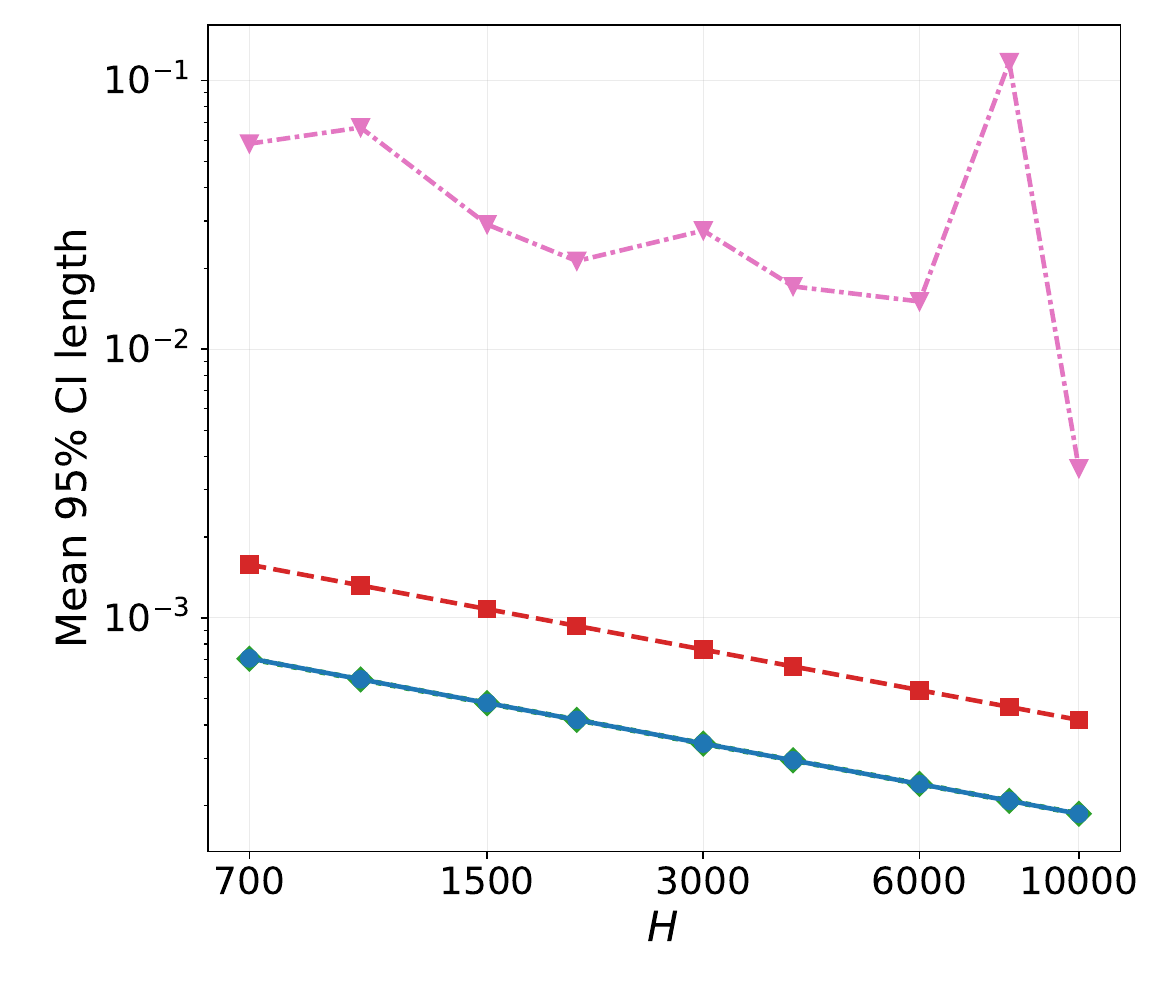}
\end{minipage}\\[-2pt]
\begin{minipage}{0.31\textwidth}
\centering
\small\textbf{(d) Linear: coverage}\\[-1pt]
\includegraphics[width=\textwidth]{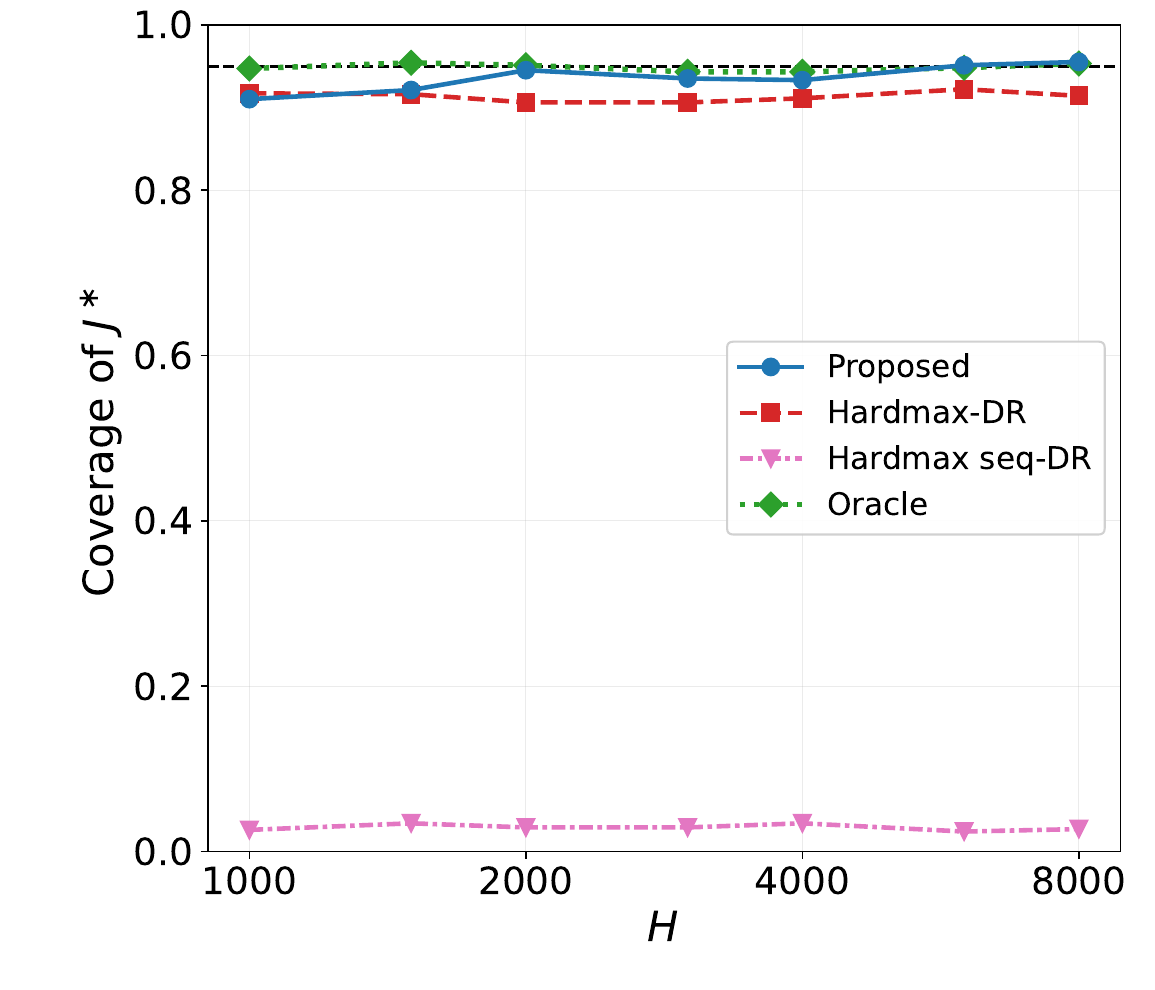}
\end{minipage} &
\begin{minipage}{0.31\textwidth}
\centering
\small\textbf{(e) Linear: RMSE}\\[-1pt]
\includegraphics[width=\textwidth]{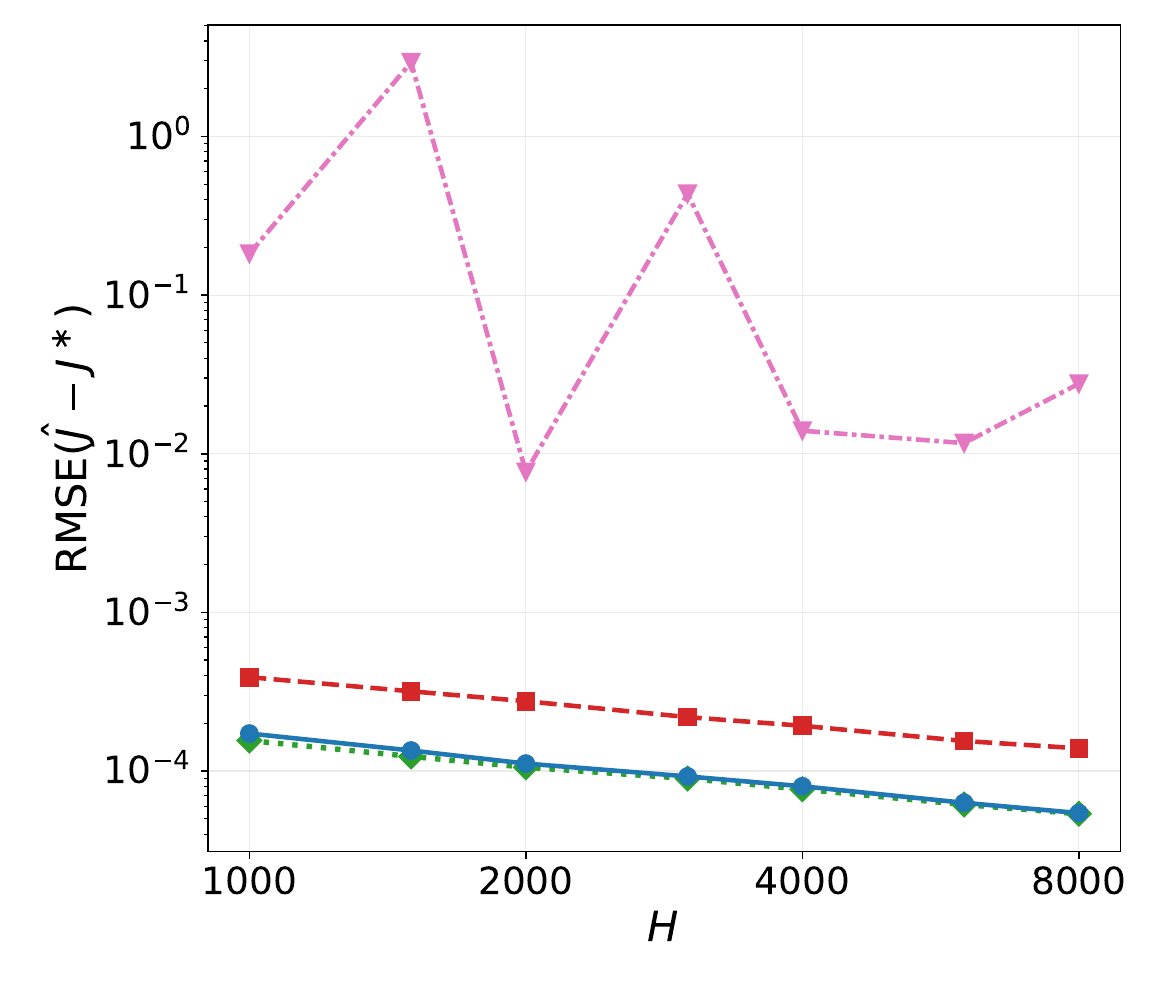}
\end{minipage} &
\begin{minipage}{0.31\textwidth}
\centering
\small\textbf{(f) Linear: CI length}\\[-1pt]
\includegraphics[width=\textwidth]{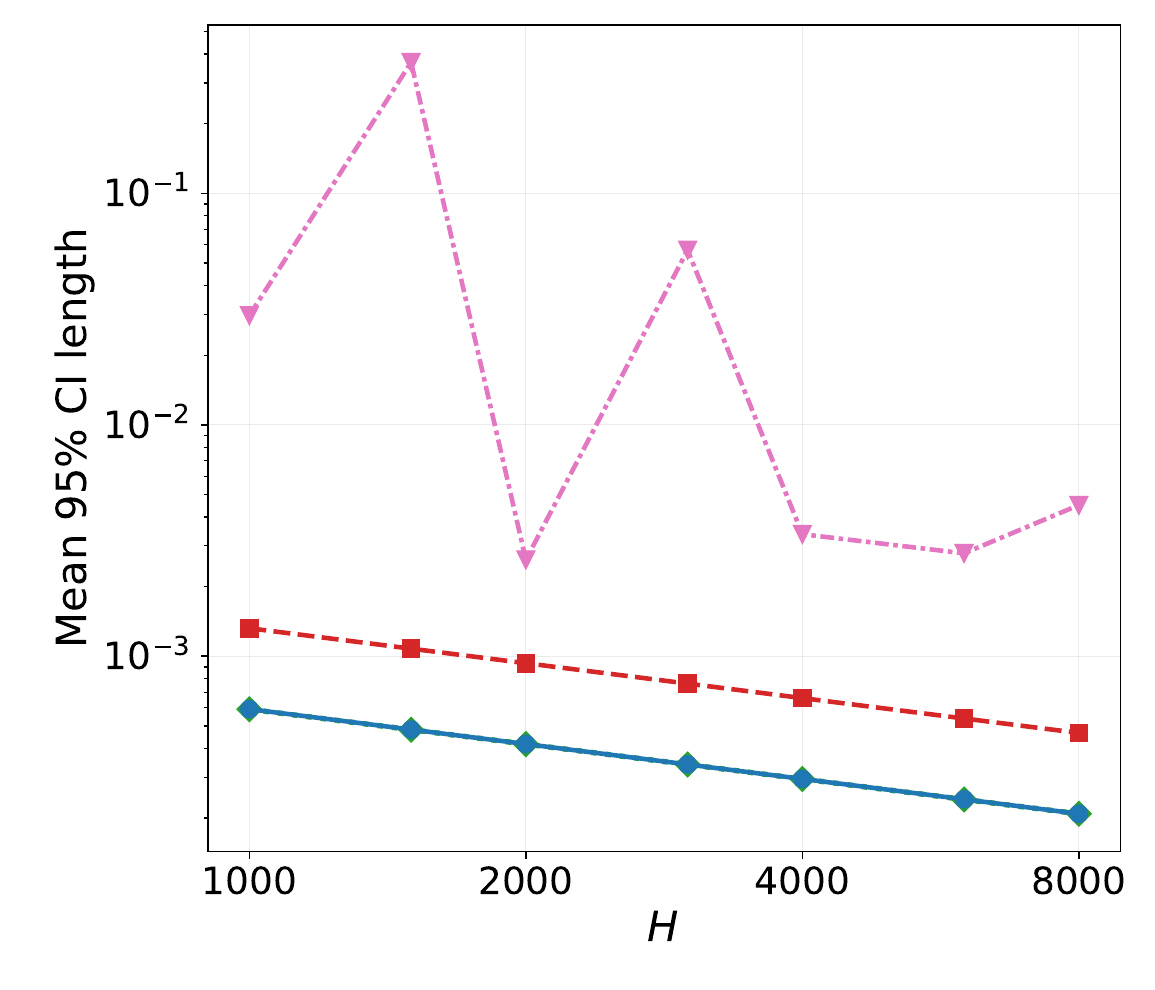}
\end{minipage}\\[-2pt]
\begin{minipage}{0.31\textwidth}
\centering
\small\textbf{(g) Polynomial: coverage}\\[-1pt]
\includegraphics[width=\textwidth]{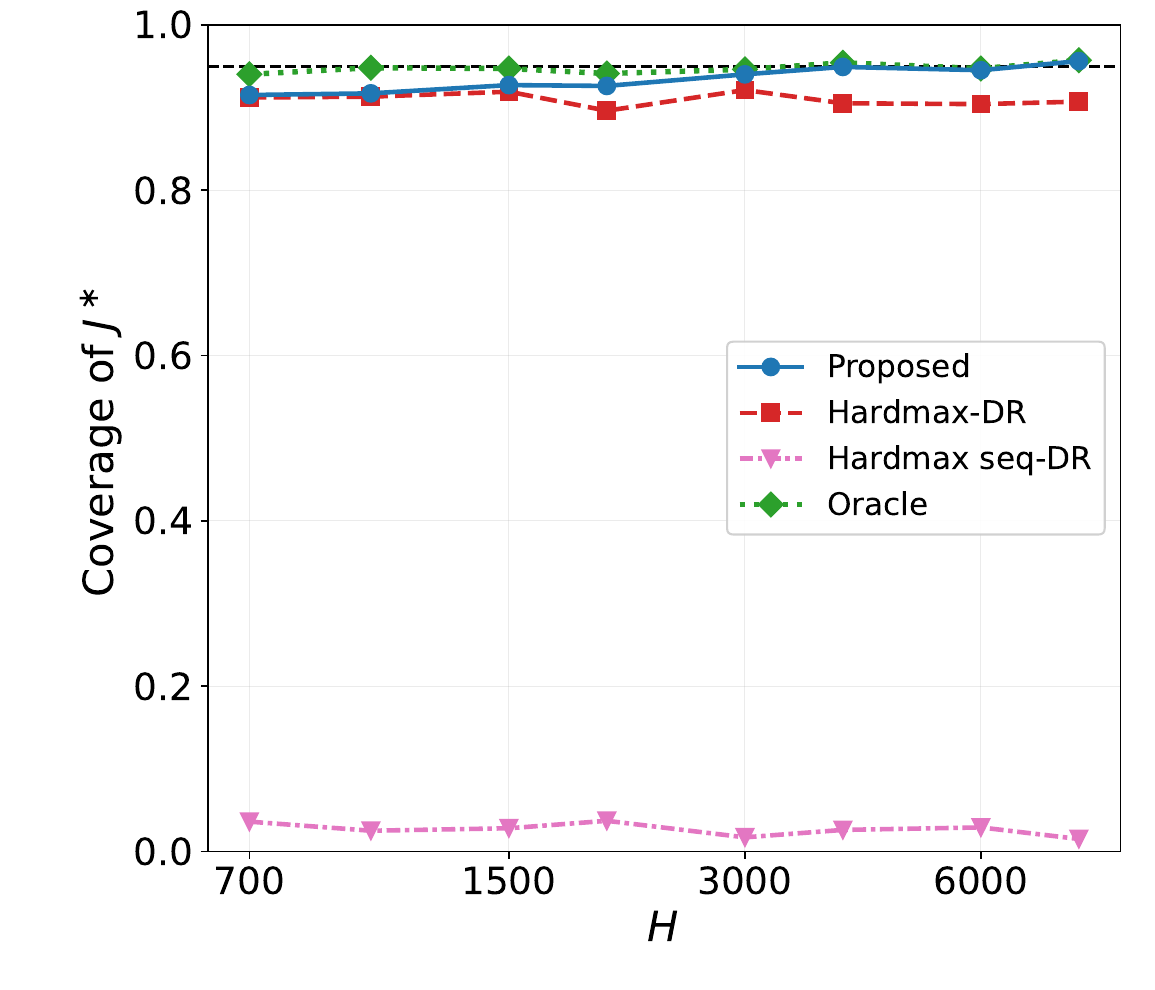}
\end{minipage} &
\begin{minipage}{0.31\textwidth}
\centering
\small\textbf{(h) Polynomial: RMSE}\\[-1pt]
\includegraphics[width=\textwidth]{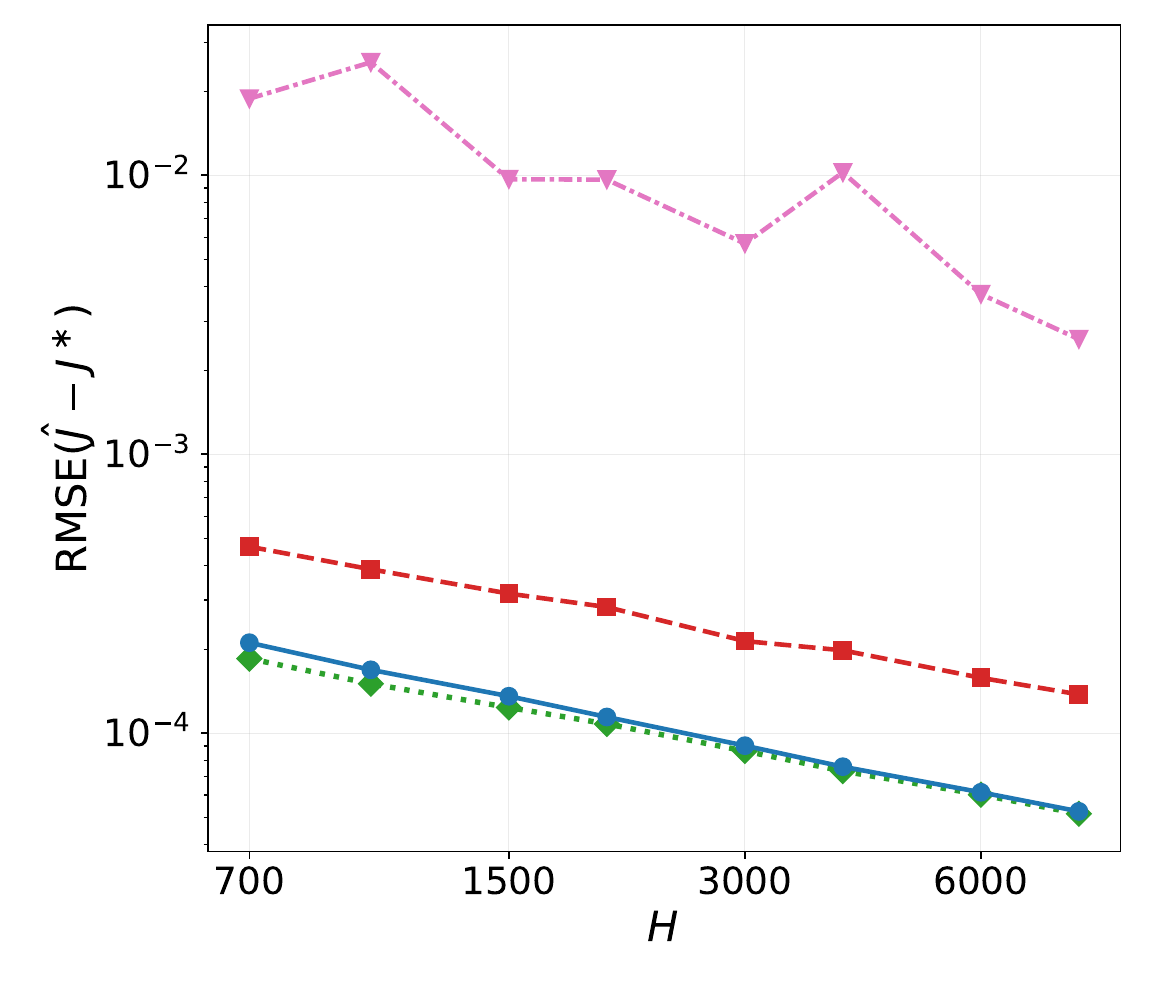}
\end{minipage} &
\begin{minipage}{0.31\textwidth}
\centering
\small\textbf{(i) Polynomial: CI length}\\[-1pt]
\includegraphics[width=\textwidth]{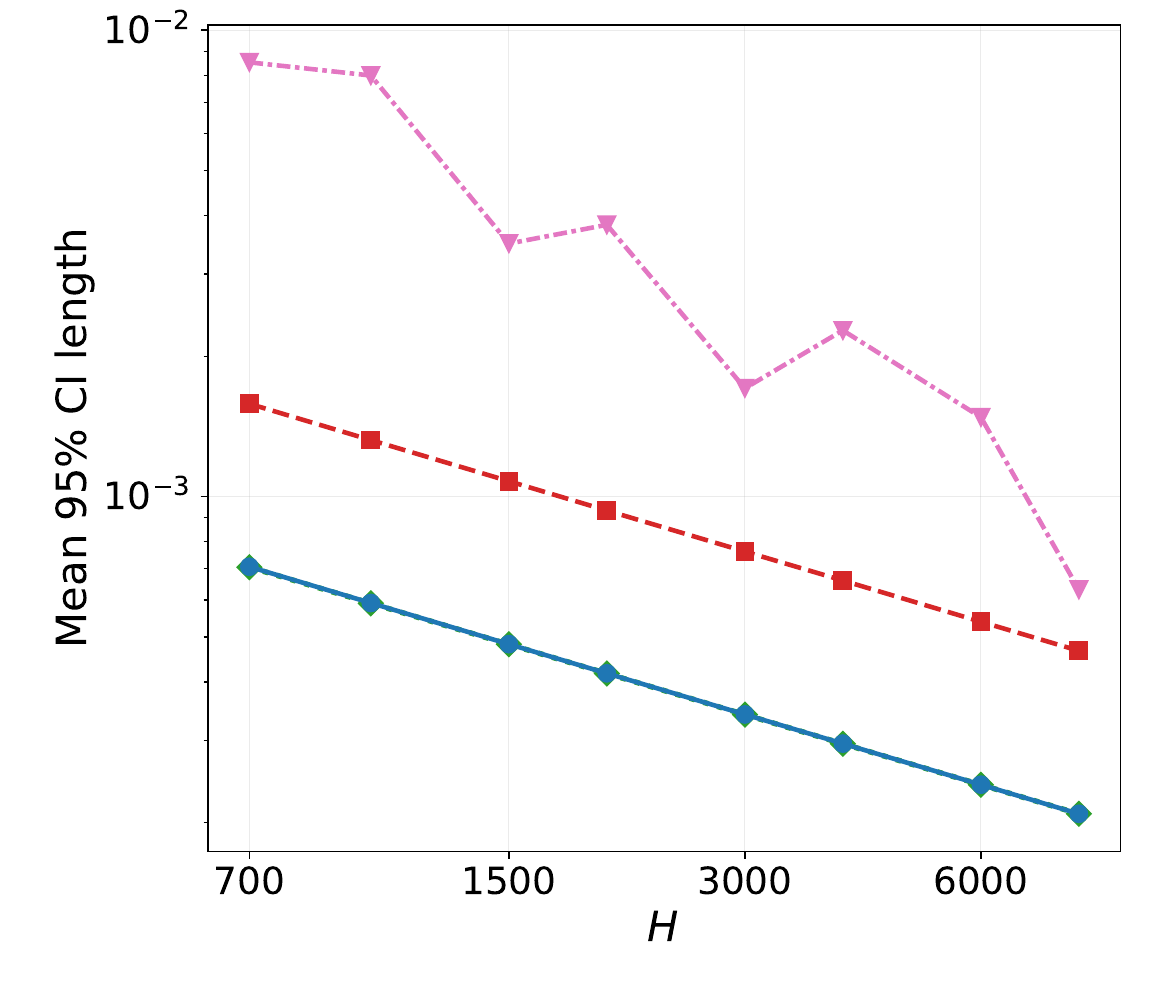}
\end{minipage}
\end{tabular}
\caption{Method comparisons across behavior policy regimes. Rows correspond to the stationary mixture and the linear and polynomial time-varying mixtures, which have comparable average overlap. Columns show coverage of $J^*$, RMSE, and confidence interval length.
}
\label{fig:synthetic-methods}
\end{figure}

\begin{figure}[!t]
\centering
\begin{tabular}{@{}ccc@{}}
\begin{minipage}{0.31\textwidth}
\centering
\small\textbf{(a) Fixed mixture}\\[-1pt]
\includegraphics[width=\textwidth]{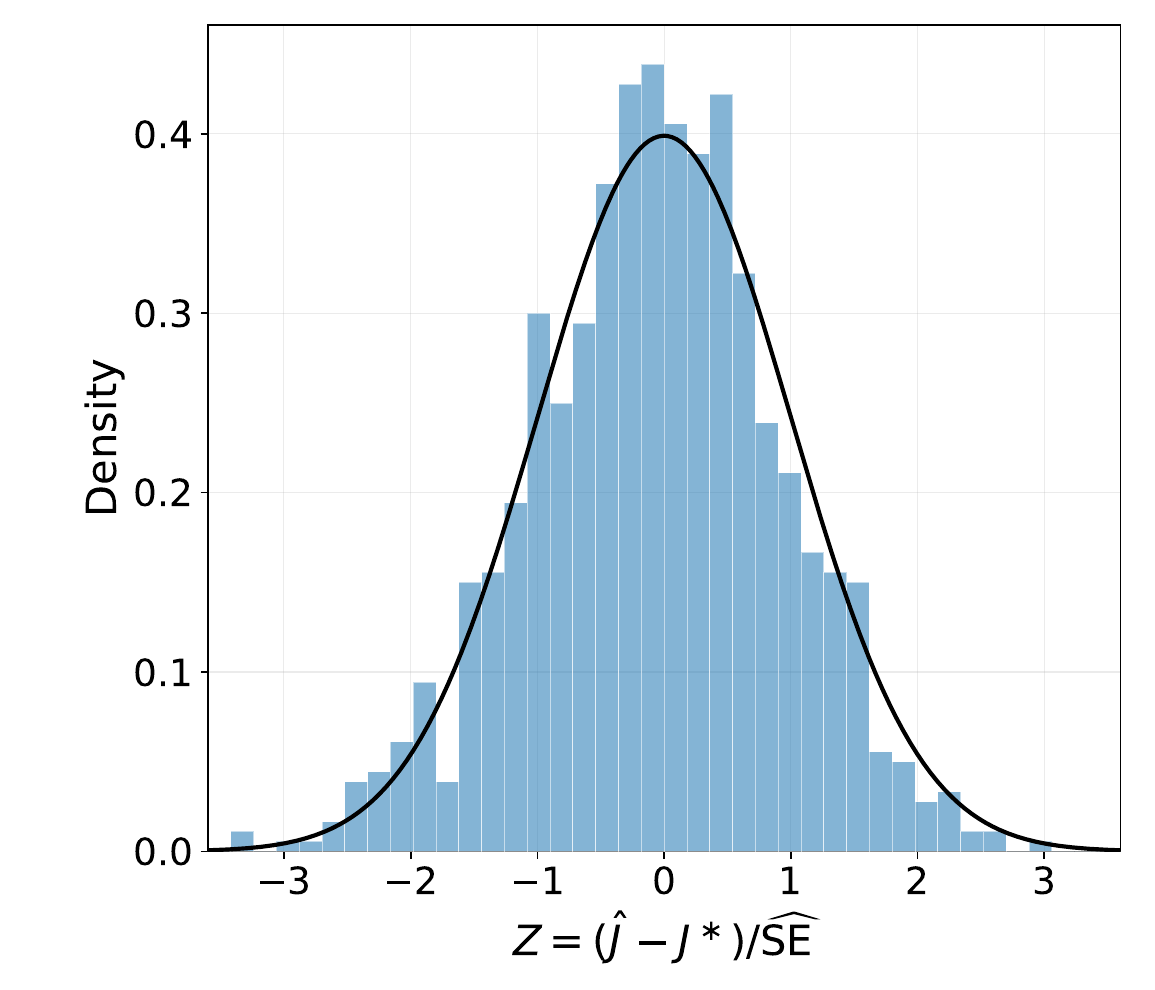}
\end{minipage} &
\begin{minipage}{0.31\textwidth}
\centering
\small\textbf{(b) Linear mixture}\\[-1pt]
\includegraphics[width=\textwidth]{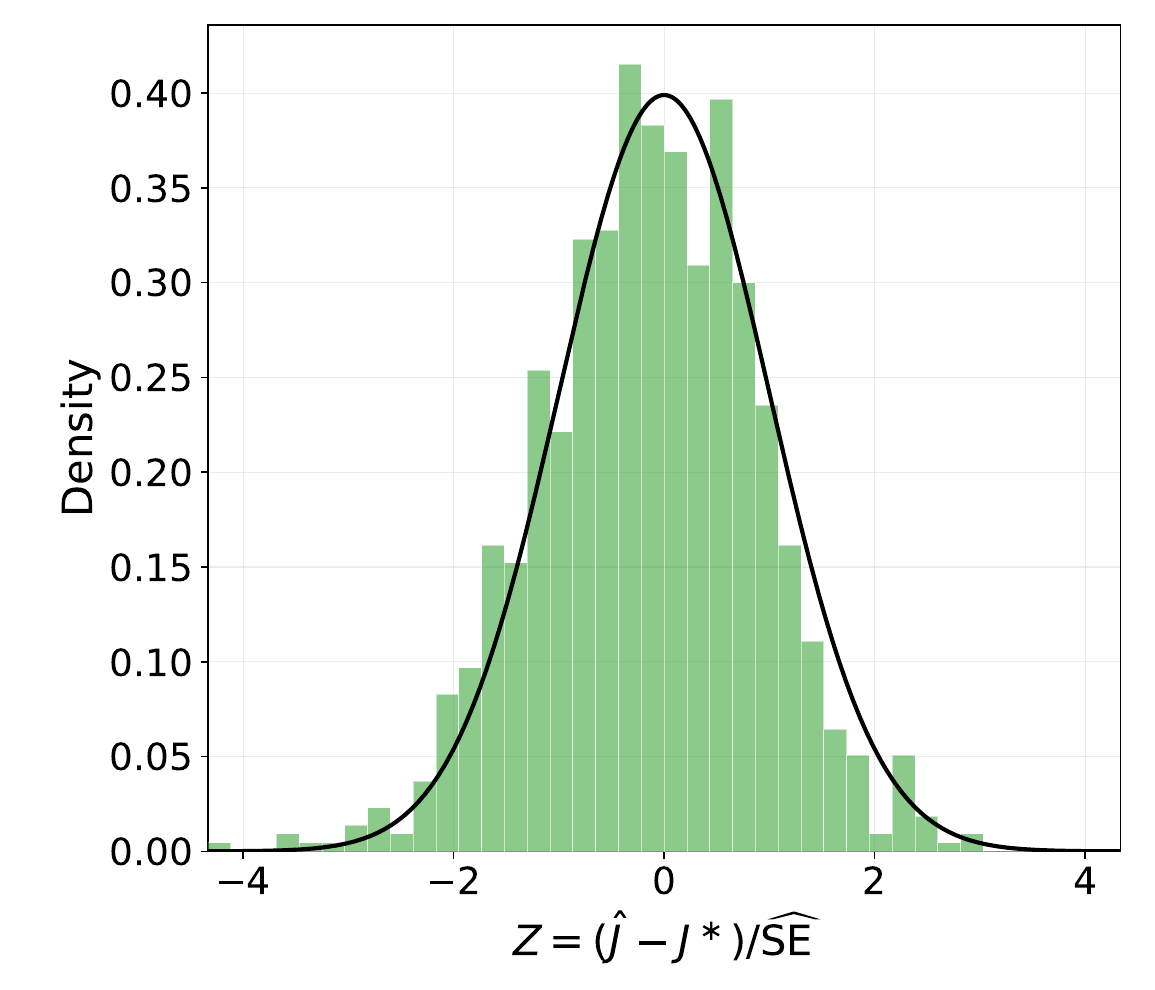}
\end{minipage} &
\begin{minipage}{0.31\textwidth}
\centering
\small\textbf{(c) Polynomial mixture}\\[-1pt]
\includegraphics[width=\textwidth]{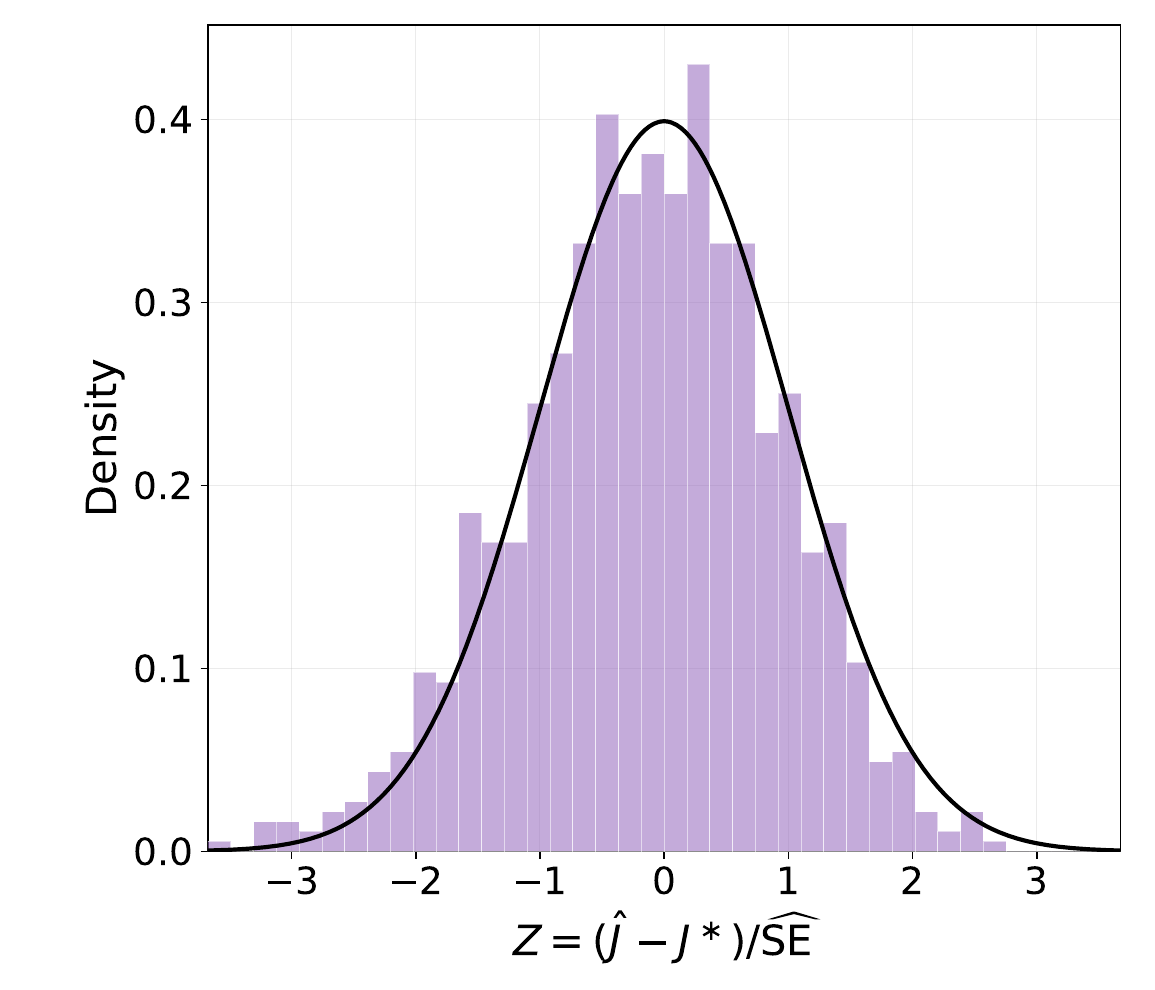}
\end{minipage}
\end{tabular}
\caption{Distribution of the studentized estimation error under three behavior policy regimes.  Writing $\widehat{\mathrm{SE}}$ for the estimated standard error, the bars show the empirical density of $Z\coloneqq(\widehat J_\beta-J^*)/\widehat{\mathrm{SE}}$. The black curve is the standard normal density.}
\label{fig:synthetic-hist}
\end{figure}

\subsection{Bike Repositioning}\label{sec:num-bike}

We use the Multi-Agent Resource Optimization (MARO) bike repositioning simulator \citep{jiang2020maro} with public February 2025 Citi Bike trip records \citep{citibike2025systemdata}. The data contain approximately 2.03 million trips, which we aggregate into hourly demand while retaining all calendar hours in the month. Stations are grouped into three zones. The state is the joint inventory of the zones, with each inventory classified as low, middle, or high, giving $|\cS|=27$ states. The five actions are no repositioning, three specified batch transfers between zones, and a state-dependent watermark rule that moves bikes from the currently fullest zone to the currently emptiest zone. The MARO simulator combines the current inventories, action, and realized demand to produce the next inventories and a reward measuring served demand net of unmet demand and repositioning cost. Here $N$ denotes independent simulated behavior trajectories, each spanning 16 calendar days.

We estimate a reference MDP from 1,000 behavior chains of 3,000 transitions each. Its optimal average reward is $J_{\mathrm{ref}}=0.7996$, which we use as the benchmark in panel~(a). Panel~(b) reports the Ces\`aro average reward of each fitted policy evaluated on the same reference MDP. Calibration on this MDP gives $\beta=16$.
Figure~\ref{fig:bike-main} shows that the proposed estimates are stable as $N$ increases, with confidence intervals tightening around point estimates. At $\beta=16$, both the estimated value and the policy value on the reference MDP lie close to the reference model optimum and substantially exceed those of Hardmax-DR and the behavior policy. The operational panels show that the proposed policy moves markedly fewer bikes while also leaving less demand unmet. Thus, the gain in value corresponds to more efficient repositioning rather than simply more intervention.

\begin{figure}[!t]
\centering
\begin{tabular}{@{}cccc@{}}
\begin{minipage}[t]{0.23\textwidth}
\centering
\small\textbf{(a) Value estimate}\\[-1pt]
\includegraphics[width=\textwidth]{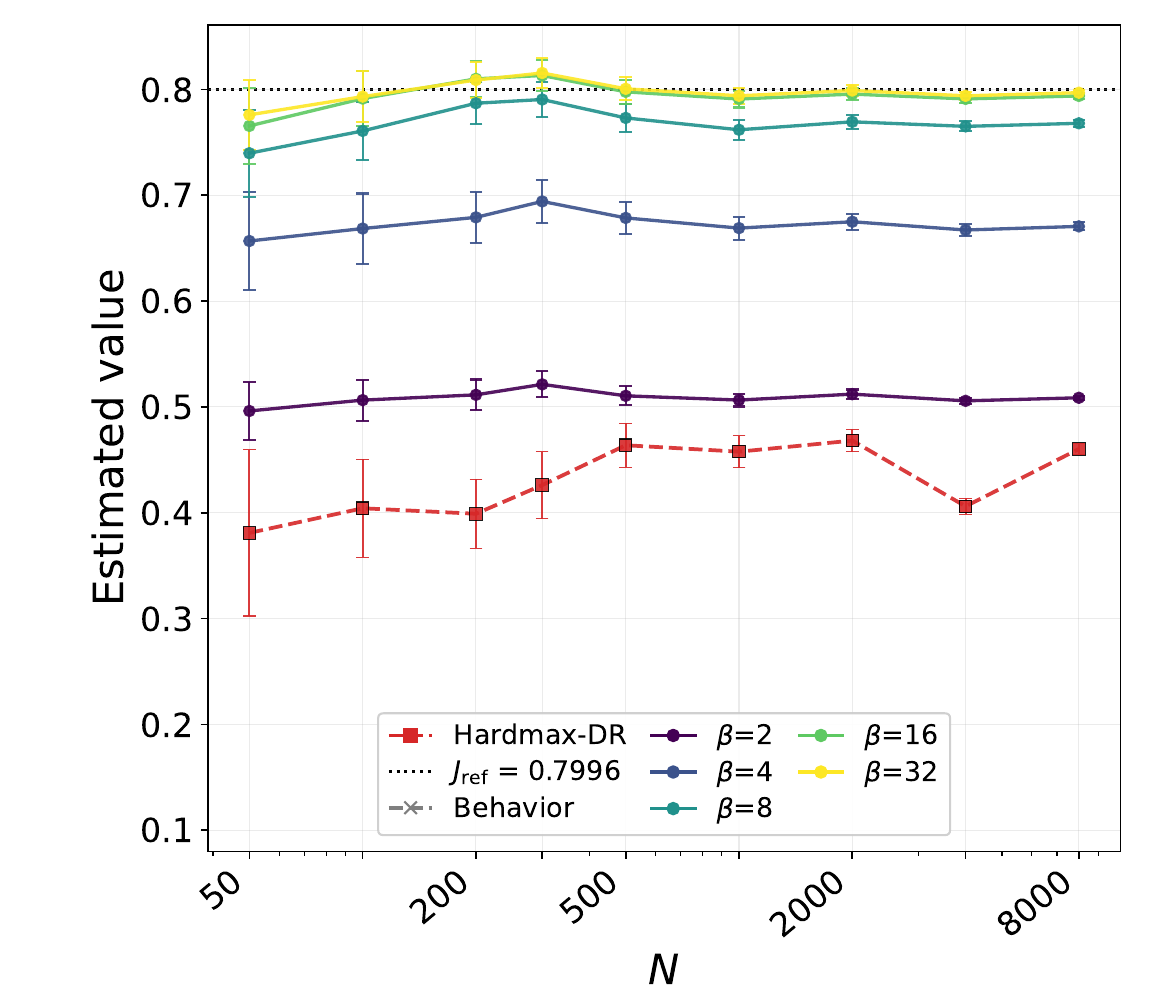}
\end{minipage} &
\begin{minipage}[t]{0.23\textwidth}
\centering
\small\textbf{(b) Policy value on reference MDP}\\[-1pt]
\includegraphics[width=\textwidth]{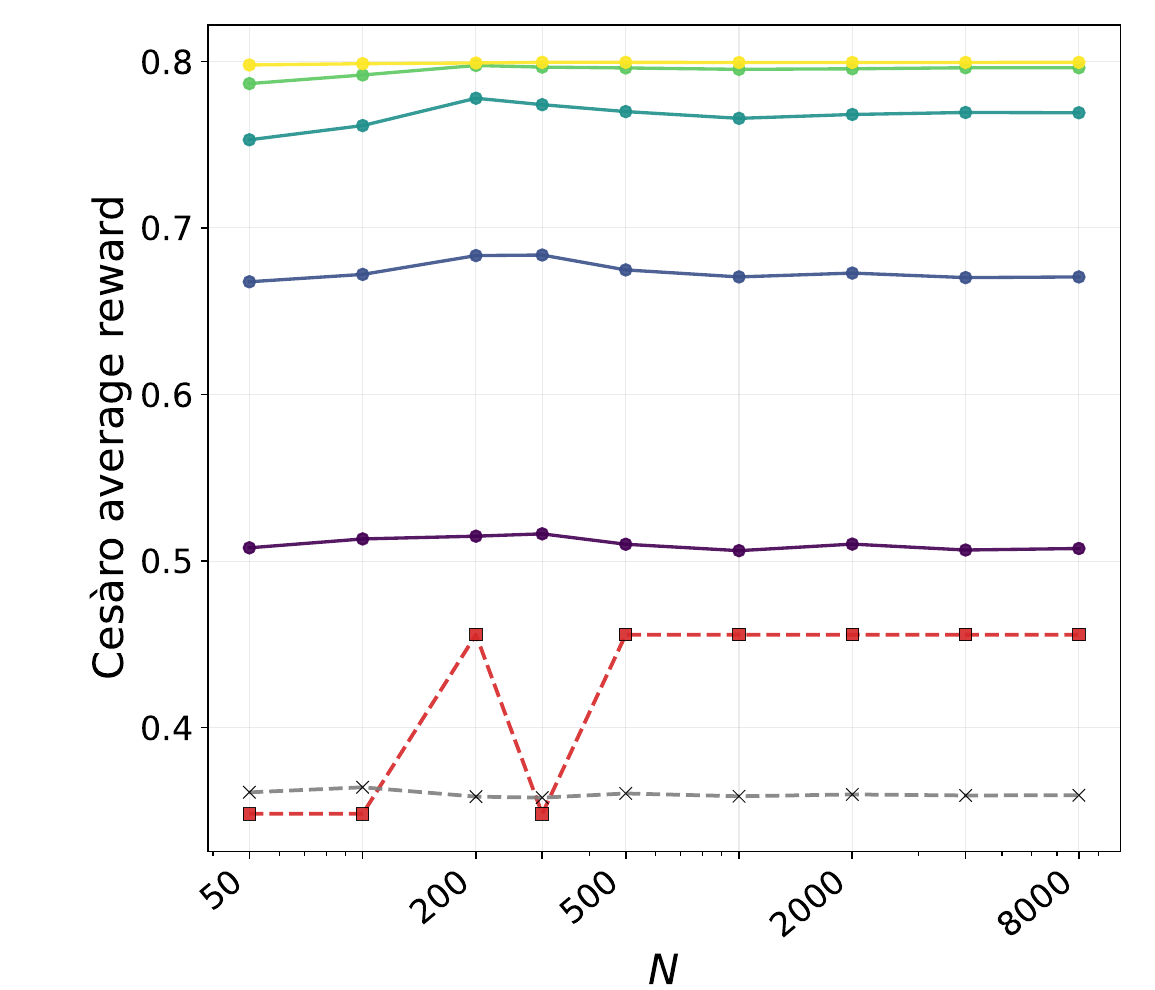}
\end{minipage} &
\begin{minipage}[t]{0.23\textwidth}
\centering
\small\textbf{(c) Repositioning}\\[-1pt]
\includegraphics[width=\textwidth]{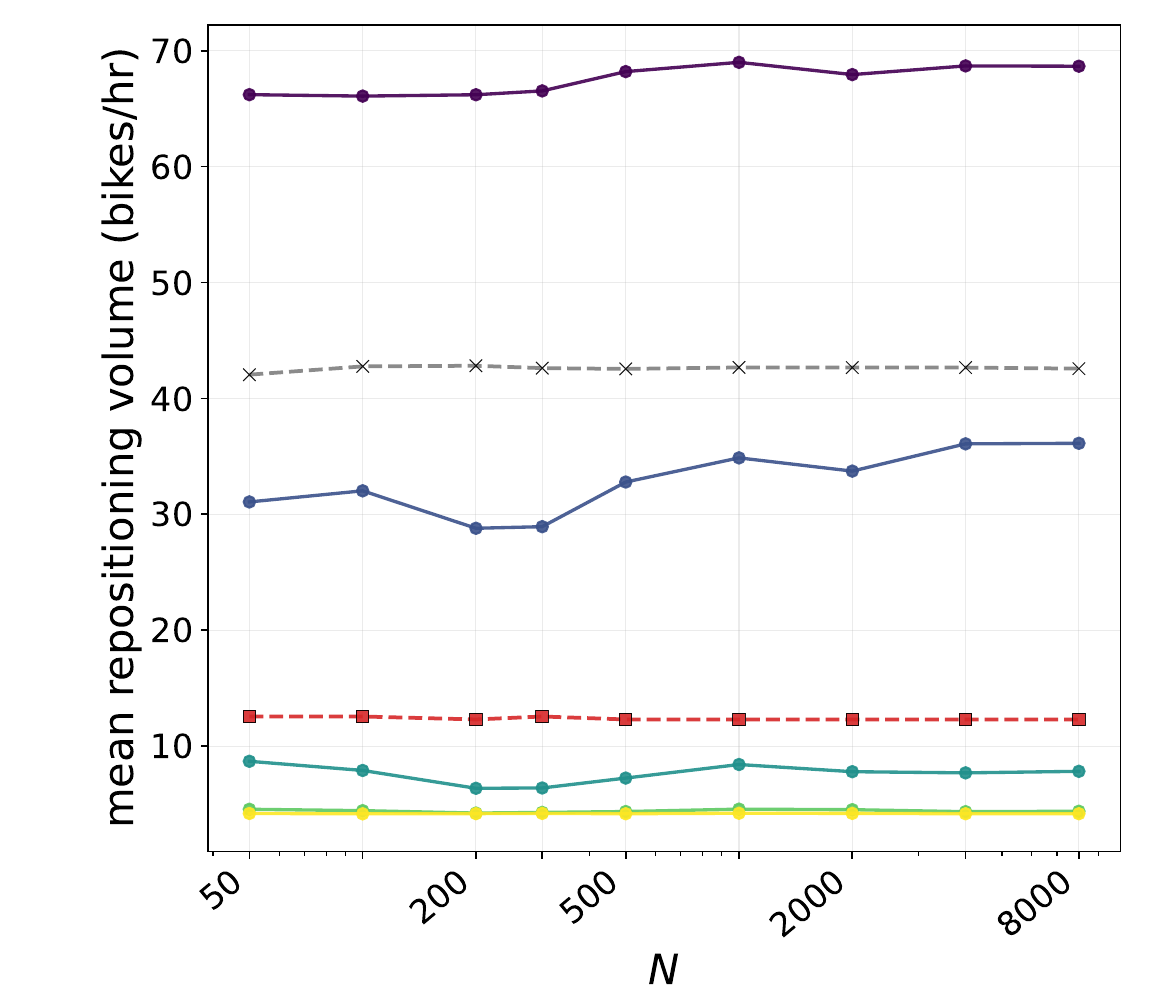}
\end{minipage} &
\begin{minipage}[t]{0.23\textwidth}
\centering
\small\textbf{(d) Unmet demand}\\[-1pt]
\includegraphics[width=\textwidth]{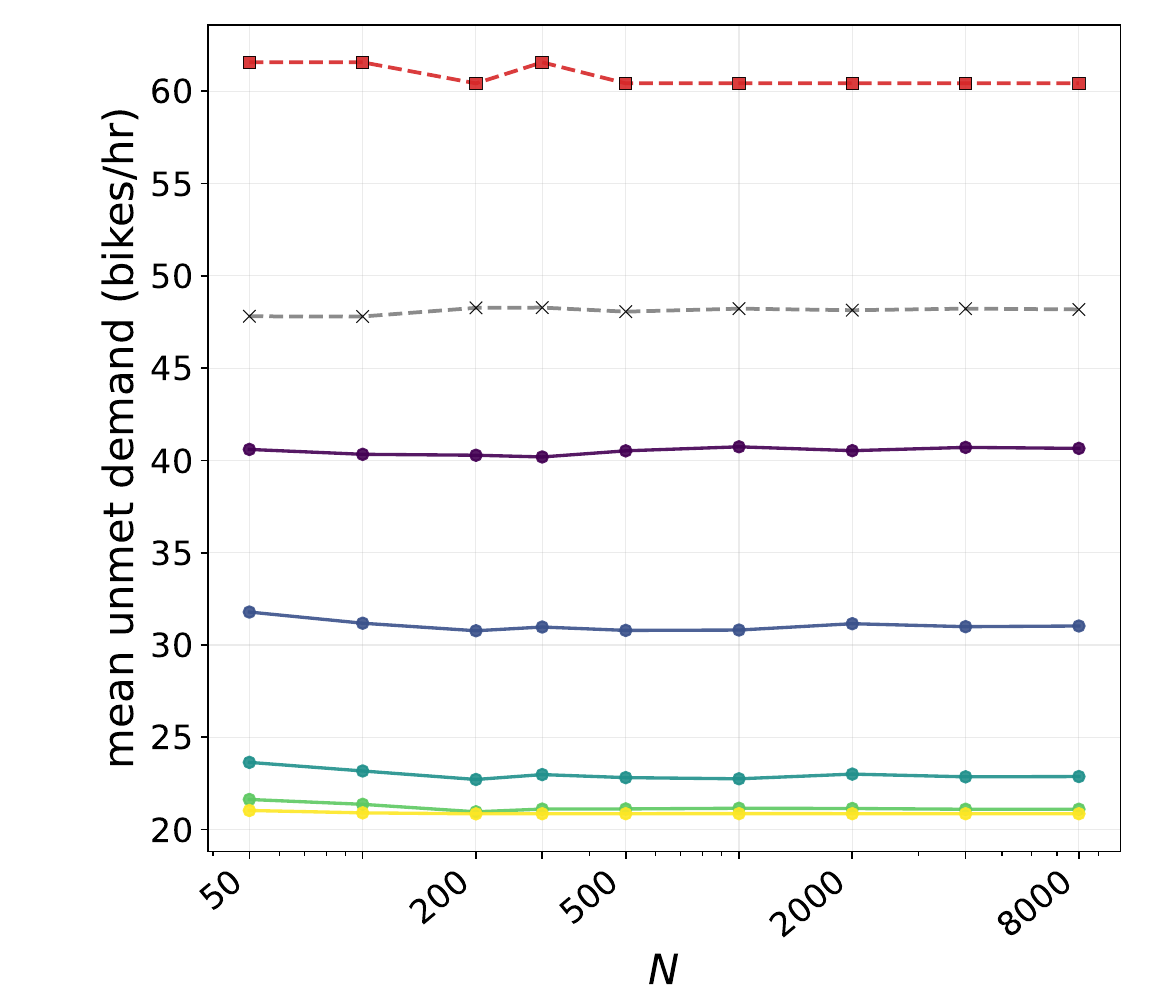}
\end{minipage}
\end{tabular}
\caption{Citi Bike repositioning performance across sample sizes. Panel (a) reports $\widehat J_\beta$ with 95\% Wald intervals and marks the reference value $J_{\mathrm{ref}}=0.7996$ by the black dotted line. Panel (b) reports the Ces\`aro average reward of each fitted policy evaluated on the reference MDP. Panels (c) and (d) report mean bikes repositioned and mean unmet demand per hour.}
\label{fig:bike-main}
\end{figure}

\subsection{AI Agentic Tool Use}\label{sec:num-toolsandbox}
We next consider an agentic tool use setting, where a language-model agent makes a sequence of tool choices to complete a user task.
We use the released GPT-4o expert trajectories from ToolSandbox \citep{lu2025toolsandbox} that accompany Talk, Evaluate, Diagnose (TED) \citep{chong2026talk}. The data contain 37 tasks with eight trials each, yielding 296 episodes and 1,047 micro-steps. The state is the product of task domain, progress bucket, and whether a tool has already been called. The five domains are CONTACT, MESSAGING, REMINDER, SETTING, and OTHER. Progress is none, partial, or complete, and \texttt{called\_before} indicates whether a tool has already been called earlier in the episode, giving $|\cS|=30$ possible states. The resulting action space contains $|\cA|=36$ discrete actions. A transition corresponds to one micro-step, and its reward is the increment in benchmark progress, credited on the final micro-step of a round and set to zero for intra-round calls. Let $L^{\min}$ denote the minimum number of micro-steps required for task completion according to the benchmark. We stratify tasks into $L^{\min}=1$ and $L^{\min}\geq2$. To account for within-task dependence, tasks are the units for two-fold cross-fitting and clustered standard errors. In the empirical MDP constructed from all 1,047 micro-steps, the greedy policy has Ces\`aro average reward 0.8002. Calibration on the empirical MDP gives $\beta=0.5$. For the two plotted strata, the corresponding empirical reference values are $0.9825$ for $L^{\min}=1$ and $0.4386$ for $L^{\min}\geq2$.

For the $L^{\min}=1$ tasks in Figure~\ref{fig:toolsandbox-one}, Proposed gives value estimates that remain close to the empirical reference across task subsets and smoothing levels, while Hardmax-DR stays substantially below the benchmark. The diagnostic panels show a corresponding difference in nuisance stability. Hardmax-DR produces fitted action values that can grow by several orders of magnitude and is associated with much larger Bellman residuals, whereas the proposed fits remain comparatively well behaved.
The contrast is sharper for the more difficult $L^{\min}\ge2$ tasks in Figure~\ref{fig:toolsandbox-multiple}. Uncertainty is naturally larger and the Proposed fits exhibit larger adjoint weights, reflecting the greater difficulty of the underlying evaluation problem. Nevertheless, the proposed value estimates remain on the scale of the empirical reference and their Bellman residuals remain relatively small. Hardmax-DR, by contrast, can produce extreme value estimates together with fitted $Q$-functions reaching $10^4$--$10^5$ in magnitude. These diagnostics indicate that the instability of Hardmax-DR arises already in the fitted greedy policy nuisance and becomes more severe on the harder multi-step tasks.

\begin{figure}[!t]
\centering
\begin{tabular}{@{}cccc@{}}
\begin{minipage}{0.23\textwidth}
\centering
\small\textbf{(a) Value estimate}\\[-1pt]
\includegraphics[width=\textwidth]{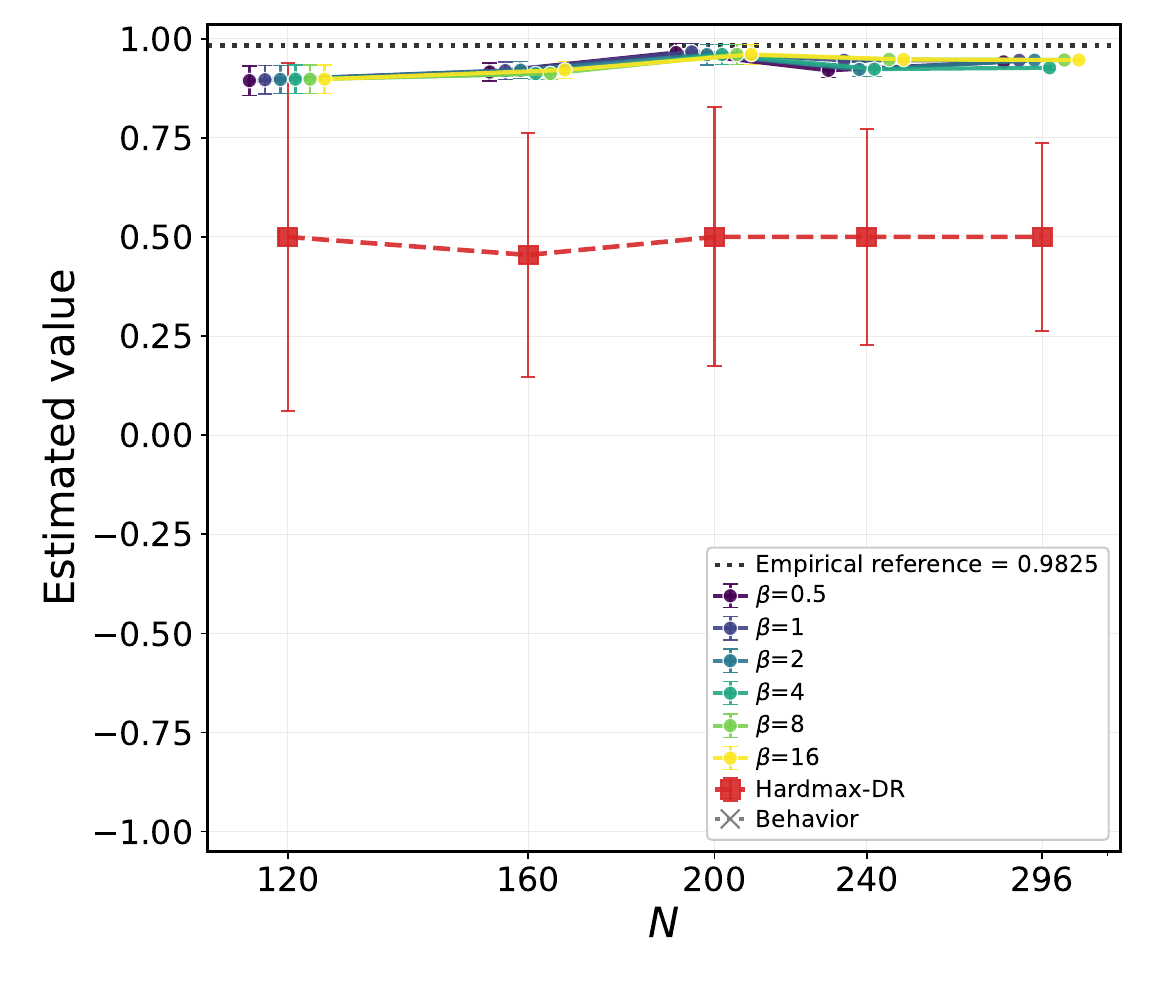}
\end{minipage} &
\begin{minipage}{0.23\textwidth}
\centering
\small\textbf{(b) Policy entropy}\\[-1pt]
\includegraphics[width=\textwidth]{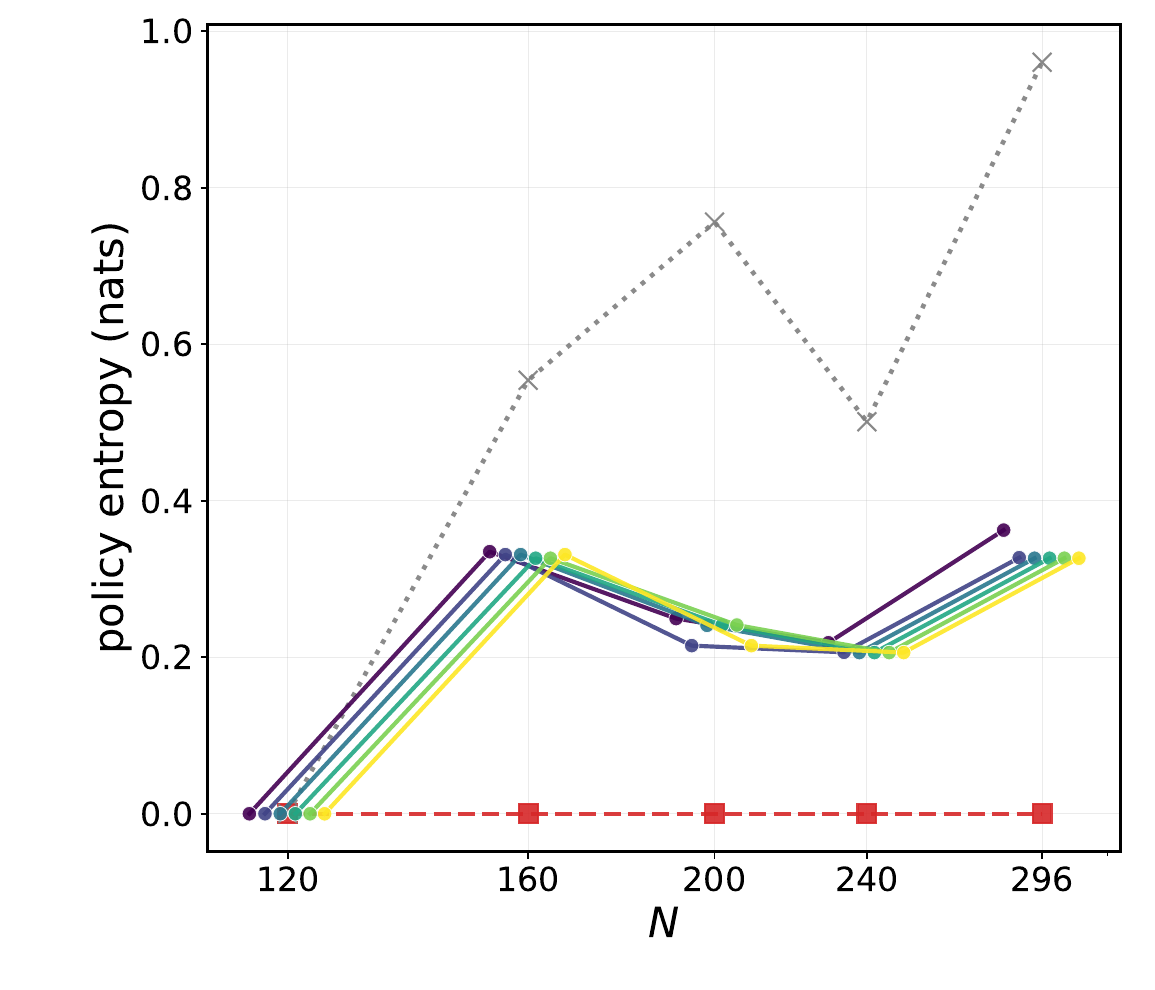}
\end{minipage} &
\begin{minipage}{0.23\textwidth}
\centering
\small\textbf{(c) Worst fitted $Q$}\\[-1pt]
\includegraphics[width=\textwidth]{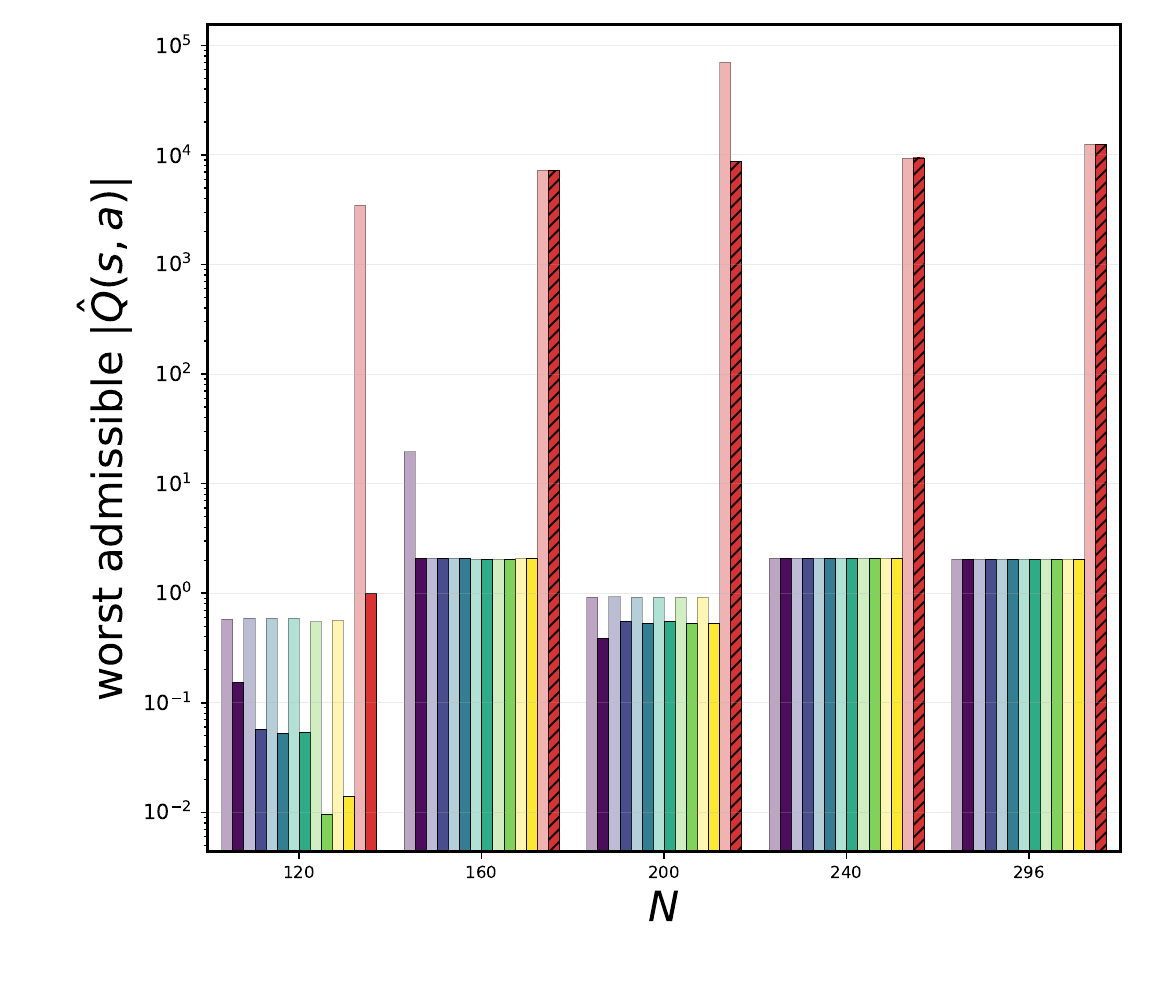}
\end{minipage} &
\begin{minipage}{0.23\textwidth}
\centering
\small\textbf{(d) Residual and adjoint weight}\\[-1pt]
\includegraphics[width=\textwidth]{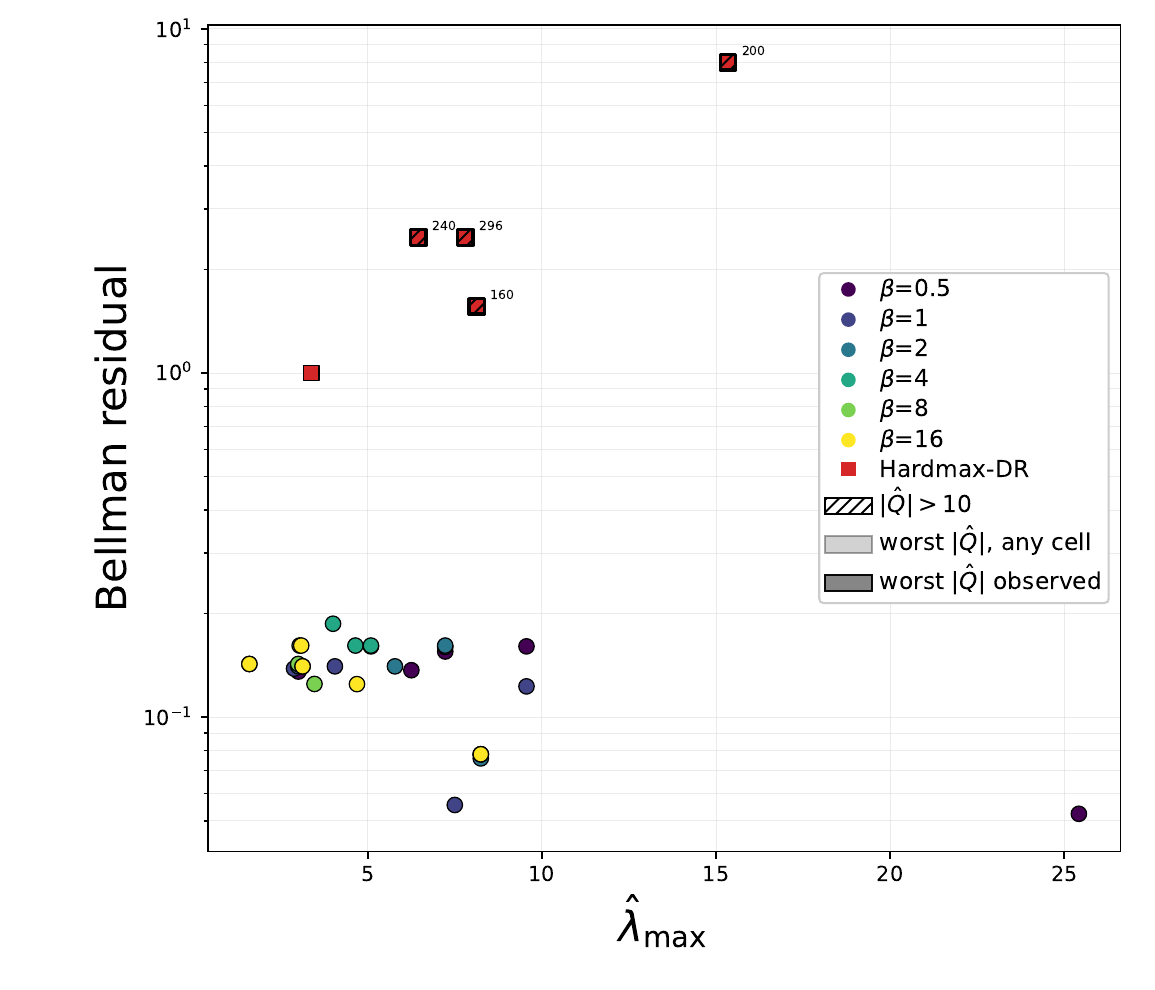}
\end{minipage}
\end{tabular}
\caption{ToolSandbox results for tasks with $L^{\min}=1$. Panel (a) reports value estimates with 95\% Wald intervals; the black dotted line is the in-sample empirical reference value $0.9825$. Panel (b) reports policy entropy. Panel (c) reports the largest absolute fitted $Q$-value over all admissible cells and over cells observed in the evaluation data. Panel (d) compares the Bellman residual with the maximum adjoint weight across fitted models. Hatching indicates fits with $\max|\widehat Q|>10$, and enlarged markers indicate estimates with $|\widehat J|>1$.}
\label{fig:toolsandbox-one}
\end{figure}

\begin{figure}[!htb]
\centering
\begin{tabular}{@{}cccc@{}}
\begin{minipage}{0.23\textwidth}
\centering
\small\textbf{(a) Value estimate}\\[-1pt]
\includegraphics[width=\textwidth]{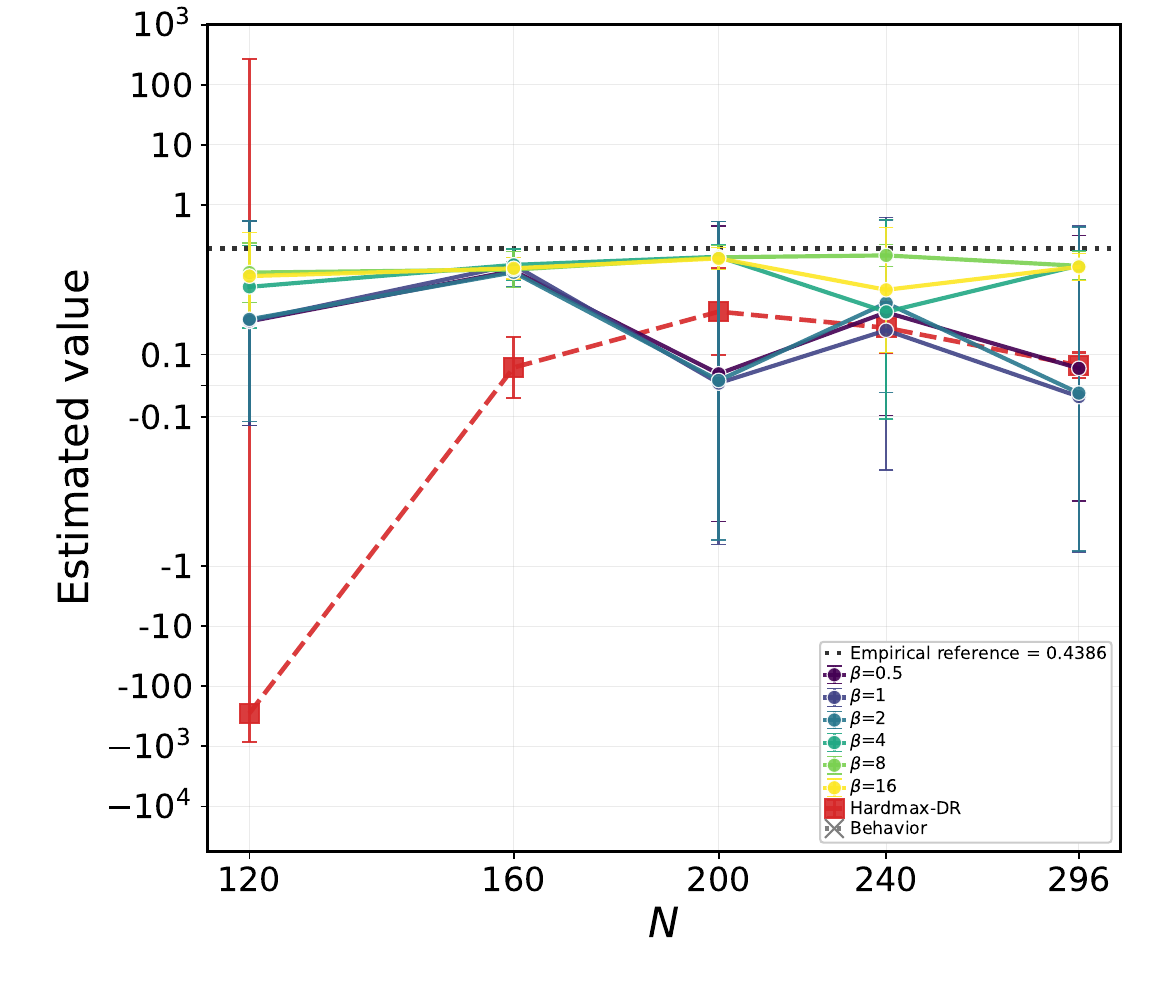}
\end{minipage} &
\begin{minipage}{0.23\textwidth}
\centering
\small\textbf{(b) Policy entropy}\\[-1pt]
\includegraphics[width=\textwidth]{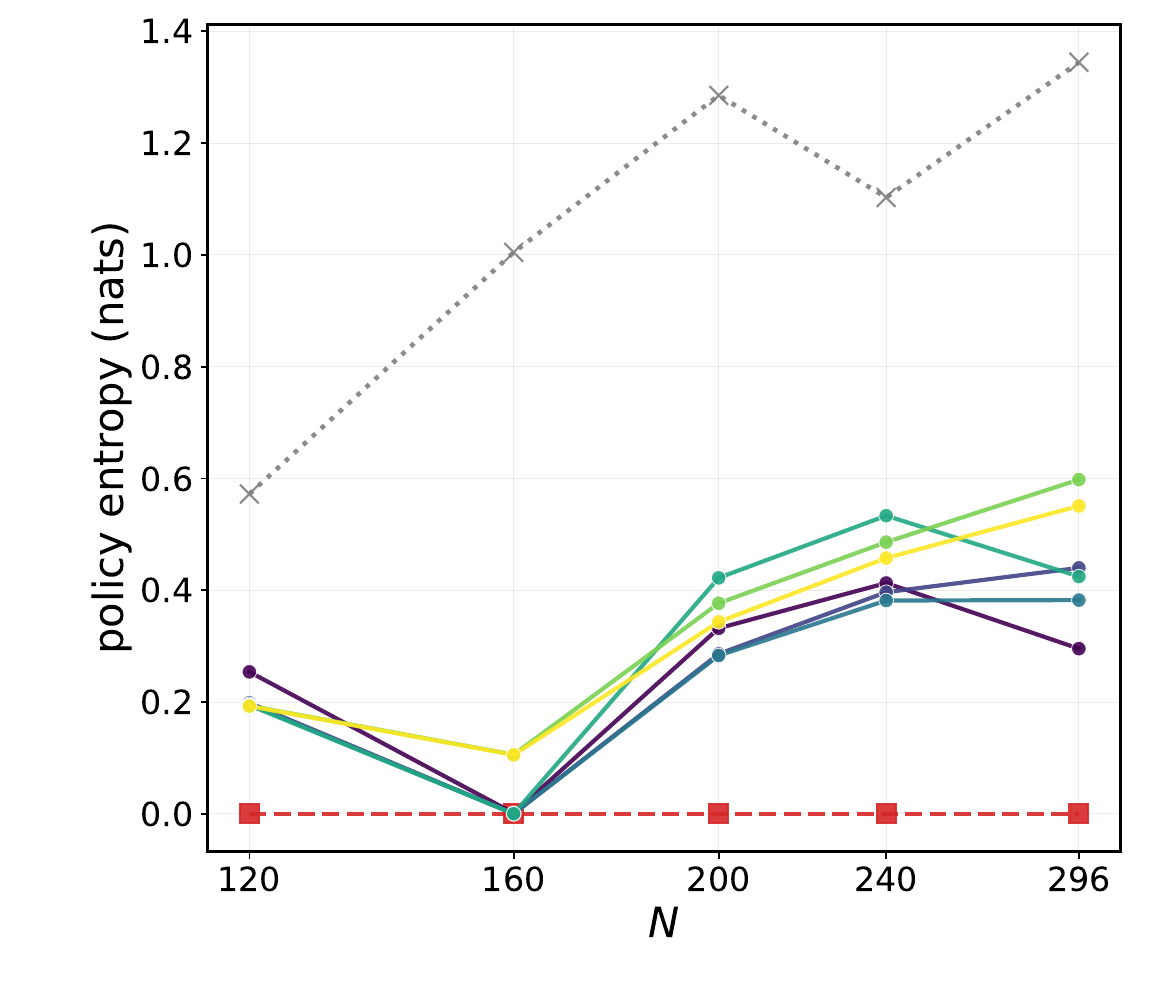}
\end{minipage} &
\begin{minipage}{0.23\textwidth}
\centering
\small\textbf{(c) Worst fitted $Q$}\\[-1pt]
\includegraphics[width=\textwidth]{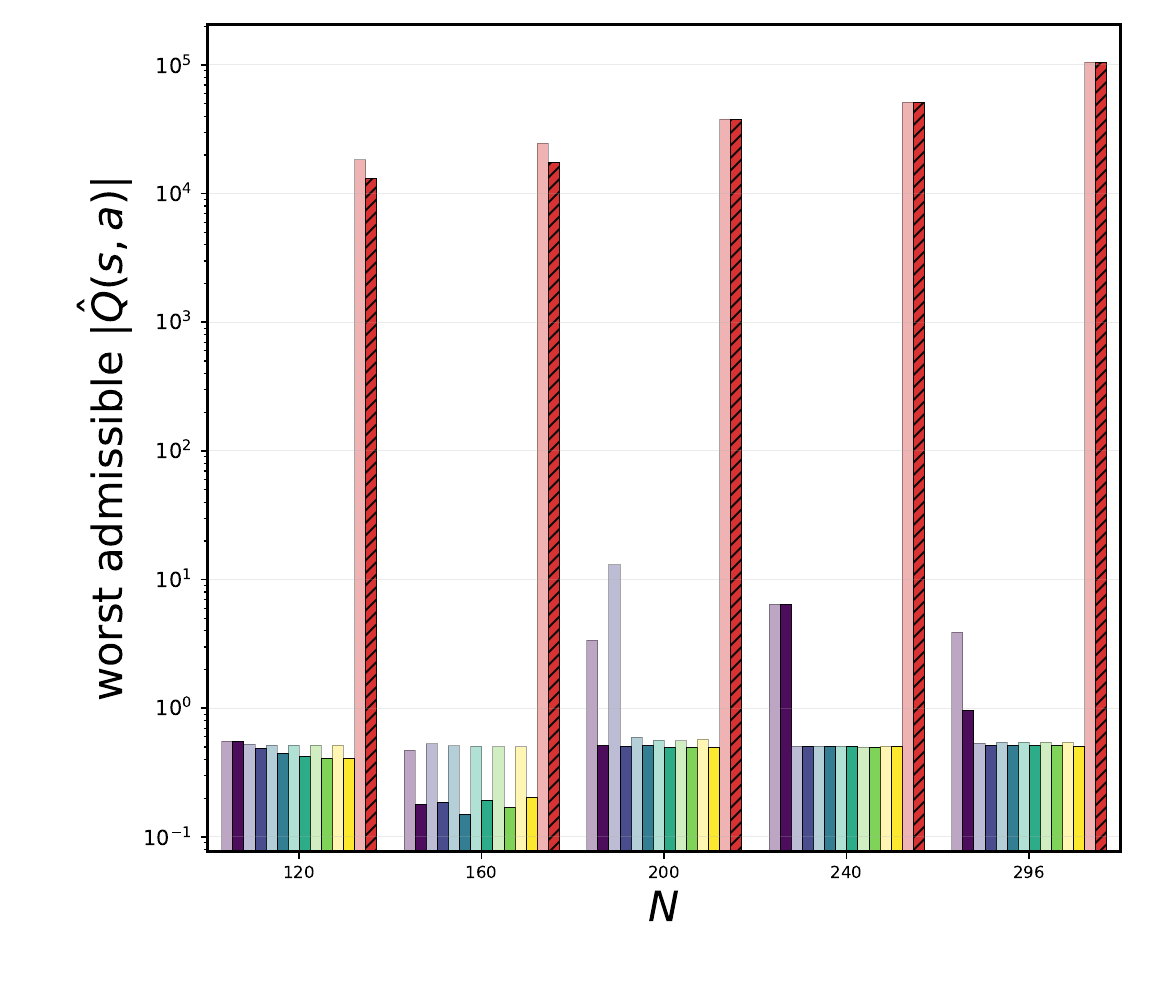}
\end{minipage} &
\begin{minipage}{0.23\textwidth}
\centering
\small\textbf{(d) Residual and adjoint weight}\\[-1pt]
\includegraphics[width=\textwidth]{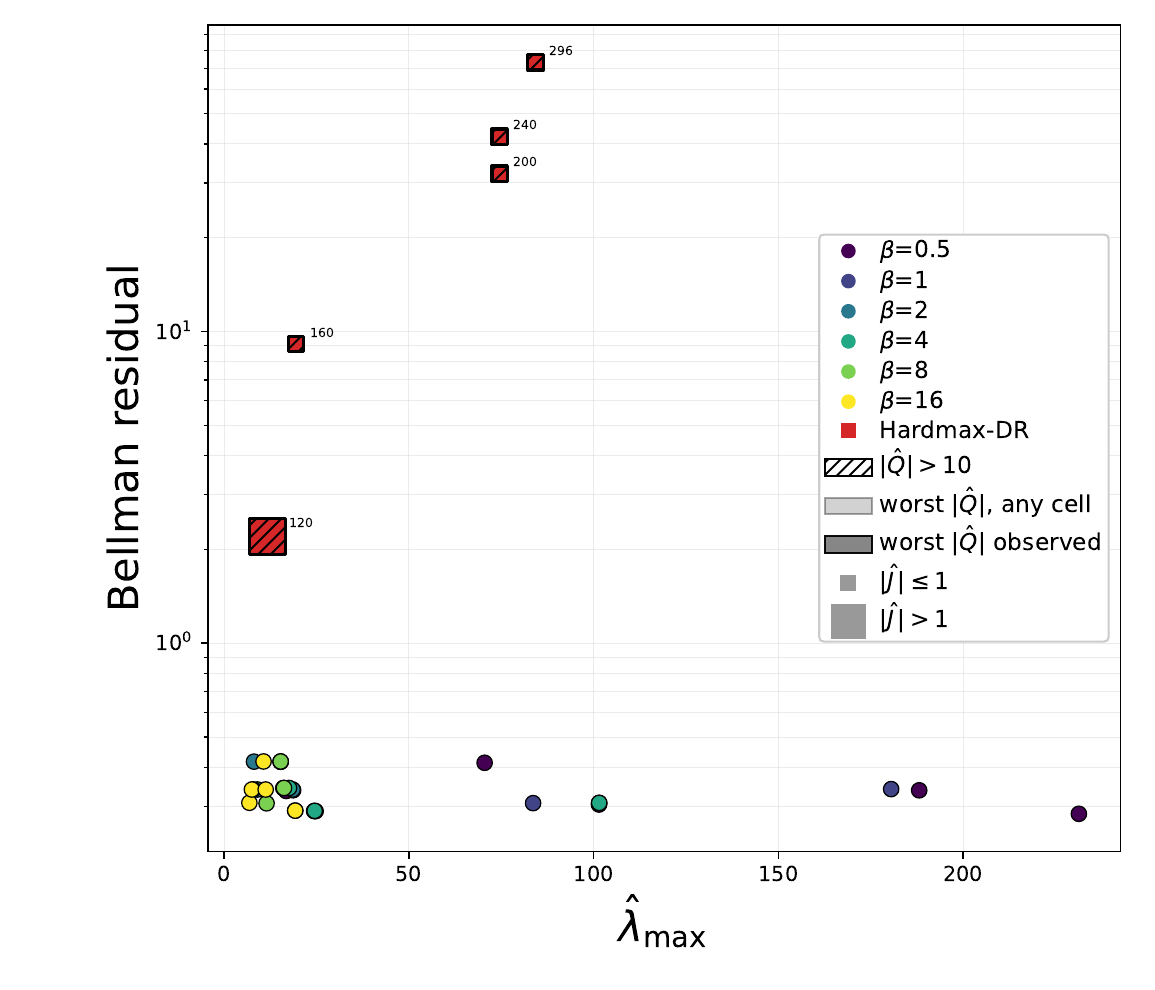}
\end{minipage}
\end{tabular}
\caption{ToolSandbox results for tasks with $L^{\min}\geq2$. Panel (a) reports value estimates with 95\% Wald intervals on a symmetric log scale; the black dotted line is the in-sample empirical reference value $0.4386$. Panel (b) reports policy entropy. Panel (c) reports the largest absolute fitted $Q$-value over all admissible cells and over cells observed in the evaluation data. Panel (d) compares the Bellman residual with the maximum adjoint weight across fitted models. Hatching indicates fits with $\max|\widehat Q|>10$, and enlarged markers indicate estimates with $|\widehat J|>1$.}
\label{fig:toolsandbox-multiple}
\end{figure}

\section{Discussion}\label{sec:discussion}

Our results show that inference for the optimal value of a decision system can be built directly around its optimized Bellman structure. Maximization is statistically irregular when optimal actions are tied, and in a Markov decision process this irregularity propagates recursively through future state distributions and continuation values. We address this difficulty through a self-induced Bellman equation in which policy generation and continuation value evaluation are closed within a single smooth population fixed point.
This self-consistency turns the recursively coupled optimization problem into a smooth inferential object in which perturbations of the continuation values and the induced policy can be handled jointly. The adjoint weight then yields an orthogonal score that removes the combined first-order effect, while the approximation analysis controls the remaining smoothing bias $J^*-J_\beta$. The same structure also leads to estimable Bellman and adjoint weight nuisances, for which we establish estimation rates sufficient for the inference procedure. 
We provide a unified route to optimal value inference under both fixed and diverging horizons.

Several extensions are particularly promising. The self-induced approach could be extended to other smooth approximations of the maximum, continuous or high-dimensional state-action spaces, learned reward models and  discounted objectives. On the applied side, the framework can be useful in pricing, recommendation, and agentic systems, where domain structure such as monotonicity, one-sided feedback, capacity constraints, or cyclic demand could be exploited to improve estimation. A further direction is to connect offline optimal value inference with online deployment. Historical data can first quantify the scope for improvement, after which the policy can be updated as new operational data become available and decisions are revised in light of accumulating information.

\bibliographystyle{plainnat}
\bibliography{ref}

\appendix
\newpage
\begin{center}
\textit{\large Supplementary Material to ``Optimal Value Inference for Reinforcement Learning" }
\end{center}

\section{Notation Summary}\label{app:notation-summary}
\vspace{-20pt}
\small
\renewcommand{\arraystretch}{1.15}
\begin{longtable}{@{}p{0.22\textwidth}p{0.70\textwidth}@{}}
\hline
Symbol & Meaning \\
\hline
\endfirsthead
\hline
Symbol & Meaning \\
\hline
\endhead
\hline
\endfoot
$\cS,\cA$ & state and action spaces \\
$P,R^*$ & transition kernel and conditional mean reward \\
$\Pi,\pi^b$ & stationary Markov policies and the behavior policy \\
$\tau_i,Z_t$ & observed trajectory and one-step transition $(S_t,A_t,R_{t+1},S_{t+1})$ \\
$J^\pi$ & long-run average reward under stationary policy $\pi$ \\
$Q^\pi$ & zero-average fixed-policy relative action-value function \\
$J^*,Q^*$ & optimal average reward and the common anchored relative action-value function of optimal stationary policies \\
$\pi_{\beta,Q}$ & softmax policy induced by a generic action-value function $Q$ \\
$V_\beta(Q)$ & softmax-smoothed value map \\
$(Q_\beta,J_\beta)$ & anchored self-induced Bellman solution \\
$\pi_\beta$ & self-induced policy $\pi_{\beta,Q_\beta}$ \\
$\Gamma$ & smallest positive optimality gap of $Q^*$ \\
$\cQ$ & anchored action-value space \\
$\omega_{\beta,Q}$ & signed derivative weights of $V_\beta$ at $Q$ \\
$D_\beta$ & derivative aggregation operator at $Q_\beta$ \\
$\bar P$ & pooled one-step behavior law for trajectory data \\
$\bar\mu$ & state-action marginal of the relevant one-step law \\
$\bar\nu$ & pooled marginal of the next state for trajectory data \\
$\varphi_\beta(Z;J,Q,\lambda)$ & orthogonal score for $J_\beta$\\
$\lambda_\beta$ & adjoint weight in the orthogonal score  \\
$\xi_{i,t}$ & martingale score increment $\varphi_\beta(Z_{i,t})-J_\beta$ \\
$b_k$ & fold-specific nuisance remainder term \\
$\sigma^2$ &  variance of the score increment \\
\hline
\end{longtable}

\vspace{-20pt}
\section{Proof for Section~\ref{sec:formu}}\label{app:setup-proofs}
\vspace{-8pt}
\begin{lemma}[Agreement across optimal policies and Bellman characterization]\label{lem:optimal-relative-value-uniqueness}
Under Assumption~\ref{ass:irreducible}, there exists a unique $Q^*\in\cQ$ such that $Q^\pi=Q^*$ for every $\pi\in\Pi$ with $J^\pi=J^*$. Moreover, this relative action-value function common to optimal policies satisfies the Bellman optimality equation \eqref{eq:opt-bellman}.
\end{lemma}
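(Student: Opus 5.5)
The plan is to start from the classical average-reward optimality theory for finite unichain MDPs, get an optimal-equation solution $(g,h)$, and then show that every optimal stationary policy's anchored relative value coincides with one fixed function built from $h$. Under Assumption~\ref{ass:irreducible} every stationary policy is unichain, so by \citet[Theorem~8.4.3 and Section~8.4]{puterman2014markov} the state-value optimality equation
\[
g+h(s)=\max_{a\in\cA}\Bigl\{R^*(s,a)+\sum_{s'}P(s'\mid s,a)h(s')\Bigr\}
\]
has a solution with $g=J^*$ and $h$ unique up to an additive constant. Set $\widetilde Q(s,a):=R^*(s,a)-J^*+\sum_{s'}P(s'\mid s,a)h(s')$, so that $\max_b\widetilde Q(s,b)=h(s)$. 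Substituting back shows $\widetilde Q$ satisfies \eqref{eq:opt-bellman}. Then define $Q^*:=\widetilde Q-c\one$, where $c$ is the arithmetic mean of $\widetilde Q$. Since the maximum commutes with constant shifts, $Q^*\in\cQ$ still satisfies \eqref{eq:opt-bellman}.

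Next, fix any $\pi$ with $J^\pi=J^*$ and let $\Delta(s):=h(s)-\sum_a\pi(a\mid s)\widetilde Q(s,a)\ge0$. Average the identity
\[
J^*+h(s)=\sum_a\pi(a\mid s)\widetilde Q(s,a)+J^*+\Delta(s)
\]
against the unique stationary distribution $d^\pi$ of $P^\pi$, which has full support by irreducibility. Using $\sum_a\pi(a\mid s)\widetilde Q(s,a)=r^\pi(s)-J^*+(P^\pi h)(s)$ and $d^\pi P^\pi=d^\pi$ gives $J^\pi=J^*-\sum_s d^\pi(s)\Delta(s)$. Hence $J^\pi=J^*$ forces $\Delta\equiv0$, so $\pi$ is supported on the argmax of $\widetilde Q(s,\cdot)$. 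It follows that $\widetilde Q$ solves the fixed-policy equation \eqref{eq:policy-evaluation} for $\pi$ with average reward $J^*$.

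The step I expect to be the main obstacle is uniqueness of the anchored solution of \eqref{eq:policy-evaluation} for a fixed irreducible $\pi$, together with its agreement with the Ces\`aro definition of $Q^\pi$. For uniqueness, take two solutions $(J_1,Q_1)$ and $(J_2,Q_2)$ and write $\delta:=Q_1-Q_2$ and $u(s):=\sum_a\pi(a\mid s)\delta(s,a)$. Then $(J_1-J_2)+\delta=Pu$, and averaging under $\pi$ gives $(J_1-J_2)\one+u=P^\pi u$. Integrating against $d^\pi$ yields $J_1=J_2$. Then $u=P^\pi u$ with $P^\pi$ irreducible forces $u$ to be constant, so $\delta=Pu$ is constant, and anchoring makes it zero. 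Agreement with the Ces\`aro object follows by telescoping \eqref{eq:policy-evaluation} along the chain: the Ces\`aro limit of $\EE_\pi[\sum_{k\le t}(R_k-J^\pi)]$ equals $\widetilde Q(s,a)$ minus the Ces\`aro average of $\EE_\pi[\widetilde Q(S_t,A_t)]$. That average converges to the constant $\sum_s d^\pi(s)\sum_a\pi(a\mid s)\widetilde Q(s,a)$ even without aperiodicity, because Ces\`aro averages of $(P^\pi)^t$ converge to $\one d^\pi$ for an irreducible chain. So the probabilistic relative value differs from $\widetilde Q$ by a constant, and after anchoring $Q^\pi=Q^*$.

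Uniqueness of $Q^*$ in the statement is immediate, because an optimal policy exists by \citet[Theorem~8.1.2]{puterman2014markov} and $Q^\pi$ is uniquely defined. Finally, \eqref{eq:opt-bellman} for $Q^*$ was verified in the first step.
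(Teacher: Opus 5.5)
Your proposal is correct and follows essentially the same route as the paper: get $(J^*,h)$ from the unichain optimality equation, set $\widetilde Q=q_h$, average the nonnegative gap against the full-support stationary law $d^\pi$ to show every optimal $\pi$ is greedy for $\widetilde Q$, and then anchor. The only differences are that you cite Puterman rather than \citet{mahadevan1996average}, and that you explicitly prove uniqueness of the anchored fixed-policy solution and its agreement with the Ces\`aro definition, which the paper takes from the classical theory recalled in Section~\ref{sec:formu}; your telescoping argument alone already gives that agreement.
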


\begin{proof}
Assumption~\ref{ass:irreducible} implies that every stationary policy induces a unichain. By \citet[Theorem~1]{mahadevan1996average}, there exist a scalar $g$ and a function $h\colon\cS\to\RR$ satisfying
\[
g+h(s)
=
\max_{a\in\cA}
\left(
R^*(s,a)+(Ph)(s,a)
\right),
\qquad s\in\cS.
\]
The same theorem shows that a stationary greedy policy attains $g$, so the definition of $J^*$ gives $g=J^*$. Define, only for this proof, $q_h(s,a)\coloneqq R^*(s,a)-J^*+(Ph)(s,a)$. The optimality equation implies $h(s)=\max_{a\in\cA}q_h(s,a)$, so $g_h(s,a)\coloneqq h(s)-q_h(s,a)\ge0$.

Fix any $\pi\in\Pi$ with $J^\pi=J^*$, and let $d^\pi$ be the unique stationary distribution of $P^\pi$. Irreducibility gives $d^\pi(s)>0$ for every state. Stationarity and the stationary reward representation of $J^\pi$ yield
\begin{align*}
\sum_{s\in\cS}d^\pi(s)\sum_{a\in\cA}\pi(a\mid s)g_h(s,a)
&= \sum_{s\in\cS}d^\pi(s)h(s)
-\sum_{s,a}d^\pi(s)\pi(a\mid s)R^*(s,a)
+J^*
-\sum_{s,a}d^\pi(s)\pi(a\mid s)(Ph)(s,a)\\
&=J^*-J^\pi
=0.
\end{align*}
Every summand on the left is nonnegative. Since $d^\pi$ has full support, $\pi(a\mid s)>0$ therefore implies $g_h(s,a)=0$, or equivalently $q_h(s,a)=h(s)=\max_b q_h(s,b)$. Consequently, $\sum_{a\in\cA}\pi(a\mid s)q_h(s,a)=h(s)$ for every $s\in\cS$, and the definition of $q_h$ shows that $(J^*,q_h)$ satisfies the fixed policy equation \eqref{eq:policy-evaluation} for every optimal $\pi$. By the characterization for fixed policies in Section~\ref{sec:formu}, $q_h$ differs from each $Q^\pi$ only by an additive constant. Because every $Q^\pi$ obeys the same anchoring constraint, all optimal policies therefore have the common relative action-value function
\[
Q^*
\coloneqq
q_h-\frac{1}{|\cS||\cA|}\sum_{s\in\cS}\sum_{a\in\cA}q_h(s,a)
\in\cQ,
\]
which also proves uniqueness of the common anchored representative.

Finally, $h(s)=\max_a q_h(s,a)$ and the definition of $q_h$ give $J^*+q_h(s,a)=R^*(s,a)+\sum_{s'\in\cS}P(s'\mid s,a)\max_{b\in\cA}q_h(s',b)$.
Subtracting the state-action average of $q_h$ from both sides preserves this identity and yields \eqref{eq:opt-bellman} for $Q^*$.
\end{proof}

\section{Proofs for Section~\ref{sec:est}}\label{app:self-induced-proofs}

\subsection{Proof of Lemma~\ref{lem:shift-invariance}}
\begin{proof}
Define $\widetilde Q\coloneqq Q_\beta+c\one$. The softmax normalization cancels the common shift, and the induced value shifts by the same constant.
\[
\pi_{\beta,\widetilde Q}(a\mid s)
=
\frac{\exp(\beta(Q_\beta(s,a)+c))}{\sum_{b\in\cA}\exp(\beta(Q_\beta(s,b)+c))}
=
\pi_\beta(a\mid s).
\]
The induced value identity is
\[
V_\beta(\widetilde Q)(s)
=
\sum_{a\in\cA}\pi_{\beta,\widetilde Q}(a\mid s)(Q_\beta(s,a)+c)
=
V_\beta(Q_\beta)(s)+c.
\]
Substituting the value-shift identity into the self-induced Bellman equation gives
\begin{align*}
R^*(s,a)+\EE\!\left[V_\beta(\widetilde Q)(S')\mid S=s,A=a\right]
&= R^*(s,a)+\EE\!\left[V_\beta(Q_\beta)(S')+c\mid S=s,A=a\right] \\
&= J_\beta+Q_\beta(s,a)+c
= J_\beta+\widetilde Q(s,a).
\end{align*}
Thus $(\widetilde Q,J_\beta)$ also satisfies \eqref{eq:self}.
\end{proof}

\subsection{Proof of Theorem~\ref{thm:self-induced-existence}}
\begin{proof}
Recall that $\Pi$ denotes the set of stationary policies $\pi\colon\cS\to\cP(\cA)$. Since $\cS$ and $\cA$ are finite, $\Pi$ is a compact subset of a finite-dimensional Euclidean space.
We first show that Assumption~\ref{ass:irreducible} implies the unichain condition.
Fix any stationary policy $\pi\in\Pi$.
By Assumption~\ref{ass:irreducible}, the transition matrix $P^\pi(s'\mid s)=\sum_{a\in\cA}\pi(a\mid s)P(s'\mid s,a)$ is irreducible.
An irreducible finite Markov chain has a single communicating class, which is necessarily recurrent, so there are no transient states and the chain is unichain.
In particular, the average reward $J^\pi$ exists and is independent of the initial state.

Therefore each $\pi\in\Pi$ has a unique scalar $J^\pi$, and the probabilistically defined $Q^\pi$ is the unique anchored solution in $\cQ$ to \eqref{eq:policy-evaluation}. The evaluation map $\pi\mapsto Q^\pi$ is continuous on $\Pi$, because the anchored Poisson equation together with the zero-average constraint is a finite-dimensional linear system in $(Q^\pi,J^\pi)$, Assumption~\ref{ass:irreducible} gives a unique solution for every $\pi$, its coefficient matrix depends continuously on $\pi$, and matrix inversion is continuous on the set of nonsingular matrices. Therefore, $\mathcal Q_\Pi\coloneqq\{Q^\pi:\pi\in\Pi\}\subseteq\cQ$ is compact and bounded. Choose $M<\infty$ such that $\|Q^\pi\|_\infty\le M$ for all $\pi\in\Pi$, and define $B\coloneqq\{Q\in\cQ:\|Q\|_\infty\le M\}$, which is nonempty, compact, and convex, together with $\Phi\colon\cQ\to\cQ$ given by $\Phi(Q)\coloneqq Q^{\pi_{\beta,Q}}$.

For $Q\in B$, the softmax policy $\pi_{\beta,Q}$ belongs to $\Pi$, so $\Phi(Q)=Q^{\pi_{\beta,Q}}\in\mathcal Q_\Pi\subseteq B$. Thus $\Phi$ maps $B$ into itself. The softmax map $Q\mapsto\pi_{\beta,Q}$ is continuous in finite dimension, and the evaluation map $\pi\mapsto Q^\pi$ is continuous. Hence $\Phi$ is continuous on $B$.

Brouwer's fixed-point theorem gives $Q_\beta\in B$ such that $ \Phi(Q_\beta)=Q_\beta. $
Recall that we define $\pi_\beta\coloneqq\pi_{\beta,Q_\beta}$. Equivalently, we have $ Q_\beta=Q^{\pi_\beta}.$
Define $ J_\beta\coloneqq J^{\pi_\beta}.$
Since $Q_\beta=Q^{\pi_\beta}$ is the anchored evaluation of $\pi_\beta$, equation \eqref{eq:policy-evaluation} yields
\begin{align*}
J_\beta+Q_\beta(s,a)
&= R^*(s,a)+\sum_{s'\in\cS}P(s'\mid s,a)\sum_{a'\in\cA}\pi_\beta(a'\mid s')Q_\beta(s',a') \\
&= R^*(s,a)+\EE\!\left[\sum_{a'\in\cA}\pi_\beta(a'\mid S')Q_\beta(S',a')\,\Bigm|\, S=s,A=a\right] \\
&= R^*(s,a)+\EE\!\left[V_\beta(Q_\beta)(S')\mid S=s,A=a\right].
\end{align*}
Therefore $(Q_\beta,J_\beta)$ satisfies \eqref{eq:self}.
\end{proof}

\subsection{Proof of Proposition~\ref{prop:self-induced-uniqueness}}

\subsubsection{Uniform Bound on Signed Weights}
\begin{lemma}\label{lem:if-omega-tv-bound}
There exists a constant $C_{\cA}<\infty$, depending only on $|\cA|$, such that, for every $Q$, $s\in\cS$, and $\beta>0$,
\[
\sum_{a\in\cA}\omega_{\beta,Q}(a\mid s)=1,
\qquad
\sum_{a\in\cA}|\omega_{\beta,Q}(a\mid s)|\le C_{\cA}.
\]
\end{lemma}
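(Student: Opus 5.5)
We will prove both claims directly from the definition \eqref{eq:omega}, fixing $Q$, $s$ and $\beta$ and abbreviating $p_a:=\pi_{\beta,Q}(a\mid s)$, $x_a:=\beta Q(s,a)$ and $\bar x:=\sum_a p_a x_a=\beta V_\beta(Q)(s)$. In this notation $\omega_{\beta,Q}(a\mid s)=p_a(1+x_a-\bar x)$.

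\textbf{Row sum.} Summing gives $\sum_a p_a+\sum_a p_a x_a-\bar x\sum_a p_a=1+\bar x-\bar x=1$.

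\textbf{Absolute sum.} The triangle inequality gives $\sum_a|\omega_{\beta,Q}(a\mid s)|\le 1+\sum_a p_a|x_a-\bar x|$. It therefore suffices to bound the mean absolute deviation of $x$ under the Gibbs law $p_a\propto e^{x_a}$ by $2\log|\cA|$, uniformly in $x\in\RR^{|\cA|}$. This gives exactly $C_{\cA}=1+2\log|\cA|$, matching the constant used in Assumption~\ref{ass:uniform-signed-aggregation-identification}.

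\textbf{Bounding the deviation.} Write $\sum_a p_a|x_a-\bar x|=2\sum_a p_a(\bar x-x_a)_+$, using that the positive and negative parts of a centered variable have equal means. Let $M=\max_a x_a$ and $L=\log\sum_b e^{x_b}$, so that $M\le L\le M+\log|\cA|$ and $\log p_a=x_a-L$. The entropy identity gives
\[
\bar x=\sum_a p_a x_a=L-\mathcal H(p)\ge L-\log|\cA|,
\]
where $\mathcal H(p)=-\sum_a p_a\log p_a$ is the Shannon entropy. Then
\[
\sum_a p_a(\bar x-x_a)_+\le\sum_a p_a(L-x_a)_+=\sum_a p_a\log(1/p_a)=\mathcal H(p)\le\log|\cA|,
\]
since $x_a\le M\le L$ for every $a$, so each $L-x_a$ is nonnegative and $\bar x\le L$. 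This yields $\sum_a p_a|x_a-\bar x|\le 2\log|\cA|$, and hence $\sum_a|\omega_{\beta,Q}(a\mid s)|\le 1+2\log|\cA|$.

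\textbf{Main obstacle.} The only nontrivial step is the uniform-in-$\beta$ control of the mean absolute deviation. A naive bound scales with $\beta$, whereas the identity $\log p_a=x_a-L$ converts the deviation into entropy, which is bounded by $\log|\cA|$ independently of $\beta$ and $Q$. A cruder route would bound $\sum_a p_a|x_a-\bar x|$ using $\sup_{t\ge0}te^{-t}=1/e$ per action, giving a constant of order $|\cA|$. This suffices for finiteness but not for the sharp constant $C_{\cA}$, so we use the entropy argument.
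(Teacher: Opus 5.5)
Your proof is correct and follows essentially the same route as the paper: both use $\log p_a=\beta Q(s,a)-L$ to rewrite $\beta(Q(s,a)-V_\beta(Q)(s))$ as $\log p_a-\sum_b p_b\log p_b$, then bound the resulting mean absolute deviation by $2\mathcal H(p)\le 2\log|\cA|$, giving $C_{\cA}=1+2\log|\cA|$. The only cosmetic difference is in how you reach $2\mathcal H(p)$: the paper splits $|\log p_a-\sum_b p_b\log p_b|$ into two entropy terms by the triangle inequality, whereas you use the equal-mass identity for the positive and negative parts of a centered variable and dominate $(\bar x-x_a)_+$ by $L-x_a=-\log p_a$.
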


\begin{proof}
Fix $s\in\cS$ and $Q$. Write $q_a\coloneqq Q(s,a)$ and $p_a\coloneqq\pi_{\beta,Q}(a\mid s)$, $a\in\cA$. Then $V_\beta(Q)(s)=\sum_{b\in\cA}p_bq_b$, and $\omega_{\beta,Q}(a\mid s)=p_a(1+\beta(q_a-V_\beta(Q)(s)))$.
The row-sum identity follows immediately.
\[
\begin{aligned}
\sum_{a\in\cA}\omega_{\beta,Q}(a\mid s)
&= \sum_{a\in\cA}p_a+\beta\sum_{a\in\cA}p_a\left(q_a-\sum_{b\in\cA}p_bq_b\right)=1.
\end{aligned}
\]

It remains to prove the signed envelope. Let $z_a\coloneqq\beta q_a$. Since $p_a=e^{z_a}/\sum_{b\in\cA}e^{z_b}$, there exists a constant $c$ such that $z_a=\log p_a+c$. The common additive constant cancels from the centered expression.
\[
\begin{aligned}
\omega_{\beta,Q}(a\mid s)
&= p_a\left(1+z_a-\sum_{b\in\cA}p_bz_b\right)
= p_a\left(1+\log p_a-\sum_{b\in\cA}p_b\log p_b\right).
\end{aligned}
\]
The entropy bound then controls the signed mass.
\[
\begin{aligned}
\sum_{a\in\cA}|\omega_{\beta,Q}(a\mid s)|
&\le \sum_{a\in\cA}p_a
+
\sum_{a\in\cA}p_a|\log p_a|
+
\left|
\sum_{b\in\cA}p_b\log p_b
\right|
\sum_{a\in\cA}p_a .
\end{aligned}
\]
Since $p_a\in(0,1]$, we have $|\log p_a|=-\log p_a$, and the two entropy terms satisfy
\[
\sum_{a\in\cA}p_a|\log p_a|
=
-\sum_{a\in\cA}p_a\log p_a
\le
\log|\cA|.
\]
The averaged logarithm satisfies the same entropy bound,
\[
\left|
\sum_{b\in\cA}p_b\log p_b
\right|
=
-\sum_{b\in\cA}p_b\log p_b
\le
\log|\cA|.
\]
Thus $\sum_{a\in\cA}|\omega_{\beta,Q}(a\mid s)|\le1+2\log|\cA|$, and taking $C_{\cA}\coloneqq1+2\log|\cA|$ proves the claim. The bound depends only on $|\cA|$, and is uniform over $Q$, $s$, and $\beta$.
\end{proof}

\subsubsection{A Sufficient Condition for Signed Aggregation Identification}
Write $\spn(f)\coloneqq\sup_x f(x)-\inf_x f(x)$ for the span seminorm of a real-valued function $f$ on a finite domain.
\begin{proposition}[Dobrushin overlap implies uniform signed aggregation identification]
\label{prop:rho-sufficient}
If $ C_{\cA}\rho(P)<1, $ then Assumption~\ref{ass:uniform-signed-aggregation-identification} holds.
\end{proposition}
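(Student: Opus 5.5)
We start from the identity $q = -j\one + PDq$, which holds for an allowable $D$, some $q\in\cQ$ and some $j\in\RR$. The plan is to show that the map $q\mapsto PDq$ contracts the span seminorm $\spn(\cdot)$ by the factor $C_{\cA}\rho(P)<1$. This forces $q$ to be constant, and then the anchoring constraint and the row sums finish the argument.

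The first step is a one-step span contraction. Set $h\coloneqq Dq$, a function on $\cS$. For any two state-action pairs $(s,a),(\widetilde s,\widetilde a)$, the difference
\[
(Ph)(s,a)-(Ph)(\widetilde s,\widetilde a)=\sum_{s'}\bigl(P(s'\mid s,a)-P(s'\mid\widetilde s,\widetilde a)\bigr)h(s')
\]
pairs $h$ with a signed measure of total mass zero. By the standard coupling/Hahn decomposition bound, it is at most $\|P(\cdot\mid s,a)-P(\cdot\mid\widetilde s,\widetilde a)\|_{\TV}\,\spn(h)$. Taking suprema gives $\spn(Ph)\le\rho(P)\spn(h)$.

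The second step, which I expect to be the only delicate one, controls $\spn(Dq)$ by $\spn(q)$. The weights are signed, so $Dq$ is not a convex combination and can leave the range of $q$. Because $\sum_a w_s(a)=1$, we can write $(Dq)(s)=m+\sum_a w_s(a)\bigl(q(s,a)-m\bigr)$, where $m$ is the midpoint of the range of $q$. Then $|q(s,a)-m|\le\spn(q)/2$, so
\[
|(Dq)(s)-m|\le C_{\cA}\,\spn(q)/2,
\]
and therefore $\spn(Dq)\le C_{\cA}\spn(q)$.

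Combining the two steps with the fact that the span ignores the constant $-j\one$, we get
\[
\spn(q)=\spn(PDq)\le C_{\cA}\rho(P)\,\spn(q).
\]
Since $C_{\cA}\rho(P)<1$, this gives $\spn(q)=0$, so $q=c\one$. The anchoring $q\in\cQ$ forces $c=0$, hence $q=0$. Substituting back, $Dq=0$ and $PDq=0$, so the identity reduces to $-j\one=0$, which gives $j=0$.
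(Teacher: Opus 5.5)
Your proof is correct and follows the paper's argument essentially step for step. Both use the Dobrushin span contraction $\spn(Pg)\le\rho(P)\spn(g)$, then the bound $\spn(Dq)\le C_{\cA}\spn(q)$ obtained by centering at the midpoint of the range of $q$ (the paper writes this as $\spn(f)=2\inf_c\|f-c\one\|_\infty$), and finally the anchoring constraint to conclude $q=0$ and hence $j=0$.
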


\begin{proof} 
Fix an allowable aggregation operator $D$. Suppose that $-j\one-q+PDq=0$, $q\in\cQ$, $j\in\mathbb R$. Equivalently, $ q=PDq-j\one.$
Since the span seminorm is invariant under additive constants, we have $ \spn(q)=\spn(PDq).$

We first recall the Dobrushin contraction bound $\spn(Pg)\le\rho(P)\spn(g)$, valid for any state function $g$. Indeed, for any two state-action pairs $x=(s,a)$ and $\tilde x=(\tilde s,\tilde a)$, subtracting an arbitrary constant from $g$, taking absolute values, and optimizing over the constant gives
\[
(Pg)(x)-(Pg)(\tilde x)
=
\sum_{s'\in\mathcal S}
(P(s'\mid x)-P(s'\mid \tilde x))g(s').
\]
Taking absolute values gives
\[
|(Pg)(x)-(Pg)(\tilde x)|
\le
\|P(\cdot\mid x)-P(\cdot\mid \tilde x)\|_{\mathrm{TV}}
\spn(g).
\]
Taking the supremum over $x,\tilde x$ gives the stated bound.

Next we bound the span of $Dq$. For any constant $c$, the property that every row of $D$ sums to one gives $Dq(s)-c=\sum_{a\in\mathcal A}w_s(a)(q(s,a)-c)$, and therefore
\[
|Dq(s)-c|
\le
\sum_{a\in\mathcal A}|w_s(a)|\,|q(s,a)-c|
\le
C_{\mathcal A}\|q-c\one\|_\infty.
\]
Taking the supremum over $s$ and then optimizing over $c$, using $\spn(f)=2\inf_c\|f-c\one\|_\infty$, yields $\spn(Dq)\le C_{\mathcal A}\spn(q)$.

Combining the preceding displays,
\[
\spn(q)
=
\spn(PDq)
\le
\rho(P)\spn(Dq)
\le
C_{\mathcal A}\rho(P)\spn(q).
\]
Since $C_{\mathcal A}\rho(P)<1$, we must have $ \spn(q)=0.$
Thus $q$ is constant on $\mathcal S\times\mathcal A$. Because $q\in\cQ$, the anchoring constraint implies $q=0$. Substituting $q=0$ into $-j\one-q+PDq=0$ gives $j=0$. Hence Assumption~\ref{ass:uniform-signed-aggregation-identification} holds.
\end{proof}

\subsubsection{Proof of Proposition~\ref{prop:self-induced-uniqueness}}

\begin{lemma}[Softmax calculus]\label{lem:softmax-calculus}
For $q=(q_a)_{a\in\cA}$, define $ \mathrm{aso}_\beta(q) \coloneqq \frac{\sum_{a\in\cA}q_a e^{\beta q_a}} {\sum_{b\in\cA}e^{\beta q_b}}.$
Its derivative is
\[
\frac{\partial}{\partial q_a}\mathrm{aso}_\beta(q)
=
w_a(q)(1+\beta(q_a-\mathrm{aso}_\beta(q))),
\qquad
w_a(q)\coloneqq\frac{e^{\beta q_a}}{\sum_{b\in\cA}e^{\beta q_b}}.
\]
With $\omega_{\beta,Q}$ defined by \eqref{eq:omega} with $Q_\beta$ replaced by $Q$,
\[
D_QV_\beta(Q)(s)[h]
=
\sum_{a\in\cA}\omega_{\beta,Q}(a\mid s)h(s,a),
\]
where the bracketed argument $h$ is the perturbation direction. Moreover, we have
\[
\sum_{a\in\cA}\omega_{\beta,Q}(a\mid s)=1,
\qquad
\sum_{a\in\cA}|\omega_{\beta,Q}(a\mid s)|\le C_{\cA},
\qquad
\|D_QV_\beta(Q)(s)\|_{\mathrm{op}}\le C,\quad
\text{  
and  }\quad
\|\nabla_Q^2V_\beta(Q)(s)\|_{\mathrm{op}}
\le C\beta.
\]
Consequently, for every increment $u$,
\[
|V_\beta(Q+u)(s)-V_\beta(Q)(s)-D_QV_\beta(Q)(s)[u]|
\le
C\beta\|u(s,\cdot)\|_2^2.
\]
For arbitrary action-value functions $Q_1$ and $Q_2$, set $Q_t\coloneqq Q_2+t(Q_1-Q_2)$, $t\in[0,1]$, and define
\[
(D_{\beta;Q_1,Q_2}f)(s)
\coloneqq
\sum_{a\in\cA}
\left(\int_0^1\omega_{\beta,Q_t}(a\mid s)\,dt\right)f(s,a).
\]
Then $
V_\beta(Q_1)-V_\beta(Q_2)
=
D_{\beta;Q_1,Q_2}(Q_1-Q_2),$
and $D_{\beta;Q_1,Q_2}$ is an allowable state-wise signed action aggregation operator.
The constants depend only on $|\cA|$ and are uniform over $Q$, $s$, and $\beta$.
\end{lemma}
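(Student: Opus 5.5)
The plan is to work state by state, treating $\mathrm{aso}_\beta$ as a smooth function of the vector $q=(q_a)_{a\in\cA}=Q(s,\cdot)$, since $V_\beta(Q)(s)=\mathrm{aso}_\beta(Q(s,\cdot))$ depends only on the row $Q(s,\cdot)$.

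\textbf{First derivative.} I will apply the quotient rule to $\mathrm{aso}_\beta(q)=N(q)/Z(q)$, where $N=\sum_a q_ae^{\beta q_a}$ and $Z=\sum_b e^{\beta q_b}$. This gives
\[
\partial_a N=e^{\beta q_a}(1+\beta q_a),\qquad \partial_a Z=\beta e^{\beta q_a}.
\]
Hence $\partial_a\mathrm{aso}_\beta=w_a(1+\beta q_a)-\beta w_a\,\mathrm{aso}_\beta$, which is the stated formula. Identifying $w_a(q)=\pi_{\beta,Q}(a\mid s)$ and $\mathrm{aso}_\beta=V_\beta(Q)(s)$ recovers $\omega_{\beta,Q}$ from \eqref{eq:omega}. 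The chain rule then gives the directional derivative $D_QV_\beta(Q)(s)[h]=\sum_a\omega_{\beta,Q}(a\mid s)h(s,a)$. The row-sum and absolute-sum bounds are exactly Lemma~\ref{lem:if-omega-tv-bound}. The operator norm of the gradient is bounded by its $\ell^1$ norm, which is at most $C_\cA$.

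\textbf{Hessian bound.} This is the only step with real content. I will write $\omega_a=p_a(1+\beta(q_a-\bar q))$ with $\bar q=\mathrm{aso}_\beta$, and use the facts
\[
\partial_b p_a=\beta p_a(\delta_{ab}-p_b),\qquad \partial_b\bar q=\omega_b.
\]
Differentiating gives
\[
\partial_b\omega_a=\beta p_a(\delta_{ab}-p_b)\bigl(1+\beta(q_a-\bar q)\bigr)+\beta p_a(\delta_{ab}-\omega_b).
\]
The second term has entries bounded by $\beta$ times quantities controlled by $p_a(1+|\omega_b|)$, so its $\ell^1$ operator norm is $O(\beta C_\cA)$.

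The first term involves $\beta p_a(1+\beta(q_a-\bar q))$. As in the proof of Lemma~\ref{lem:if-omega-tv-bound}, $\beta(q_a-\bar q)=\log p_a+H(p)$, where $H(p)$ is the entropy. Therefore $p_a|\beta(q_a-\bar q)|\le p_a|\log p_a|+p_a\log|\cA|$. Summing over $a$, and also weighted by $p_b$, gives bounds depending only on $|\cA|$. So the first term is also $O(\beta)$ in both the max-row and max-column $\ell^1$ norms. The Schur test then bounds the spectral norm by $C\beta$.

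The main obstacle is making sure no stray factor of $\beta^2$ survives. The entropy rewriting is what absorbs one power of $\beta$: it uses the fact that $t\mapsto t|\log t|$ is bounded on $(0,1]$, so $p_a\,\beta|q_a-\bar q|$ stays uniformly bounded even though $\beta|q_a-\bar q|$ alone can be large.

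\textbf{Remaining claims.} The second-order remainder bound follows from Taylor's theorem with integral remainder along $t\mapsto Q+tu$, using $\|\nabla^2\|_{\mathrm{op}}\le C\beta$ uniformly. The secant identity comes from the fundamental theorem of calculus applied to $t\mapsto V_\beta(Q_t)(s)$, whose derivative is $\sum_a\omega_{\beta,Q_t}(a\mid s)(Q_1-Q_2)(s,a)$.

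Allowability of $D_{\beta;Q_1,Q_2}$ follows by integrating the identities of Lemma~\ref{lem:if-omega-tv-bound} over $t$. The averaged weights sum to $1$. Their absolute sum is at most $\int_0^1\sum_a|\omega_{\beta,Q_t}(a\mid s)|\,dt\le C_\cA$.
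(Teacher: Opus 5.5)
Your proposal is correct and follows essentially the same route as the paper. Both arguments use the softmax derivative identity for the first derivative and take the row-sum and envelope bounds from Lemma~\ref{lem:if-omega-tv-bound}. Both get the $O(\beta)$ Hessian bound from the entropy rewriting $\beta(q_a-\bar q)=\log p_a+H(p)$, which keeps $p_a\beta|q_a-\bar q|$ bounded. The remaining claims then follow from Taylor's theorem, the fundamental theorem of calculus, and integrating the weight bounds over $t$ for the secant operator. The only cosmetic difference is that the paper rescales $z=\beta q$, so that $\nabla_q^2\mathrm{aso}_\beta=\beta\nabla^2 f(z)$, and bounds $\nabla^2 f$ entrywise, whereas you differentiate $\omega_a$ directly in $q$ and close with a Schur test; these are the same computation.
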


\begin{proof}
Note that  $ \mathrm{aso}_\beta(q)=\sum_{a\in\cA}w_a(q)q_a.$
Since $ \frac{\partial w_a(q)}{\partial q_c} = \beta w_a(q)(\mathbf 1(a=c)-w_c(q)),$ we have
\begin{align*}
\frac{\partial}{\partial q_c}\mathrm{aso}_\beta(q)
&= w_c(q)+\sum_{a\in\cA}q_a\frac{\partial w_a(q)}{\partial q_c}  \\
&= w_c(q)+\beta w_c(q)\left(q_c-\sum_{a\in\cA}w_a(q)q_a\right) 
= w_c(q)(1+\beta(q_c-\mathrm{aso}_\beta(q))).
\end{align*}
Substituting $q_a=Q(s,a)$ gives the displayed derivative of $V_\beta$. The row-sum and signed-envelope bounds follow from Lemma~\ref{lem:if-omega-tv-bound}. In particular, Cauchy--Schwarz and finite-dimensional norm equivalence give $\|D_QV_\beta(Q)(s)\|_{\mathrm{op}}\le C$ uniformly in $Q$, $s$, and $\beta$.

It remains to establish the second derivative bound. Write $z=\beta q$, $p_a=e^{z_a}/\sum_b e^{z_b}$, and $f(z)\coloneqq\sum_{a\in\cA}p_a z_a$. Then $\mathrm{aso}_\beta(q)=\beta^{-1}f(\beta q)$, so $\nabla_q\mathrm{aso}_\beta(q)=\nabla f(z)$ and $\nabla_q^2\mathrm{aso}_\beta(q)=\beta\nabla^2 f(z)$. The derivative calculation above gives $g_a(z)\coloneqq\partial_{z_a}f(z)=p_a(1+z_a-f(z))$. Differentiating with respect to $z_c$ gives
\[
\partial_{z_c}g_a(z)
=
p_a(\mathbf 1(a=c)-p_c)(1+z_a-f(z))
+p_a(\mathbf 1(a=c)-g_c(z)).
\]
Because $z_a=\log p_a+c$ for a common constant $c$, we have $z_a-f(z)=\log p_a-\sum_b p_b\log p_b$. The quantities $p_a|\log p_a|$ and $|\sum_b p_b\log p_b|$ are uniformly bounded over the simplex. Hence $\|\nabla f(z)\|_2\le C$. The first term in the preceding Hessian display is uniformly bounded because
$p_a|z_a-f(z)|\le p_a|\log p_a|+p_a|\sum_b p_b\log p_b|\le C$, and the second is uniformly bounded because $\|\nabla f(z)\|_2\le C$. Since the action space is fixed and finite, $\|\nabla^2 f(z)\|_{\mathrm{op}}\le C$. Therefore $\|\nabla_q^2\mathrm{aso}_\beta(q)\|_{\mathrm{op}}\le C\beta$. Substituting $q=Q(s,\cdot)$ proves the Hessian bound for $V_\beta$, and Taylor's theorem gives the stated remainder bound.

For the exact secant claim, fix a state $s$. The fundamental theorem of calculus and the derivative formula above, with $Q_1-Q_2$ as the perturbation direction, give
\[
\begin{aligned}
V_\beta(Q_1)(s)-V_\beta(Q_2)(s)
&=\int_0^1D_QV_\beta(Q_t)(s)[Q_1-Q_2]\,dt =\sum_{a\in\cA}\left(\int_0^1\omega_{\beta,Q_t}(a\mid s)\,dt\right)(Q_1-Q_2)(s,a),
\end{aligned}
\]
which is the exact secant identity. The row-sum identity above also gives
\[
\sum_{a\in\cA}\int_0^1\omega_{\beta,Q_t}(a\mid s)\,dt=1.
\]
The signed row norm satisfies
\[
\sum_{a\in\cA}\left|\int_0^1\omega_{\beta,Q_t}(a\mid s)\,dt\right|
\le
\int_0^1\sum_{a\in\cA}|\omega_{\beta,Q_t}(a\mid s)|\,dt
\le C_{\cA}.
\]
Thus every row sums to one and has signed $\ell_1$ norm at most $C_{\cA}$, so $D_{\beta;Q_1,Q_2}$ is allowable.
\end{proof}

\begin{proof}[Proof of Proposition~\ref{prop:self-induced-uniqueness}]
Existence of an anchored self-induced solution follows from Theorem~\ref{thm:self-induced-existence}. We first show that this anchored solution is unique.

Suppose that $ (Q_\beta^{(1)},J_\beta^{(1)}), (Q_\beta^{(2)},J_\beta^{(2)}) $ are two anchored solutions of the self-induced Bellman equation.
Set $ q\coloneqq Q_\beta^{(1)}-Q_\beta^{(2)}$ and $ j\coloneqq J_\beta^{(1)}-J_\beta^{(2)}.$
Because both $Q_\beta^{(1)}$ and $Q_\beta^{(2)}$ belong to $\cQ$, we have $q\in\cQ$.
Apply Lemma~\ref{lem:softmax-calculus} with $(Q_1,Q_2)=(Q_\beta^{(1)},Q_\beta^{(2)})$. The lemma gives $V_\beta(Q_\beta^{(1)})-V_\beta(Q_\beta^{(2)})=D_{\beta;Q_\beta^{(1)},Q_\beta^{(2)}}q$ and verifies that $D_{\beta;Q_\beta^{(1)},Q_\beta^{(2)}}$ is allowable.

Subtracting the two self-induced Bellman equations gives
\[
j\one+q
=
P\left(
V_\beta(Q_\beta^{(1)})
-
V_\beta(Q_\beta^{(2)})
\right)
=
PD_{\beta;Q_\beta^{(1)},Q_\beta^{(2)}}q.
\]
Equivalently, we have $ -j\one-q+PD_{\beta;Q_\beta^{(1)},Q_\beta^{(2)}}q=0.$
Since $q\in\cQ$ and $D_{\beta;Q_\beta^{(1)},Q_\beta^{(2)}}$ is allowable, Assumption~\ref{ass:uniform-signed-aggregation-identification} implies $ q=0, j=0.$
Thus $ Q_\beta^{(1)}=Q_\beta^{(2)}, J_\beta^{(1)}=J_\beta^{(2)}.$
Together with Theorem~\ref{thm:self-induced-existence}, this proves existence and uniqueness of the anchored pair $(Q_\beta,J_\beta)$.\end{proof}

\subsection{Uniform Boundedness and Approximation}\label{app:large-beta-limit}

This section first establishes the approximation bound needed for Theorem~\ref{thm:exp}. We also establish convergence results for the continuation value, induced policy, and derivative operator for subsequent use.

\subsubsection{Preliminary Bounds}

\begin{corollary}[Uniform boundedness of the self-induced Bellman solution]\label{cor:app-smoothed-target-bounded}
Under Assumption~\ref{ass:irreducible}, the solutions constructed in Theorem~\ref{thm:self-induced-existence} satisfy
\[
\sup_{\beta>0}\|Q_\beta\|_\infty<\infty,
\qquad
\sup_{\beta>0}|J_\beta|<\infty.
\]
\end{corollary}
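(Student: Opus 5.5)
The key observation is already contained in the proof of Theorem~\ref{thm:self-induced-existence}: the solution constructed there satisfies $Q_\beta=Q^{\pi_\beta}$ and $J_\beta=J^{\pi_\beta}$ for the stationary policy $\pi_\beta=\pi_{\beta,Q_\beta}\in\Pi$. It therefore suffices to bound $\sup_{\pi\in\Pi}\|Q^\pi\|_\infty$ and $\sup_{\pi\in\Pi}|J^\pi|$ by constants that do not involve $\beta$. The only role of $\beta$ is to select which policy we evaluate. Since the class $\Pi$ itself does not depend on $\beta$, bounds that hold uniformly over $\Pi$ transfer immediately to the family $\{(Q_\beta,J_\beta)\}_{\beta>0}$.

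For the scalar, we use the stationary representation $J^\pi=\sum_{s}d^\pi(s)\sum_a\pi(a\mid s)R^*(s,a)$. This gives $|J^\pi|\le\|R^*\|_\infty$. The mean reward $R^*$ is a fixed finite table, so the bound is finite. Under Assumption~\ref{ass:if-envelope} it is at most $C$, but finiteness of $\cS\times\cA$ already suffices without that assumption.

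For $Q^\pi$, we reuse the argument from the proof of Theorem~\ref{thm:self-induced-existence}. The pair $(Q^\pi,J^\pi)$ is the unique solution of the anchored linear system consisting of \eqref{eq:policy-evaluation} together with the zero-average constraint. Its coefficient matrix $M(\pi)$ depends continuously on $\pi$, and Assumption~\ref{ass:irreducible} makes $M(\pi)$ nonsingular for every $\pi\in\Pi$. Because $\Pi$ is compact and matrix inversion is continuous on nonsingular matrices, $\pi\mapsto M(\pi)^{-1}$ is continuous on $\Pi$, so $\sup_{\pi\in\Pi}\|M(\pi)^{-1}\|_{\mathrm{op}}<\infty$. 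The right-hand side $(R^*,0)$ does not depend on $\pi$. Hence $\sup_\pi\|Q^\pi\|_\infty\le\sup_\pi\|M(\pi)^{-1}\|_{\mathrm{op}}\,\|R^*\|<\infty$. This is exactly the constant $M$ used there to define the ball $B$, and that constant was chosen independently of $\beta$.

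The one point that needs care is the claim that $M(\pi)$ is nonsingular for every $\pi$, including deterministic policies on the boundary of $\Pi$; only then is the continuity-plus-compactness argument legitimate on all of $\Pi$. I would verify this directly. Suppose $j\one+q=PD_\pi q$ with $q\in\cQ$, where $D_\pi$ is the nonnegative averaging operator $(D_\pi q)(s)=\sum_a\pi(a\mid s)q(s,a)$. Applying $D_\pi$ to both sides gives $j\one+D_\pi q=P^\pi D_\pi q$. Integrating against the stationary distribution $d^\pi$, which exists and has full support by Assumption~\ref{ass:irreducible}, yields $j=0$. Then $D_\pi q$ is $P^\pi$-harmonic, and irreducibility forces it to be constant. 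It follows that $q=PD_\pi q$ is constant, and the anchoring constraint gives $q=0$. With this verified, the uniform bounds follow and the corollary is proved.
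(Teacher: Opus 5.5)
Your proposal is correct and follows the paper's proof: both use $Q_\beta=Q^{\pi_\beta}$ and $J_\beta=J^{\pi_\beta}$, bound $|J^\pi|$ by $\max_{s,a}|R^*(s,a)|$, and bound $Q^\pi$ uniformly over the compact set $\Pi$ through continuity of the anchored evaluation map. Your direct check that the anchored system is nonsingular for every $\pi$ fills in a step the paper only asserts, without changing the route; that check argues via the stationary distribution $d^\pi$ and the fact that $P^\pi$-harmonic functions are constant under irreducibility.
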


\begin{proof}
The proof of Theorem~\ref{thm:self-induced-existence} establishes that $\{Q^\pi:\pi\in\Pi\}$ is compact and hence uniformly bounded in sup norm. Since $Q_\beta=Q^{\pi_\beta}$ for every $\beta>0$, it follows that $\sup_{\beta>0}\|Q_\beta\|_\infty<\infty$.

Likewise, $J_\beta=J^{\pi_\beta}$. Since $\mathcal S\times\mathcal A$ is finite and $R^*$ is finite-valued, $\max_{s\in\mathcal S,a\in\mathcal A}|R^*(s,a)|<\infty$. Therefore $|J^\pi|\le C$ for every stationary policy $\pi$, for some $C<\infty$, and hence $
\sup_{\beta>0}|J_\beta|<\infty.$
\end{proof}

Assumption~\ref{ass:uniform-signed-aggregation-identification} is an injectivity statement for each fixed allowable operator. The argument below upgrades it to a quantitative inverse uniformly over the whole allowable family. The supremum-norm formulation involves no behavior law and is therefore free of the horizon.

\begin{lemma}[Uniform quantitative inverse over allowable aggregation operators]
\label{lem:uniform-sup-inverse}
For an allowable state-wise signed action aggregation operator $D$, define
\[
\cL_D(q,j)
\coloneqq
-j\one-q+PDq,
\qquad (q,j)\in\cQ\times\RR.
\]
Suppose Assumption~\ref{ass:uniform-signed-aggregation-identification} holds. Then there exists $C<\infty$ such that
\[
\|q\|_\infty+|j|
\le
C\|\cL_D(q,j)\|_\infty,
\qquad (q,j)\in\cQ\times\RR,
\]
for every allowable $D$. The constant depends only on $P$ and $|\cA|$, and in particular does not depend on $D$, $\beta$, $n$, $N$, or $H$.
\end{lemma}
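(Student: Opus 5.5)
The plan is a compactness argument over the family of allowable operators, combined with the injectivity from Assumption~\ref{ass:uniform-signed-aggregation-identification}.

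First, I will parametrize the allowable operators. Each allowable $D$ is determined by its weight array $w=(w_s(a))_{s\in\cS,a\in\cA}$. These arrays satisfy $\sum_a w_s(a)=1$ and $\sum_a|w_s(a)|\le C_{\cA}$ for every $s$. The set $\cW$ of such arrays is closed and bounded in $\RR^{|\cS||\cA|}$, so it is compact. The map $(w,q,j)\mapsto\cL_{D_w}(q,j)=-j\one-q+PD_wq$ is jointly continuous, since it is bilinear in $(w,q)$ and linear in $j$. It is also linear in $(q,j)$ for each fixed $w$.

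Next, I will reduce the claim to a positive lower bound on a compact set. By homogeneity in $(q,j)$, it suffices to show
\[
m\coloneqq\inf\bigl\{\|\cL_{D_w}(q,j)\|_\infty:\ w\in\cW,\ (q,j)\in\cQ\times\RR,\ \|q\|_\infty+|j|=1\bigr\}>0 .
\]
The claim then holds with $C=1/m$. The feasible set $\cW\times\{(q,j)\in\cQ\times\RR:\|q\|_\infty+|j|=1\}$ is compact: $\cQ$ is a closed linear subspace of a finite-dimensional space, and the unit sphere in it is compact. The objective is continuous on this set. Hence the infimum is attained at some $(w_0,q_0,j_0)$.

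If $m=0$, then $\cL_{D_{w_0}}(q_0,j_0)=0$ with $q_0\in\cQ$. Here $D_{w_0}$ is allowable, so Assumption~\ref{ass:uniform-signed-aggregation-identification} forces $q_0=0$ and $j_0=0$. This contradicts $\|q_0\|_\infty+|j_0|=1$, so $m>0$.

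The resulting constant depends only on $P$, $|\cS|$, and $C_{\cA}$, that is, on $|\cA|$. No behavior law, $\beta$, $n$, $N$, or $H$ enters, because the argument uses only sup norms and the operator family.

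The one step needing care is the compactness of $\cW$. The constraint is on the signed $\ell_1$ mass rather than on nonnegativity, so I must check that the closure does not leave the allowable class. It does not: both defining constraints are closed conditions, so any limit of allowable weights is again allowable, and the assumption applies to the limiting operator $D_{w_0}$. Apart from this check, the argument is a routine attainment-of-infimum proof.
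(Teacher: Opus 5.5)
Your proposal is correct and follows the paper's own proof essentially step for step. The paper also parametrizes allowable operators by a compact weight set $\cW$ and takes the minimum of the continuous map $(w,q,j)\mapsto\|\cL_{D_w}(q,j)\|_\infty$ over $\cW$ times the compact unit sphere of $\cQ\times\RR$. It then uses the assumption to get a positive minimum $c_*$ and concludes by homogeneity with $C=c_*^{-1}$.
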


\begin{proof}
Parametrize the allowable family by its weight arrays,
\[
\cW
\coloneqq
\left\{
w=(w_s(a))_{s\in\cS,a\in\cA}\in\RR^{\cS\times\cA}:
\sum_{a\in\cA}w_s(a)=1
\text{ and }
\sum_{a\in\cA}|w_s(a)|\le C_{\cA}
\text{ for every }s\in\cS
\right\},
\]
and write $D_w$ for the operator with weights $w$. The set $\cW$ is closed and bounded in finite dimension, hence compact. The set $\cU\coloneqq\{(q,j)\in\cQ\times\RR:\|q\|_\infty+|j|=1\}$ is compact as well.

The map $(w,q,j)\mapsto\|\cL_{D_w}(q,j)\|_\infty$ is continuous on $\cW\times\cU$. By Assumption~\ref{ass:uniform-signed-aggregation-identification}, $\cL_{D_w}(q,j)=0$ with $q\in\cQ$ implies $(q,j)=(0,0)$, so this continuous map is strictly positive on $\cW\times\cU$. It therefore has a positive minimum $c_*>0$. For $(q,j)\ne(0,0)$, applying the bound to $(q,j)/(\|q\|_\infty+|j|)$ and using homogeneity gives the claim with $C=c_*^{-1}$. The case $(q,j)=(0,0)$ is trivial.
\end{proof}

\subsubsection{Approximation to the Optimal Bellman Pair}

Write $V^*(s)\coloneqq\max_{a\in\cA}Q^*(s,a)$ and $\Delta_a^*(s)\coloneqq V^*(s)-Q^*(s,a)$. Thus $\Delta_a^*(s)\ge0$, and strictly suboptimal actions have gap at least $\Gamma$.

\begin{proposition}[Approximation to the optimal Bellman pair]
\label{prop:large-beta-approx-tie}
Suppose Assumptions~\ref{ass:irreducible} and~\ref{ass:uniform-signed-aggregation-identification} hold, and let $(Q_\beta,J_\beta)$ be the anchored self-induced solution of Proposition~\ref{prop:self-induced-uniqueness}. Then there exists $C<\infty$, independent of $\beta$, such that, for every $\beta>0$,
\begin{equation}
\|Q_\beta-Q^*\|_\infty+|J_\beta-J^*|
\le
Ce^{-\beta\Gamma}.
\label{eq:large-beta-target-bound}
\end{equation}
If $\Gamma=+\infty$, then $Q_\beta=Q^*$ and $J_\beta=J^*$ exactly, for every $\beta>0$.
\end{proposition}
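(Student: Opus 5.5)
We compare the two Bellman equations through the exact secant representation of Lemma~\ref{lem:softmax-calculus} and then invert the resulting linear system with Lemma~\ref{lem:uniform-sup-inverse}. The bias will come only from the softmax-versus-max discrepancy, evaluated at $Q^*$.

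Set $q\coloneqq Q_\beta-Q^*\in\cQ$ and $j\coloneqq J_\beta-J^*$. Subtracting \eqref{eq:opt-bellman} from \eqref{eq:self} gives
\[
j\one+q=P\bigl(V_\beta(Q_\beta)-V_\beta(Q^*)\bigr)+P\bigl(V_\beta(Q^*)-V^*\bigr).
\]
By Lemma~\ref{lem:softmax-calculus}, $V_\beta(Q_\beta)-V_\beta(Q^*)=D_{\beta;Q_\beta,Q^*}q$, and $D_{\beta;Q_\beta,Q^*}$ is allowable. Writing $D\coloneqq D_{\beta;Q_\beta,Q^*}$ and $e_\beta\coloneqq V_\beta(Q^*)-V^*$, this reads
\[
\cL_D(q,j)=-j\one-q+PDq=-Pe_\beta.
\]
Lemma~\ref{lem:uniform-sup-inverse} supplies a constant $C$, uniform over allowable $D$ and hence over $\beta$, such that
\[
\|q\|_\infty+|j|\le C\|Pe_\beta\|_\infty\le C\|e_\beta\|_\infty.
\]
The uniformity of this inverse over the whole allowable family is exactly what makes the constant independent of $\beta$, even though $D$ itself depends on $\beta$ through $Q_\beta$.

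It remains to bound $e_\beta(s)=V_\beta(Q^*)(s)-V^*(s)$. Let $p_a$ be the softmax weights of $Q^*(s,\cdot)$, and let $k\ge1$ be the number of maximizers at $s$. Then
\[
e_\beta(s)=-\sum_a p_a\Delta_a^*(s).
\]
Only suboptimal actions contribute to this sum, and each satisfies
\[
p_a\le k^{-1}e^{-\beta\Delta_a^*(s)}\le e^{-\beta\Delta_a^*(s)}.
\]
Using $\Delta e^{-\beta\Delta}\le \Delta_{\max}e^{-\beta\Gamma}$ for $\Delta\ge\Gamma$, we get
\[
|e_\beta(s)|\le |\cA|\,\Delta_{\max}\,e^{-\beta\Gamma},
\]
where $\Delta_{\max}\le 2\|Q^*\|_\infty$ is a fixed constant. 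This gives \eqref{eq:large-beta-target-bound}.

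If $\Gamma=+\infty$, every action is optimal at every state, so $Q^*(s,\cdot)$ is constant in $a$, $V_\beta(Q^*)=V^*$ and $e_\beta=0$. Then $\cL_D(q,j)=0$, and Assumption~\ref{ass:uniform-signed-aggregation-identification} gives $q=0$ and $j=0$. Equivalently, $(Q^*,J^*)$ solves \eqref{eq:self}, and Proposition~\ref{prop:self-induced-uniqueness} identifies it as the anchored solution.
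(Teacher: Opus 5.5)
Your proposal is correct and follows the paper's own proof essentially step for step. You subtract the two Bellman equations, use the $\beta$-free allowable secant operator $D_{\beta;Q_\beta,Q^*}$ from Lemma~\ref{lem:softmax-calculus}, invert with the uniform bound of Lemma~\ref{lem:uniform-sup-inverse}, and control $V_\beta(Q^*)-V^*$ by $O(e^{-\beta\Gamma})$ through the gap. The only differences are cosmetic: your constant is $|\cA|\Delta_{\max}$ rather than the paper's $(|\cA|-1)B_\Delta$, and in the $\Gamma=+\infty$ case you invoke injectivity directly where the paper invokes the quantitative inverse.
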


\begin{proof}
First evaluate the smoother at $Q^*$. Dividing the numerator and denominator of $\pi_{\beta,Q^*}(a\mid s)$ by $\exp(\beta V^*(s))$ gives
\[
\pi_{\beta,Q^*}(a\mid s)
=
\frac{\exp(-\beta\Delta_a^*(s))}{\sum_{b\in\cA}\exp(-\beta\Delta_b^*(s))}.
\]
Let $\cA^*(s)\coloneqq\{a\in\cA:\Delta_a^*(s)=0\}$. Since the actions in $\cA^*(s)$ contribute one each to the normalizing sum,
\begin{equation*}
V^*(s)-V_\beta(Q^*)(s)
=
\frac{
\sum_{a\notin\cA^*(s)}\Delta_a^*(s)\exp(-\beta\Delta_a^*(s))
}{
|\cA^*(s)|+
\sum_{b\notin\cA^*(s)}\exp(-\beta\Delta_b^*(s))
},
\qquad s\in\cS.
\end{equation*}
If $\Gamma=+\infty$, the numerator is empty and $V_\beta(Q^*)=V^*$. Otherwise, every $a\notin\cA^*(s)$ satisfies $\Delta_a^*(s)\ge\Gamma$, the denominator is at least one, and with $B_\Delta\coloneqq\max_{s,a}\Delta_a^*(s)<\infty$,
\begin{equation}
0
\le
\|V_\beta(Q^*)-V^*\|_\infty
\le
(|\cA|-1)B_\Delta e^{-\beta\Gamma}.
\label{eq:large-beta-residual-bound}
\end{equation}

Set $q_\beta\coloneqq Q_\beta-Q^*$ and $j_\beta\coloneqq J_\beta-J^*$. Subtracting \eqref{eq:opt-bellman} from \eqref{eq:self} gives
\[
j_\beta+q_\beta(s,a)
=
(P(V_\beta(Q_\beta)-V^*))(s,a),
\qquad (s,a)\in\cS\times\cA.
\]
Apply Lemma~\ref{lem:softmax-calculus} with $(Q_1,Q_2)=(Q_\beta,Q^*)$. The lemma gives $V_\beta(Q_\beta)-V_\beta(Q^*)=D_{\beta;Q_\beta,Q^*}q_\beta$ and verifies that $D_{\beta;Q_\beta,Q^*}$ is allowable. Rearranging gives the exact secant equation
\begin{equation}
-j_\beta\one-q_\beta+PD_{\beta;Q_\beta,Q^*}q_\beta
=
P(V^*-V_\beta(Q^*)).
\label{eq:large-beta-difference}
\end{equation}
The left-hand side is $\cL_{D_{\beta;Q_\beta,Q^*}}(q_\beta,j_\beta)$. Lemma~\ref{lem:uniform-sup-inverse} and $\|Pg\|_\infty\le\|g\|_\infty$ therefore yield $\|Q_\beta-Q^*\|_\infty+|J_\beta-J^*|\le C\|V^*-V_\beta(Q^*)\|_\infty$.
Combining this bound with \eqref{eq:large-beta-residual-bound} proves \eqref{eq:large-beta-target-bound}. If $\Gamma=+\infty$, the right-hand side of \eqref{eq:large-beta-difference} is zero, so the uniform inverse gives $Q_\beta=Q^*$ and $J_\beta=J^*$.
\end{proof}

\begin{proof}[Proof of Theorem~\ref{thm:exp}]
Proposition~\ref{prop:large-beta-approx-tie} gives $|J_\beta-J^*|\le Ce^{-\beta\Gamma}$. By Remark~\ref{rem:self-induced-is-policy-evaluation}, $J_\beta=J^{\pi_\beta}$ is the average reward of the stationary policy $\pi_\beta$, while $J^*$ is the optimal average reward over $\Pi$. Hence $J_\beta\le J^*$, and therefore $0\le J^*-J_\beta\le Ce^{-\beta\Gamma}$. If $\Gamma=+\infty$, Proposition~\ref{prop:large-beta-approx-tie} gives $J_\beta=J^*$ for every $\beta>0$.
\end{proof}

\subsubsection{Convergence of the Induced Objects}

Recall that $
\cA^*(s)
\coloneqq
\{a\in\cA:\Delta_a^*(s)=0\},$ and $
\pi^{\mathrm{unif}}(a\mid s)
\coloneqq
\frac{\mathbf 1(a\in\cA^*(s))}{|\cA^*(s)|}.$
The policy $\pi^{\mathrm{unif}}$ is supported on Bellman-optimal actions, and hence
\begin{equation}
\sum_{a\in\cA}\pi^{\mathrm{unif}}(a\mid s)Q^*(s,a)
=
V^*(s),
\qquad s\in\cS.
\label{eq:unif-attains-max}
\end{equation}
Substituting \eqref{eq:unif-attains-max} into \eqref{eq:opt-bellman} shows that $(Q^*,J^*)$ is the anchored policy-evaluation pair for $\pi^{\mathrm{unif}}$, so $Q^{\pi^{\mathrm{unif}}}=Q^*$ and $J^{\pi^{\mathrm{unif}}}=J^*$.

\begin{proposition}[Convergence of the continuation value and induced policy]
\label{prop:large-beta-policy}
Suppose Assumptions~\ref{ass:irreducible} and~\ref{ass:uniform-signed-aggregation-identification} hold. Then there exists $C<\infty$, independent of $\beta$, such that, for every $\beta>0$,
\begin{equation}
\|V_\beta(Q_\beta)-V^*\|_\infty
\le
Ce^{-\beta\Gamma}.
\label{eq:large-beta-target}
\end{equation}
The corresponding policy bound is
\begin{equation}
\max_{s\in\cS}
\sum_{a\in\cA}
|\pi_\beta(a\mid s)-\pi^{\mathrm{unif}}(a\mid s)|
\le
C(1+\beta)e^{-\beta\Gamma}.
\label{eq:large-beta-policy}
\end{equation}
If $\Gamma=+\infty$, then $V_\beta(Q_\beta)=V^*$ and $\pi_\beta=\pi^{\mathrm{unif}}$ exactly, for every $\beta>0$.
\end{proposition}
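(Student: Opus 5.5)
The value bound follows from a triangle inequality, and the policy bound from a Lipschitz estimate for softmax combined with the explicit form of $\pi_{\beta,Q^*}$.

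\emph{Value bound \eqref{eq:large-beta-target}.} Split
\[
V_\beta(Q_\beta)-V^*=\bigl(V_\beta(Q_\beta)-V_\beta(Q^*)\bigr)+\bigl(V_\beta(Q^*)-V^*\bigr).
\]
For the first term, Lemma~\ref{lem:softmax-calculus} gives the exact secant identity $V_\beta(Q_\beta)-V_\beta(Q^*)=D_{\beta;Q_\beta,Q^*}(Q_\beta-Q^*)$. Since $D_{\beta;Q_\beta,Q^*}$ is allowable, $\|D f\|_\infty\le C_{\cA}\|f\|_\infty$, so this term is at most $C_{\cA}\|Q_\beta-Q^*\|_\infty\le Ce^{-\beta\Gamma}$ by Proposition~\ref{prop:large-beta-approx-tie}. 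The second term is bounded by \eqref{eq:large-beta-residual-bound} from the proof of that proposition. When $\Gamma=+\infty$, both terms vanish exactly, because $Q_\beta=Q^*$ and $V_\beta(Q^*)=V^*$.

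\emph{Policy bound \eqref{eq:large-beta-policy}.} Split again:
\[
\pi_\beta-\pi^{\mathrm{unif}}=\bigl(\pi_{\beta,Q_\beta}-\pi_{\beta,Q^*}\bigr)+\bigl(\pi_{\beta,Q^*}-\pi^{\mathrm{unif}}\bigr).
\]
For the first term, use that the softmax map $z\mapsto p(z)$ has Jacobian $\mathrm{diag}(p)-pp^\top$, whose $\ell_\infty\to\ell_1$ norm is at most $2$. With $z=\beta Q(s,\cdot)$, the mean value theorem then gives
\[
\sum_a|\pi_{\beta,Q_\beta}(a\mid s)-\pi_{\beta,Q^*}(a\mid s)|\le 2\beta\|Q_\beta-Q^*\|_\infty\le C\beta e^{-\beta\Gamma}.
\]
This is where the factor $\beta$ enters. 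For the second term, write $\pi_{\beta,Q^*}$ in gap form as in the preceding proof:
\[
\pi_{\beta,Q^*}(a\mid s)=\frac{e^{-\beta\Delta^*_a(s)}}{Z_s},\qquad Z_s=|\cA^*(s)|+\sum_{b\notin\cA^*(s)}e^{-\beta\Delta_b^*(s)}.
\]
\begin{itemize}
\item For $a\notin\cA^*(s)$, the mass is at most $e^{-\beta\Gamma}$.
\item For $a\in\cA^*(s)$, we have $\bigl|1/Z_s-1/|\cA^*(s)|\bigr|\le (Z_s-|\cA^*(s)|)/|\cA^*(s)|^2\le(|\cA|-1)e^{-\beta\Gamma}$.
\end{itemize}
Summing over actions gives a bound $Ce^{-\beta\Gamma}$. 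Adding the two terms yields $C(1+\beta)e^{-\beta\Gamma}$, uniformly in $s$.

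\emph{The case $\Gamma=+\infty$.} Every action is optimal, so $\cA^*(s)=\cA$. Since $Q_\beta=Q^*$ by Proposition~\ref{prop:large-beta-approx-tie}, $Q_\beta(s,\cdot)$ is constant in $a$. Therefore $\pi_\beta$ is uniform on $\cA=\cA^*(s)$, which is exactly $\pi^{\mathrm{unif}}$.

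The only mildly delicate step is the softmax Lipschitz constant in $\ell_1$. It follows because for any $u$,
\[
\|(\mathrm{diag}(p)-pp^\top)u\|_1=\sum_a p_a\Bigl|u_a-\sum_b p_bu_b\Bigr|\le 2\|u\|_\infty,
\]
and this bound is applied along the segment between $\beta Q^*(s,\cdot)$ and $\beta Q_\beta(s,\cdot)$.
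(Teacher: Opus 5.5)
Your proof is correct. The value bound follows the paper's argument exactly: triangle inequality through $V_\beta(Q^*)$, the $\beta$-uniform Lipschitz constant $C_{\cA}$ from the signed envelope, and \eqref{eq:large-beta-residual-bound}.

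For the policy bound you take a different, more modular route. You pass through $\pi_{\beta,Q^*}$ and use the global $\ell_\infty\to\ell_1$ Lipschitz constant $2$ of softmax. The factor $\beta$ then enters additively, as $2\beta\|Q_\beta-Q^*\|_\infty$.

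The paper instead works directly with the perturbed exponents $u_a=\beta q_\beta(s,a)$ in the gap form of $\pi_\beta$. It obtains per-action bounds carrying multiplicative factors $e^{2u_\beta^{\max}}$. It controls these using $u_\beta^{\max}\le C\beta e^{-\beta\Gamma}$ together with $\sup_{\beta>0}\beta e^{-\beta\Gamma}<\infty$. Your version avoids both the exponential factors and that boundedness step.

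What the paper's direct computation buys is a sharper per-action estimate for suboptimal actions. It gets $\pi_\beta(a\mid s)\le e^{2u_\beta^{\max}}e^{-\beta\Gamma}\le Ce^{-\beta\Gamma}$ with no factor of $\beta$, which is \eqref{eq:large-beta-suboptimal-mass}. This estimate is reused in the proof of Proposition~\ref{prop:large-beta-linearization}.

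Your additive bound would only give $\pi_\beta(a\mid s)\le C(1+\beta)e^{-\beta\Gamma}$ for such actions. That would degrade the downstream bound on $\|D_\beta-D_{\pi^{\mathrm{unif}}}\|$ to order $(1+\beta)^2e^{-\beta\Gamma}$. You can fix this by adding the ratio bound $\pi_\beta(a\mid s)\le e^{2\beta\|Q_\beta-Q^*\|_\infty}\pi_{\beta,Q^*}(a\mid s)$. For the statement as posed, though, your argument is complete.
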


\begin{proof}
The value bound follows by writing $Q_\beta=Q^*+q_\beta$ and using \eqref{eq:large-beta-target-bound}:
\[
|V_\beta(Q_\beta)(s)-V^*(s)|
\le
|V_\beta(Q_\beta)(s)-V_\beta(Q^*)(s)|+|V_\beta(Q^*)(s)-V^*(s)|
\le
C\|q_\beta\|_\infty+Ce^{-\beta\Gamma}.
\]
Here the Lipschitz constant for $V_\beta$ is uniform by Lemma~\ref{lem:if-omega-tv-bound}, and \eqref{eq:large-beta-residual-bound} controls the second term.

Fix $s\in\cS$, write $m\coloneqq|\cA^*(s)|\ge1$, set $u_a\coloneqq\beta q_\beta(s,a)$, and put $u_\beta^{\max}\coloneqq\beta\|q_\beta\|_\infty$. Then $u_\beta^{\max}\le C\beta e^{-\beta\Gamma}$. The normalizing constant and induced probabilities satisfy
\[
Z_s
=
\sum_{b\in\cA^*(s)}e^{u_b}
+
\sum_{b\notin\cA^*(s)}\exp(-\beta\Delta_b^*(s)+u_b),
\qquad
\pi_\beta(a\mid s)
=
\frac{\exp(-\beta\Delta_a^*(s)+u_a)}{Z_s}.
\]
For $a\notin\cA^*(s)$, the gap bound gives
\begin{equation}
\pi_\beta(a\mid s)
\le
e^{2u_\beta^{\max}}e^{-\beta\Gamma}.
\label{eq:large-beta-suboptimal-mass}
\end{equation}
For $a\in\cA^*(s)$, the identity $\pi_\beta(a\mid s)-1/m=(me^{u_a}-Z_s)/(mZ_s)$ and the bound $|e^x-e^y|\le e^{u_\beta^{\max}}|x-y|$ for $|x|\vee|y|\le u_\beta^{\max}$ give
\begin{equation}
\left|\pi_\beta(a\mid s)-\frac1m\right|
\le
e^{2u_\beta^{\max}}
\left(
2u_\beta^{\max}+|\cA|e^{-\beta\Gamma}
\right).
\label{eq:large-beta-tied-mass}
\end{equation}
When $\Gamma<\infty$, $\beta e^{-\beta\Gamma}$ is bounded over $\beta>0$, so summing \eqref{eq:large-beta-suboptimal-mass} and \eqref{eq:large-beta-tied-mass} over actions yields \eqref{eq:large-beta-policy}. Thus, when several actions maximize $Q^*(s,\cdot)$, the induced policy converges to the uniform distribution over those maximizing actions. If $\Gamma=+\infty$, Proposition~\ref{prop:large-beta-approx-tie} gives $Q_\beta=Q^*$, and $Q^*(s,\cdot)$ is constant on $\cA$, so the softmax policy is exactly uniform.
\end{proof}

Define $(D_{\pi^{\mathrm{unif}}}q)(s)\coloneqq\sum_{a\in\cA}\pi^{\mathrm{unif}}(a\mid s)q(s,a)$. For state-wise aggregation operators $D$ and $D'$ with weights $w_s$ and $w_s'$, write
\[
\|D-D'\|
\coloneqq
\max_{s\in\cS}
\sum_{a\in\cA}|w_s(a)-w_s'(a)|.
\]

\begin{proposition}[Convergence of the derivative operator]
\label{prop:large-beta-linearization}
Suppose Assumptions~\ref{ass:irreducible} and~\ref{ass:uniform-signed-aggregation-identification} hold. Then $D_{\pi^{\mathrm{unif}}}$ is allowable, and there exists $C<\infty$, independent of $\beta$, such that
\begin{equation}
\|D_\beta-D_{\pi^{\mathrm{unif}}}\|
\le
C(1+\beta)e^{-\beta\Gamma},
\qquad \beta>0.
\label{eq:large-beta-linearization}
\end{equation}
If $\Gamma=+\infty$, then $D_\beta=D_{\pi^{\mathrm{unif}}}$ exactly, for every $\beta>0$.
\end{proposition}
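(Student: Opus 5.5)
Allowability of $D_{\pi^{\mathrm{unif}}}$ is immediate. Its weights $\pi^{\mathrm{unif}}(a\mid s)$ are nonnegative and sum to one in each row, so the signed row norm is $1\le C_{\cA}$.

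For the bound \eqref{eq:large-beta-linearization}, fix $s\in\cS$ and use the entropy-type representation from the proof of Lemma~\ref{lem:if-omega-tv-bound}. Write $p_a\coloneqq\pi_\beta(a\mid s)$ and $z_a\coloneqq\beta Q_\beta(s,a)$. Then
\[
\omega_{\beta,Q_\beta}(a\mid s)=p_a\bigl(1+\beta(Q_\beta(s,a)-V_\beta(Q_\beta)(s))\bigr).
\]
Subtracting $\pi^{\mathrm{unif}}(a\mid s)$ and using the triangle inequality gives
\[
\sum_{a\in\cA}|\omega_{\beta,Q_\beta}(a\mid s)-\pi^{\mathrm{unif}}(a\mid s)|
\le
\sum_{a\in\cA}|p_a-\pi^{\mathrm{unif}}(a\mid s)|
+\beta\sum_{a\in\cA}p_a\,|Q_\beta(s,a)-V_\beta(Q_\beta)(s)|.
\]
The first term is at most $C(1+\beta)e^{-\beta\Gamma}$ by \eqref{eq:large-beta-policy} of Proposition~\ref{prop:large-beta-policy}. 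The whole task therefore reduces to bounding the second term by $C(1+\beta)e^{-\beta\Gamma}$.

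For the second term, split the actions into $\cA^*(s)$ and its complement.

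\emph{Optimal actions.} For $a\in\cA^*(s)$ we have $Q^*(s,a)=V^*(s)$, so
\[
|Q_\beta(s,a)-V_\beta(Q_\beta)(s)|\le\|Q_\beta-Q^*\|_\infty+\|V^*-V_\beta(Q_\beta)\|_\infty\le Ce^{-\beta\Gamma},
\]
by Proposition~\ref{prop:large-beta-approx-tie} and \eqref{eq:large-beta-target}. Multiplying by $\beta p_a\le\beta$ and summing over at most $|\cA|$ actions gives $C\beta e^{-\beta\Gamma}$.

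\emph{Suboptimal actions.} For $a\notin\cA^*(s)$, $|Q_\beta(s,a)-V_\beta(Q_\beta)(s)|$ is bounded uniformly in $\beta$ by Corollary~\ref{cor:app-smoothed-target-bounded}. By \eqref{eq:large-beta-suboptimal-mass}, $p_a\le e^{2u_\beta^{\max}}e^{-\beta\Gamma}$, and $u_\beta^{\max}\le C\beta e^{-\beta\Gamma}$ is bounded uniformly in $\beta$. So each such term is at most $C\beta e^{-\beta\Gamma}$.

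Combining the two cases and taking the maximum over the finitely many states gives \eqref{eq:large-beta-linearization}. The main point needing care is the suboptimal-action contribution. The factor $\beta$ in front could spoil the rate, but the suboptimal mass bound absorbs it, giving $\beta e^{-\beta\Gamma}$. This matches the $(1+\beta)$ factor in the statement, so there is no loss.

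If $\Gamma=+\infty$, Propositions~\ref{prop:large-beta-approx-tie} and \ref{prop:large-beta-policy} give $Q_\beta=Q^*$ with $Q^*(s,\cdot)$ constant in $a$, and $\pi_\beta=\pi^{\mathrm{unif}}$. Then $Q_\beta(s,a)=V_\beta(Q_\beta)(s)$ for every $a$, so $\omega_{\beta,Q_\beta}=\pi_\beta=\pi^{\mathrm{unif}}$ and $D_\beta=D_{\pi^{\mathrm{unif}}}$ exactly.
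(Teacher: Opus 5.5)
Your proof is correct and uses the same ingredients as the paper's: allowability from nonnegative unit-mass weights, the policy bound, the $e^{-\beta\Gamma}$ approximation of $Q_\beta$ and $V_\beta(Q_\beta)$ on tied actions, and the suboptimal-mass bound to absorb the factor $\beta$ on the remaining actions. The only difference is organizational. You pass through $\pi_\beta$ with a single triangle inequality over all actions, whereas the paper treats optimal and suboptimal actions separately; the notation $z_a$ you introduce is never used and can be dropped.
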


\begin{proof}
The weights of $D_{\pi^{\mathrm{unif}}}$ are nonnegative and sum to one over $\cA$, so their signed $\ell_1$ norm equals one, which is at most $C_{\cA}$. Hence $D_{\pi^{\mathrm{unif}}}$ is allowable.

By \eqref{eq:omega}, the weights of $D_\beta$ are $\omega_{\beta,Q_\beta}(a\mid s)=\pi_\beta(a\mid s)(1+\beta(Q_\beta(s,a)-V_\beta(Q_\beta)(s)))$. For $a\in\cA^*(s)$, \eqref{eq:large-beta-target-bound} and \eqref{eq:large-beta-target} give $\beta|Q_\beta(s,a)-V_\beta(Q_\beta)(s)|\le C\beta e^{-\beta\Gamma}$, so the optimal-action weights satisfy
\[
|\omega_{\beta,Q_\beta}(a\mid s)-\pi_\beta(a\mid s)|
\le
C\beta e^{-\beta\Gamma},
\qquad a\in\cA^*(s).
\]
Combining this with \eqref{eq:large-beta-policy} gives
\[
|\omega_{\beta,Q_\beta}(a\mid s)-\pi^{\mathrm{unif}}(a\mid s)|
\le
C(1+\beta)e^{-\beta\Gamma},
\qquad a\in\cA^*(s).
\]
For $a\notin\cA^*(s)$, Corollary~\ref{cor:app-smoothed-target-bounded} gives $|Q_\beta(s,a)-V_\beta(Q_\beta)(s)|\le C$, while \eqref{eq:large-beta-suboptimal-mass} gives $\pi_\beta(a\mid s)\le Ce^{-\beta\Gamma}$. Hence
\[
|\omega_{\beta,Q_\beta}(a\mid s)|
\le
\pi_\beta(a\mid s)(1+C\beta)
\le
C(1+\beta)e^{-\beta\Gamma},
\qquad a\notin\cA^*(s).
\]
Summing over $a\in\cA$ and maximizing over $s$ proves \eqref{eq:large-beta-linearization}. If $\Gamma=+\infty$, then $Q_\beta=Q^*$, $V_\beta(Q_\beta)=V^*$, and $\pi_\beta=\pi^{\mathrm{unif}}$; because $Q^*(s,\cdot)$ is constant on $\cA$, the derivative correction vanishes and $D_\beta=D_{\pi^{\mathrm{unif}}}$.
\end{proof}

\section{Proofs for Section~\ref{sec:if}}\label{app:inference-proofs}
This section proves the inference results in Section~\ref{sec:if}. Section~\ref{sec:app-if-prelim} develops the derivative, adjoint weight, and orthogonality tools. Section~\ref{sec:app-de} establishes the common cross-fitted decomposition and controls nuisance-dependent remainder terms. Section~\ref{sec:app-growing-oracle} develops the additional dependence arguments used when the horizon grows, and Section~\ref{sec:app-martingale-qv} establishes the martingale and quadratic variation results. Section~\ref{sec:app-unified-inference} proves Theorem~\ref{thm:clt-N} in the fixed and diverging horizon regimes. Section~\ref{sec:app-efficiency-pe-limit} establishes the efficiency results and the large-$\beta$ score limit, and Section~\ref{sec:app-feasible-variance} proves Corollary~\ref{cor:studentized-optimal-value}.
We write $\widehat\eta_{-k}:=(\widehat J_{-k},\widehat Q_{-k},\widehat\lambda_{-k})$, and let $k(i)$ denote the fold containing trajectory $i$.
We restore the suppressed dependence on $H$ and write $\bar P_H,\bar\mu_H,\bar\nu_H$ for the pooled one-step law and its $(S,A)$  and $S'$ marginals at horizon $H$.

\subsection{Derivative and Adjoint Weight Preliminaries}\label{sec:app-if-prelim}

\subsubsection{Derivative and Taylor Remainder of the Smoothed Value Map}\label{app:if-orth-decomp}
We derive a uniform second-order remainder bound for the smoothed value map $Q\mapsto V_\beta(Q)$.
These bounds are used to control the estimation error in $Q$ in the cross-fitted score.

\begin{lemma}\label{lem:if-argsoftmax-bounds}
Under Assumption~\ref{ass:pooled-support}, there exists a constant $C<\infty$ such that, for every fold $k$, the remainder $ R_{\beta,-k}(S') \coloneqq V_\beta(\widehat Q_{-k})(S') -V_\beta(Q_\beta)(S') -D_QV_\beta(Q_\beta)(S')[\widehat Q_{-k}-Q_\beta] $ satisfies
\[
\|D_QV_\beta(Q_\beta)(S')[\widehat Q_{-k}-Q_\beta]\|_{\bar P_H,2}
\le
C\|\widehat Q_{-k}-Q_\beta\|_{\bar\nu_H,2},
\]
\[
\|R_{\beta,-k}(S')\|_{\bar P_H,2}
\le
C\beta\|\widehat Q_{-k}-Q_\beta\|_{\bar\nu_H,2}^2.
\]
\end{lemma}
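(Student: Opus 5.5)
The plan is to reduce both bounds to pointwise-in-$s'$ estimates from Lemma~\ref{lem:softmax-calculus} and then integrate against the pooled next-state marginal. The key observation is that both quantities depend on the transition $Z$ only through $S'$. Hence $\|f(S')\|_{\bar P_H,2}^2=\sum_{s'\in\cS}\bar\nu_H(s')f(s')^2$. The fold-$k$ estimator $\widehat Q_{-k}$ is treated as fixed (it is trained on $\cD_{-k}$), so all norms are computed conditionally on it. Write $u\coloneqq\widehat Q_{-k}-Q_\beta$.

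For the first-order term, Lemma~\ref{lem:softmax-calculus} gives
\[
D_QV_\beta(Q_\beta)(s')[u]=\sum_{a\in\cA}\omega_{\beta,Q_\beta}(a\mid s')u(s',a).
\]
By Lemma~\ref{lem:if-omega-tv-bound}, $\sum_a|\omega_{\beta,Q_\beta}(a\mid s')|\le C_{\cA}$. Therefore
\[
|D_QV_\beta(Q_\beta)(s')[u]|\le C_{\cA}\max_a|u(s',a)|\le C_{\cA}\|u(s',\cdot)\|_2 .
\]
Squaring, weighting by $\bar\nu_H(s')$, and summing gives
\[
\sum_{s'}\bar\nu_H(s')|D_QV_\beta(Q_\beta)(s')[u]|^2\le C_{\cA}^2\sum_{s'}\bar\nu_H(s')\sum_a u(s',a)^2 .
\]
The right-hand side is exactly $C_{\cA}^2\|u\|_{\bar\nu_H,2}^2$ under the paper's shorthand for state-action functions weighted by a state measure. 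This yields the first bound with $C=C_{\cA}$, which is uniform in $\beta$, $k$, and $H$.

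For the remainder, apply the Taylor bound in Lemma~\ref{lem:softmax-calculus} pointwise: $|R_{\beta,-k}(s')|\le C\beta\|u(s',\cdot)\|_2^2$, with $C$ depending only on $|\cA|$. Squaring and integrating then gives
\[
\|R_{\beta,-k}(S')\|_{\bar P_H,2}^2\le C^2\beta^2\sum_{s'}\bar\nu_H(s')\|u(s',\cdot)\|_2^4 .
\]
This is where the main obstacle lies. We need a fourth moment controlled by the square of a second moment, that is,
\[
\sum_{s'}\bar\nu_H(s')x_{s'}^2\le\Bigl(\sum_{s'}\bar\nu_H(s')x_{s'}\Bigr)^2/\min_{s'}\bar\nu_H(s'),\qquad x_{s'}\ge0 .
\]
This inequality holds because each $x_{s'}\le\sum_t\bar\nu_H(t)x_t/\bar\nu_H(s')$; in particular $\bar\nu_H(s')x_{s'}\le\sum_t\bar\nu_H(t)x_t$, so $\bar\nu_H(s')x_{s'}^2\le x_{s'}\sum_t\bar\nu_H(t)x_t$. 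Summing over $s'$ gives $\sum_{s'}\bar\nu_H(s')x_{s'}^2\le\bigl(\sum_{s'}x_{s'}\bigr)\bigl(\sum_t\bar\nu_H(t)x_t\bigr)\le\bigl(\sum_t\bar\nu_H(t)x_t\bigr)^2/\min_{s'}\bar\nu_H(s')$. Applying it with $x_{s'}=\|u(s',\cdot)\|_2^2$, we use Assumption~\ref{ass:pooled-support}, which gives $\min_{s'}\bar\nu_H(s')\ge c>0$ uniformly over $H$. Thus
\[
\sum_{s'}\bar\nu_H(s')\|u(s',\cdot)\|_2^4\le c^{-1}\|u\|_{\bar\nu_H,2}^4 .
\]
Taking square roots gives the second bound with constant $Cc^{-1/2}$. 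This constant is uniform in $\beta$, $k$, and $H$, and the finiteness of $\cS$ is what keeps it under control.
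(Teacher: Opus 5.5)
Your proposal is correct and follows essentially the same route as the paper: you take the pointwise first-order and Taylor bounds from Lemma~\ref{lem:softmax-calculus}, integrate them against $\bar\nu_H$, and use the uniform lower bound on $\bar\nu_H$ from Assumption~\ref{ass:pooled-support} to bound the $L_2(\bar\nu_H)$ norm of the nonnegative function $s'\mapsto\|u(s',\cdot)\|_2^2$ by its $\bar\nu_H$-mean. Your elementary derivation of that comparison, with the explicit constant $(\min_{s'}\bar\nu_H(s'))^{-1/2}$, makes precise the finite-support step that the paper only asserts.
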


\begin{proof}
Apply the master softmax calculus in Lemma~\ref{lem:softmax-calculus} with $Q=Q_\beta$ and perturbation direction $\widehat Q_{-k}-Q_\beta$. To make the resulting norm conversion explicit, fix $s'\in\mathcal S$. The directional derivative is
\[
\begin{aligned}
D_QV_\beta(Q_\beta)(s')[\widehat Q_{-k}-Q_\beta]
&= \sum_{a\in\mathcal A}\omega_{\beta,Q_\beta}(a\mid s')(\widehat Q_{-k}(s',a)-Q_\beta(s',a)).
\end{aligned}
\]
Cauchy--Schwarz and the uniform first-derivative bound in Lemma~\ref{lem:softmax-calculus} give
\[
\left|
D_QV_\beta(Q_\beta)(s')[\widehat Q_{-k}-Q_\beta]
\right|
\le
C
\left(
\sum_{a\in\mathcal A}
(\widehat Q_{-k}(s',a)-Q_\beta(s',a))^2
\right)^{1/2}.
\]
Since this bound holds for every $s'$, we may evaluate it at the random next state $S'$. Hence
\[
\left|
D_QV_\beta(Q_\beta)(S')[\widehat Q_{-k}-Q_\beta]
\right|
\le
C
\left(
\sum_{a\in\mathcal A}
(\widehat Q_{-k}(S',a)-Q_\beta(S',a))^2
\right)^{1/2}.
\]
Squaring both sides and taking expectation under the pooled one-step law converts the pointwise bound into the $\bar\nu_H$-norm.
\[
\EE_{\bar P_H}\left[
\left(
D_QV_\beta(Q_\beta)(S')[\widehat Q_{-k}-Q_\beta]
\right)^2
\right]
\le
C^2
\EE_{\bar P_H}\sum_{a\in\mathcal A}
(\widehat Q_{-k}(S',a)-Q_\beta(S',a))^2.
\]
The corresponding norm is $
\|\widehat Q_{-k}-Q_\beta\|_{\bar\nu_H,2}^2
=
\EE_{\bar P_H}\sum_{a\in\mathcal A}
(\widehat Q_{-k}(S',a)-Q_\beta(S',a))^2 . $
Taking square roots, we obtain
\[
\left\|
D_QV_\beta(Q_\beta)(S')[\widehat Q_{-k}-Q_\beta]
\right\|_{\bar P_H,2}
\le
C
\|\widehat Q_{-k}-Q_\beta\|_{\bar\nu_H,2}.
\]
This proves the first claimed inequality.

The Taylor bound in Lemma~\ref{lem:softmax-calculus} gives the pointwise inequality
\[
|R_{\beta,-k}(S')|
\le
C\beta
\sum_{a\in\cA}
(\widehat Q_{-k}(S',a)-Q_\beta(S',a))^2.
\]
The resulting $L_2(\bar P_H)$ bound is
\[
\|R_{\beta,-k}(S')\|_{\bar P_H,2}
\le
C\beta
\left\|
\sum_{a\in\cA}
(\widehat Q_{-k}(S',a)-Q_\beta(S',a))^2
\right\|_{\bar P_H,2}.
\]
Recall that $\bar\nu_H$ denotes the pooled marginal law of $S'$.
By the uniform pooled support condition in Assumption~\ref{ass:pooled-support}, the finite support gives a constant $C<\infty$ such that, for every nonnegative function $g:\cS\to\RR_+$,
\[
\|g(S')\|_{\bar P_H,2}
=
\left(
\sum_{s'\in\cS}
\bar\nu_H(s')g(s')^2
\right)^{1/2}
\le
C
\sum_{s'\in\cS}
\bar\nu_H(s')g(s')
=
C\EE_{\bar P_H}[g(S')].
\]
Applying this inequality with $g(s')=\sum_{a\in\cA}(\widehat Q_{-k}(s',a)-Q_\beta(s',a))^2$ yields
\[
\left\|
\sum_{a\in\cA}
(\widehat Q_{-k}(S',a)-Q_\beta(S',a))^2
\right\|_{\bar P_H,2}
\le
C
\EE_{\bar P_H}\sum_{a\in\cA}
(\widehat Q_{-k}(S',a)-Q_\beta(S',a))^2 .
\]
The expectation on the right equals $\|\widehat Q_{-k}-Q_\beta\|_{\bar\nu_H,2}^2$ by the definition of the $\bar\nu_H$-norm. Combining the preceding displays gives
\[
\|R_{\beta,-k}(S')\|_{\bar P_H,2}
\le
C\beta
\|\widehat Q_{-k}-Q_\beta\|_{\bar\nu_H,2}^2.
\]
The constant $C$ absorbs the finite support constants. This proves the stated remainder bound.

\end{proof}

\subsubsection{Adjoint Weight Existence and Neyman Orthogonality}
\begin{proof}[Proof of Proposition~\ref{prop:adjoint-existence}.]
Recall the Section~\ref{sec:if} definition of $D_\beta$. Lemma~\ref{lem:softmax-calculus}, applied at $Q=Q_\beta$, gives $\sum_{a\in\cA}\omega_{\beta,Q_\beta}(a\mid s)=1$. The signed $\ell_1$ envelope $\sum_{a\in\cA}|\omega_{\beta,Q_\beta}(a\mid s)|\le C_{\cA}$ follows from Lemma~\ref{lem:if-omega-tv-bound}, with $Q=Q_\beta$. The row sum identity gives $D_\beta\one=\one$, hence $PD_\beta\one=\one$ and $(I-PD_\beta)\one=0$.

We first show that $\ker(I-PD_\beta)=\operatorname{span}\{\one\}$. Let $f$ satisfy $(I-PD_\beta)f=0$, and write $f=q+c\one$, where $q\in\cQ$. Since $(I-PD_\beta)\one=0$, $(I-PD_\beta)q=0$, equivalently $-0\cdot\one-q+PD_\beta q=0$. The injectivity condition in Assumption~\ref{ass:uniform-signed-aggregation-identification} gives $q=0$. Hence $f$ is constant. The reverse inclusion follows from $(I-PD_\beta)\one=0$.

Because the state-action space is finite, rank-nullity gives a one-dimensional left nullspace $\ker((I-PD_\beta)^\top)$. We next show that some vector in this left nullspace has nonzero total mass. Suppose, toward a contradiction, that every $\rho\in\ker((I-PD_\beta)^\top)$ satisfies $\rho^\top\one=0$. Then $\one$ is orthogonal to $\ker((I-PD_\beta)^\top)$, so $\one\in\operatorname{Range}(I-PD_\beta)$. Hence $(I-PD_\beta)f=\one$ for some $f$. Write $f=q+c\one$ with $q\in\cQ$. Since $(I-PD_\beta)\one=0$, $(I-PD_\beta)q=\one$, or equivalently $1-q+PD_\beta q=0$. This contradicts Assumption~\ref{ass:uniform-signed-aggregation-identification}. Hence there exists $\rho_\beta\in\ker((I-PD_\beta)^\top)$ with $\rho_\beta^\top\one\ne0$. Rescale it so that $\rho_\beta^\top\one=1$.

Define $\lambda_\beta(s,a)\coloneqq\rho_\beta(s,a)/\bar\mu_H(s,a)$. Full support of $\bar\mu_H$ and finite support imply that $\lambda_\beta$ is square-integrable under the pooled one-step law, and its normalization is inherited from $\rho_\beta^\top\one=1$, since
\[
\EE_{\bar P_H}[\lambda_\beta(S,A)]
=
\sum_{s,a}\bar\mu_H(s,a)\lambda_\beta(s,a)
=
\rho_\beta^\top\one
=1.
\]
For any $q\in\cQ$,
\[
\begin{aligned}
&\EE_{\bar P_H}\!\left[
\lambda_\beta(S,A)
\left(
q(S,A)-\sum_{a\in\cA}\omega_{\beta,Q_\beta}(a\mid S')q(S',a)
\right)
\right]  =
\sum_{s,a}\rho_\beta(s,a)((I-PD_\beta)q)(s,a)
=
\rho_\beta^\top (I-PD_\beta)q
=0,
\end{aligned}
\]
because $\rho_\beta\in\ker((I-PD_\beta)^\top)$. This verifies the normalization and adjoint equation under the pooled one-step behavior law.

It remains to prove uniqueness. Let $\widetilde\lambda_\beta$ be any other adjoint weight satisfying the same normalization and adjoint equations, and define $ \widetilde\rho_\beta(s,a) \coloneqq \bar\mu_H(s,a)\widetilde\lambda_\beta(s,a). $
The normalization gives $ \widetilde\rho_\beta^\top\one=1 $, and the adjoint equation gives $ \widetilde\rho_\beta^\top (I-PD_\beta)q=0 $ for every  $q\in\cQ.$
For an arbitrary $f$, write $f=q+c\one$ with $q\in\cQ$. Since $(I-PD_\beta)\one=0$, we have $\widetilde\rho_\beta^\top (I-PD_\beta)f=\widetilde\rho_\beta^\top (I-PD_\beta)q=0$, and hence $\widetilde\rho_\beta\in\ker((I-PD_\beta)^\top)$. The left nullspace is one-dimensional, and both $\widetilde\rho_\beta$ and $\rho_\beta$ satisfy the normalization $\rho^\top\one=1$. Therefore $ \widetilde\rho_\beta=\rho_\beta. $
Since $\bar\mu_H(s,a)>0$ for every $(s,a)$, $ \widetilde\lambda_\beta = \lambda_\beta. $
This proves uniqueness of the adjoint weight.
\end{proof}

\begin{proof}[Proof of Proposition~\ref{prop:neyman-orthogonality}.]
For this proof, write $\Phi_\beta(J,Q,\lambda):=\EE_{\bar P_H}[\varphi_\beta(Z;J,Q,\lambda)]$.
The $J$-directional derivative is $D_J\Phi_\beta(J,Q,\lambda)[j]=j-j\EE_{\bar P_H}[\lambda(S,A)]$.
The $Q$-directional derivative is
\[
D_Q\Phi_\beta(J,Q,\lambda)[q]
=
\EE_{\bar P_H}\!\left[
\lambda(S,A)(D_QV_\beta(Q)(S')[q]-q(S,A))
\right].
\]
The $\lambda$-directional derivative is
\[
D_\lambda\Phi_\beta(J,Q,\lambda)[\ell]
=
\EE_{\bar P_H}\!\left[
\ell(S,A)(R-J+V_\beta(Q)(S')-Q(S,A))
\right].
\]
At the truth, the pooled normalization in Proposition~\ref{prop:adjoint-existence} makes the $J$-derivative zero, the adjoint equation makes the $Q$-derivative zero, and \eqref{eq:self} makes the conditional expectation of the residual given $(S,A)$ zero for the $\lambda$-derivative.
\end{proof}

\subsection{Common Cross-Fitted Decomposition and Nuisance Bounds}\label{sec:app-de}
\subsubsection{Cross-Fitted Decomposition}
\begin{lemma}\label{lem:if-decomposition}
Define the three components
\[
L_{N,H}\coloneqq
\frac1N\sum_{i=1}^N\frac1H\sum_{t=0}^{H-1}
(\varphi_\beta(Z_{i,t})-J_\beta),
\]
\[
B_{N,H}\coloneqq
\sum_{k=1}^K\frac{|I_k|}{N}
\frac1H\sum_{t=0}^{H-1}
\EE\!\left[
\varphi_\beta(Z_t;\widehat\eta_{-k})
-
\varphi_\beta(Z_t)
\,\middle|\,
\mathcal D_{-k}
\right], \text{ and}
\]
\[
E_{N,H}\coloneqq
\sum_{k=1}^K\frac1N\sum_{i\in I_k}
\frac1H\sum_{t=0}^{H-1}
\left(
\varphi_\beta(Z_{i,t};\widehat\eta_{-k})
-\varphi_\beta(Z_{i,t})
-\EE_{\bar P_H}[
\varphi_\beta(Z;\widehat\eta_{-k})
-\varphi_\beta(Z)
\mid\mathcal D_{-k}]
\right).
\]
Then the cross-fitted estimator satisfies
\[
\widehat J_\beta-J_\beta
=
L_{N,H}+B_{N,H}+E_{N,H}.
\]
\end{lemma}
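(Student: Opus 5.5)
This is an algebraic add-and-subtract identity; no probabilistic argument is needed. The plan is to start from the definition of $\widehat J_\beta$ and, for each trajectory $i\in I_k$ and time $t$, write
\[
\varphi_\beta(Z_{i,t};\widehat\eta_{-k})
=
\varphi_\beta(Z_{i,t})
+\bigl(\varphi_\beta(Z_{i,t};\widehat\eta_{-k})-\varphi_\beta(Z_{i,t})\bigr).
\]
Averaging the first piece over all folds, trajectories and times, and using that the folds $I_1,\ldots,I_K$ partition $\{1,\ldots,N\}$, gives $J_\beta+L_{N,H}$ after subtracting $J_\beta$.

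Next I would center the second piece by adding and subtracting its conditional pooled mean $\EE_{\bar P_H}[\varphi_\beta(Z;\widehat\eta_{-k})-\varphi_\beta(Z)\mid\mathcal D_{-k}]$. The centered part, summed over $k$, $i\in I_k$ and $t$ with weights $1/(NH)$, is exactly $E_{N,H}$.

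The subtracted conditional mean does not depend on $i$ or $t$. Summing it over $i\in I_k$ and $t$ therefore contributes
\[
\frac{|I_k|}{N}\,\EE_{\bar P_H}\bigl[\varphi_\beta(Z;\widehat\eta_{-k})-\varphi_\beta(Z)\mid\mathcal D_{-k}\bigr].
\]
It remains to identify this with the $k$-th summand of $B_{N,H}$. By the definition $\EE_{\bar P}[f(Z)]=H^{-1}\sum_{t}\EE[f(Z_t)]$, the pooled conditional expectation equals $H^{-1}\sum_t\EE[\,\cdot\,(Z_t)\mid\mathcal D_{-k}]$.

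The only point needing care is justifying that conditioning on $\mathcal D_{-k}$ amounts to evaluating the expectation with $\widehat\eta_{-k}$ held fixed. This follows because $\widehat\eta_{-k}$ is $\mathcal D_{-k}$-measurable and a generic trajectory $Z_t$ from fold $k$ is independent of $\mathcal D_{-k}$, since trajectories are independent and identically distributed. The score is bounded for fixed nuisance values on a finite space, so the conditional expectations are well defined.

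Summing the three contributions yields $\widehat J_\beta-J_\beta=L_{N,H}+B_{N,H}+E_{N,H}$. I expect no step to be a real obstacle beyond this bookkeeping.
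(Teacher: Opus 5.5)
Your proposal is correct and follows the paper's proof: add and subtract $\varphi_\beta(Z_{i,t})$, center the perturbation at its conditional mean given $\mathcal D_{-k}$, and use that this mean does not depend on $i\in I_k$. The only cosmetic difference is that the paper centers at the time-$t$ conditional mean while you center at the pooled one; these agree after averaging over $t$ by the identity $\EE_{\bar P_H}[\,\cdot\mid\mathcal D_{-k}]=H^{-1}\sum_{t}\EE[\,\cdot\,(Z_t)\mid\mathcal D_{-k}]$, which you state explicitly and the paper uses only implicitly.
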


\begin{proof} 
By the definition of the cross-fitted estimator,
\[
\widehat J_\beta-J_\beta
=
\frac1N\sum_{k=1}^K\sum_{i\in I_k}
\frac1H\sum_{t=0}^{H-1}
(\varphi_\beta(Z_{i,t};\widehat\eta_{-k})-J_\beta).
\]
Adding and subtracting $\varphi_\beta(Z_{i,t})$ separates the leading term from the nuisance perturbation.
\[
\begin{aligned}
\widehat J_\beta-J_\beta
&= \frac1N\sum_{i=1}^N\frac1H\sum_{t=0}^{H-1}(\varphi_\beta(Z_{i,t})-J_\beta) + \sum_{k=1}^K\frac1N\sum_{i\in I_k}\frac1H\sum_{t=0}^{H-1}( \varphi_\beta(Z_{i,t};\widehat\eta_{-k})-\varphi_\beta(Z_{i,t}) ).
\end{aligned}
\]
The first term is $L_{N,H}$. For the second term, add and subtract the time-$t$ foldwise conditional mean $\EE[ \varphi_\beta(Z_t;\widehat\eta_{-k})-\varphi_\beta(Z_t) \mid \mathcal D_{-k}]$. Since this conditional mean does not depend on the evaluation trajectory index $i\in I_k$, its contribution after averaging over $i$ is $B_{N,H}$. The remaining centered term is exactly $E_{N,H}$. This proves the decomposition.
\end{proof}

\subsubsection{Score Perturbation Expansion and Population Bias}\label{sec:app-population-bias-bound}

\begin{lemma}[Algebraic expansion of the score perturbation]\label{lem:if-remainder-bound}
Let $\eta_\beta\coloneqq(J_\beta,Q_\beta,\lambda_\beta)$. Define the foldwise score perturbation
\[
\Delta_{\beta,-k}(Z)
\coloneqq
\varphi_\beta(Z;\widehat\eta_{-k})
-
\varphi_\beta(Z;\eta_\beta)
=
\varphi_\beta(Z;\widehat\eta_{-k})
-\varphi_\beta(Z).
\]
Let $\delta J\coloneqq\widehat J_{-k}-J_\beta$, $\delta Q\coloneqq\widehat Q_{-k}-Q_\beta$, $\delta\lambda\coloneqq\widehat\lambda_{-k}-\lambda_\beta$, and $\delta V_{\beta,-k}(S')\coloneqq V_\beta(\widehat Q_{-k})(S')-V_\beta(Q_\beta)(S')$. Then
\begin{align*}
\Delta_{\beta,-k}(Z)
&=(1-\lambda_\beta(S,A))\delta J
+\lambda_\beta(S,A)(\delta V_{\beta,-k}(S')-\delta Q(S,A)) \\
&\quad
+\delta\lambda(S,A)
(R-J_\beta-\delta J+V_\beta(\widehat Q_{-k})(S')-\widehat Q_{-k}(S,A)).
\end{align*}
\end{lemma}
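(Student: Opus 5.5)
This identity is purely algebraic, so the plan is to expand both scores from the definition of $\varphi_\beta$ and regroup. Write the true score as
\[
\varphi_\beta(Z)=J_\beta+\lambda_\beta(S,A)\,\varepsilon_\beta(Z),
\qquad
\varepsilon_\beta(Z)\coloneqq R-J_\beta+V_\beta(Q_\beta)(S')-Q_\beta(S,A).
\]
The perturbed score is
\[
\varphi_\beta(Z;\widehat\eta_{-k})=\widehat J_{-k}+\widehat\lambda_{-k}(S,A)\,\widehat\varepsilon(Z),
\qquad
\widehat\varepsilon(Z)\coloneqq R-\widehat J_{-k}+V_\beta(\widehat Q_{-k})(S')-\widehat Q_{-k}(S,A).
\]
The first step is to substitute $\widehat J_{-k}=J_\beta+\delta J$, $\widehat Q_{-k}=Q_\beta+\delta Q$ and $V_\beta(\widehat Q_{-k})=V_\beta(Q_\beta)+\delta V_{\beta,-k}$ into the perturbed residual. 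This gives the exact relation
\[
\widehat\varepsilon(Z)=\varepsilon_\beta(Z)-\delta J+\delta V_{\beta,-k}(S')-\delta Q(S,A).
\]

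The second step is to split the weight as $\widehat\lambda_{-k}=\lambda_\beta+\delta\lambda$. Then $\widehat\lambda_{-k}\widehat\varepsilon=\lambda_\beta\widehat\varepsilon+\delta\lambda\,\widehat\varepsilon$. Expanding the first piece with the relation above gives
\[
\lambda_\beta\widehat\varepsilon=\lambda_\beta\varepsilon_\beta-\lambda_\beta\,\delta J+\lambda_\beta(\delta V_{\beta,-k}-\delta Q).
\]
The second piece, $\delta\lambda\,\widehat\varepsilon$, is left untouched. Written out, $\widehat\varepsilon$ equals $R-J_\beta-\delta J+V_\beta(\widehat Q_{-k})(S')-\widehat Q_{-k}(S,A)$, which is exactly the factor multiplying $\delta\lambda$ in the statement.

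The final step is to subtract $\varphi_\beta(Z)=J_\beta+\lambda_\beta\varepsilon_\beta$. The terms $J_\beta$ and $\lambda_\beta\varepsilon_\beta$ cancel. The leftover $\delta J$ from $\widehat J_{-k}-J_\beta$ combines with $-\lambda_\beta\delta J$ to give $(1-\lambda_\beta)\delta J$. What remains is the claimed three-term expression.

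There is no real obstacle here; the only care needed is to keep the $\delta\lambda$ term multiplied by the fully perturbed residual rather than the true one, which is what makes the identity exact.
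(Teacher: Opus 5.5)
Your proposal is correct and takes essentially the same route as the paper. Both arguments substitute $\widehat J_{-k}=J_\beta+\delta J$, $\widehat Q_{-k}=Q_\beta+\delta Q$, $\widehat\lambda_{-k}=\lambda_\beta+\delta\lambda$ into the plug-in score, leave the $\delta\lambda$ term multiplied by the fully perturbed residual, subtract the oracle score, and merge $\delta J-\lambda_\beta(S,A)\,\delta J$ into $(1-\lambda_\beta(S,A))\,\delta J$.
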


\begin{proof}
By definition, we have
\[
\varphi_\beta(Z;\widehat\eta_{-k})
=
(J_\beta+\delta J)
+(\lambda_\beta+\delta\lambda)(S,A)
(R-J_\beta-\delta J
+V_\beta(Q_\beta+\delta Q)(S')
-Q_\beta(S,A)-\delta Q(S,A)).
\]
The score is
\[
\varphi_\beta(Z)
=
J_\beta
+\lambda_\beta(S,A)
(R-J_\beta+V_\beta(Q_\beta)(S')-Q_\beta(S,A)).
\]
The residual factor in the first line equals $R-J_\beta-\delta J+V_\beta(\widehat Q_{-k})(S')-\widehat Q_{-k}(S,A)$. Subtracting the oracle score from the plug-in score gives
\begin{align*}
\Delta_{\beta,-k}(Z)
&=\delta J
+\lambda_\beta(S,A)
(-\delta J+\delta V_{\beta,-k}(S')-\delta Q(S,A)) \\
&\quad
+\delta\lambda(S,A)
(R-J_\beta-\delta J
+V_\beta(\widehat Q_{-k})(S')
-\widehat Q_{-k}(S,A)).
\end{align*}
Combining the first two terms gives the displayed identity.
\end{proof}

We then prove the population plug-in bias bound.
When the argument depends on $\widehat\eta_{-k}$, the norm is interpreted conditionally on the training data $\mathcal D_{-k}$.
\begin{proposition}\label{prop:population-bias-bound}
Suppose Assumptions~\ref{ass:irreducible}, \ref{ass:uniform-signed-aggregation-identification}, \ref{ass:pooled-support}, and~\ref{ass:if-envelope} hold. Under Assumption~\ref{ass:if-fixed-rate}, $
B_{N,H}=o_p(N^{-1/2}). $
If Assumption~\ref{ass:if-growing-rate} holds instead, then $
B_{N,H}=o_p((NH)^{-1/2}). $
\end{proposition}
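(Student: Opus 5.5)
The plan is to bound $B_{N,H}$ foldwise. Since $|I_k|/N\le C$ and $K$ is fixed, it suffices to show that for each $k$,
\[
\frac1H\sum_{t=0}^{H-1}\EE\bigl[\Delta_{\beta,-k}(Z_t)\mid\cD_{-k}\bigr]=\EE_{\bar P_H}\bigl[\Delta_{\beta,-k}(Z)\mid\cD_{-k}\bigr]\le C\,b_k .
\]
The claim then follows from Assumption~\ref{ass:if-fixed-rate}, respectively Assumption~\ref{ass:if-growing-rate}. The equality holds because $\widehat\eta_{-k}$ is a function of $\cD_{-k}$ only, and the trajectories in $I_k$ are independent of $\cD_{-k}$. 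Hence the time-averaged conditional expectations are exactly the pooled-law expectation with the nuisances held fixed.

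I would insert the expansion of Lemma~\ref{lem:if-remainder-bound} and take $\bar P_H$-expectations term by term.

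\textbf{First term.} $(1-\lambda_\beta)\delta J$ has expectation zero by the normalization $\EE_{\bar P_H}[\lambda_\beta]=1$ from Proposition~\ref{prop:adjoint-existence}.

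\textbf{Second term.} Write $\delta V_{\beta,-k}(S')=D_\beta\delta Q(S')+R_{\beta,-k}(S')$. The piece $\EE_{\bar P_H}[\lambda_\beta(D_\beta\delta Q(S')-\delta Q(S,A))]$ vanishes by the adjoint equation, since $\delta Q\in\cQ$ because both $\widehat Q_{-k}$ and $Q_\beta$ are anchored. What remains is $\EE_{\bar P_H}[\lambda_\beta R_{\beta,-k}(S')]$. By Cauchy--Schwarz, Lemma~\ref{lem:if-argsoftmax-bounds}, and the uniform bound on $\|\lambda_\beta\|_{\bar\mu,2}$ (finite support, Assumption~\ref{ass:pooled-support}, and the bound referenced as Corollary~\ref{cor:target-adjoint-bound}), this is at most $C\beta\|\delta Q\|_{\bar\nu,2}^2$.

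\textbf{Third term.} Condition on $(S,A)$. Because of the self-induced equation \eqref{eq:self}, the conditional mean of $R-J_\beta+V_\beta(Q_\beta)(S')-Q_\beta(S,A)$ is zero. This relies on the Markov property making the conditional law of $(R,S')$ given $(S,A)$ time-invariant, so the pooled conditional law is $P$. The residual factor therefore has conditional mean
\[
-\delta J+(P\,\delta V_{\beta,-k})(S,A)-\delta Q(S,A).
\]
Applying Cauchy--Schwarz under $\bar\mu$ bounds this term by $\|\delta\lambda\|_{\bar\mu,2}$ times three pieces:
\begin{itemize}
\item $|\delta J|$;
\item $\|\delta Q\|_{\bar\mu,2}$;
\item $\|P\delta V_{\beta,-k}\|_{\bar\mu,2}$. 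By Jensen this is at most $\|\delta V_{\beta,-k}(S')\|$ under a law comparable to $\bar\nu$; the comparability follows from both pooled marginals being bounded below with finite support, so all norms are equivalent up to constants. That in turn is at most $C\|\delta Q\|_{\bar\nu,2}+C\beta\|\delta Q\|_{\bar\nu,2}^2$ by Lemma~\ref{lem:if-argsoftmax-bounds}.
\end{itemize}

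Collecting terms reproduces exactly the four summands of $b_k$ in \eqref{eq:bk}. One point needs care: $\|\delta Q\|_{\bar\mu,2}$ versus $\|\delta Q\|_{\bar\nu,2}$. I would handle it by norm equivalence on the finite space, with constants uniform in $H$ by Assumption~\ref{ass:pooled-support}.

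The main obstacle is making the constants uniform in $H$ and $\beta$. This requires $\sup_{\beta,H}\|\lambda_\beta\|_\infty<\infty$, for which I would invoke Lemma~\ref{lem:uniform-sup-inverse} through the cited adjoint bound. It also requires the norm-equivalence constants to depend only on the uniform lower bounds of $\bar\mu_H$ and $\bar\nu_H$. Both regimes then follow from the same inequality $|B_{N,H}|\le C\max_k b_k$.
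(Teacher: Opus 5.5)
Your proposal is correct and follows the paper's proof essentially step for step. You use the same expansion from Lemma~\ref{lem:if-remainder-bound}, the same cancellations (normalization kills the $J$-term, the Bellman equation kills $\delta\lambda\cdot\delta_\beta(Z)$, and the adjoint identity with anchored $\widehat Q_{-k}$ kills the linear $Q$-term), and the same Cauchy--Schwarz plus Lemma~\ref{lem:if-argsoftmax-bounds} bound on the Taylor remainder and bilinear pieces, giving $|\EE_{\bar P_H}(\Delta_{\beta,-k}(Z)\mid\mathcal D_{-k})|\le Cb_k$. The only cosmetic difference is that you condition on $(S,A)$ and apply Jensen to $P\,\delta V_{\beta,-k}$, whereas the paper applies Cauchy--Schwarz directly to $\delta V_{\beta,-k}(S')$; since $\bar\nu_H=\bar\mu_H P$, your Jensen step lands exactly on the $\bar\nu_H$-norm, so the comparability argument you mention is not needed there.
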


\begin{proof} 
Recall the score perturbation $\Delta_{\beta,-k}$ from Lemma~\ref{lem:if-remainder-bound}, and define $\delta_\beta(Z)\coloneqq R-J_\beta+V_\beta(Q_\beta)(S')-Q_\beta(S,A)$.
By Lemma~\ref{lem:if-remainder-bound},
\begin{align*}
\Delta_{\beta,-k}(Z)
=&
(1-\lambda_\beta(S,A))\delta J_{-k}
+
\lambda_\beta(S,A)
(\delta V_{\beta,-k}(S')-\delta Q_{-k}(S,A))\\
&
+
\delta\lambda_{-k}(S,A)
(\delta_\beta(Z)-\delta J_{-k}
+\delta V_{\beta,-k}(S')-\delta Q_{-k}(S,A)).
\end{align*}
Conditional on $\mathcal D_{-k}$, the quantities $\delta J_{-k}$, $\delta Q_{-k}$, and $\delta\lambda_{-k}$ are fixed. The first two cancellations come from the adjoint normalization and the Bellman equation.
\[
\EE_{\bar P_H}
((1-\lambda_\beta(S,A))\delta J_{-k}\mid\mathcal D_{-k})
=
\delta J_{-k}
(1-\EE_{\bar P_H}[\lambda_\beta(S,A)])
=0.
\]

The Bellman residual also has zero conditional mean, $\EE_{\bar P_H}(\delta_\beta(Z)\mid S,A)=0$. The residual component multiplied by $\delta\lambda_{-k}$ therefore has zero conditional expectation:
\[
\EE_{\bar P_H}
(\delta\lambda_{-k}(S,A)\delta_\beta(Z)\mid\mathcal D_{-k})
=
\EE_{\bar P_H}
\big[
\delta\lambda_{-k}(S,A)
\EE_{\bar P_H}(\delta_\beta(Z)\mid S,A)
\mid\mathcal D_{-k}
\big]
=0.
\]
Combining the preceding cancellations gives
\begin{align*}
\EE_{\bar P_H}(\Delta_{\beta,-k}(Z)\mid\mathcal D_{-k})
=&
\EE_{\bar P_H}
\big[
\lambda_\beta(S,A)
(\delta V_{\beta,-k}(S')-\delta Q_{-k}(S,A))
\mid\mathcal D_{-k}
\big]\\
&
+
\EE_{\bar P_H}
\big[
\delta\lambda_{-k}(S,A)
(-\delta J_{-k}
+\delta V_{\beta,-k}(S')-\delta Q_{-k}(S,A))
\mid\mathcal D_{-k}
\big].
\end{align*}
Next use the Taylor decomposition $\delta V_{\beta,-k}(S')=D_QV_\beta(Q_\beta)(S')[\delta Q_{-k}]+R_{\beta,-k}(S')$. By the deterministic anchoring convention in the Section~\ref{sec:if} setup, $\widehat Q_{-k}\in\cQ$ for each fold. Since $Q_\beta\in\cQ$, we have $\delta Q_{-k}\in\cQ$, so the adjoint identity in Proposition~\ref{prop:adjoint-existence} applies with $q=\delta Q_{-k}$.
\[
\EE_{\bar P_H}
\big[
\lambda_\beta(S,A)
(D_QV_\beta(Q_\beta)(S')[\delta Q_{-k}]
-\delta Q_{-k}(S,A))
\mid\mathcal D_{-k}
\big]
=0.
\]
Only the Taylor remainder and the bilinear nuisance term remain, giving
\begin{align*}
\EE_{\bar P_H}(\Delta_{\beta,-k}(Z)\mid\mathcal D_{-k})
=&
\EE_{\bar P_H}
(\lambda_\beta(S,A)R_{\beta,-k}(S')\mid\mathcal D_{-k})\\
&
+
\EE_{\bar P_H}
\big[
\delta\lambda_{-k}(S,A)
(-\delta J_{-k}
+\delta V_{\beta,-k}(S')-\delta Q_{-k}(S,A))
\mid\mathcal D_{-k}
\big].
\end{align*}

We now bound the two remaining terms. Assumption~\ref{ass:if-envelope} together with Corollaries~\ref{cor:app-smoothed-target-bounded} and~\ref{cor:target-adjoint-bound}, and Lemma~\ref{lem:if-argsoftmax-bounds}, give
\[
\left|
\EE_{\bar P_H}
(\lambda_\beta(S,A)R_{\beta,-k}(S')\mid\mathcal D_{-k})
\right|
\le
C\|R_{\beta,-k}(S')\|_{\bar P_H,2}
\le
C\,\beta\|\delta Q_{-k}\|_{\bar\nu_H,2}^2.
\]
For the bilinear term, Cauchy--Schwarz gives
\begin{align*}
&\left|
\EE_{\bar P_H}
\big[
\delta\lambda_{-k}(S,A)
(-\delta J_{-k}
+\delta V_{\beta,-k}(S')-\delta Q_{-k}(S,A))
\mid\mathcal D_{-k}
\big]
\right|\\
&\quad
\le
\|\delta\lambda_{-k}\|_{\bar P_H,2}
\,
\|-\delta J_{-k}
+\delta V_{\beta,-k}(S')-\delta Q_{-k}(S,A)\|_{\bar P_H,2}.
\end{align*}
Using the derivative remainder decomposition again and Lemma~\ref{lem:if-argsoftmax-bounds},
\[
\|-\delta J_{-k}
+\delta V_{\beta,-k}(S')-\delta Q_{-k}(S,A)\|_{\bar P_H,2}
\le
|\delta J_{-k}|
+C\|\delta Q_{-k}\|_{\bar\nu_H,2}
+C\,\beta\|\delta Q_{-k}\|_{\bar\nu_H,2}^2.
\]
Combining the preceding displays yields
\begin{align*}
\left|
\EE_{\bar P_H}
(\Delta_{\beta,-k}(Z)\mid\mathcal D_{-k})
\right|
\le&
C\,\beta\|\delta Q_{-k}\|_{\bar\nu_H,2}^2
+\|\delta\lambda_{-k}\|_{\bar P_H,2}|\delta J_{-k}|\\
&\quad
+C\|\delta\lambda_{-k}\|_{\bar P_H,2}\|\delta Q_{-k}\|_{\bar\nu_H,2}
+C\,\beta\|\delta\lambda_{-k}\|_{\bar P_H,2}
\|\delta Q_{-k}\|_{\bar\nu_H,2}^2 .
\end{align*}
The right-hand side is bounded by $Cb_k$, so that
\[
\left|
\EE_{\bar P_H}
(\Delta_{\beta,-k}(Z)\mid\mathcal D_{-k})
\right|
\le C b_k.
\]
Under Assumption~\ref{ass:if-fixed-rate}, $\max_k b_k=o_p(N^{-1/2})$, and hence
\[
\max_{1\le k\le K}
\left|
\EE_{\bar P_H}
(\Delta_{\beta,-k}(Z)\mid\mathcal D_{-k})
\right|
=
o_p(N^{-1/2}).
\]
By Lemma~\ref{lem:if-decomposition},
\[
B_{N,H}
=
\sum_{k=1}^K
\frac{|I_k|}{N}
\EE_{\bar P_H}
(\Delta_{\beta,-k}(Z)\mid\mathcal D_{-k}).
\]
The foldwise representation therefore gives
\[
|B_{N,H}|
\le
\sum_{k=1}^K\frac{|I_k|}{N}
\left|
\EE_{\bar P_H}
(\Delta_{\beta,-k}(Z)\mid\mathcal D_{-k})
\right|
=
o_p(N^{-1/2}).
\]
Hence $B_{N,H}=o_p(N^{-1/2})$.

Under Assumption~\ref{ass:if-growing-rate}, $\max_k b_k=o_p((NH)^{-1/2})$, so
\[
\max_{1\le k\le K}
\left|
\EE_{\bar P_H}
(\Delta_{\beta,-k}(Z)\mid\mathcal D_{-k})
\right|
=
o_p((NH)^{-1/2}).
\]
Substituting this uniform rate into the same foldwise average gives
\[
|B_{N,H}|
\le
\sum_{k=1}^K\frac{|I_k|}{N}
\left|
\EE_{\bar P_H}
(\Delta_{\beta,-k}(Z)\mid\mathcal D_{-k})
\right|
=
o_p((NH)^{-1/2}).
\]
Thus $B_{N,H}=o_p((NH)^{-1/2})$.
\end{proof}

\subsubsection{Perturbation Bounds and Empirical Score Replacement}
Define the estimator boundedness event
\[
\mathcal E_{\mathrm{bd}}
:=
\left\{
\max_{1\le k\le K}
\left(
|\widehat J_{-k}|
+
\|\widehat Q_{-k}\|_\infty
+
\|\widehat\lambda_{-k}\|_\infty
\right)
\le C
\right\},
\]
for a sufficiently large fixed constant $C<\infty$.
\begin{lemma}[Score perturbation bound]\label{lem:score-perturbation-bound}
Suppose Assumptions~\ref{ass:irreducible}, \ref{ass:uniform-signed-aggregation-identification}, \ref{ass:pooled-support}, and~\ref{ass:if-envelope} hold and $\mathcal E_{\mathrm{bd}}$ occurs. Then there exists a constant $C<\infty$, depending only on the population envelope constants and the constant in $\mathcal E_{\mathrm{bd}}$, such that, for every fold $k$,
\[
\|\Delta_{\beta,-k}\|_{\bar P_H,2}
\le
C(|\delta J_{-k}|
+\|\delta Q_{-k}\|_{\bar\nu_H,2}
+\beta\|\delta Q_{-k}\|_{\bar\nu_H,2}^2
+\|\delta\lambda_{-k}\|_{\bar P_H,2}),
\]
Moreover, if Assumption~\ref{ass:if-consistency} also holds, then, for a sufficiently large fixed $C$, $\PP(\mathcal E_{\mathrm{bd}})\to1$.
\end{lemma}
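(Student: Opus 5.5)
The plan is to start from the algebraic expansion of Lemma~\ref{lem:if-remainder-bound} and bound each of its three summands in $L_2(\bar P_H)$ by the triangle inequality, on the event $\mathcal E_{\mathrm{bd}}$. First, $\|(1-\lambda_\beta(S,A))\delta J_{-k}\|_{\bar P_H,2}\le(1+\|\lambda_\beta\|_\infty)|\delta J_{-k}|$. Here $\|\lambda_\beta\|_\infty\le C$ by the uniform adjoint bound (Corollary~\ref{cor:target-adjoint-bound}, already invoked in Proposition~\ref{prop:population-bias-bound}).

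The second summand is $\lambda_\beta(S,A)(\delta V_{\beta,-k}(S')-\delta Q_{-k}(S,A))$. I would split $\delta V_{\beta,-k}$ into $D_QV_\beta(Q_\beta)(S')[\delta Q_{-k}]+R_{\beta,-k}(S')$ and apply Lemma~\ref{lem:if-argsoftmax-bounds}. This gives a bound $C\|\delta Q_{-k}\|_{\bar\nu_H,2}+C\beta\|\delta Q_{-k}\|_{\bar\nu_H,2}^2$. For the term $\|\delta Q_{-k}(S,A)\|_{\bar P_H,2}=\|\delta Q_{-k}\|_{\bar\mu_H,2}$, I would convert between the $\bar\mu_H$ and $\bar\nu_H$ norms using Assumption~\ref{ass:pooled-support}. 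Both marginals are bounded below uniformly in $H$ and above by one on a finite space, so $\|q\|_{\bar\mu_H,2}^2\le\sum_{s,a}q(s,a)^2\le(\min_{s'}\bar\nu_H(s'))^{-1}\|q\|_{\bar\nu_H,2}^2$.

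The third summand is $\delta\lambda_{-k}(S,A)$ times the plug-in residual. On $\mathcal E_{\mathrm{bd}}$, that residual is bounded in sup norm by a constant, using $|R|\le C$ (Assumption~\ref{ass:if-envelope}), $|V_\beta(\widehat Q_{-k})|\le\|\widehat Q_{-k}\|_\infty$ (a softmax average), and the bounds on $\widehat J_{-k},J_\beta$ (Corollary~\ref{cor:app-smoothed-target-bounded}). Hence this summand is at most $C\|\delta\lambda_{-k}\|_{\bar P_H,2}$. Summing the three pieces gives the displayed inequality, with a constant that depends only on the envelope constants and the constant defining $\mathcal E_{\mathrm{bd}}$.

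For $\PP(\mathcal E_{\mathrm{bd}})\to1$, I would use Assumption~\ref{ass:if-consistency} together with finiteness of the domain. By Assumption~\ref{ass:pooled-support}, $\|f\|_\infty\le(\min\bar\mu_H)^{-1/2}\|f\|_{\bar\mu_H,2}$, and similarly for $\bar\nu_H$ summed over actions. Consistency in these weighted norms therefore implies sup-norm consistency, $\|\widehat Q_{-k}-Q_\beta\|_\infty+\|\widehat\lambda_{-k}-\lambda_\beta\|_\infty+|\widehat J_{-k}-J_\beta|=o_p(1)$, uniformly over the fixed number $K$ of folds. Combined with the uniform bounds on $(J_\beta,Q_\beta,\lambda_\beta)$, choosing $C$ as the sum of those bounds plus one gives the claim. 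The main subtlety is that the constants must not depend on $\beta$ or $H$. This rests on the $\beta$-free derivative bound in Lemma~\ref{lem:softmax-calculus}, with $\beta$ entering only through the Taylor remainder, and on the uniform-in-$H$ support bounds. I would check each norm conversion against these two sources.
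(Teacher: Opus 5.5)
Your proposal is correct and follows essentially the same route as the paper's proof. Both arguments use the three-term expansion of Lemma~\ref{lem:if-remainder-bound}, the derivative/Taylor split via Lemma~\ref{lem:if-argsoftmax-bounds} with the $\bar\mu_H$-to-$\bar\nu_H$ norm comparison from Assumption~\ref{ass:pooled-support}, the sup-norm envelope of the plug-in residual on $\mathcal E_{\mathrm{bd}}$, and the conversion of weighted-norm consistency into sup-norm consistency to obtain $\PP(\mathcal E_{\mathrm{bd}})\to1$.
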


\begin{proof}
Lemma~\ref{lem:if-remainder-bound} gives the exact expansion
\[
\begin{aligned}
\Delta_{\beta,-k}(Z)
&=(1-\lambda_\beta(S,A))\delta J_{-k}
+\lambda_\beta(S,A)
(\delta V_{\beta,-k}(S')-\delta Q_{-k}(S,A)) \\
&\quad
+\delta\lambda_{-k}(S,A)
(R-J_\beta-\delta J_{-k}
+V_\beta(\widehat Q_{-k})(S')
-\widehat Q_{-k}(S,A)).
\end{aligned}
\]
Taking the $L_2(\bar P_H)$ norm and applying the triangle inequality gives three contributions.

The first contribution is the $J$-estimation term, and it is bounded by $\|(1-\lambda_\beta(S,A))\delta J_{-k}\|_{\bar P_H,2}\le C|\delta J_{-k}|$. The second contribution is the estimation term for $Q$. Write the perturbation of the value map as $\delta V_{\beta,-k}(S')=D_QV_\beta(Q_\beta)(S')[\delta Q_{-k}]+R_{\beta,-k}(S')$. Lemma~\ref{lem:if-argsoftmax-bounds} gives $\|D_QV_\beta(Q_\beta)(S')[\delta Q_{-k}]\|_{\bar P_H,2}\le C\|\delta Q_{-k}\|_{\bar\nu_H,2}$. The Taylor remainder satisfies $\|R_{\beta,-k}(S')\|_{\bar P_H,2}\le C\beta\|\delta Q_{-k}\|_{\bar\nu_H,2}^2$.
Finiteness and the common lower bound on $\bar\nu_H$ in Assumption~\ref{ass:pooled-support} give $\|\delta Q_{-k}(S,A)\|_{\bar P_H,2}\le C\|\delta Q_{-k}\|_{\bar\nu_H,2}$ uniformly in $H$. Combining this comparison with the preceding bounds gives
\[
\|\delta V_{\beta,-k}(S')-\delta Q_{-k}(S,A)\|_{\bar P_H,2}
\le
C\|\delta Q_{-k}\|_{\bar\nu_H,2}
+C\beta\|\delta Q_{-k}\|_{\bar\nu_H,2}^2.
\]
Multiplying by the population envelope for $\lambda_\beta$ gives
\[
\|\lambda_\beta(S,A)(\delta V_{\beta,-k}(S')-\delta Q_{-k}(S,A))\|_{\bar P_H,2}
\le
C(\|\delta Q_{-k}\|_{\bar\nu_H,2}
+\beta\|\delta Q_{-k}\|_{\bar\nu_H,2}^2).
\]
The third contribution is the estimation term for the adjoint weight. By Assumption~\ref{ass:if-envelope}, Corollary~\ref{cor:app-smoothed-target-bounded}, on $\mathcal E_{\mathrm{bd}}$, and using $|V_\beta(Q)(s)|\le \|Q\|_\infty$, we have $|R-J_\beta-\delta J_{-k}+V_\beta(\widehat Q_{-k})(S')-\widehat Q_{-k}(S,A)|\le C$ almost surely.
It follows that the adjoint-weight contribution satisfies
\[
\|\delta\lambda_{-k}(S,A)
(R-J_\beta-\delta J_{-k}
+V_\beta(\widehat Q_{-k})(S')
-\widehat Q_{-k}(S,A))\|_{\bar P_H,2}
\le
C\|\delta\lambda_{-k}\|_{\bar P_H,2}.
\]
Combining these bounds proves the stated result.

It remains to establish the final assertion.
The common support lower bound and the definitions of the weighted norms give $\|\widehat Q_{-k}-Q_\beta\|_\infty\le C\|\widehat Q_{-k}-Q_\beta\|_{\bar\nu_H,2}$.
Similarly, the common support lower bound gives $\|\widehat\lambda_{-k}-\lambda_\beta\|_\infty\le C\|\widehat\lambda_{-k}-\lambda_\beta\|_{\bar P_H,2}$.
The same support constants are uniform in $N,H,\beta$ by Assumption~\ref{ass:pooled-support}. Together with Assumptions~\ref{ass:if-envelope} and~\ref{ass:if-consistency}, this implies
\[
\max_{1\le k\le K}
\left(
|\widehat J_{-k}|
+\|\widehat Q_{-k}\|_\infty
+\|\widehat\lambda_{-k}\|_\infty
\right)
\le C+o_p(1),
\]
for a constant $C<\infty$ uniform in $N,H,\beta$. Therefore, a sufficiently large fixed $C$ satisfies $\PP(\mathcal E_{\mathrm{bd}})\to1$.
\end{proof}

\begin{lemma}[Cross-fitted empirical score replacement]\label{lem:empirical-score-replacement}
Suppose Assumptions~\ref{ass:irreducible}, \ref{ass:uniform-signed-aggregation-identification}, \ref{ass:pooled-support}, \ref{ass:if-envelope}, and~\ref{ass:if-consistency} hold.
Suppose also that $
\max_{1\le k\le K}
\beta\|\widehat Q_{-k}-Q_\beta\|_{\bar\nu_H,2}^2
=o_p(1). $
The score replacement bound is
\[
\max_{1\le k\le K}
\left\|
\varphi_\beta(\cdot;\widehat\eta_{-k})
-
\varphi_\beta(\cdot)
\right\|_{\bar P_H,2}
=o_p(1).
\]
This convergence gives
\[
\frac{1}{NH}\sum_{i=1}^N\sum_{t=0}^{H-1}
\left(
\varphi_\beta(Z_{i,t};\widehat\eta_{-k(i)})
-
\varphi_\beta(Z_{i,t})
\right)^2
=o_p(1).
\]
\end{lemma}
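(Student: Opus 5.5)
The argument has two parts: a population $L_2(\bar P_H)$ bound via Lemma~\ref{lem:score-perturbation-bound}, then a transfer to the empirical average of squares over the held-out folds by conditioning on the training data.

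\textbf{Step 1 (population bound).} By Lemma~\ref{lem:score-perturbation-bound}, Assumption~\ref{ass:if-consistency} gives $\PP(\mathcal E_{\mathrm{bd}})\to1$. On $\mathcal E_{\mathrm{bd}}$, for each fold,
\[
\|\Delta_{\beta,-k}\|_{\bar P_H,2}\le C\bigl(|\delta J_{-k}|+\|\delta Q_{-k}\|_{\bar\nu_H,2}+\beta\|\delta Q_{-k}\|_{\bar\nu_H,2}^2+\|\delta\lambda_{-k}\|_{\bar P_H,2}\bigr).
\]
The first, second and fourth terms are $o_p(1)$ by Assumption~\ref{ass:if-consistency}, noting that $\|\cdot\|_{\bar P_H,2}$ for functions of $(S,A)$ coincides with $\|\cdot\|_{\bar\mu_H,2}$. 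The third term is $o_p(1)$ by the extra hypothesis. Since $K$ is fixed, taking the maximum over $k$ and using $\PP(\mathcal E_{\mathrm{bd}}^c)\to0$ gives the first claim.

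\textbf{Step 2 (conditional mean of the empirical average).} Fix $k$ and condition on $\mathcal D_{-k}$. The trajectories $\{\tau_i:i\in I_k\}$ are independent of $\mathcal D_{-k}$, so $\widehat\eta_{-k}$ is fixed. Each $i\in I_k$ has the same trajectory law, so
\[
\EE\Bigl[\tfrac1{|I_k|H}\textstyle\sum_{i\in I_k}\sum_t\Delta_{\beta,-k}(Z_{i,t})^2\,\Big|\,\mathcal D_{-k}\Bigr]=H^{-1}\textstyle\sum_t\EE[\Delta_{\beta,-k}(Z_t)^2\mid\mathcal D_{-k}]=\|\Delta_{\beta,-k}\|_{\bar P_H,2}^2,
\]
where the last equality is the definition of $\bar P_H$ as the time-averaged one-step law. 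This identity needs no mixing, so it holds for fixed and diverging $H$ alike.

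\textbf{Step 3 (conditional Markov).} The left side of the target display equals $\sum_k(|I_k|/N)$ times the fold averages in Step 2, and $|I_k|/N\le C$. For each fold, the average is a nonnegative random variable, so the conditional Markov inequality applies. For any $\epsilon>0$, its conditional probability of exceeding $\epsilon$ is at most $\min\{1,\epsilon^{-1}\|\Delta_{\beta,-k}\|_{\bar P_H,2}^2\}$. By Step 1 this bound is $o_p(1)$ and bounded by one, so dominated convergence gives unconditional probability tending to zero. Summing over the $K$ folds proves the second display.

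The only step needing care is Step 3's conditioning: the perturbation involves the random $\widehat\eta_{-k}$, and cross-fitting independence is what lets us treat it as fixed. Beyond this, the argument is straightforward once Lemma~\ref{lem:score-perturbation-bound} is available.
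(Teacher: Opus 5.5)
Your proposal is correct and follows essentially the same route as the paper: the population bound from Lemma~\ref{lem:score-perturbation-bound} on $\mathcal E_{\mathrm{bd}}$ together with $\PP(\mathcal E_{\mathrm{bd}})\to1$, then the cross-fitting conditional-mean identity $\EE[\,\cdot\mid\mathcal D_{-k}]=\|\Delta_{\beta,-k}\|_{\bar P_H,2}^2$, and finally a conditional Markov inequality over the fixed number of folds. Your explicit dominated-convergence step for passing from the conditional to the unconditional probability spells out a detail that the paper leaves implicit.
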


\begin{proof}
Lemma~\ref{lem:score-perturbation-bound} gives, on $\mathcal E_{\mathrm{bd}}$,
\[
\|\Delta_{\beta,-k}\|_{\bar P_H,2}
\le
C(|\delta J_{-k}|
+\|\delta Q_{-k}\|_{\bar\nu_H,2}
+\beta\|\delta Q_{-k}\|_{\bar\nu_H,2}^2
+\|\delta\lambda_{-k}\|_{\bar P_H,2}).
\]
The additional nonlinear consistency condition and Assumption~\ref{ass:if-consistency} make the right-hand side $o_p(1)$ uniformly over $k$. Lemma~\ref{lem:score-perturbation-bound} gives $\PP(\mathcal E_{\mathrm{bd}})\to1$ under the stated assumptions. Therefore $\max_k\|\Delta_{\beta,-k}\|_{\bar P_H,2}=o_p(1)$.

For fold $k$, write $\Delta_{-k}(Z)\coloneqq\varphi_\beta(Z;\widehat\eta_{-k})-\varphi_\beta(Z)$. Conditional on the training data $\mathcal D_{-k}$, the function $\Delta_{-k}$ is fixed and each evaluation trajectory in $I_k$ is independent of $\mathcal D_{-k}$, so
\[
\EE\!\left[
\frac{1}{|I_k|H}
\sum_{i\in I_k}\sum_{t=0}^{H-1}
\Delta_{-k}(Z_{i,t})^2
\,\middle|\,
\mathcal D_{-k}
\right]
=
\|\Delta_{-k}\|_{\bar P_H,2}^2.
\]
The preceding foldwise $L_2(\bar P_H)$ rate makes the right-hand side $o_p(1)$ uniformly over $k$, so conditional Markov's inequality and fixed $K$ give $\max_{1\le k\le K}\frac{1}{|I_k|H}\sum_{i\in I_k}\sum_{t=0}^{H-1}\Delta_{-k}(Z_{i,t})^2=o_p(1)$. Taking the convex combination of these foldwise averages with weights $|I_k|/N$ proves the claim.
\end{proof}

\subsection{Diverging Horizon Nuisance Fluctuations}\label{sec:app-growing-oracle}

For $i\in I_k$, define the fold-specific filtration
\[
\mathcal F_{i,t}^{(-k)}
\coloneqq
\sigma\bigl(\mathcal D_{-k},
Z_{i,0},\ldots,Z_{i,t-1},S_{i,t},A_{i,t}\bigr).
\]
This filtration includes the training data outside fold $k$, so the fold-specific nuisance estimators are measurable at each evaluation step.

For the score perturbation $\Delta_{\beta,-k}$ from Lemma~\ref{lem:if-remainder-bound}, define
\[
\bar\Delta_{\beta,-k}(s,a)
\coloneqq
\EE_{\bar P_H}\!\left[
\Delta_{\beta,-k}(Z)
\mid
S=s,A=a,\mathcal D_{-k}
\right].
\]
The generic transition $Z$ is evaluated under the pooled one-step law and independently of the training sample used to construct the random integrand. Conditional on $\mathcal D_{-k}$, this is a fixed function of $(s,a)$. The common controlled reward and transition law makes the same conditional transition expectation valid at each evaluation time: time-varying behavior changes the distribution of $X_t=(S_t,A_t)$, not the conditional law of $(R,S')$ given $S,A$. Thus, for $i\in I_k$,
\[
\EE[\Delta_{\beta,-k}(Z_{i,t})\mid\mathcal F_{i,t}^{(-k)}]
=
\bar\Delta_{\beta,-k}(S_{i,t},A_{i,t}).
\]
For each time $t$, retain the time-specific centering $\mu_{t,-k}\coloneqq\EE[\bar\Delta_{\beta,-k}(S_{i,t},A_{i,t})\mid\mathcal D_{-k}]$.

\subsubsection{Discussion of Assumption~\ref{ass:behavior-process-mixing}}\label{sec:app:dis}

\begin{remark}
\label{rem:dobrushin-predictable-averaging}

Assumption~\ref{ass:behavior-process-mixing} is a property of the behavior process itself. It does not depend on $\beta$ or on the nuisance estimators. Conditional on $\mathcal D_{-k}$, the predictable drift $\bar\Delta_{\beta,-k}$ is a fixed bounded function of $X_t=(S_t,A_t)$. Lemma~\ref{lem:doeblin-predictable-residual-covariance} gives the resulting conditional variance bound directly.

\end{remark}

\begin{lemma}[Strong-mixing variance bound]
\label{lem:doeblin-predictable-residual-covariance}
Suppose Assumption~\ref{ass:behavior-process-mixing} holds. Let $f_H$ be a bounded function of $X_t=(S_t,A_t)$, possibly depending on $H$, and set $\mu_{t,H}=\EE[f_H(X_t)]$. Then
\[
\EE\!\left[
\left(
\frac1H\sum_{t=0}^{H-1}(f_H(X_t)-\mu_{t,H})
\right)^2
\right]
\le
\frac{C}{H}\|f_H\|_\infty^2.
\]
If Assumption~\ref{ass:pooled-support} also holds, the right-hand side is bounded by $CH^{-1}\|f_H\|_{\bar\mu_H,2}^2$. Both conclusions also hold conditionally when $f_H$ is measurable with respect to training data $\mathcal D_{-k}$ that are independent of the held-out trajectory, with $\mu_{t,H}$ replaced by $\EE[f_H(X_t)\mid\mathcal D_{-k}]$.
\end{lemma}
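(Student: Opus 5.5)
The plan is to expand the square and control the covariances with the standard covariance inequality for strong mixing. Write $g_t := f_H(X_t)-\mu_{t,H}$. Each $g_t$ is bounded by $2\|f_H\|_\infty$ and has mean zero. Expanding the square gives
\[
\EE\!\left[\left(\frac1H\sum_{t=0}^{H-1}g_t\right)^2\right]
=\frac1{H^2}\sum_{t=0}^{H-1}\EE[g_t^2]+\frac{2}{H^2}\sum_{t=0}^{H-2}\sum_{m=1}^{H-1-t}\Cov(g_t,g_{t+m}).
\]
The diagonal part is at most $4\|f_H\|_\infty^2/H$.

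For the off-diagonal part I will use the bounded-variable covariance inequality for $\alpha$-mixing (Ibragimov/Davydov). If $U$ is $\sigma(X_0,\ldots,X_t)$-measurable and $W$ is $\sigma(X_{t+m},\ldots,X_{H-1})$-measurable, both bounded, then $|\Cov(U,W)|\le 4\|U\|_\infty\|W\|_\infty\,\alpha_H^b(m)$. If I want to avoid citing it, I can prove it directly: $X_t$ takes finitely many values, so one can write $\Cov(f(X_t),f(X_{t+m}))=\sum_{x,y}f(x)f(y)[\PP(X_t=x,X_{t+m}=y)-\PP(X_t=x)\PP(X_{t+m}=y)]$. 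Grouping the pairs $(x,y)$ by the signs of the bracket and of $f(x)f(y)$ gives at most four events in the two sigma-fields. This yields the bound $16\|f_H\|_\infty^2\alpha_H^b(m)$.

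Summing over $t$ (at most $H$ terms) and over $m$, the off-diagonal part is bounded by
\[
\frac{C\|f_H\|_\infty^2}{H}\sum_{m=1}^{H-1}\alpha_H^b(m)\le \frac{C\|f_H\|_\infty^2}{H},
\]
where the last step uses Assumption~\ref{ass:behavior-process-mixing}. The definition of $\alpha_H^b(m)$ takes a supremum over $t$, so the bound holds uniformly in $t$. This proves the first claim.

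For the second claim I use Assumption~\ref{ass:pooled-support}. Since $\cS\times\cA$ is finite, $\|f_H\|_\infty^2\le (\min_{s,a}\bar\mu_H(s,a))^{-1}\|f_H\|_{\bar\mu_H,2}^2$, and this minimum is bounded below uniformly in $H$.

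The conditional version follows by applying the same argument to the law of the held-out trajectory given $\mathcal D_{-k}$. Since that trajectory is independent of $\mathcal D_{-k}$, its conditional law equals its unconditional law, so the mixing coefficients are unchanged. Meanwhile $f_H$ is a fixed function given $\mathcal D_{-k}$, and $\mu_{t,H}$ becomes $\EE[f_H(X_t)\mid\mathcal D_{-k}]$.

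The only delicate point is the covariance inequality for mixing sequences. It is standard, and in the finite-state setting it reduces to the elementary event-splitting argument above. Everything else is bookkeeping.
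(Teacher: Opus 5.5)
Your proposal is correct and follows the paper's proof essentially step for step: expand the variance, bound each off-diagonal covariance by $C\|f_H\|_\infty^2\alpha_H^b(m)$ via the bounded-variable strong-mixing covariance inequality, sum over lags using Assumption~\ref{ass:behavior-process-mixing}, then use finite-state norm equivalence for the $\|f_H\|_{\bar\mu_H,2}$ form and independence of the held-out trajectory for the conditional version. One small caution about your optional elementary justification: grouping pairs $(x,y)$ by sign does not produce product events of the form $\{X_t\in A\}\cap\{X_{t+m}\in B\}$, so that sketch does not work as written; in the finite setting, however, the crude bound $\sum_{x,y}|\PP(X_t=x,X_{t+m}=y)-\PP(X_t=x)\PP(X_{t+m}=y)|\le(|\cS||\cA|)^2\alpha_H^b(m)$ already suffices, or you can simply cite the standard inequality as the paper does.
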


\begin{proof}
For $0\le t<u\le H-1$, the bounded-variable strong-mixing covariance inequality, see \citet{bradley2005basic}, gives
\[
\left|
\Cov\left(
f_H(X_t),f_H(X_u)
\right)
\right|
\le
C\alpha_H^b(u-t)
\|f_H\|_\infty^2.
\]
Expanding the variance, bounding the diagonal terms by $\|f_H\|_\infty^2$, and re-indexing the off-diagonal terms by the lag $m=u-t$ give
\[
\begin{aligned}
\Var\left(
\frac1H\sum_{t=0}^{H-1}f_H(X_t)
\right)
&\le
\frac{C\|f_H\|_\infty^2}{H^2}
\left[
H+
\sum_{m=1}^{H-1}(H-m)\alpha_H^b(m)
\right]\\
&\le
\frac{C}{H}\|f_H\|_\infty^2,
\end{aligned}
\]
where the final inequality uses the uniform summability in Assumption~\ref{ass:behavior-process-mixing}. Centering each summand by its possibly time-varying mean does not change this variance.

Under Assumption~\ref{ass:pooled-support}, $\bar\mu_{H,\min}\coloneqq\min_x\bar\mu_H(x)$ is bounded away from zero uniformly in $H$. Finite-state norm equivalence gives $\|f_H\|_\infty^2\le\bar\mu_{H,\min}^{-1}\|f_H\|_{\bar\mu_H,2}^2$, which proves the scaled form. Finally, condition on $\mathcal D_{-k}$. The function $f_H$ is then deterministic, and independence of the held-out trajectory from $\mathcal D_{-k}$ leaves its law and strong-mixing coefficients unchanged. Applying the preceding calculation conditionally proves the last assertion.
\end{proof}

\subsubsection{Cross-Fitted Diverging-Horizon Empirical Fluctuation}

\begin{lemma}
\label{lem:growing-empirical-decomposition}
Define the martingale component
\[
E_{N,H}^{\mathrm{mart}}
\coloneqq
\sum_{k=1}^K\frac1N\sum_{i\in I_k}\frac1H\sum_{t=0}^{H-1}
\Big[
\varphi_\beta(Z_{i,t};\widehat\eta_{-k})
-\varphi_\beta(Z_{i,t})
-
\EE(\varphi_\beta(Z_{i,t};\widehat\eta_{-k})
-\varphi_\beta(Z_{i,t})
\mid\mathcal F_{i,t}^{(-k)})
\Big],
\]
and the predictable component is defined by
\[
E_{N,H}^{\mathrm{pred}}
\coloneqq
\sum_{k=1}^K\frac1N\sum_{i\in I_k}\frac1H\sum_{t=0}^{H-1}
\Big[
\EE(\varphi_\beta(Z_{i,t};\widehat\eta_{-k})
-\varphi_\beta(Z_{i,t})
\mid\mathcal F_{i,t}^{(-k)})
-
\EE(\varphi_\beta(Z_{i,t};\widehat\eta_{-k})
-\varphi_\beta(Z_{i,t})
\mid\mathcal D_{-k})
\Big].
\]
Then the empirical nuisance fluctuation admits the decomposition
\[
E_{N,H}=E_{N,H}^{\mathrm{mart}}+E_{N,H}^{\mathrm{pred}}.
\]
\end{lemma}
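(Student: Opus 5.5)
This is a telescoping identity, so the plan is to add and subtract the filtration-conditional mean inside each summand of $E_{N,H}$. The one point needing care is matching the centering in $E_{N,H}$ with the $\mathcal D_{-k}$-conditional mean that appears in $E_{N,H}^{\mathrm{pred}}$. Throughout, write $\Delta_{-k}(Z)\coloneqq\varphi_\beta(Z;\widehat\eta_{-k})-\varphi_\beta(Z)$.

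For each $k$, $i\in I_k$, and $t$, I will use
\[
\Delta_{-k}(Z_{i,t})-c
=\bigl[\Delta_{-k}(Z_{i,t})-\EE(\Delta_{-k}(Z_{i,t})\mid\mathcal F^{(-k)}_{i,t})\bigr]
+\bigl[\EE(\Delta_{-k}(Z_{i,t})\mid\mathcal F^{(-k)}_{i,t})-\EE(\Delta_{-k}(Z_{i,t})\mid\mathcal D_{-k})\bigr]
+\bigl[\EE(\Delta_{-k}(Z_{i,t})\mid\mathcal D_{-k})-c\bigr],
\]
where $c=\EE_{\bar P_H}[\Delta_{-k}(Z)\mid\mathcal D_{-k}]$ is the centering used in $E_{N,H}$ in Lemma~\ref{lem:if-decomposition}. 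Averaging over $i,t,k$ with weights $1/(NH)$, the first bracket gives $E^{\mathrm{mart}}_{N,H}$ and the second gives $E^{\mathrm{pred}}_{N,H}$. All conditional expectations are well defined because $\Delta_{-k}$ is bounded on $\mathcal E_{\mathrm{bd}}$, or more simply because the state-action space is finite and the rewards are bounded.

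It then remains to show that the third bracket sums to zero. Since each held-out trajectory is independent of $\mathcal D_{-k}$ and shares the common trajectory law, $\EE(\Delta_{-k}(Z_{i,t})\mid\mathcal D_{-k})$ equals the time-$t$ marginal expectation of $\Delta_{-k}$ with the nuisances held fixed, and it does not depend on $i$. Averaging over $t$ with weight $1/H$ gives $H^{-1}\sum_t\EE[\Delta_{-k}(Z_t)\mid\mathcal D_{-k}]$. By definition of the pooled law, $\EE_{\bar P_H}[f(Z)]=H^{-1}\sum_t\EE[f(Z_t)]$, and this equals $\EE_{\bar P_H}[\Delta_{-k}(Z)\mid\mathcal D_{-k}]$. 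So for each $i\in I_k$ the time-averaged third bracket vanishes, and the identity $E_{N,H}=E^{\mathrm{mart}}_{N,H}+E^{\mathrm{pred}}_{N,H}$ follows.

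The main obstacle is exactly this step: the centering in $E_{N,H}$ is the pooled constant, whereas the predictable component is centered at the time-specific means $\mu_{t,-k}$, which differ from the pooled constant when the behavior policy varies over time. These discrepancies cancel only after averaging over $t$, not term by term. So the proof should carry out the telescoping at the level of the time average for each trajectory, rather than summand by summand.
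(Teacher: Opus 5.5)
Your proposal is correct and follows the paper's proof: the paper likewise uses independence of the held-out trajectories from $\mathcal D_{-k}$ to identify the pooled centering with $\bar m_{-k}=H^{-1}\sum_t\mu_{t,-k}$. It then uses $H^{-1}\sum_t(\mu_{t,-k}-\bar m_{-k})=0$ to recenter at the time-specific means, and splits $\Delta_{\beta,-k}(Z_{i,t})-\mu_{t,-k}$ into the martingale and predictable brackets. The only cosmetic difference is that the paper writes the filtration-conditional mean as $\bar\Delta_{\beta,-k}(S_{i,t},A_{i,t})$, which the identity itself does not require.
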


\begin{proof} 
Let $\bar m_{-k}\coloneqq H^{-1}\sum_{t=0}^{H-1}\mu_{t,-k}$. By Lemma~\ref{lem:if-decomposition},
\[
E_{N,H}
=
\sum_{k=1}^K\frac1N\sum_{i\in I_k}\frac1H\sum_{t=0}^{H-1}
\left[
\Delta_{\beta,-k}(Z_{i,t})
-
\EE_{\bar P_H}(\Delta_{\beta,-k}(Z)\mid\mathcal D_{-k})
\right].
\]
Because $\Delta_{\beta,-k}$ is fixed conditional on $\mathcal D_{-k}$, and because for each $t$ the evaluation trajectories in fold $k$ have the same time-specific behavior law, the pooled one-step law gives
\[
\EE_{\bar P_H}(\Delta_{\beta,-k}(Z)\mid\mathcal D_{-k})
=
\frac1H\sum_{t=0}^{H-1}
\EE(\Delta_{\beta,-k}(Z_{i,t})\mid\mathcal D_{-k})
=
\bar m_{-k}.
\]
Substituting this pooled mean, and then using $H^{-1}\sum_{t=0}^{H-1}(\mu_{t,-k}-\bar m_{-k})=0$, we obtain the two equivalent forms
\[
\begin{aligned}
E_{N,H}
&= \sum_{k=1}^K\frac1N\sum_{i\in I_k}\frac1H\sum_{t=0}^{H-1}(\Delta_{\beta,-k}(Z_{i,t})-\bar m_{-k})  = \sum_{k=1}^K\frac1N\sum_{i\in I_k}\frac1H\sum_{t=0}^{H-1}(\Delta_{\beta,-k}(Z_{i,t})-\mu_{t,-k}).
\end{aligned}
\]
The predictable-drift identity above and the tower property give $\EE[\Delta_{\beta,-k}(Z_{i,t})\mid\mathcal D_{-k}]=\mu_{t,-k}$. This centered term therefore decomposes as
\[
\Delta_{\beta,-k}(Z_{i,t})-\mu_{t,-k}
=
\Big[
\Delta_{\beta,-k}(Z_{i,t})
-
\bar\Delta_{\beta,-k}(S_{i,t},A_{i,t})
\Big]
+
\Big[
\bar\Delta_{\beta,-k}(S_{i,t},A_{i,t})
-
\mu_{t,-k}
\Big].
\]
The first bracket has conditional mean zero given the current state-action pair.
The second bracket is the centered predictable drift along the behavior trajectory. Averaging this decomposition yields $E_{N,H}=E_{N,H}^{\mathrm{mart}}+E_{N,H}^{\mathrm{pred}}$, with $E_{N,H}^{\mathrm{mart}}$ and $E_{N,H}^{\mathrm{pred}}$ as stated in the lemma.
\end{proof}

\begin{proposition}
\label{prop:growing-empirical-fluctuation-refined}
Suppose Assumptions~\ref{ass:uniform-signed-aggregation-identification}, \ref{ass:pooled-support}, \ref{ass:if-envelope}, \ref{ass:if-consistency}, and~\ref{ass:if-growing-rate} hold.
The martingale fluctuation satisfies
\[
E_{N,H}^{\mathrm{mart}}=o_p((NH)^{-1/2}).
\]
If, in addition, Assumption~\ref{ass:behavior-process-mixing} holds, then
\[
E_{N,H}^{\mathrm{pred}}=o_p((NH)^{-1/2}),
\qquad
E_{N,H}=o_p((NH)^{-1/2}).
\]
\end{proposition}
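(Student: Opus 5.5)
We bound the two components of the decomposition in Lemma~\ref{lem:growing-empirical-decomposition} separately, working conditionally on $\mathcal D_{-k}$ fold by fold. Since $K$ is fixed, it suffices to show each foldwise piece is $o_p((NH)^{-1/2})$. Conditional on $\mathcal D_{-k}$, the perturbation $\Delta_{\beta,-k}$ is a fixed function. On the event $\mathcal E_{\mathrm{bd}}$, whose probability tends to one by Lemma~\ref{lem:score-perturbation-bound} under Assumption~\ref{ass:if-consistency}, this function is uniformly bounded. So we may work on $\mathcal E_{\mathrm{bd}}$ throughout.

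\emph{Martingale part.} For $i\in I_k$, set $M_{i,t}\coloneqq\Delta_{\beta,-k}(Z_{i,t})-\bar\Delta_{\beta,-k}(S_{i,t},A_{i,t})$. The predictable-drift identity shows that $M_{i,t}$ is a martingale difference with respect to $\mathcal F_{i,t}^{(-k)}$. Trajectories in $I_k$ are independent given $\mathcal D_{-k}$. Hence all cross terms vanish, both across $t$ (martingale property) and across $i$ (independence), and
\[
\EE\Bigl[\Bigl(\tfrac{1}{N H}\textstyle\sum_{i\in I_k}\sum_{t}M_{i,t}\Bigr)^2\Bigm|\mathcal D_{-k}\Bigr]
\le\frac{1}{N^2H^2}\sum_{i\in I_k}\sum_t\EE[\Delta_{\beta,-k}(Z_{i,t})^2\mid\mathcal D_{-k}]
\le \frac{C}{NH}\|\Delta_{\beta,-k}\|_{\bar P_H,2}^2 .
\]
The last step uses $|I_k|\le CN$ and the definition of the pooled law. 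By Lemma~\ref{lem:score-perturbation-bound}, $\|\Delta_{\beta,-k}\|_{\bar P_H,2}=o_p(1)$. This requires $|\delta J|,\|\delta Q\|,\|\delta\lambda\|=o_p(1)$, which is Assumption~\ref{ass:if-consistency}, and $\beta\|\delta Q\|^2=o_p(1)$, which follows from Assumption~\ref{ass:if-growing-rate} because $b_k\ge\beta\|\delta Q\|^2$. A conditional Chebyshev argument then gives $E^{\mathrm{mart}}_{N,H}=o_p((NH)^{-1/2})$. This step does not use mixing, which matches the statement.

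\emph{Predictable part.} The summand is $\bar\Delta_{\beta,-k}(X_{i,t})-\mu_{t,-k}$, a centered bounded function of the behavior chain. Apply the conditional form of Lemma~\ref{lem:doeblin-predictable-residual-covariance} with $f_H=\bar\Delta_{\beta,-k}$. This bounds the variance of each trajectory's time average by $CH^{-1}\|\bar\Delta_{\beta,-k}\|_{\bar\mu_H,2}^2$. Conditional independence across $i\in I_k$ then bounds the variance of the fold average by $C(NH)^{-1}\|\bar\Delta_{\beta,-k}\|_{\bar\mu_H,2}^2$. Jensen's inequality gives $\|\bar\Delta_{\beta,-k}\|_{\bar\mu_H,2}\le\|\Delta_{\beta,-k}\|_{\bar P_H,2}=o_p(1)$, so Chebyshev again yields $o_p((NH)^{-1/2})$. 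Adding the two parts via Lemma~\ref{lem:growing-empirical-decomposition} gives the bound for $E_{N,H}$.

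The main obstacle is the predictable part. This is the only place where temporal dependence does not cancel automatically, and it needs Lemma~\ref{lem:doeblin-predictable-residual-covariance} to hold conditionally and uniformly in $H$. Two points must be checked here. First, conditioning on $\mathcal D_{-k}$ must leave the held-out trajectory's law and its mixing coefficients unchanged; this follows from trajectory-level splitting. Second, the passage from the $\|\cdot\|_\infty$ bound to the $\bar\mu_H$-norm must be uniform in $H$, which uses Assumption~\ref{ass:pooled-support}. A further detail is turning conditional $o_p(1)$ variance bounds into unconditional $o_p$ statements. For this I would use the standard device: for $\epsilon>0$, $\PP(|X|>\epsilon a_n\mid\mathcal D_{-k})\le \epsilon^{-2}a_n^{-2}\Var(X\mid\mathcal D_{-k})\wedge 1$, followed by dominated convergence.
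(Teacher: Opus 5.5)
Your proposal is correct and follows the paper's proof essentially step for step. The paper likewise extracts $\max_k\beta\|\delta Q_{-k}\|_{\bar\nu_H,2}^2\le\max_k b_k=o_p(1)$ to obtain $\max_k\|\Delta_{\beta,-k}\|_{\bar P_H,2}=o_p(1)$ without mixing. It then controls the martingale part by within-trajectory orthogonality plus conditional independence over $i\in I_k$, and the predictable part by the conditional, $\bar\mu_H$-scaled form of Lemma~\ref{lem:doeblin-predictable-residual-covariance} together with conditional Jensen, finishing each fold with conditional Chebyshev and using that $K$ is fixed.
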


\begin{proof} 
Because the first term in $b_k$ is nonnegative, Assumption~\ref{ass:if-growing-rate} gives $\max_k\beta\|\delta Q_{-k}\|_{\bar\nu_H,2}^2\le\max_k b_k=o_p((NH)^{-1/2})=o_p(1)$. Thus Assumptions~\ref{ass:pooled-support}, \ref{ass:if-envelope}, \ref{ass:if-consistency}, and~\ref{ass:if-growing-rate} verify the conditions of Lemma~\ref{lem:empirical-score-replacement}, which gives $\max_k\|\Delta_{\beta,-k}\|_{\bar P_H,2}=o_p(1)$.
Conditional Jensen's inequality and the definition of the predictable drift then give
\[
\max_{1\le k\le K}\|\bar\Delta_{\beta,-k}\|_{\bar P_H,2}
\le
\max_{1\le k\le K}\|\Delta_{\beta,-k}\|_{\bar P_H,2}
=o_p(1).
\]
The final equality follows from Lemma~\ref{lem:score-perturbation-bound}, Assumption~\ref{ass:if-consistency}, and the nonlinear rate supplied by Assumption~\ref{ass:if-growing-rate}.

We first control the martingale part. Write $U_{i,t,k}\coloneqq\Delta_{\beta,-k}(Z_{i,t})-\EE(\Delta_{\beta,-k}(Z_{i,t})\mid\mathcal F_{i,t}^{(-k)})$. Conditional on $\mathcal D_{-k}$, the evaluation trajectories are independent over $i\in I_k$. Moreover, for $0\le t<u\le H-1$, $U_{i,t,k}$ is $\mathcal F_{i,u}^{(-k)}$-measurable and $\EE\!\left[U_{i,u,k}\mid\mathcal F_{i,u}^{(-k)}\right]=0$, so by iterated expectation $\EE\!\left[U_{i,t,k}U_{i,u,k}\mid\mathcal D_{-k}\right]=0$ for $0\le t<u\le H-1$. This orthogonality yields
\[
\EE\!\left[
\left(
\frac1H\sum_{t=0}^{H-1} U_{i,t,k}
\right)^2
\Bigm|\mathcal D_{-k}
\right]
\le
\frac1{H^2}
\sum_{t=0}^{H-1}
\EE\!\left[
U_{i,t,k}^2
\mid
\mathcal D_{-k}
\right].
\]
By conditional Jensen's inequality, $\EE[U_{i,t,k}^2\mid\mathcal D_{-k}]\le\EE[\Delta_{\beta,-k}(Z_{i,t})^2\mid\mathcal D_{-k}]$, so
\[
\EE\!\left[
\left(
\frac1H\sum_{t=0}^{H-1} U_{i,t,k}
\right)^2
\Bigm|\mathcal D_{-k}
\right]
\le
\frac{C}{H}\|\Delta_{\beta,-k}\|_{\bar P_H,2}^2 .
\]
Write $A^{\mathrm{mart}}_{k,N,H}\coloneqq\frac1{|I_k|}\sum_{i\in I_k}\frac1H\sum_{t=0}^{H-1} U_{i,t,k}$. Since the evaluation trajectories are independent over $i\in I_k$,
\[
\EE\!\left[
(A^{\mathrm{mart}}_{k,N,H})^2
\Bigm|\mathcal D_{-k}
\right]
\le
\frac{C}{|I_k|H}\|\Delta_{\beta,-k}\|_{\bar P_H,2}^2 .
\]
Because $|I_k|\asymp N$ and $\|\Delta_{\beta,-k}\|_{\bar P_H,2}=o_p(1)$ for every fixed $k$, conditional Chebyshev's inequality gives $A^{\mathrm{mart}}_{k,N,H}=o_p((NH)^{-1/2})$. Since $K$ is fixed, a union bound yields $\max_{1\le k\le K}|A^{\mathrm{mart}}_{k,N,H}|=o_p((NH)^{-1/2})$. Moreover, $E_{N,H}^{\mathrm{mart}}=\sum_{k=1}^K\frac{|I_k|}{N}A^{\mathrm{mart}}_{k,N,H}$, and since the weights $|I_k|/N$ sum to one, it follows that $E_{N,H}^{\mathrm{mart}}=o_p((NH)^{-1/2})$.

It remains to control the predictable part. Lemma~\ref{lem:growing-empirical-decomposition} supplies the conditional mean identity
\[
\EE(\Delta_{\beta,-k}(Z_{i,t})\mid\mathcal F_{i,t}^{(-k)})
=
\bar\Delta_{\beta,-k}(S_{i,t},A_{i,t}).
\]
Writing $\mu_{t,-k}\coloneqq\EE(\bar\Delta_{\beta,-k}(S_{i,t},A_{i,t})\mid\mathcal D_{-k})$, that lemma then gives
\[
E_{N,H}^{\mathrm{pred}}
=
\sum_{k=1}^K\frac1N\sum_{i\in I_k}\frac1H\sum_{t=0}^{H-1}
(\bar\Delta_{\beta,-k}(S_{i,t},A_{i,t})-\mu_{t,-k}).
\]
Write $Y_{i,k}\coloneqq\frac1H\sum_{t=0}^{H-1}(\bar\Delta_{\beta,-k}(S_{i,t},A_{i,t})-\mu_{t,-k})$. Given $\mathcal D_{-k}$, the variables $\{Y_{i,k}:i\in I_k\}$ are independent over $i\in I_k$ and centered. Applying the conditional, $L_2(\bar\mu_H)$-scaled form of Lemma~\ref{lem:doeblin-predictable-residual-covariance} with $f_H=\bar\Delta_{\beta,-k}$ gives $\EE[Y_{i,k}^2\mid\mathcal D_{-k}]\le C\|\bar\Delta_{\beta,-k}\|_{\bar P_H,2}^2/H$. Write $A^{\mathrm{pred}}_{k,N,H}\coloneqq\frac1{|I_k|}\sum_{i\in I_k}Y_{i,k}$. Since the evaluation trajectories are independent over $i\in I_k$,
\[
\EE\!\left[
(A^{\mathrm{pred}}_{k,N,H})^2
\Bigm|\mathcal D_{-k}
\right]
\le
\frac{C}{|I_k|H}\|\bar\Delta_{\beta,-k}\|_{\bar P_H,2}^2 .
\]
Because $|I_k|\asymp N$, conditional Chebyshev's inequality gives $A^{\mathrm{pred}}_{k,N,H}=o_p((NH)^{-1/2})$ for every fixed $k$. Since $K$ is fixed, a union bound yields $\max_{1\le k\le K}|A^{\mathrm{pred}}_{k,N,H}|=o_p((NH)^{-1/2})$. Moreover, $E_{N,H}^{\mathrm{pred}}=\sum_{k=1}^K\frac{|I_k|}{N}A^{\mathrm{pred}}_{k,N,H}$, and since the weights $|I_k|/N$ sum to one, it follows that $E_{N,H}^{\mathrm{pred}}=o_p((NH)^{-1/2})$. Finally, Lemma~\ref{lem:growing-empirical-decomposition} gives $E_{N,H}=E_{N,H}^{\mathrm{mart}}+E_{N,H}^{\mathrm{pred}}$, and combining the two bounds yields $E_{N,H}=o_p((NH)^{-1/2})$.
\end{proof}

\subsection{Oracle Score and Variance}\label{sec:app-martingale-qv}

Order the transition indices lexicographically: $(i,t)\prec(j,u)$ if $i<j$, or if $i=j$ and $t<u$. For a fixed trajectory $i$, define $
\mathcal F_{i,-1}
\coloneqq
\sigma\!\left(
\{Z_{j,u}:j<i,\ 0\le u\le H-1\},\,
S_{i,0},A_{i,0}
\right), $
and, for $0\le t\le H-1$, $
\mathcal F_{i,t}
\coloneqq
\sigma\!\left(
\{Z_{j,u}:j<i,\ 0\le u\le H-1\},\,
Z_{i,0},\ldots,Z_{i,t},\,
S_{i,t+1},A_{i,t+1}
\right),$
with the extra look-ahead action $A_{i,H}$ omitted when $t=H-1$, since the terminal next state $S_{i,H}$ is already included in $Z_{i,H-1}$. Thus $\mathcal F_{i,t-1}$ contains the current state-action pair $(S_{i,t},A_{i,t})$ but not the completed transition $Z_{i,t}$. Across the boundary between trajectories, the next initial pair is included in $\mathcal F_{i+1,-1}$, and independence of trajectories preserves the same conditional transition law.

To view these fields as a standard single-index martingale array, insert a zero increment when each initial state-action pair is revealed. The ordered array first reveals $(S_{1,0},A_{1,0})$ with increment zero, then reveals transition $(1,0)$ with increment $\xi_{1,0}$, continues through $\xi_{1,H-1}$, reveals $(S_{2,0},A_{2,0})$ with increment zero, and so on. The preceding field for every nonzero increment $\xi_{i,t}$ is $\mathcal F_{i,t-1}$, and the current field is $\mathcal F_{i,t}$. The inserted zero increments make the boundary nesting explicit and leave unchanged the sum of the original $\xi_{i,t}$'s, the predictable quadratic variation, the realized quadratic variation, and the conditional Lindeberg sum. The normalization remains $\sqrt{NH}$, since there are still $NH$ nonzero transition contributions.

\begin{lemma}\label{lem:mds-true-score-active}
With respect to the filtration just defined,
\[
\EE[\xi_{i,t}\mid\mathcal F_{i,t-1}]=0,
\qquad
\xi_{i,t}\in\mathcal F_{i,t},
\qquad
1\le i\le N,\quad 0\le t\le H-1.
\]
Consequently, after adding the zero boundary increments just described, the lexicographically ordered nonzero increments form a martingale-difference array with preceding field $\mathcal F_{i,t-1}$ and current field $\mathcal F_{i,t}$.
\end{lemma}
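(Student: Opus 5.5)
The plan is to verify the two properties directly and then use independence across trajectories at the boundaries.

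\emph{Measurability.} $\xi_{i,t}=\lambda_\beta(S_{i,t},A_{i,t})(R_{i,t+1}-J_\beta+V_\beta(Q_\beta)(S_{i,t+1})-Q_\beta(S_{i,t},A_{i,t}))$ is a deterministic function of $Z_{i,t}$, because $\lambda_\beta$, $Q_\beta$ and $J_\beta$ are population objects. Since $Z_{i,t}$ is one of the generators of $\mathcal F_{i,t}$, this gives $\xi_{i,t}\in\mathcal F_{i,t}$.

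\emph{Conditional mean.} The field $\mathcal F_{i,t-1}$ is generated by three pieces: the earlier trajectories $\{Z_{j,u}:j<i\}$, the past of trajectory $i$ (namely $Z_{i,0},\ldots,Z_{i,t-1}$, or nothing when $t=0$), and the current pair $(S_{i,t},A_{i,t})$. Trajectory $i$ is independent of the earlier trajectories, so conditioning on them can be dropped. The time-homogeneous Markov property of the environment from Section~\ref{sec:formu} says that, given the history $S_{i,0},A_{i,0},\ldots,S_{i,t},A_{i,t}$, the pair $(R_{i,t+1},S_{i,t+1})$ has the law determined by $(S_{i,t},A_{i,t})$ alone. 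This holds whatever the time-varying behavior rule, because the behavior policy affects only the law of $A_{i,t}$, which is already conditioned on. Pulling $\lambda_\beta(S_{i,t},A_{i,t})$ and $Q_\beta(S_{i,t},A_{i,t})$ out of the conditional expectation gives
\[
\EE[\xi_{i,t}\mid\mathcal F_{i,t-1}]=\lambda_\beta(S_{i,t},A_{i,t})\bigl(R^*(S_{i,t},A_{i,t})-J_\beta+(PV_\beta(Q_\beta))(S_{i,t},A_{i,t})-Q_\beta(S_{i,t},A_{i,t})\bigr).
\]
The self-induced Bellman equation \eqref{eq:self} makes the bracket zero. Boundedness of $R$ (Assumption~\ref{ass:if-envelope}) together with Corollary~\ref{cor:app-smoothed-target-bounded} ensures integrability.

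\emph{Array structure.} I then check that the fields are nested in the lexicographic order. Within a trajectory, $\mathcal F_{i,t-1}\subseteq\mathcal F_{i,t}$ holds by construction. Across trajectories, $\mathcal F_{i,H-1}\subseteq\mathcal F_{i+1,-1}$ holds because $\mathcal F_{i+1,-1}$ contains all of $Z_{i,0},\ldots,Z_{i,H-1}$.

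The inserted zero increment at the moment $(S_{i+1,0},A_{i+1,0})$ is revealed is trivially adapted and has conditional mean zero. It fills the one gap where the previous field is $\mathcal F_{i,H-1}$ rather than a field containing the new initial pair.

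The only delicate point is this boundary step: conditioning on $(S_{i+1,0},A_{i+1,0})$ together with all of the previous trajectories. There, independence of trajectories reduces the conditioning to the initial pair, and the same Bellman computation applies. This completes the martingale-difference array claim.
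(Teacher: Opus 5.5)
Your proposal is correct and follows essentially the same route as the paper. $\xi_{i,t}$ is a fixed function of $Z_{i,t}\in\mathcal F_{i,t}$, and its conditional mean vanishes: you pull out the $\mathcal F_{i,t-1}$-measurable factor $\lambda_\beta(S_{i,t},A_{i,t})$, apply the time-homogeneous Markov property of Section~\ref{sec:formu} together with independence across trajectories at the $t=-1$ boundary, and then invoke \eqref{eq:self}; your explicit nesting check $\mathcal F_{i,H-1}\subseteq\mathcal F_{i+1,-1}$ spells out what the paper leaves implicit (and note that the Markov condition's history also includes the past rewards, which $\mathcal F_{i,t-1}$ contains).
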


\begin{proof}
By construction, $(S_{i,t},A_{i,t})$ is $\mathcal F_{i,t-1}$-measurable, while the reward and next state in $Z_{i,t}$ are not yet revealed. The Markov transition and reward laws, together with independence across trajectories at the boundary $t=-1$, give
\[
\EE\!\left[
R_{i,t+1}-J_\beta+V_\beta(Q_\beta)(S_{i,t+1})
-Q_\beta(S_{i,t},A_{i,t})
\mid \mathcal F_{i,t-1}
\right]
=0
\]
by the self-induced Bellman equation \eqref{eq:self}. Multiplying by the $\mathcal F_{i,t-1}$-measurable factor $\lambda_\beta(S_{i,t},A_{i,t})$ gives $\EE[\xi_{i,t}\mid\mathcal F_{i,t-1}]=0$. The measurability statement follows because $\mathcal F_{i,t}$ contains the completed transition $Z_{i,t}$.
\end{proof}

We next prove convergence of the realized average quadratic variation. The following two lemmas establish stabilization of the predictable quadratic variation and control the difference between the realized and predictable quadratic variations. Proposition~\ref{prop:rqv-sufficient-route} then combines these results.

\begin{lemma}[Predictable quadratic variation]
\label{lem:behavior-contraction-pqv}
Suppose Assumptions~\ref{ass:if-envelope}, \ref{ass:irreducible}, \ref{ass:uniform-signed-aggregation-identification}, \ref{ass:pooled-support}, and~\ref{ass:behavior-process-mixing} hold.
Then we have
\[
\frac1{NH}\sum_{i=1}^N\sum_{t=0}^{H-1}
\EE[\xi_{i,t}^2\mid\mathcal F_{i,t-1}]
-
\sigma^2
=
O_p((NH)^{-1/2}).
\]
Thus the predictable quadratic variation stabilizes at $\sigma^2$.
\end{lemma}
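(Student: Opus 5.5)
The plan is to reduce the predictable quadratic variation to an additive functional of the behavior state-action process and then apply the strong-mixing variance bound in Lemma~\ref{lem:doeblin-predictable-residual-covariance}. First I identify the conditional second moment. By Lemma~\ref{lem:mds-true-score-active}, $(S_{i,t},A_{i,t})$ is $\mathcal F_{i,t-1}$-measurable, and $\lambda_\beta$ is a deterministic function. The Markov property of the reward and transition law, which does not depend on the time-varying behavior policy, then gives
\[
\EE[\xi_{i,t}^2\mid\mathcal F_{i,t-1}]=g(S_{i,t},A_{i,t}),
\qquad
g(s,a)\coloneqq\lambda_\beta(s,a)^2\,v_\beta(s,a),
\]
where $v_\beta(s,a)\coloneqq\Var(R+V_\beta(Q_\beta)(S')\mid S=s,A=a)$. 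Here I use the Bellman equation \eqref{eq:self}: the conditional mean of the residual is zero, so its conditional second moment equals this conditional variance. By the definition of $\sigma^2$ and the pooled law, $H^{-1}\sum_{t}\EE[g(X_{i,t})]=\EE_{\bar P_H}[g(S,A)]=\sigma^2$. Hence the quantity to control is
\[
\frac1N\sum_{i=1}^N Y_i,
\qquad
Y_i\coloneqq\frac1H\sum_{t=0}^{H-1}\bigl(g(X_{i,t})-\EE g(X_{i,t})\bigr),
\]
which is an average of independent, centered variables.

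Second, I need a uniform envelope $\|g\|_\infty\le C$ that does not depend on $\beta$ or $H$. Assumption~\ref{ass:if-envelope} and Corollary~\ref{cor:app-smoothed-target-bounded} give $|R+V_\beta(Q_\beta)(S')|\le C$, so $v_\beta\le C$. For $\lambda_\beta$ I would invoke the uniform adjoint bound, Corollary~\ref{cor:target-adjoint-bound}, which is cited earlier in the paper. If that bound had to be rederived, I would write $\lambda_\beta=\rho_\beta/\bar\mu_H$ as in the proof of Proposition~\ref{prop:adjoint-existence}. I would then bound $\rho_\beta$ uniformly through the compactness argument of Lemma~\ref{lem:uniform-sup-inverse}: the normalized left null vector of $I-PD$ depends continuously on $D$ over the compact allowable family, and $\bar\mu_H$ is bounded below by Assumption~\ref{ass:pooled-support}. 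This uniformity is the main obstacle, because $\beta$ and $H$ both vary, so pointwise boundedness is not enough.

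Third, I apply Lemma~\ref{lem:doeblin-predictable-residual-covariance} with $f_H=g$, which gives $\EE[Y_i^2]\le CH^{-1}\|g\|_\infty^2\le C/H$. Trajectories are independent and share a common law, so
\[
\EE\Bigl[\Bigl(\frac1N\sum_iY_i\Bigr)^2\Bigr]\le \frac{C}{NH}.
\]
Chebyshev's inequality then yields the stated $O_p((NH)^{-1/2})$ rate. In the fixed-$H$ regime the same bound holds trivially from $\Var(Y_i)\le\|g\|_\infty^2$, which gives rate $N^{-1/2}$, equivalently $(NH)^{-1/2}$ up to a constant. Assumptions~\ref{ass:irreducible} and~\ref{ass:uniform-signed-aggregation-identification} enter only through the existence, uniqueness and boundedness of $(Q_\beta,J_\beta,\lambda_\beta)$.
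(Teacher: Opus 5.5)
Your proposal is correct and follows essentially the same argument as the paper: write $\EE[\xi_{i,t}^2\mid\mathcal F_{i,t-1}]$ as a bounded, time-homogeneous function of $(S_{i,t},A_{i,t})$, apply Lemma~\ref{lem:doeblin-predictable-residual-covariance} to each trajectory average, then use independence across trajectories and Chebyshev's inequality. The paper takes the uniform envelope directly from Corollaries~\ref{cor:app-smoothed-target-bounded} and~\ref{cor:target-adjoint-bound}, exactly as your main line does, so your alternative rederivation of the $\lambda_\beta$ bound is not needed.
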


\begin{proof}
Define $g_\beta(s,a)\coloneqq\EE[\xi_{i,t}^2\mid S_{i,t}=s,A_{i,t}=a]$. This function is time-homogeneous because the transition and reward laws, and $(J_\beta,Q_\beta,\lambda_\beta)$, are time-homogeneous. Assumption~\ref{ass:if-envelope} together with Corollaries~\ref{cor:app-smoothed-target-bounded} and~\ref{cor:target-adjoint-bound} gives $\|\xi_{i,t}\|_\infty\le C$ uniformly in $i,t,N,H,\beta$, and hence $\|g_\beta\|_\infty\le C$. Moreover, $\EE[\xi_{i,t}^2\mid\mathcal F_{i,t-1}]=g_\beta(S_{i,t},A_{i,t})$.

Applying Lemma~\ref{lem:doeblin-predictable-residual-covariance} with $f_H=g_\beta$ gives
\[
\Var\left(
\frac1H
\sum_{t=0}^{H-1}
g_\beta(X_{i,t})
\right)
\le \frac{C}{H}.
\]
The lemma applies uniformly because $g_\beta$ is a bounded function of the behavior-process state-action pair. Independence across trajectories then gives
\[
\Var\left(
\frac1{NH}
\sum_{i=1}^N
\sum_{t=0}^{H-1}
g_\beta(X_{i,t})
\right)
\le
\frac{C}{NH}.
\]
The expectation of this average is exactly $H^{-1}\sum_{t=0}^{H-1}\EE[\xi_{i,t}^2]=\sigma^2$.
Chebyshev's inequality proves the claimed $O_p((NH)^{-1/2})$ bound.

\end{proof}

\begin{lemma}[Realized quadratic variation]
\label{lem:pqv-to-rqv-comparison}
Under Assumption~\ref{ass:if-envelope} together with Assumptions~\ref{ass:irreducible}, \ref{ass:uniform-signed-aggregation-identification}, and~\ref{ass:pooled-support},
\[
\frac1{NH}\sum_{i=1}^N\sum_{t=0}^{H-1}
\left(
\xi_{i,t}^2
-
\EE[\xi_{i,t}^2\mid\mathcal F_{i,t-1}]
\right)
=
O_p\!\left((NH)^{-1/2}\right).
\]
\end{lemma}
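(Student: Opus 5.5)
The plan is to treat the summands $\zeta_{i,t}\coloneqq \xi_{i,t}^2-\EE[\xi_{i,t}^2\mid\mathcal F_{i,t-1}]$ as a martingale-difference array and bound the second moment of their average directly. No mixing is needed: orthogonality of the increments gives the $1/(NH)$ variance scaling. This is why Assumption~\ref{ass:behavior-process-mixing} is absent from the statement.

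\textbf{Step 1 (uniform envelope).} I first bound $\xi_{i,t}$ uniformly. Assumption~\ref{ass:if-envelope} gives $|R|\le C$. Corollary~\ref{cor:app-smoothed-target-bounded} bounds $\|Q_\beta\|_\infty$ and $|J_\beta|$ uniformly in $\beta$, and $|V_\beta(Q_\beta)(s)|\le\|Q_\beta\|_\infty$. Corollary~\ref{cor:target-adjoint-bound}, which uses Assumptions~\ref{ass:uniform-signed-aggregation-identification} and~\ref{ass:pooled-support}, bounds $\|\lambda_\beta\|_\infty$ uniformly over $\beta$ and $H$. Together these give $|\xi_{i,t}|\le C$ almost surely, and hence $|\zeta_{i,t}|\le 2C^2$, with constants free of $N$, $H$ and $\beta$.

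\textbf{Step 2 (martingale structure).} Next I check that $\zeta_{i,t}$ is $\mathcal F_{i,t}$-measurable and satisfies $\EE[\zeta_{i,t}\mid\mathcal F_{i,t-1}]=0$. Measurability holds because $\xi_{i,t}\in\mathcal F_{i,t}$ by Lemma~\ref{lem:mds-true-score-active}, and $\mathcal F_{i,t-1}\subseteq\mathcal F_{i,t}$. The zero conditional mean holds by construction, since $\xi_{i,t}^2$ is integrable. With the lexicographic ordering and the zero boundary increments described before Lemma~\ref{lem:mds-true-score-active}, the nesting holds across trajectory boundaries. Therefore, for $(i,t)\prec(j,u)$, $\zeta_{i,t}$ is measurable with respect to the preceding field of $(j,u)$, and iterated expectations give $\EE[\zeta_{i,t}\zeta_{j,u}]=0$.

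\textbf{Step 3 (second moment).} Expanding the square and using the vanishing cross terms gives
\[
\EE\Bigl[\Bigl(\frac1{NH}\sum_{i=1}^N\sum_{t=0}^{H-1}\zeta_{i,t}\Bigr)^2\Bigr]
=\frac1{(NH)^2}\sum_{i=1}^N\sum_{t=0}^{H-1}\EE[\zeta_{i,t}^2]
\le\frac{4C^4}{NH}.
\]
Chebyshev's inequality then gives the claimed $O_p((NH)^{-1/2})$ rate.

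The main point needing care is the boundary nesting between trajectories, that is, that $\mathcal F_{i,H-1}\subseteq\mathcal F_{i+1,-1}$. This holds because $\mathcal F_{i+1,-1}$ contains all of trajectory $i$. The only substantive input is the $\beta$- and $H$-uniform bound on $\lambda_\beta$ from Corollary~\ref{cor:target-adjoint-bound}; everything else is elementary.
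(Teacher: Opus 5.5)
Your proposal is correct and follows essentially the same route as the paper. You bound $|\xi_{i,t}|$ uniformly via Assumption~\ref{ass:if-envelope} and Corollaries~\ref{cor:app-smoothed-target-bounded} and~\ref{cor:target-adjoint-bound}, treat the centered squares as a bounded martingale-difference array under the lexicographic filtration with zero boundary increments, and conclude from orthogonality of the increments and Chebyshev's inequality, which is the paper's argument (your explicit check of the boundary nesting $\mathcal F_{i,H-1}\subseteq\mathcal F_{i+1,-1}$ spells out a step the paper leaves implicit).
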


\begin{proof}
Define $D_{i,t}\coloneqq\xi_{i,t}^2-\EE[\xi_{i,t}^2\mid\mathcal F_{i,t-1}]$. By Lemma~\ref{lem:mds-true-score-active}, $D_{i,t}\in\mathcal F_{i,t}$. By construction, $\EE[D_{i,t}\mid\mathcal F_{i,t-1}]=0$. Thus, with zero increments inserted at trajectory boundaries as above, the lexicographically ordered $D_{i,t}$'s form a bounded martingale-difference array with respect to the same shifted filtration.
The boundedness follows from Assumption~\ref{ass:if-envelope} together with Corollaries~\ref{cor:app-smoothed-target-bounded} and~\ref{cor:target-adjoint-bound}, which gives $\sup_{i,t,N,H,\beta}|\xi_{i,t}|\le C$. Martingale orthogonality then gives
\[
\EE\left[
\left(
\frac1{NH}\sum_{i=1}^N\sum_{t=0}^{H-1}D_{i,t}
\right)^{2}
\right]
\le
\frac{C}{NH}.
\]
The displayed $O_p((NH)^{-1/2})$ bound follows by Chebyshev's inequality.
\end{proof}

\begin{proposition}[Realized quadratic variation]
\label{prop:rqv-sufficient-route}
Under Assumptions~\ref{ass:irreducible}, \ref{ass:uniform-signed-aggregation-identification}, \ref{ass:pooled-support}, \ref{ass:if-envelope}, and~\ref{ass:behavior-process-mixing},
\[
(NH)^{-1}\sum_{i=1}^N\sum_{t=0}^{H-1}\xi_{i,t}^2
-\sigma^2
\xrightarrow{p}0
\]
holds as $NH\to\infty$.

\end{proposition}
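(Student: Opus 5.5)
The plan is to split the realized average quadratic variation into a martingale-fluctuation part and a predictable part:
\[
\frac1{NH}\sum_{i=1}^N\sum_{t=0}^{H-1}\xi_{i,t}^2-\sigma^2
=
\underbrace{\frac1{NH}\sum_{i,t}\bigl(\xi_{i,t}^2-\EE[\xi_{i,t}^2\mid\mathcal F_{i,t-1}]\bigr)}_{\text{(I)}}
+
\underbrace{\frac1{NH}\sum_{i,t}\EE[\xi_{i,t}^2\mid\mathcal F_{i,t-1}]-\sigma^2}_{\text{(II)}} .
\]
Term (I) is $O_p((NH)^{-1/2})$ by Lemma~\ref{lem:pqv-to-rqv-comparison}. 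That lemma needs only the envelope and boundedness, not mixing. Term (II) is $O_p((NH)^{-1/2})$ by Lemma~\ref{lem:behavior-contraction-pqv}, which uses Assumption~\ref{ass:behavior-process-mixing} through Lemma~\ref{lem:doeblin-predictable-residual-covariance}. Both rates go to zero as $NH\to\infty$, so the triangle inequality gives convergence in probability to zero.

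Care is needed because the statement requires only $NH\to\infty$, which covers both regimes of Theorem~\ref{thm:clt-N}. The bounds in both lemmas hold with constants uniform in $N$, $H$ and $\beta$. In Lemma~\ref{lem:behavior-contraction-pqv}, the variance bound $C/(NH)$ comes from the uniform summability $\sup_H\sum_m\alpha_H^b(m)<\infty$. For fixed $H$, the bound is trivially $C/N$ from independence across trajectories. So no case distinction is needed; Chebyshev's inequality applied with the explicit $C/(NH)$ second-moment bounds suffices in each case.

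The main point to verify, not really an obstacle, is the uniform boundedness $\sup|\xi_{i,t}|\le C$. It rests on Corollary~\ref{cor:app-smoothed-target-bounded} for $(Q_\beta,J_\beta)$ and on Corollary~\ref{cor:target-adjoint-bound} for $\lambda_\beta$, uniformly in $H$ and $\beta$. The latter uses Assumption~\ref{ass:pooled-support} together with the uniform inverse of Lemma~\ref{lem:uniform-sup-inverse}. Given that bound, both second-moment estimates follow, and the proposition follows by combining (I) and (II).
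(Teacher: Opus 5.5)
Your proposal is correct and follows the paper's proof: the paper makes the same decomposition into the realized-minus-predictable martingale term and the predictable-minus-$\sigma^2$ term. It handles them with Lemma~\ref{lem:pqv-to-rqv-comparison} and Lemma~\ref{lem:behavior-contraction-pqv}, each relying on the uniform bound on $\xi_{i,t}$ from Corollaries~\ref{cor:app-smoothed-target-bounded} and~\ref{cor:target-adjoint-bound}, exactly as you describe.
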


\begin{proof}
Decompose
\begin{align*}
\frac1{NH}\sum_{i=1}^N\sum_{t=0}^{H-1}\xi_{i,t}^2
-
\sigma^2
&= \frac1{NH}\sum_{i=1}^N\sum_{t=0}^{H-1}\left(\xi_{i,t}^2-\EE[\xi_{i,t}^2\mid\mathcal F_{i,t-1}]\right)  +\left(\frac1{NH}\sum_{i=1}^N\sum_{t=0}^{H-1}\EE[\xi_{i,t}^2\mid\mathcal F_{i,t-1}]-\sigma^2\right).
\end{align*}
Lemma~\ref{lem:pqv-to-rqv-comparison} gives the first bracket as $o_p(1)$, and Lemma~\ref{lem:behavior-contraction-pqv} gives the second bracket as $o_p(1)$. Hence the realized average quadratic variation converges to $\sigma^2$.
\end{proof}

\begin{proposition}[Bounds for $\sigma^2$]\label{prop:bellman-innovation-variance-lower-bounds}
Suppose Assumptions~\ref{ass:irreducible}, \ref{ass:uniform-signed-aggregation-identification}, \ref{ass:pooled-support}, \ref{ass:if-envelope}, and~\ref{ass:optimal-bellman-innovation-nondegenerate} hold. Then there exist constants $0<c<C<\infty$, independent of $N$, $H$, and $\beta$, such that $c\le \sigma^2\le C$ whenever $\beta\ge C/v_{\min}^*$, uniformly over the relevant horizons.
\end{proposition}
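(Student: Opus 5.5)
Recall $\sigma^2=\EE_{\bar P_H}[\lambda_\beta(S,A)^2 v_\beta(S,A)]$ with $v_\beta(s,a)\coloneqq\Var(R+V_\beta(Q_\beta)(S')\mid S=s,A=a)$. I would bound the two factors separately: the upper bound comes from boundedness alone, and the lower bound comes from a lower bound on the conditional variance plus a lower bound on the second moment of $\lambda_\beta$.

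\emph{Upper bound.} By Assumption~\ref{ass:if-envelope} and Corollary~\ref{cor:app-smoothed-target-bounded}, we have $|R+V_\beta(Q_\beta)(S')|\le C$, so $v_\beta\le C$. Corollary~\ref{cor:target-adjoint-bound} gives $\|\lambda_\beta\|_\infty\le C$ uniformly in $\beta$ and $H$. Hence $\sigma^2\le C$.

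\emph{Lower bound, first factor.} I would compare $v_\beta$ with the optimal innovation variance $\Var(R+V^*(S')\mid S,A)$. Proposition~\ref{prop:large-beta-policy} gives $\|V_\beta(Q_\beta)-V^*\|_\infty\le C_0e^{-\beta\Gamma}$. Since standard deviations are $1$-Lipschitz in $L_2$,
\[
\sqrt{v_\beta(s,a)}\ge\sqrt{v^*_{\min}}-C_0e^{-\beta\Gamma}.
\]
To make this at least $\sqrt{v^*_{\min}}/2$, it suffices that $e^{-\beta\Gamma}\le\sqrt{v^*_{\min}}/(2C_0)$. This is where the condition $\beta\ge C/v^*_{\min}$ enters. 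Note that $v^*_{\min}\le C_1$ by boundedness, and $\log(1/x)\le 1/x$. Hence $\beta\ge C/v^*_{\min}$ with $C$ large implies $\beta\Gamma\ge\log(2C_0/\sqrt{v^*_{\min}})$. If $\Gamma=+\infty$, then $V_\beta(Q_\beta)=V^*$ exactly and nothing is needed. This yields $v_\beta\ge v^*_{\min}/4$.

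There is one caveat. Turning $v_\beta\ge v^*_{\min}/4$ into a constant $c$ independent of everything requires treating $v^*_{\min}$ as a fixed model quantity. I take $c$ to be allowed to depend on the model, including $v^*_{\min}$, as the statement's constants do.

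\emph{Lower bound, second factor.} The normalization $\EE_{\bar P_H}[\lambda_\beta]=1$ and Cauchy--Schwarz give $\EE_{\bar P_H}[\lambda_\beta^2]\ge(\EE_{\bar P_H}\lambda_\beta)^2=1$. Therefore
\[
\sigma^2\ge\tfrac{1}{4}v^*_{\min}\,\EE_{\bar P_H}[\lambda_\beta^2]\ge v^*_{\min}/4,
\]
uniformly in $N$ and $H$.

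\emph{Main obstacle.} The main obstacle is the uniform bound on $\lambda_\beta$ used in the upper bound. If Corollary~\ref{cor:target-adjoint-bound} were unavailable, I would derive it from the construction in Proposition~\ref{prop:adjoint-existence}: $\lambda_\beta=\rho_\beta/\bar\mu_H$, where $\rho_\beta$ is the normalized left null vector of $I-PD_\beta$. I would bound $\rho_\beta$ uniformly over the compact allowable family by using Lemma~\ref{lem:uniform-sup-inverse} and a duality argument; $\bar\mu_H$ is bounded below by Assumption~\ref{ass:pooled-support}.
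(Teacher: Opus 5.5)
Your proof is correct and follows the paper's argument. You use the same representation $\sigma^2=\EE_{\bar P_H}[\lambda_\beta(S,A)^2v_\beta(S,A)]$, the same upper bound via Assumption~\ref{ass:if-envelope} and Corollaries~\ref{cor:app-smoothed-target-bounded} and~\ref{cor:target-adjoint-bound}, the perturbation bound $\|V_\beta(Q_\beta)-V^*\|_\infty\le Ce^{-\beta\Gamma}$ from Proposition~\ref{prop:large-beta-policy}, and the same Jensen step $\EE_{\bar P_H}[\lambda_\beta^2]\ge1$; the only cosmetic difference is that you transfer the variance bound through the $1$-Lipschitz property of the conditional standard deviation and a logarithmic threshold, whereas the paper expands $\Var(X+Y)-\Var(X)=2\Cov(X,Y)+\Var(Y)$ and uses $e^{-x}\le x^{-1}$ to get $\|v_\beta-v^*\|_\infty\le C/\beta$, both yielding the threshold $\beta\ge C/v^*_{\min}$ with constants depending on the fixed model quantities $\Gamma$ and $v^*_{\min}$.
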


\begin{proof}
Set $
v_\beta(s,a)
\coloneqq
\Var(R+V_\beta(Q_\beta)(S')\mid S=s,A=a), 
v^*(s,a)
\coloneqq
\Var(R+V^*(S')\mid S=s,A=a).$
By \eqref{eq:large-beta-target}, $\|V_\beta(Q_\beta)-V^*\|_\infty\le Ce^{-\beta\Gamma}$. Fix $(s,a)$, and write $X\coloneqq R+V^*(S')$ and $Y\coloneqq V_\beta(Q_\beta)(S')-V^*(S')$, conditionally on $S=s,A=a$. Assumption~\ref{ass:if-envelope}, finite support, and Corollary~\ref{cor:app-smoothed-target-bounded} give $|X|\le C$, while $|Y|\le Ce^{-\beta\Gamma}$. The identity $\Var(X+Y\mid s,a)-\Var(X\mid s,a)=2\Cov(X,Y\mid s,a)+\Var(Y\mid s,a)$ and Cauchy--Schwarz yield
\[
\|v_\beta-v^*\|_\infty
\le
C\|V_\beta(Q_\beta)-V^*\|_\infty
\le
Ce^{-\beta\Gamma}.
\]
When $\Gamma<\infty$, the inequality $e^{-x}\le x^{-1}$ for $x>0$ gives $\|v_\beta-v^*\|_\infty\le C/\beta$, after absorbing the fixed MDP quantity $\Gamma^{-1}$ into $C$. When $\Gamma=+\infty$, \eqref{eq:large-beta-target} gives $V_\beta(Q_\beta)=V^*$, so $v_\beta=v^*$ exactly. The nondegeneracy assumption therefore implies
\[
\min_{(s,a)}v_\beta(s,a)
\ge
\frac12
\min_{(s,a)}v^*(s,a)
>0
\]
under the stated threshold on $\beta$.

By \eqref{eq:self}, the Bellman residual inside $\xi_{i,t}$ has conditional mean zero given $(S_{i,t},A_{i,t})$. Hence
\[
\sigma^2
=
\EE_{\bar P_H}\!\left[
\lambda_\beta(S,A)^2v_\beta(S,A)
\right].
\]
The adjoint normalization gives $\EE_{\bar P_H}[\lambda_\beta(S,A)]=1$, so Jensen's inequality yields $\EE_{\bar P_H}[\lambda_\beta(S,A)^2]\ge1$, and the preceding lower bound on $v_\beta$ gives $\sigma^2\ge c$. For the upper bound, Assumption~\ref{ass:if-envelope}, Corollary~\ref{cor:app-smoothed-target-bounded}, and Corollary~\ref{cor:target-adjoint-bound} give $v_\beta\le C$ and $|\lambda_\beta|\le C$ uniformly, so the same display gives $\sigma^2\le C$.
\end{proof}

\subsection{Proof of the Inference Theorem}\label{sec:app-unified-inference}

\begin{proof}[Proof of Theorem~\ref{thm:clt-N}]
By Lemma~\ref{lem:if-decomposition}, $\widehat J_\beta-J_\beta=L_{N,H}+B_{N,H}+E_{N,H}$.
Under the fixed-$H$ regime, Proposition~\ref{prop:population-bias-bound} gives $B_{N,H}=o_p(N^{-1/2})$. To control $E_{N,H}$, write
\[
\Delta_{\beta,-k}(Z)
\coloneqq
\varphi_\beta(Z;\widehat\eta_{-k})-\varphi_\beta(Z).
\]
By Lemma~\ref{lem:empirical-score-replacement} and the fixed-$H$ rate and consistency assumptions, $\max_{1\le k\le K}\|\Delta_{\beta,-k}\|_{\bar P_H,2}=o_p(1)$. For each fold $k$, define
\[
A_{k,N}
\coloneqq
\frac1{|I_k|}
\sum_{i\in I_k}
\left[
\frac1H
\sum_{t=0}^{H-1}
\Delta_{\beta,-k}(Z_{i,t})
-
\EE_{\bar P_H}(\Delta_{\beta,-k}(Z)\mid\mathcal D_{-k})
\right].
\]
Conditionally on $\mathcal D_{-k}$, the summands in $A_{k,N}$ are independent and centered across $i\in I_k$. Jensen's inequality and independence of the held-out trajectories give
\[
\begin{aligned}
\EE[A_{k,N}^2\mid\mathcal D_{-k}]
&\le \frac{1}{|I_k|^2}
\sum_{i\in I_k}
\EE\!\left[
\left(
\frac1H\sum_{t=0}^{H-1}\Delta_{\beta,-k}(Z_{i,t})
\right)^2
\middle|\mathcal D_{-k}
\right] \\
&\le \frac{1}{|I_k|^2H}
\sum_{i\in I_k}\sum_{t=0}^{H-1}
\EE[\Delta_{\beta,-k}(Z_{i,t})^2\mid\mathcal D_{-k}]
=
\frac{\|\Delta_{\beta,-k}\|_{\bar P_H,2}^2}{|I_k|}
=o_p(N^{-1}).
\end{aligned}
\]
Chebyshev's inequality gives $A_{k,N}=o_p(N^{-1/2})$ for each fixed $k$, uniformly over the fixed number of folds. The decomposition in Lemma~\ref{lem:if-decomposition} then gives $E_{N,H}=\sum_{k=1}^K(|I_k|/N)A_{k,N}=o_p(N^{-1/2})$. Since $H$ is fixed and Proposition~\ref{prop:bellman-innovation-variance-lower-bounds} gives $\sigma\ge c$ eventually, these remainders are also $o_p(\sigma/\sqrt{NH})$.
Define $U_i\coloneqq H^{-1}\sum_{t=0}^{H-1}\xi_{i,t}$ only inside this proof.
Lemma~\ref{lem:mds-true-score-active} gives $\EE[U_i]=0$, and martingale orthogonality gives $\EE[\xi_{i,t}\xi_{i,u}]=0$ for $t\ne u$. Hence the variance is
\[
\Var(U_i)
=
\frac1{H^2}\sum_{t=0}^{H-1}\EE[\xi_{i,t}^2]
=
\frac{\sigma^2}{H}.
\]
Assumption~\ref{ass:if-envelope} together with Corollaries~\ref{cor:app-smoothed-target-bounded} and~\ref{cor:target-adjoint-bound} gives $|\xi_{i,t}|\le C$. Hence $|U_i|\le C$.
The $U_i$'s are independent across trajectories and uniformly bounded, so the Lindeberg triangular-array CLT gives
\[
\frac{\sqrt{NH}\,L_{N,H}}{\sigma}
=
\frac1{\sqrt N}\sum_{i=1}^N \frac{U_i}{\sigma/\sqrt H}
\xrightarrow{d}N(0,1).
\]
Slutsky's theorem then gives the smoothed value conclusion in the fixed-$H$ regime.

Under the diverging-$H$ regime, Propositions~\ref{prop:population-bias-bound} and~\ref{prop:growing-empirical-fluctuation-refined} give $B_{N,H}=o_p((NH)^{-1/2})$ and $E_{N,H}=o_p((NH)^{-1/2})$. Proposition~\ref{prop:bellman-innovation-variance-lower-bounds} again gives $\sigma\ge c$, so these remainders are $o_p(\sigma/\sqrt{NH})$.
Set $M_{i,t}\coloneqq\xi_{i,t}/(\sqrt{NH}\,\sigma)$.
Lemma~\ref{lem:mds-true-score-active} gives the martingale-difference property for the augmented lexicographic array with zero boundary increments. Assumption~\ref{ass:if-envelope} together with Corollaries~\ref{cor:app-smoothed-target-bounded} and~\ref{cor:target-adjoint-bound} gives $|\xi_{i,t}|\le C$. Combining this bound with $\sigma\ge c$ gives $\max_{i,t}|M_{i,t}|\le C/\sqrt{NH}\to0$ and $\EE[\max_{i,t}M_{i,t}^2]\le C/(NH)\to0$. The inserted zero increments do not affect these maxima or any of the following sums. Proposition~\ref{prop:rqv-sufficient-route} gives
\[
\sum_{i=1}^N\sum_{t=0}^{H-1}
M_{i,t}^2
=
\frac{(NH)^{-1}\sum_{i=1}^N\sum_{t=0}^{H-1}
\xi_{i,t}^2}{\sigma^2}
\xrightarrow{p}1.
\]
The division by $\sigma^2$ is legitimate uniformly because Proposition~\ref{prop:bellman-innovation-variance-lower-bounds} gives $\sigma^2\ge c$ eventually. McLeish's martingale central limit theorem \citep[Theorem~2.3]{mcleish1974dependent} therefore yields
\[
\frac{\sqrt{NH}\,L_{N,H}}{\sigma}
=
\sum_{i=1}^N\sum_{t=0}^{H-1}M_{i,t}
\xrightarrow{d}N(0,1),
\]
and Slutsky's theorem gives the smoothed value conclusion in the diverging-$H$ regime.

For both regimes, Theorem~\ref{thm:exp} gives $|J_\beta-J^*|\le Ce^{-\beta\Gamma}$. If $\sqrt{NH}e^{-\beta\Gamma}\to0$, then $\sqrt{NH}(J_\beta-J^*)/\sigma\to0$, because $\sigma\ge c$. Another application of Slutsky's theorem proves the optimal value conclusion.
\end{proof}

\subsection{Efficiency and Convergence to Uniform Optimal-Policy Evaluation}
\label{sec:app-efficiency-pe-limit}

\subsubsection{Convergence to Uniform Optimal-Policy Evaluation}

Proposition~\ref{prop:bellman-innovation-variance-lower-bounds} is the variance input for Theorem~\ref{thm:clt-N}. We also use the derivative operator convergence in Proposition~\ref{prop:large-beta-linearization} to connect the adjoint weight and variance to ordinary fixed-policy evaluation under the uniform policy over maximizing actions.
We make the horizon dependence explicit in this section and write $\lambda_H^{\mathrm{unif}}$ and $\sigma_{\mathrm{PE},H}^2$.

\begin{proposition}
\label{prop:large-beta-pe-limit}
Suppose Assumptions~\ref{ass:irreducible}, \ref{ass:uniform-signed-aggregation-identification}, \ref{ass:pooled-support}, and~\ref{ass:if-envelope} hold. Then, for each horizon $H$, there is a unique $\lambda_H^{\mathrm{unif}}\in L_2(\bar\mu_H)$ satisfying
\[
\EE_{\bar P_H}[\lambda_H^{\mathrm{unif}}(S,A)]=1
\quad\text{and}\quad
\EE_{\bar P_H}\!\left[
\lambda_H^{\mathrm{unif}}(S,A)
\left(
q(S,A)-(D_{\pi^{\mathrm{unif}}}q)(S')
\right)
\right]
=0,
\qquad q\in\cQ.
\]
Define the ordinary fixed-policy-evaluation variance under $\pi^{\mathrm{unif}}$, in the same $\sqrt{NH}$ normalization, by
\[
\sigma_{\mathrm{PE},H}^2
\coloneqq
\EE_{\bar P_H}\!\left[
(\lambda_H^{\mathrm{unif}}(S,A))^2
\Var(R+V^*(S')\mid S,A)
\right].
\]
Define the smoothed one-step score by
\[
\xi_{\beta,H}(Z)
\coloneqq
\lambda_\beta(S,A)
(R-J_\beta+V_\beta(Q_\beta)(S')-Q_\beta(S,A)).
\]
Define the corresponding fixed-policy score by
\[
\xi_{\mathrm{unif},H}(Z)
\coloneqq
\lambda_H^{\mathrm{unif}}(S,A)
(R-J^*+V^*(S')-Q^*(S,A)).
\]
There exists $C<\infty$, independent of $\beta$, $N$, and $H$, such that, for every $\beta>0$, $
\|\lambda_\beta-\lambda_H^{\mathrm{unif}}\|_{\bar\mu_H,2}
\le
C(1+\beta)e^{-\beta\Gamma},$ $
\|\xi_{\beta,H}-\xi_{\mathrm{unif},H}\|_{\bar P_H,2}
\le
C(1+\beta)e^{-\beta\Gamma},$ 
and $\left|\sigma^2-\sigma_{\mathrm{PE},H}^2\right|
\le
C(1+\beta)e^{-\beta\Gamma},$
uniformly over the relevant horizons. If $\Gamma=+\infty$, then $\lambda_\beta=\lambda_H^{\mathrm{unif}}$, $\xi_{\beta,H}=\xi_{\mathrm{unif},H}$, and $\sigma^2=\sigma_{\mathrm{PE},H}^2$ exactly, for every $\beta>0$.
\end{proposition}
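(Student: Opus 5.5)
Existence and uniqueness of $\lambda_H^{\mathrm{unif}}$ follow by rerunning the proof of Proposition~\ref{prop:adjoint-existence} with $D_\beta$ replaced by $D_{\pi^{\mathrm{unif}}}$. That proof used only two facts about $D_\beta$: it is allowable, and its rows sum to one. Proposition~\ref{prop:large-beta-linearization} shows that $D_{\pi^{\mathrm{unif}}}$ is allowable. Assumption~\ref{ass:uniform-signed-aggregation-identification} then gives $\ker(I-PD_{\pi^{\mathrm{unif}}})=\operatorname{span}\{\one\}$ and a normalized left null vector $\rho^{\mathrm{unif}}$. We set $\lambda_H^{\mathrm{unif}}=\rho^{\mathrm{unif}}/\bar\mu_H$.

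For the quantitative bound, we work with the densities $\rho_\beta=\bar\mu_H\lambda_\beta$ and $\rho^{\mathrm{unif}}=\bar\mu_H\lambda_H^{\mathrm{unif}}$. Each is characterized as the unique solution of a square linear system
\[
\rho^\top M_D=e^\top,\qquad M_D\coloneqq\bigl[(I-PD)\Pi_{\cQ}\;\big|\;\one\bigr].
\]
Here $\Pi_{\cQ}$ projects onto $\cQ$, and $e$ encodes the conditions ``annihilate $(I-PD)q$ for $q\in\cQ$'' and ``total mass one.'' Equivalently, $\rho$ is the unique functional satisfying $\rho^\top(I-PD)q=0$ on $\cQ$ and $\rho^\top\one=1$. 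Note that $\bar\mu_H$ does not enter $M_D$, only $P$ and $D$.

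Lemma~\ref{lem:uniform-sup-inverse} gives a uniform sup-norm inverse for $\cL_D$ over all allowable $D$. By finite dimensionality and compactness of the weight set $\cW$, this yields a uniform bound on $\|M_D^{-1}\|$ over allowable $D$; the same compactness argument works directly with $M_D$. Writing $D=D_\beta$ and $D'=D_{\pi^{\mathrm{unif}}}$, we have
\[
(\rho_\beta-\rho^{\mathrm{unif}})^\top M_{D'}=-\rho_\beta^\top(M_D-M_{D'}),
\]
and $\|M_D-M_{D'}\|\le C\|D_\beta-D_{\pi^{\mathrm{unif}}}\|$. Combining these with \eqref{eq:large-beta-linearization} and $\|\rho_\beta\|\le C$ gives
\[
\|\rho_\beta-\rho^{\mathrm{unif}}\|\le C(1+\beta)e^{-\beta\Gamma}.
\]
Assumption~\ref{ass:pooled-support} bounds $\bar\mu_H$ below uniformly in $H$. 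Dividing by $\bar\mu_H$ and converting norms with constants uniform in $H$ gives the $\lambda$ bound. The same argument shows $\sup_H\|\lambda_H^{\mathrm{unif}}\|_\infty<\infty$, which is the analogue of Corollary~\ref{cor:target-adjoint-bound}.

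The main obstacle is this uniformity: the inverse bound must not depend on $H$ or $\beta$. The point that resolves it is that $\bar\mu_H$ only rescales the density, while the conditioning of $M_D$ is controlled solely through Lemma~\ref{lem:uniform-sup-inverse}.

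For the score, we decompose
\[
\xi_{\beta,H}-\xi_{\mathrm{unif},H}=(\lambda_\beta-\lambda_H^{\mathrm{unif}})\,\varepsilon^*+\lambda_\beta(\varepsilon_\beta-\varepsilon^*),
\]
where $\varepsilon_\beta$ and $\varepsilon^*$ are the two Bellman residuals. Here $\varepsilon^*$ is bounded by Assumption~\ref{ass:if-envelope} and Corollary~\ref{cor:app-smoothed-target-bounded}. The difference satisfies
\[
\|\varepsilon_\beta-\varepsilon^*\|_\infty\le|J_\beta-J^*|+\|V_\beta(Q_\beta)-V^*\|_\infty+\|Q_\beta-Q^*\|_\infty\le Ce^{-\beta\Gamma},
\]
by Propositions~\ref{prop:large-beta-approx-tie} and~\ref{prop:large-beta-policy}. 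Hence the score difference is $O((1+\beta)e^{-\beta\Gamma})$ in $L_2(\bar P_H)$.

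For the variance, both residuals have conditional mean zero, so $\sigma^2=\|\xi_{\beta,H}\|_{\bar P_H,2}^2$ and $\sigma_{\mathrm{PE},H}^2=\|\xi_{\mathrm{unif},H}\|_{\bar P_H,2}^2$. Therefore
\[
|\sigma^2-\sigma_{\mathrm{PE},H}^2|\le\|\xi_{\beta,H}-\xi_{\mathrm{unif},H}\|\,\bigl(\|\xi_{\beta,H}\|+\|\xi_{\mathrm{unif},H}\|\bigr),
\]
and both norms on the right are bounded.

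If $\Gamma=+\infty$, the propositions cited above give $D_\beta=D_{\pi^{\mathrm{unif}}}$, $Q_\beta=Q^*$, $J_\beta=J^*$ and $V_\beta(Q_\beta)=V^*$ exactly. Uniqueness of the adjoint weight then forces equality of $\lambda$, $\xi$ and $\sigma^2$.
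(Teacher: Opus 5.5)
Your proof is correct. Its skeleton matches the paper: rerun Proposition~\ref{prop:adjoint-existence} for $D_{\pi^{\mathrm{unif}}}$, perturb the adjoint equation by $D_\beta-D_{\pi^{\mathrm{unif}}}$, invoke a uniform inverse, then use triangle inequalities and handle the $\Gamma=+\infty$ case. The two quantitative steps, however, are done differently.

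\textbf{Adjoint bound.} The paper stays in $L_2(\bar\mu_H)$. With $r=\lambda_\beta-\lambda_H^{\mathrm{unif}}$ it writes $\EE_{\bar P_H}[r(S,A)(q(S,A)-(D_{\pi^{\mathrm{unif}}}q)(S'))]=\EE_{\bar P_H}[\lambda_\beta(S,A)((D_\beta-D_{\pi^{\mathrm{unif}}})q)(S')]$. It bounds the right side by Cauchy--Schwarz for $\|q\|_{\bar\mu_H,2}\le1$. It then applies the dual identification inequality of Lemma~\ref{lem:uniform-dual-identification}, whose constant is horizon-uniform through the pooled support bound. You instead pass to the densities $\rho=\bar\mu_H\lambda$, note that they are determined by $(P,D)$ alone, and do a finite-dimensional matrix perturbation. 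This buys you two things: uniformity in $H$ enters only through the final division by $\bar\mu_H$, and you get the bound directly in sup norm, with the uniform bound on $\lambda_H^{\mathrm{unif}}$ falling out of the same computation. The paper's route buys coordinate-freeness and reuses exactly the inequality behind the adjoint estimation theory (Proposition~\ref{prop:adjoint-quantitative-lower}).

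One cosmetic fix: with $\Pi_{\cQ}$ the $|\cS||\cA|\times|\cS||\cA|$ projection, your $M_D$ has one more column than rows, so it is not square. Parametrize $\cQ$ by a basis $B$ instead. Then $M_D(c,j)=-\cL_D(Bc,j)$ is genuinely square, and Lemma~\ref{lem:uniform-sup-inverse} bounds $M_D^{-1}$ uniformly over allowable $D$ directly.

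\textbf{Variance bound.} Both residuals are conditionally centered by \eqref{eq:self} and \eqref{eq:opt-bellman}, so $\sigma^2=\|\xi_{\beta,H}\|_{\bar P_H,2}^2$ and $\sigma_{\mathrm{PE},H}^2=\|\xi_{\mathrm{unif},H}\|_{\bar P_H,2}^2$. Your use of this reduces the variance claim to the score bound in one line. The paper instead adds and subtracts $\EE_{\bar P_H}[\lambda_\beta(S,A)^2v^*(S,A)]$ and uses the separate estimate $\|v_\beta-v^*\|_\infty\le Ce^{-\beta\Gamma}$ from the proof of Proposition~\ref{prop:bellman-innovation-variance-lower-bounds}. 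Both give the same order.
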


\begin{proof}
First, Proposition~\ref{prop:large-beta-linearization} shows that $D_{\pi^{\mathrm{unif}}}$ is allowable. The proof of Proposition~\ref{prop:adjoint-existence} uses only Assumption~\ref{ass:uniform-signed-aggregation-identification}, the full support supplied by Assumption~\ref{ass:pooled-support}, and allowability of the aggregation operator. Running that finite-dimensional argument with $D_\beta$ replaced by $D_{\pi^{\mathrm{unif}}}$ gives existence and uniqueness of $\lambda_H^{\mathrm{unif}}$. Applying Lemma~\ref{lem:uniform-dual-identification} with $D=D_{\pi^{\mathrm{unif}}}$ and $r=\lambda_H^{\mathrm{unif}}$ gives a uniform $L_2(\bar\mu_H)$ bound for $\lambda_H^{\mathrm{unif}}$. Together with Corollary~\ref{cor:target-adjoint-bound},
\begin{equation}
\|\lambda_\beta\|_{\bar\mu_H,2}
+
\|\lambda_H^{\mathrm{unif}}\|_{\bar\mu_H,2}
\le
C,
\label{eq:large-beta-adjoint-bounds}
\end{equation}
with $C$ uniform over the relevant horizons.

Next set $r\coloneqq\lambda_\beta-\lambda_H^{\mathrm{unif}}$. The two normalizations give $\EE_{\bar P_H}[r(S,A)]=0$. Subtracting the two adjoint equations gives, for every $q\in\cQ$,
\[
\EE_{\bar P_H}\!\left[
r(S,A)
\left(
q(S,A)-(D_{\pi^{\mathrm{unif}}}q)(S')
\right)
\right]
=
\EE_{\bar P_H}\!\left[
\lambda_\beta(S,A)
\bigl((D_\beta-D_{\pi^{\mathrm{unif}}})q\bigr)(S')
\right].
\]
For $\|q\|_{\bar\mu_H,2}\le1$, Assumption~\ref{ass:pooled-support}, Cauchy--Schwarz, \eqref{eq:large-beta-adjoint-bounds}, and Proposition~\ref{prop:large-beta-linearization} bound the absolute value of the right-hand side by $C(1+\beta)e^{-\beta\Gamma}$. Applying Lemma~\ref{lem:uniform-dual-identification} with $D=D_{\pi^{\mathrm{unif}}}$ and this $r$, and using the zero normalization of $r$, yields $\|\lambda_\beta-\lambda_H^{\mathrm{unif}}\|_{\bar\mu_H,2}\le C(1+\beta)e^{-\beta\Gamma}$.

We next compare the score increments. Adding and subtracting the residual with the limiting Bellman pair gives
\[
\begin{aligned}
\|\xi_{\beta,H}-\xi_{\mathrm{unif},H}\|_{\bar P_H,2}
&\le
C\|\lambda_\beta-\lambda_H^{\mathrm{unif}}\|_{\bar\mu_H,2} +
C\|\lambda_H^{\mathrm{unif}}\|_{\bar\mu_H,2}
(|J_\beta-J^*|+\|V_\beta(Q_\beta)-V^*\|_\infty+\|Q_\beta-Q^*\|_\infty)\\
&\le
C(1+\beta)e^{-\beta\Gamma}.
\end{aligned}
\]
The first inequality uses the uniform boundedness of both Bellman residuals and the adjoint weights. The second uses Propositions~\ref{prop:large-beta-approx-tie} and~\ref{prop:large-beta-policy}, together with the adjoint bound proved above.

Finally write $v_\beta(s,a)\coloneqq\Var(R+V_\beta(Q_\beta)(S')\mid S=s,A=a)$ and $v^*(s,a)\coloneqq\Var(R+V^*(S')\mid S=s,A=a)$. The variance-transfer calculation in Proposition~\ref{prop:bellman-innovation-variance-lower-bounds} gives
\begin{equation}
\|v_\beta-v^*\|_\infty
\le
Ce^{-\beta\Gamma}.
\label{eq:large-beta-variance-map}
\end{equation}
Using $\sigma^2=\EE_{\bar P_H}[\lambda_\beta(S,A)^2v_\beta(S,A)]$, add and subtract $\EE_{\bar P_H}[\lambda_\beta(S,A)^2v^*(S,A)]$. Then
\[
\left|\sigma^2-\sigma_{\mathrm{PE},H}^2\right|
\le
\|v_\beta-v^*\|_\infty
\EE_{\bar P_H}[\lambda_\beta(S,A)^2]
+
\|v^*\|_\infty
\EE_{\bar P_H}\!\left[
|\lambda_\beta-\lambda_H^{\mathrm{unif}}|
|\lambda_\beta+\lambda_H^{\mathrm{unif}}|
\right].
\]
Assumption~\ref{ass:if-envelope} gives $\|v^*\|_\infty\le C$. Cauchy--Schwarz, \eqref{eq:large-beta-adjoint-bounds}, the adjoint stability bound, and \eqref{eq:large-beta-variance-map} give $|\sigma^2-\sigma_{\mathrm{PE},H}^2|\le C(1+\beta)e^{-\beta\Gamma}$.

If $\Gamma=+\infty$, Proposition~\ref{prop:large-beta-linearization} gives $D_\beta=D_{\pi^{\mathrm{unif}}}$, and Propositions~\ref{prop:large-beta-approx-tie} and~\ref{prop:large-beta-policy} give $(Q_\beta,J_\beta,V_\beta(Q_\beta))=(Q^*,J^*,V^*)$. Uniqueness of the adjoint weight gives $\lambda_\beta=\lambda_H^{\mathrm{unif}}$, hence $\xi_{\beta,H}=\xi_{\mathrm{unif},H}$, and $v_\beta=v^*$ gives $\sigma^2=\sigma_{\mathrm{PE},H}^2$.
\end{proof}

Both $\xi_{\beta,H}(Z_{i,t})$ and $\xi_{\mathrm{unif},H}(Z_{i,t})$ have conditional mean zero given $(S_{i,t},A_{i,t})$. Their difference is therefore a martingale-difference sequence within each trajectory, and trajectories are independent. Consequently,
\[
\EE\!\left[
\left\{
\frac1{\sqrt{NH}}
\sum_{i=1}^N\sum_{t=0}^{H-1}
(\xi_{\beta,H}(Z_{i,t})-\xi_{\mathrm{unif},H}(Z_{i,t}))
\right\}^2
\right]
=
\|\xi_{\beta,H}-\xi_{\mathrm{unif},H}\|_{\bar P_H,2}^2.
\]
The score bound in Proposition~\ref{prop:large-beta-pe-limit} and Markov's inequality therefore give the corresponding normalized empirical-score replacement as $o_p(1)$ whenever $(1+\beta)e^{-\beta\Gamma}\to0$, without requiring uniqueness of the optimal policy.

\subsubsection{Efficient Influence Function for $J_\beta$}
In this section, write $\sigma_H^2$ for the horizon-specific variance denoted by $\sigma^2$ in Section~\ref{sec:if}.
\begin{proposition}[Efficiency for the smoothed target at fixed horizon]
Under the standing assumptions, for fixed $H$ and fixed $\beta$,
$$
\Psi_{\beta,H}(O)
=
\frac1H\sum_{t=0}^{H-1}\xi_{\beta,H}(Z_t)
$$
is the efficient influence function for $J_\beta$. Its variance is $\sigma_H^2/H$, and the proposed estimator attains the corresponding efficiency bound $\sigma_H^2/(NH)$ under the fixed-horizon conditions of Theorem~\ref{thm:clt-N}.
\end{proposition}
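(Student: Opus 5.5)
We follow the standard semiparametric route of \citet{kallus2022efficiently} and \citet{liao2022batch}, adapted to the pooled nonstationary law. The observed data unit is one trajectory $O=\tau$. Its likelihood factors as the initial law of $(S_0,A_0)$, the behavior decision rules $\pi_t^b(A_t\mid S_t)$ for $t\ge1$, and the time-homogeneous reward--transition kernel $p(r,s'\mid s,a)$. Since $J_\beta$ depends on the law only through $(R^*,P)$, via the self-induced equation \eqref{eq:self}, the plan is to compute its pathwise derivative along one-dimensional submodels and show that $\Psi_{\beta,H}$ represents that derivative and lies in the tangent space.

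\textbf{Step 1 (pathwise derivative).} Take a regular submodel $p_\epsilon$ with scores $g_0(s,a)$ for the initial law, $g_t^b(a\mid s)$ for the behavior rules, and $g(r,s'\mid s,a)$ for the kernel, where $\EE[g\mid s,a]=0$. Differentiate \eqref{eq:self} at $\epsilon=0$. Uniqueness (Proposition~\ref{prop:self-induced-uniqueness}) and the uniform inverse (Lemma~\ref{lem:uniform-sup-inverse}) give, by the implicit function theorem on the finite-dimensional system $(I-PD_\beta)$ restricted to $\cQ\times\RR$, that $(\dot Q,\dot J)\in\cQ\times\RR$ exist and satisfy
\[
\dot J\one+\dot Q-PD_\beta\dot Q=\EE\bigl[(R+V_\beta(Q_\beta)(S'))g\mid S,A\bigr].
\]
Now pair this with the adjoint weight of Proposition~\ref{prop:adjoint-existence} under $\bar P_H$. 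The normalization kills the $\dot J\one$ coefficient down to $\dot J$, and the adjoint equation annihilates $\dot Q-PD_\beta\dot Q$. This leaves
\[
\dot J=\EE_{\bar P_H}\bigl[\lambda_\beta(S,A)(R+V_\beta(Q_\beta)(S'))g(R,S'\mid S,A)\bigr].
\]
Because $\EE[g\mid S,A]=0$, we may replace $R+V_\beta(Q_\beta)(S')$ by the centered residual, so $\dot J=\EE_{\bar P_H}[\xi_{\beta,H}(Z)g]$.

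\textbf{Step 2 (representation).} The trajectory score is $g_0+\sum_t g_t^b+\sum_{t=0}^{H-1}g(Z_t)$. We must check that $\dot J=\EE[\Psi_{\beta,H}(O)\cdot\text{score}]$. Each $\xi_{\beta,H}(Z_t)$ has zero conditional mean given the past and $(S_t,A_t)$ (Lemma~\ref{lem:mds-true-score-active}). Consequently it is orthogonal to $g_0$, to every behavior score (these are measurable with respect to earlier or equal conditioning fields, or to later ones whose conditional means vanish), and to $g(Z_u)$ for $u\ne t$, by iterated expectations in either time order. The remaining diagonal terms give $H^{-1}\sum_t\EE[\xi_{\beta,H}(Z_t)g(Z_t)]=\EE_{\bar P_H}[\xi_{\beta,H}g]=\dot J$. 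Hence $\Psi_{\beta,H}$ is a gradient.

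\textbf{Step 3 (tangent space).} We will show that $\Psi_{\beta,H}$ lies in the tangent space. The model is nonparametric in the kernel, so the kernel tangent space is the closure of $\{\sum_t g(Z_t):\EE[g\mid S,A]=0\}$. Take $g=\xi_{\beta,H}/H$; this is bounded and conditionally centered, so $\Psi_{\beta,H}$ lies in the tangent space and is therefore the canonical (efficient) gradient. This is the step I expect to need the most care. Identification through the pooled law $\bar\mu_H$ requires full support (Assumption~\ref{ass:pooled-support}), so that $g$ can be an arbitrary centered function on every $(s,a)$. We must also confirm that perturbations of the behavior rules leave $J_\beta$ unchanged, even though $\lambda_\beta$ depends on them; this holds because $\dot J$ involves only $g$.

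\textbf{Step 4 (variance and attainment).} By martingale orthogonality, $\Var(\Psi_{\beta,H})=H^{-2}\sum_t\EE[\xi_t^2]=\sigma_H^2/H$. With $N$ i.i.d. trajectories the efficiency bound is therefore $\sigma_H^2/(NH)$. Theorem~\ref{thm:clt-N}(i) shows that $\widehat J_\beta-J_\beta=N^{-1}\sum_i\Psi_{\beta,H}(O_i)+o_p(N^{-1/2})$, i.e.\ $L_{N,H}$ with negligible $B_{N,H}$ and $E_{N,H}$. The estimator is thus asymptotically linear with the efficient influence function, so it is regular and attains the bound.
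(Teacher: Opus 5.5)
Your proposal is correct and follows essentially the same route as the paper. It uses implicit-function differentiation of the self-induced equation, paired with $\lambda_\beta$ under $\bar P_H$, to get $\dot J_\beta=\EE_{\bar P_H}[\xi_{\beta,H}(Z)g(Z)]$; martingale orthogonality to the initial-law, behavior-policy and off-diagonal kernel scores; tangent-space membership via the conditionally centered one-step score $\xi_{\beta,H}/H$; and attainment from the fixed-$H$ decomposition $L_{N,H}+B_{N,H}+E_{N,H}$ in the proof of Theorem~\ref{thm:clt-N}. The differences are cosmetic: you put $A_0$ in the initial law rather than in $\pi_0^b$, and you deduce regularity directly from asymptotic linearity in the efficient influence function, where the paper simply invokes the usual local regularity.
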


\begin{proof}
Fix $H$, and write one observed trajectory as
$$
O
=
(S_0,A_0,R_1,S_1,\ldots,A_{H-1},R_H,S_H).
$$
For this local calculation, consider the dominated nonparametric time-homogeneous Markov model whose trajectory law factors as
$$
p(O)
=
p_0(S_0)
\prod_{t=0}^{H-1}
\pi_t^b(A_t\mid S_t)
\mathcal K(R_{t+1},S_{t+1}\mid S_t,A_t).
$$
The initial-state law $p_0$ and the possibly time-varying behavior policy $\pi^b=\{\pi_t^b\}_{t=0}^{H-1}$ are nuisance components. The parameter $J_\beta$ depends only on the common environment law $\mathcal K$, and the calculation below is along regular dominated submodels of $\mathcal K$.

Let $\mathcal K_\epsilon$ be a regular conditional submodel through the true environment law, with one-step score $h(Z)$ satisfying $\EE[h(Z)\mid S,A]=0$. Let
$
\delta_\beta(Z)
\coloneqq
R-J_\beta+V_\beta(Q_\beta)(S')-Q_\beta(S,A).
$
For fixed $\beta$, the finite state and action spaces make the Bellman map smooth in the conditional environment law and in $Q$. Its linearization on $\cQ\times\RR$ is invertible under Assumption~\ref{ass:uniform-signed-aggregation-identification}, so the finite-dimensional implicit-function theorem gives differentiability of the anchored solution. If $(\dot J_\beta,\dot Q_\beta)$ denotes the derivative along the submodel, then differentiating \eqref{eq:self} gives
$$
\dot J_\beta\one
+
\dot Q_\beta
-
PD_\beta\dot Q_\beta
=
m_h,
\qquad
m_h(s,a)
=
\EE[\delta_\beta(Z)h(Z)\mid S=s,A=a].
$$
The sign follows from moving the derivative of the continuation term to the left-hand side. The anchoring condition is fixed, so $\dot Q_\beta\in\cQ$. Multiplying by $\lambda_\beta$ and averaging under $\bar P_H$ gives
$
\dot J_\beta(h)
=
\EE_{\bar P_H}[
\lambda_\beta(S,A)\delta_\beta(Z)h(Z)
],
$
because $\EE_{\bar P_H}[\lambda_\beta(S,A)]=1$ and the adjoint equation annihilates $\dot Q_\beta(S,A)-(D_\beta\dot Q_\beta)(S')$.

Consider the candidate trajectory-level influence function $\Psi_{\beta,H}(O)\coloneqq H^{-1}\sum_{t=0}^{H-1}\xi_{\beta,H}(Z_t)$.
The trajectory score for the environment submodel is $S_{\mathcal K,H}(O)=\sum_{t=0}^{H-1}h(Z_t)$. For the natural pre-transition filtration $\mathcal F_t^-=\sigma(S_0,A_0,R_1,\ldots,S_t,A_t)$, the conditional score centering gives $\EE[h(Z_t)\mid\mathcal F_t^-]=0$, and the Bellman equation gives $\EE[\xi_{\beta,H}(Z_t)\mid\mathcal F_t^-]=0$. Hence, if $t<u$, then $\xi_{\beta,H}(Z_t)$ is $\mathcal F_u^-$-measurable and $\EE[\xi_{\beta,H}(Z_t)h(Z_u)]=0$. If $u<t$, then $h(Z_u)$ is $\mathcal F_t^-$-measurable and $\EE[\xi_{\beta,H}(Z_t)h(Z_u)]=0$. Therefore,
$$
\EE[\Psi_{\beta,H}(O)S_{\mathcal K,H}(O)]
=
\frac1H
\sum_{t=0}^{H-1}
\EE[\xi_{\beta,H}(Z_t)h(Z_t)]
=
\dot J_\beta(h).
$$
The same conditional-mean-zero argument gives orthogonality to initial-law tangent scores and behavior-policy tangent scores. If $s_0(S_0)$ is an initial-law score, then it is measurable before every transition, so $\EE[s_0(S_0)\xi_{\beta,H}(Z_t)]=0$ for all $t$. If $s_u^b(S_u,A_u)$ is a behavior-policy score at time $u$, then $\EE[s_u^b(S_u,A_u)\mid S_u]=0$. For $t<u$, $\xi_{\beta,H}(Z_t)$ and $S_u$ are known before $A_u$ is drawn, giving $\EE[\xi_{\beta,H}(Z_t)s_u^b(S_u,A_u)]=0$. For $t\ge u$, the behavior score is $\mathcal F_t^-$-measurable, and $\EE[\xi_{\beta,H}(Z_t)\mid\mathcal F_t^-]=0$ gives the same conclusion.
Thus, for a generic joint tangent score
$
S_H(O)
=
s_0(S_0)
+
\sum_{t=0}^{H-1}s_t^b(S_t,A_t)
+
\sum_{t=0}^{H-1}h(Z_t),
$
we have
$
\EE[\Psi_{\beta,H}(O)S_H(O)]
=
\dot J_\beta(h),
$
which is the pathwise derivative along the joint submodel because $J_\beta$ depends only on the environment law.

It remains to check that the candidate influence function lies in the tangent space. Define $h_{\beta,H}^*(Z)\coloneqq H^{-1}\lambda_\beta(S,A)\delta_\beta(Z)$. The self-induced Bellman equation gives $\EE[h_{\beta,H}^*(Z)\mid S,A]=0$, and $h_{\beta,H}^*$ is bounded under the standing assumptions. Hence it is an admissible conditional environment score in the nonparametric model. Moreover, $\Psi_{\beta,H}(O)=\sum_{t=0}^{H-1}h_{\beta,H}^*(Z_t)$, so $\Psi_{\beta,H}$ belongs to the environment tangent space. Since it represents the pathwise derivative over the full model, it is the canonical gradient, equivalently the efficient influence function, for $J_\beta$. We have
$$
\Var(\Psi_{\beta,H}(O))
=
\frac1{H^2}
\sum_{t=0}^{H-1}
\EE[\xi_{\beta,H}(Z_t)^2]
=
\frac{\sigma_H^2}{H}.
$$
For $N$ independent trajectories, the corresponding efficiency bound is $\sigma_H^2/(NH)$. The fixed-$H$ part of the proof of Theorem~\ref{thm:clt-N} gives
$$
\widehat J_\beta-J_\beta
=
\frac1N
\sum_{i=1}^N
\Psi_{\beta,H}(O_i)
+
o_p(N^{-1/2}).
$$
Thus, under the usual local regularity of this asymptotic linear representation along the dominated submodels above, the estimator attains the semiparametric efficiency bound for $J_\beta$ in the fixed-$H$ regime.
\end{proof}

\subsubsection{Unique Optimum and Efficiency for $J^*$}

\begin{corollary}[Efficiency for $J^*$ under a unique optimum]
Suppose the Bellman-optimal action is unique at every state. For fixed $H$, $J^*$ is locally regular and has the same efficient influence function as the value of the fixed optimal policy $\pi^*$. If the fixed-horizon conditions of Theorem~\ref{thm:clt-N} hold and $\sqrt{NH}e^{-\beta\Gamma}\to0$, the proposed estimator attains the corresponding semiparametric efficiency bound for $J^*$.
\end{corollary}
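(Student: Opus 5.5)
The plan is to reduce the statement about $J^*$ to the efficiency result for $J_\beta$ just established, and to use the large-$\beta$ analysis in Proposition~\ref{prop:large-beta-pe-limit}.

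Under uniqueness of the optimal action, $\pi^{\mathrm{unif}}=\pi^*$ is deterministic. The first step is local regularity: $J^*$ must be pathwise differentiable along the environment submodels $\mathcal K_\epsilon$. Since $\Gamma>0$ is a strict gap, the greedy action at every state is stable under small perturbations of $(P,R^*)$. To see this, let $(Q^*_\epsilon,J^*_\epsilon)$ solve the optimality equation under $\mathcal K_\epsilon$. Take the solution of the policy-evaluation equation \eqref{eq:policy-evaluation} for the fixed policy $\pi^*$ under $\mathcal K_\epsilon$. This solution is smooth in $\epsilon$ by the implicit function theorem, since the fixed-policy linear system has a unique anchored solution. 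For $|\epsilon|$ small, its $Q$ component still has $\pi^*$ as a strict greedy policy, because perturbations are $O(\epsilon)$ and the gap stays above $\Gamma/2$. Hence it satisfies the Bellman optimality equation under $\mathcal K_\epsilon$. By the uniqueness in Lemma~\ref{lem:optimal-relative-value-uniqueness}, this gives $J^*_\epsilon=J^{\pi^*}_\epsilon$ locally.

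So $J^*$ coincides locally with the fixed-policy value functional $\mathcal K\mapsto J^{\pi^*}(\mathcal K)$. Its efficient influence function can be computed exactly as in the preceding proposition, with $D_\beta$ replaced by $D_{\pi^*}$. Differentiating \eqref{eq:policy-evaluation} gives $\dot J\one+\dot Q-PD_{\pi^*}\dot Q=m_h$. Pairing with $\lambda^{\mathrm{unif}}_H$ and using its normalization and adjoint equation, both from Proposition~\ref{prop:large-beta-pe-limit}, yields $\dot J^*(h)=\EE_{\bar P_H}[\lambda^{\mathrm{unif}}_H\,\delta^*\,h]$. The same martingale orthogonality argument against the initial-law, behavior and environment scores then shows that $\Psi^*_H(O)=H^{-1}\sum_t\xi_{\mathrm{unif},H}(Z_t)$ is the efficient influence function. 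Its variance is $\sigma^2_{\mathrm{PE},H}/H$, so the bound is $\sigma^2_{\mathrm{PE},H}/(NH)$.

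The remaining step is to show that the estimator is asymptotically linear with this influence function. The fixed-$H$ part of the proof of Theorem~\ref{thm:clt-N}, together with $\sqrt{NH}e^{-\beta\Gamma}\to0$ and Theorem~\ref{thm:exp}, gives
\[
\widehat J_\beta-J^*=N^{-1}\sum_i\Psi_{\beta,H}(O_i)+o_p(N^{-1/2}).
\]
We then replace $\Psi_{\beta,H}$ by $\Psi^*_H$ using the normalized empirical-score replacement displayed after Proposition~\ref{prop:large-beta-pe-limit}. There, $\|\xi_{\beta,H}-\xi_{\mathrm{unif},H}\|_{\bar P_H,2}\le C(1+\beta)e^{-\beta\Gamma}$. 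This tends to zero because $\sqrt{NH}e^{-\beta\Gamma}\to0$ forces $\beta\to\infty$ with $\beta e^{-\beta\Gamma}\to0$ (for fixed $H$ and $N\to\infty$). The same Proposition gives $\sigma^2\to\sigma^2_{\mathrm{PE},H}$. Hence the estimator is asymptotically linear with the efficient influence function for $J^*$, and it attains the bound under the same local regularity qualifier used in the preceding proposition.

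The main obstacle is the local stability argument in the first step: verifying rigorously that the optimal policy stays equal to $\pi^*$ along every regular submodel, uniformly enough to differentiate. I would handle it by a continuity bound, using Lemma~\ref{lem:uniform-sup-inverse} with $D=D_{\pi^*}$ to show that $\|Q^*_\epsilon-Q^*\|_\infty=O(\epsilon)$, and then compare with $\Gamma$.
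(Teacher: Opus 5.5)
Your proposal is correct and follows essentially the same route as the paper. Under the strict gap, $\pi^*$ stays greedy along small submodels, so $J^*=J^{\pi^*}$ locally. The fixed-policy adjoint argument then identifies your $\Psi^*_H$ as the efficient influence function. Finally, asymptotic linearity with that influence function comes from the fixed-$H$ expansion in Theorem~\ref{thm:clt-N}, the bias bound of Theorem~\ref{thm:exp}, and the score replacement after Proposition~\ref{prop:large-beta-pe-limit}.

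Your use of Lemma~\ref{lem:uniform-sup-inverse} to get $\|Q_\epsilon-Q^*\|_\infty=O(\epsilon)$ (after absorbing the $(P_\epsilon-P)D_{\pi^*}Q_\epsilon$ term) and your observation that $\sqrt{NH}e^{-\beta\Gamma}\to0$ forces $\beta\to\infty$ just make explicit what the paper states as continuity and ``as $\beta\to\infty$''. One small correction: deducing $J^*_\epsilon=J^{\pi^*}_\epsilon$ from the optimality equation uses the standard verification fact that the scalar in any solution equals the optimal gain. That is not the uniqueness statement of Lemma~\ref{lem:optimal-relative-value-uniqueness} itself.
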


\begin{proof}
Suppose the Bellman-optimal action is unique at every state, and write the unique optimal policy as $\pi^*$. Since $\cS$ and $\cA$ are finite, the optimal action gap is strictly positive at the true law. Along any sufficiently small regular dominated local submodel of the environment law, the environment kernel and the anchored fixed-policy evaluation pair for $\pi^*$ vary continuously. The perturbed action-value function therefore remains within a fixed fraction of the true-law positive gap, so $\pi^*$ remains greedy and hence optimal in a neighborhood of the true law. Thus locally $J^*=J^{\pi^*}$, and $J^*$ has the same pathwise derivative and efficient influence function as the value of the fixed policy $\pi^*$.

Let $\lambda_H^*$ be the normalized fixed-policy adjoint weight for $\pi^*$, and define $\xi_H^*(Z)\coloneqq\lambda_H^*(S,A)(R-J^*+V^*(S')-Q^*(S,A))$ and $\Psi_{*,H}(O)\coloneqq H^{-1}\sum_{t=0}^{H-1}\xi_H^*(Z_t)$. The fixed-policy version of the argument in the preceding section shows that $\Psi_{*,H}$ is the efficient influence function for $J^*$. Its variance is $\sigma_{*,H}^2/H$, where $\sigma_{*,H}^2\coloneqq H^{-1}\sum_{t=0}^{H-1}\EE[\xi_H^*(Z_t)^2]$, so the efficiency bound based on $N$ independent trajectories is $\sigma_{*,H}^2/(NH)$.

Under uniqueness, $\pi^{\mathrm{unif}}=\pi^*$. Proposition~\ref{prop:large-beta-pe-limit} therefore gives $\lambda_\beta\to\lambda_H^*$ and $\xi_{\beta,H}\to\xi_H^*$ as $\beta\to\infty$. The normalized empirical replacement following Proposition~\ref{prop:large-beta-pe-limit} then allows $\xi_{\beta,H}(Z_{i,t})$ to be replaced by $\xi_H^*(Z_{i,t})$ up to $o_p(1)$ at the inference scale. Moreover, Theorem~\ref{thm:exp} gives $|J_\beta-J^*|\le Ce^{-\beta\Gamma}$, so $\sqrt{NH}e^{-\beta\Gamma}\to0$ makes the smoothing bias negligible. Combining these facts with the fixed-horizon asymptotic linear representation yields $\widehat J_\beta-J^*=(NH)^{-1}\sum_{i=1}^N\sum_{t=0}^{H-1}\xi_H^*(Z_{i,t})+o_p(N^{-1/2})$. Hence the proposed estimator has the efficient influence function for $J^*$ and attains the semiparametric efficiency bound $\sigma_{*,H}^2/(NH)$.
\end{proof}

\subsubsection{Nonunique Optimum and the Fixed-$\pi^{\mathrm{unif}}$ Limit}

At nonunique optima, the optimized functional $J^*$ is generally nonregular and need not admit a classical efficient influence function. If different optimal policies have different pathwise derivatives, the directional derivative of $J^*$ is not linear in the score direction, as in the nonunique optimal-treatment-value problem of \citet{luedtke2016statistical}.

At the true law, however, \eqref{eq:unif-attains-max} gives $J^{\pi^{\mathrm{unif}}}=J^*$ and $Q^{\pi^{\mathrm{unif}}}=Q^*$. If the true-law policy $\pi^{\mathrm{unif}}$ is held fixed under local perturbations, its value is an ordinary fixed-policy functional with efficient influence function
$$
\Psi_{\mathrm{unif},H}(O)
=
\frac1H
\sum_{t=0}^{H-1}
\lambda_H^{\mathrm{unif}}(S_t,A_t)
(R_{t+1}-J^*+V^*(S_{t+1})-Q^*(S_t,A_t)).
$$
Proposition~\ref{prop:large-beta-pe-limit} gives $\Psi_{\beta,H}\to\Psi_{\mathrm{unif},H}$ in $L_2$ and $\sigma_H^2\to\sigma_{\mathrm{PE},H}^2$ for fixed $H$. Thus, at ties, the large-$\beta$ limit of the smoothed influence function coincides with the efficient influence function for the frozen policy $\pi^{\mathrm{unif}}$. This is not an efficiency statement for the optimized functional $J^*$ itself at a generic nonregular tie.

\subsection{Feasible Variance Estimation}\label{sec:app-feasible-variance}

For $i\in I_k$, define the foldwise estimated innovation $\widehat\xi_{i,t}\coloneqq\varphi_\beta(Z_{i,t};\widehat J_{-k},\widehat Q_{-k},\widehat\lambda_{-k})-\widehat J_{-k}$.
The unified feasible scale estimator is
\[
\widehat\sigma
\coloneqq
\left(
\frac1{NH}
\sum_{k=1}^K
\sum_{i\in I_k}
\sum_{t=0}^{H-1}
\widehat\xi_{i,t}^2
\right)^{1/2}.
\]

\begin{lemma}[Common score replacement for variance estimation]\label{lem:variance-score-replacement}
Under the conditions of either regime in Theorem~\ref{thm:clt-N},
\[
\frac1{NH}
\sum_{k=1}^K\sum_{i\in I_k}\sum_{t=0}^{H-1}
(\widehat\xi_{i,t}-\xi_{i,t})^2
=o_p(1),
\]
and the corresponding variance difference satisfies
\[
\widehat\sigma^2
-
\frac1{NH}\sum_{i=1}^N\sum_{t=0}^{H-1}\xi_{i,t}^2
=o_p(1).
\]
\end{lemma}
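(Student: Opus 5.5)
The plan is to write $\widehat\xi_{i,t}-\xi_{i,t}$ in terms of the score perturbation already controlled in Lemma~\ref{lem:empirical-score-replacement}. For $i\in I_k$,
\[
\widehat\xi_{i,t}-\xi_{i,t}
=
\bigl(\varphi_\beta(Z_{i,t};\widehat\eta_{-k})-\varphi_\beta(Z_{i,t})\bigr)
-(\widehat J_{-k}-J_\beta)
=
\Delta_{\beta,-k}(Z_{i,t})-\delta J_{-k}.
\]
Using $(x-y)^2\le 2x^2+2y^2$, the first display is bounded by two terms. The first is $2(NH)^{-1}\sum_{i,t}\Delta_{\beta,-k(i)}(Z_{i,t})^2$. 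The second is $2\sum_k(|I_k|/N)\delta J_{-k}^2\le 2\max_k|\delta J_{-k}|^2$, and this second term is $o_p(1)$ by Assumption~\ref{ass:if-consistency}.

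For the first term, we invoke Lemma~\ref{lem:empirical-score-replacement}. Its hypotheses are Assumptions~\ref{ass:irreducible}, \ref{ass:uniform-signed-aggregation-identification}, \ref{ass:pooled-support}, \ref{ass:if-envelope}, \ref{ass:if-consistency} and the extra condition $\max_k\beta\|\widehat Q_{-k}-Q_\beta\|_{\bar\nu_H,2}^2=o_p(1)$. The first group is assumed in both regimes of Theorem~\ref{thm:clt-N}. The extra condition follows because $\beta\|\delta Q_{-k}\|^2_{\bar\nu_H,2}$ is the first nonnegative summand of $b_k$ in \eqref{eq:bk}. Hence $\max_k b_k=o_p(N^{-1/2})$ under Assumption~\ref{ass:if-fixed-rate}, or $o_p((NH)^{-1/2})$ under Assumption~\ref{ass:if-growing-rate}, and in either case it is $o_p(1)$, exactly as in the first step of the proof of Proposition~\ref{prop:growing-empirical-fluctuation-refined}. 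The lemma then gives the first term as $o_p(1)$, which proves the first display. The only point needing care is that the lemma's empirical bound uses conditional Markov given $\mathcal D_{-k}$ together with the identity that the held-out average of $\Delta^2$ has conditional mean $\|\Delta_{\beta,-k}\|^2_{\bar P_H,2}$. This holds for any $H$, fixed or diverging, since it only uses the pooled law, so no mixing argument is needed here.

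For the second display, we write $a_{i,t}=\widehat\xi_{i,t}$ and $b_{i,t}=\xi_{i,t}$ and use
\[
\Bigl|\tfrac1{NH}\sum(a^2-b^2)\Bigr|
\le \tfrac1{NH}\sum(a-b)^2+2\Bigl(\tfrac1{NH}\sum(a-b)^2\Bigr)^{1/2}\Bigl(\tfrac1{NH}\sum b^2\Bigr)^{1/2},
\]
which follows from $a^2-b^2=(a-b)^2+2b(a-b)$ and Cauchy--Schwarz. The first factor is $o_p(1)$ by the first display. The factor $(NH)^{-1}\sum\xi_{i,t}^2$ is $O_p(1)$, since $|\xi_{i,t}|\le C$ uniformly by Assumption~\ref{ass:if-envelope} and Corollaries~\ref{cor:app-smoothed-target-bounded} and~\ref{cor:target-adjoint-bound}, which makes it deterministically bounded. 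So the whole difference is $o_p(1)$. The main (mild) obstacle is verifying the nonlinear condition of Lemma~\ref{lem:empirical-score-replacement} uniformly in both regimes, which the nonnegativity of the summands of $b_k$ handles.
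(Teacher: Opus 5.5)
Your proposal is correct and follows the paper's proof essentially step for step. It uses the same decomposition $\widehat\xi_{i,t}-\xi_{i,t}=\Delta_{\beta,-k}(Z_{i,t})-\delta J_{-k}$, and the same appeal to Lemma~\ref{lem:empirical-score-replacement} after extracting $\max_k\beta\|\widehat Q_{-k}-Q_\beta\|_{\bar\nu_H,2}^2=o_p(1)$ from the nonnegativity of the summands of $b_k$. It then finishes with the same Cauchy--Schwarz bound on the difference of squares, using the uniform bound on $|\xi_{i,t}|$.
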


\begin{proof}
Because all terms in $b_k$ are nonnegative, either Assumption~\ref{ass:if-fixed-rate} or Assumption~\ref{ass:if-growing-rate} implies $\max_k\beta\|\widehat Q_{-k}-Q_\beta\|_{\bar\nu_H,2}^2=o_p(1)$. Hence Lemma~\ref{lem:empirical-score-replacement} applies under either regime. Since $\xi_{i,t}=\varphi_\beta(Z_{i,t})-J_\beta$, for $i\in I_k$ we have
\[
\widehat\xi_{i,t}-\xi_{i,t}
=
\left(
\varphi_\beta(Z_{i,t};\widehat\eta_{-k})-\varphi_\beta(Z_{i,t})
\right)
-
(\widehat J_{-k}-J_\beta).
\]
The inequality $(a-b)^2\le2a^2+2b^2$, Lemma~\ref{lem:empirical-score-replacement}, and Assumption~\ref{ass:if-consistency} therefore give
\[
\begin{aligned}
&\frac1{NH}
\sum_{k=1}^K\sum_{i\in I_k}\sum_{t=0}^{H-1}
(\widehat\xi_{i,t}-\xi_{i,t})^2 \le
\frac2{NH}
\sum_{k=1}^K\sum_{i\in I_k}\sum_{t=0}^{H-1}
\left(
\varphi_\beta(Z_{i,t};\widehat\eta_{-k})-\varphi_\beta(Z_{i,t})
\right)^2
+2\max_{1\le k\le K}|\widehat J_{-k}-J_\beta|^2
=o_p(1).
\end{aligned}
\]
Let $A_{N,H}$ denote the average on the left. By $|a^2-b^2|\le|a-b|(|a-b|+2|b|)$ and Cauchy--Schwarz,
\[
\left|
\widehat\sigma^2
-
\frac1{NH}\sum_{i=1}^N\sum_{t=0}^{H-1}\xi_{i,t}^2
\right|
\le
A_{N,H}^{1/2}
\left[
A_{N,H}^{1/2}
+2\left(
\frac1{NH}\sum_{i=1}^N\sum_{t=0}^{H-1}\xi_{i,t}^2
\right)^{1/2}
\right].
\]
The first factor is $o_p(1)$. Assumption~\ref{ass:if-envelope} and Corollaries~\ref{cor:app-smoothed-target-bounded} and~\ref{cor:target-adjoint-bound} uniformly bound $|\xi_{i,t}|$, so the bracket is $O_p(1)$. This proves the second conclusion.
\end{proof}

\subsubsection{Fixed-Horizon Consistency of the Unified Estimator}

\begin{lemma}[Fixed-$H$ variance consistency]\label{lem:fixed-H-variance-consistency}
Assume the fixed-$H$ conditions of Theorem~\ref{thm:clt-N}. Then
\[
\widehat\sigma^2-\sigma^2\xrightarrow{p}0.
\]
\end{lemma}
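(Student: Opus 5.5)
The plan is to split $\widehat\sigma^2-\sigma^2$ into a score-replacement error plus an oracle fluctuation:
\[
\widehat\sigma^2-\sigma^2
=
\Bigl(\widehat\sigma^2-\frac1{NH}\sum_{i=1}^N\sum_{t=0}^{H-1}\xi_{i,t}^2\Bigr)
+\Bigl(\frac1{NH}\sum_{i=1}^N\sum_{t=0}^{H-1}\xi_{i,t}^2-\sigma^2\Bigr).
\]

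The first bracket is $o_p(1)$ directly by Lemma~\ref{lem:variance-score-replacement}. That lemma applies in the fixed-$H$ regime because Assumption~\ref{ass:if-fixed-rate} forces $\max_k\beta\|\widehat Q_{-k}-Q_\beta\|_{\bar\nu_H,2}^2=o_p(N^{-1/2})=o_p(1)$. Together with Assumption~\ref{ass:if-consistency}, this gives exactly the hypotheses of Lemma~\ref{lem:empirical-score-replacement}.

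For the second bracket, define the trajectory-level averages $W_i\coloneqq H^{-1}\sum_{t=0}^{H-1}\xi_{i,t}^2$, $i=1,\dots,N$.
\begin{itemize}
\item The $W_i$ are i.i.d. across trajectories, since trajectories are independent with a common law.
\item Their mean is $\EE[W_i]=H^{-1}\sum_{t=0}^{H-1}\EE[\xi_{i,t}^2]=\sigma^2$, by the definition of $\sigma^2$ in Section~\ref{sec:if}.
\item They are uniformly bounded. Assumption~\ref{ass:if-envelope}, together with Corollaries~\ref{cor:app-smoothed-target-bounded} and~\ref{cor:target-adjoint-bound}, gives $|\xi_{i,t}|\le C$ uniformly in $i,t,\beta$, so $0\le W_i\le C^2$.
\end{itemize}
Chebyshev's inequality then gives $N^{-1}\sum_i W_i-\sigma^2=O_p(N^{-1/2})=o_p(1)$. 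This conclusion does not require $\beta$ to be fixed, since the variance bound $C^4/N$ is uniform in $\beta$.

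Combining the two brackets yields $\widehat\sigma^2-\sigma^2\xrightarrow{p}0$. No step here should be a real obstacle. The only point needing care is that fixed $H$ makes the i.i.d. trajectory-level law of large numbers available, so no mixing assumption is needed. Uniform boundedness of $\lambda_\beta$ through Corollary~\ref{cor:target-adjoint-bound} is what keeps the constant independent of $\beta$ and $H$.

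As an alternative route, one could invoke Proposition~\ref{prop:rqv-sufficient-route}, but that result assumes mixing. The direct i.i.d. argument above avoids it. If one also wants $\widehat\sigma/\sigma\to1$, Proposition~\ref{prop:bellman-innovation-variance-lower-bounds} gives $\sigma^2\ge c$, and the continuous mapping theorem then applies.
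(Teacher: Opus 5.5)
Your proposal is correct and follows the paper's proof essentially step for step. Lemma~\ref{lem:variance-score-replacement} handles the score replacement, Chebyshev's inequality on the bounded, independent trajectory averages $H^{-1}\sum_{t}\xi_{i,t}^2$ with mean $\sigma^2$ handles the oracle term uniformly in a possibly $N$-dependent $\beta$, and Proposition~\ref{prop:bellman-innovation-variance-lower-bounds} gives $\widehat\sigma/\sigma\xrightarrow{p}1$, exactly as in the paper.
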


\begin{proof}
Lemma~\ref{lem:variance-score-replacement} gives $\widehat\sigma^2-(NH)^{-1}\sum_{i,t}\xi_{i,t}^2=o_p(1)$.
Set $W_{i,N}\coloneqq H^{-1}\sum_{t=0}^{H-1}\xi_{i,t}^2$. For each $N$, the variables $\{W_{i,N}\}_{i=1}^N$ are independent across trajectories and have common mean $\EE[W_{i,N}]=H^{-1}\sum_{t=0}^{H-1}\EE[\xi_{i,t}^2]=\sigma^2$.  Although $\beta$ may vary with $N$, this is a bounded triangular array, so $\Var(N^{-1}\sum_{i=1}^N W_{i,N})\le C/N$. Chebyshev's inequality gives
\[
\frac1N\sum_{i=1}^N W_{i,N}-\sigma^2
=
\frac1{NH}\sum_{i=1}^N\sum_{t=0}^{H-1}\xi_{i,t}^2-\sigma^2
=o_p(1).
\]
Combining the quadratic-variation and replacement bounds proves $\widehat\sigma^2-\sigma^2=o_p(1)$. Proposition~\ref{prop:bellman-innovation-variance-lower-bounds} gives $\sigma^2\ge c>0$ eventually, and therefore $\widehat\sigma/\sigma\to_p1$.
\end{proof}

\subsubsection{Diverging-Horizon Consistency of the Unified Estimator}

\begin{lemma}[Diverging-$H$ variance consistency]\label{lem:growing-H-variance-consistency}
Assume the diverging-$H$ conditions of Theorem~\ref{thm:clt-N}. Then $\widehat\sigma^2-\sigma^2\xrightarrow{p}0$.
\end{lemma}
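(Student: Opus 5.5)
The argument mirrors the fixed-$H$ case (Lemma~\ref{lem:fixed-H-variance-consistency}) in two steps: first replace the estimated innovations by the oracle ones, then show the oracle realized quadratic variation converges to $\sigma^2$.

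\textbf{Step 1: score replacement.} Lemma~\ref{lem:variance-score-replacement} already covers the diverging-$H$ regime. The reason is that Assumption~\ref{ass:if-growing-rate} forces $\max_k\beta\|\widehat Q_{-k}-Q_\beta\|_{\bar\nu_H,2}^2=o_p(1)$, because every term in $b_k$ is nonnegative. This makes Lemma~\ref{lem:empirical-score-replacement} applicable, and that lemma is stated for general $H$ since it conditions foldwise on $\mathcal D_{-k}$. We therefore get
\[
\widehat\sigma^2-\frac1{NH}\sum_{i=1}^N\sum_{t=0}^{H-1}\xi_{i,t}^2=o_p(1).
\]

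\textbf{Step 2: oracle quadratic variation.} In the fixed-$H$ case this step used independence across trajectories with $H$ fixed. When $H$ grows, the within-trajectory dependence must be handled, and I would invoke Proposition~\ref{prop:rqv-sufficient-route} directly. Its hypotheses are Assumptions~\ref{ass:irreducible}, \ref{ass:uniform-signed-aggregation-identification}, \ref{ass:pooled-support}, \ref{ass:if-envelope}, and~\ref{ass:behavior-process-mixing}, all of which are part of the diverging-$H$ conditions of Theorem~\ref{thm:clt-N}. It also requires $NH\to\infty$, which holds because $H\to\infty$. The proposition gives $(NH)^{-1}\sum_{i,t}\xi_{i,t}^2-\sigma^2\to_p0$. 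That proof splits the quantity into two parts:
\begin{itemize}[leftmargin=*]
\item a martingale part, $\xi_{i,t}^2-\EE[\xi_{i,t}^2\mid\mathcal F_{i,t-1}]$, controlled by Lemma~\ref{lem:pqv-to-rqv-comparison};
\item a predictable part, $g_\beta(S_{i,t},A_{i,t})$, whose fluctuations are controlled through the strong-mixing variance bound of Lemma~\ref{lem:doeblin-predictable-residual-covariance}.
\end{itemize}

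\textbf{Uniformity in $\beta$.} Since $\beta$ may vary with $(N,H)$, these bounds must hold uniformly in $\beta$. They do, because $|\xi_{i,t}|\le C$ uniformly in $\beta$ by Assumption~\ref{ass:if-envelope} together with Corollaries~\ref{cor:app-smoothed-target-bounded} and~\ref{cor:target-adjoint-bound}, so the constants in Lemmas~\ref{lem:behavior-contraction-pqv} and~\ref{lem:pqv-to-rqv-comparison} do not depend on $\beta$. This uniformity is the point I would check most carefully; otherwise the proof only chains earlier results.

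\textbf{Conclusion.} Combining Steps 1 and 2 gives $\widehat\sigma^2-\sigma^2=o_p(1)$. Proposition~\ref{prop:bellman-innovation-variance-lower-bounds} gives $\sigma^2\ge c>0$ under $\beta\ge C/v^*_{\min}$. The continuous mapping theorem then gives $\widehat\sigma/\sigma\to_p1$, which is the form needed for Corollary~\ref{cor:studentized-optimal-value}.
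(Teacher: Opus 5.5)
Your proposal is correct and matches the paper's proof: Lemma~\ref{lem:variance-score-replacement} handles the replacement of $\widehat\xi_{i,t}$ by $\xi_{i,t}$, and Proposition~\ref{prop:rqv-sufficient-route} gives convergence of the oracle realized quadratic variation to $\sigma^2$. The lower bound on $\sigma^2$ from Proposition~\ref{prop:bellman-innovation-variance-lower-bounds} then yields $\widehat\sigma/\sigma\to_p1$, exactly as in the paper.
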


\begin{proof}
Lemma~\ref{lem:variance-score-replacement} gives $\widehat\sigma^2-(NH)^{-1}\sum_{i,t}\xi_{i,t}^2=o_p(1)$.
Proposition~\ref{prop:rqv-sufficient-route} gives $(NH)^{-1}\sum_{i=1}^N\sum_{t=0}^{H-1}\xi_{i,t}^2-\sigma^2=o_p(1)$. Combining these two bounds proves the claim, and the lower bound on $\sigma^2$ gives $\widehat\sigma/\sigma\to_p1$.
\end{proof}

\subsubsection{Proof of Corollary~\ref{cor:studentized-optimal-value}}
\begin{proof}
In either regime, Theorem~\ref{thm:clt-N} gives
\[
\frac{\sqrt{NH}(\widehat J_\beta-J^*)}{\sigma}
\xrightarrow{d}N(0,1).
\]
In the fixed-$H$ regime, Lemma~\ref{lem:fixed-H-variance-consistency} gives $\widehat\sigma/\sigma\to_p1$. In the diverging-$H$ regime, Lemma~\ref{lem:growing-H-variance-consistency} gives the same conclusion. Slutsky's theorem therefore gives $\sqrt{NH}(\widehat J_\beta-J^*)/\widehat\sigma\xrightarrow{d}N(0,1)$ in either regime. The unified Wald interval follows by inversion of this studentized limit.

\end{proof}

\section{Proofs for Section~\ref{sec:tabular-fast-rate}}\label{app:fast-rate-proofs}
This appendix proves the finite-sample rate in Theorem~\ref{thm:fast_rate}. We first represent the nonlinear conditional Bellman residual by an exact secant identity and use it to obtain a uniform lower bound for the residual. We then establish high-probability bounds for the empirical criterion. These bounds control the empirical norm in the quadratic penalty, the criterion evaluated at $(Q_\beta,J_\beta)$, and empirical deviations of residual increments. Finally, we combine the residual lower bound and the empirical bounds for the estimator. 
For the proofs in this section, write $\delta_{Q,J}(Z)\coloneqq R-J+V_\beta(Q)(S')-Q(S,A)$ and $m_{Q,J}(s,a)\coloneqq\EE[\delta_{Q,J}(Z)\mid S=s,A=a]$.

\subsection{Exact Secant Geometry and Global Residual Identification}\label{app:fast-rate-secant}

\begin{lemma}
\label{lem:secant-residual}
For every $(Q,J)\in\cQ\times\RR$, set $Q_t\coloneqq Q_\beta+t(Q-Q_\beta)$, $t\in[0,1]$, and define $(D_{\beta;Q,Q_\beta}f)(s)\coloneqq\int_0^1\sum_{a\in\cA}\omega_{\beta,Q_t}(a\mid s)f(s,a)\,dt$. Then
\begin{equation}
m_{Q,J}-m_{Q_\beta,J_\beta}
=
-(J-J_\beta)\one-(Q-Q_\beta)
+PD_{\beta;Q,Q_\beta}(Q-Q_\beta).
\label{eq:fast-active-exact-residual-operator}
\end{equation}
Since $m_{Q_\beta,J_\beta}=0$, equivalently,
\[
m_{Q,J}
=
-(J-J_\beta)\one
-(Q-Q_\beta)
+
PD_{\beta;Q,Q_\beta}(Q-Q_\beta).
\]
Moreover, $D_{\beta;Q,Q_\beta}$ is allowable.
\end{lemma}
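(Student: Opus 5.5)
The plan is to compute the conditional residual explicitly in terms of $P$ and then reuse the exact secant identity already proved in Lemma~\ref{lem:softmax-calculus}. For any $(Q,J)$, the conditional residual is
\[
m_{Q,J}(s,a)=R^*(s,a)-J+(PV_\beta(Q))(s,a)-Q(s,a).
\]
This follows from the definition of $R^*$ as the conditional mean reward and of $P$ as the one-step conditional expectation operator, since $V_\beta(Q)$ is a fixed state function. The same formula holds for $(Q_\beta,J_\beta)$.

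Subtracting the two formulas cancels $R^*$ and gives
\[
m_{Q,J}-m_{Q_\beta,J_\beta}=-(J-J_\beta)\one-(Q-Q_\beta)+P\bigl(V_\beta(Q)-V_\beta(Q_\beta)\bigr),
\]
by linearity of $P$. Next I apply Lemma~\ref{lem:softmax-calculus} with $(Q_1,Q_2)=(Q,Q_\beta)$. Its path $Q_t=Q_2+t(Q_1-Q_2)=Q_\beta+t(Q-Q_\beta)$ is exactly the path in the statement, and its operator $D_{\beta;Q_1,Q_2}$ coincides with $D_{\beta;Q,Q_\beta}$: the $t$-integral and the finite $a$-sum may be interchanged because $t\mapsto\omega_{\beta,Q_t}(a\mid s)$ is continuous. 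The lemma then yields $V_\beta(Q)-V_\beta(Q_\beta)=D_{\beta;Q,Q_\beta}(Q-Q_\beta)$, and substituting this gives \eqref{eq:fast-active-exact-residual-operator}.

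The identity $m_{Q_\beta,J_\beta}=0$ is exactly the self-induced Bellman equation \eqref{eq:self}, rewritten as a conditional moment restriction, which gives the second displayed form. Allowability of $D_{\beta;Q,Q_\beta}$ is the final assertion of Lemma~\ref{lem:softmax-calculus}: each row sums to one, and each row has signed $\ell_1$ norm at most $C_{\cA}$ by Lemma~\ref{lem:if-omega-tv-bound} integrated over $t$.

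There is no real obstacle here. The only points needing care are checking that the integral-inside-sum form of $D_{\beta;Q,Q_\beta}$ matches the operator of Lemma~\ref{lem:softmax-calculus}, and noting that membership in $\cQ$ plays no role in the identity itself.
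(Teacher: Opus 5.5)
Your proposal is correct and follows essentially the same route as the paper: write out both residuals, subtract so that $R^*$ cancels, apply the exact secant identity and allowability from Lemma~\ref{lem:softmax-calculus} with $(Q_1,Q_2)=(Q,Q_\beta)$, and invoke \eqref{eq:self} for $m_{Q_\beta,J_\beta}=0$. Your remark that the first identity does not actually need $m_{Q_\beta,J_\beta}=0$ (nor $Q\in\cQ$) is accurate, and it is slightly cleaner than the paper's phrasing, which invokes $m_{Q_\beta,J_\beta}=0$ before the subtraction.
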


\begin{proof} 
Fix $(Q,J)\in\cQ\times\RR$, and write $\delta Q\coloneqq Q-Q_\beta$, $\delta J\coloneqq J-J_\beta$, and $Q_t\coloneqq Q_\beta+t\delta Q$.
Since $m_{Q_\beta,J_\beta}=0$,
\[
(m_{Q,J}-m_{Q_\beta,J_\beta})(s,a)
=
-\delta J-\delta Q(s,a)
+
\sum_{s'}P(s'\mid s,a)
(V_\beta(Q)(s')-V_\beta(Q_\beta)(s')).
\]
The operator $D_{\beta;Q,Q_\beta}$ in the statement is the operator from Lemma~\ref{lem:softmax-calculus}. That lemma gives $V_\beta(Q)-V_\beta(Q_\beta)=D_{\beta;Q,Q_\beta}\delta Q$ and verifies that $D_{\beta;Q,Q_\beta}$ is allowable. Substitution gives \eqref{eq:fast-active-exact-residual-operator}; the equivalent identity follows because $m_{Q_\beta,J_\beta}=0$.
\end{proof}

The residual identity involves the operator $D_{\beta;Q,Q_\beta}$, which is obtained by averaging the derivative of $V_\beta$ along the segment from $Q_\beta$ to $Q$. It is therefore not enough to control only the derivative operator evaluated at $Q_\beta$. Lemma~\ref{lem:injectivity-equivalence} below applies because the row sum and signed envelope bounds imply that every such operator $D_{\beta;Q,Q_\beta}$ belongs to the allowable family.

\begin{lemma}[Uniform quantitative identification over allowable aggregation operators] 
\label{lem:injectivity-equivalence}
Suppose Assumption~\ref{ass:uniform-signed-aggregation-identification} holds. Then there exists $c>0$ such that, for every allowable aggregation operator $D$,
\[
\left\|-j\one-q+PDq\right\|_{\bar\mu,2}
\ge
c(\|q\|_{\bar\mu,2}+|j|),
\qquad
q\in\cQ,\quad j\in\RR .
\]
The constant $c$ is independent of $n$ and $\beta$.
\end{lemma}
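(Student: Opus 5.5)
The plan is to reduce the claim to the sup-norm inverse of Lemma~\ref{lem:uniform-sup-inverse} by finite-dimensional norm equivalence. The only nontrivial point is that the equivalence constants must not depend on the weighted measure $\bar\mu$ in a way that varies with $n$ or $\beta$. Here $\bar\mu$ is the fixed state--action marginal of the common one-step law, and it has full support by the standing assumption of Section~\ref{sec:tabular-fast-rate}. Set $\mu_{\min}\coloneqq\min_{(s,a)}\bar\mu(s,a)>0$. Then for every $f\colon\cS\times\cA\to\RR$,
\[
\mu_{\min}^{1/2}\|f\|_\infty\le\|f\|_{\bar\mu,2}\le\|f\|_\infty .
\]

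First, I apply Lemma~\ref{lem:uniform-sup-inverse} to an arbitrary allowable $D$ and $(q,j)\in\cQ\times\RR$. This gives $\|q\|_\infty+|j|\le C\|\cL_D(q,j)\|_\infty$, with $C$ depending only on $P$ and $|\cA|$. Second, I convert the right side using the lower comparison: $\|\cL_D(q,j)\|_\infty\le\mu_{\min}^{-1/2}\|\cL_D(q,j)\|_{\bar\mu,2}$. Third, I convert the left side using the upper comparison: $\|q\|_{\bar\mu,2}\le\|q\|_\infty$. Chaining these three inequalities yields
\[
\|q\|_{\bar\mu,2}+|j|\le C\mu_{\min}^{-1/2}\|{-j\one-q+PDq}\|_{\bar\mu,2},
\]
so the claim holds with $c\coloneqq\mu_{\min}^{1/2}/C$.

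This $c$ depends only on $P$, $|\cA|$ and $\bar\mu$. It is independent of $n$ because $\bar\mu$ is a population quantity and not the empirical measure. It is independent of $\beta$ because the allowable family, through the constant $C_{\cA}$, does not involve $\beta$. This is exactly why the lemma is needed uniformly over all allowable $D$: the secant operators $D_{\beta;Q,Q_\beta}$ of Lemma~\ref{lem:secant-residual} change with $Q$ and $\beta$.

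The main obstacle is already handled in Lemma~\ref{lem:uniform-sup-inverse}. Its compactness argument over the closed, bounded weight set $\cW$ and the unit sphere in $\cQ\times\RR$ is what turns the pointwise injectivity of Assumption~\ref{ass:uniform-signed-aggregation-identification} into a constant that is uniform over $D$. If one instead wanted to avoid that lemma, I would repeat the same argument directly. The map $(w,q,j)\mapsto\|\cL_{D_w}(q,j)\|_{\bar\mu,2}$ is continuous and, by the assumption, strictly positive on the compact set $\cW\times\{(q,j)\in\cQ\times\RR:\|q\|_{\bar\mu,2}+|j|=1\}$, which is compact because $\|\cdot\|_{\bar\mu,2}$ is a norm under full support. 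Hence it attains a positive minimum $c$, and homogeneity in $(q,j)$ then extends the bound from the unit sphere to all of $\cQ\times\RR$.
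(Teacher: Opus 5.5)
Your proposal is correct and follows essentially the same route as the paper's proof. The paper likewise applies Lemma~\ref{lem:uniform-sup-inverse} and uses $\|q\|_{\bar\mu,2}\le\|q\|_\infty$ and $\|r\|_\infty\le\bar\mu_{\min}^{-1/2}\|r\|_{\bar\mu,2}$, obtaining $c=C^{-1}\bar\mu_{\min}^{1/2}$. It also notes that this constant stays uniform over horizons under Assumption~\ref{ass:pooled-support} for pooled laws.
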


\begin{proof} 
Let $r=-j\one-q+PDq$ and write $\bar\mu_{\min}\coloneqq\min_x\bar\mu(x)>0$. Lemma~\ref{lem:uniform-sup-inverse} gives $\|q\|_\infty+|j|\le C\|r\|_\infty$, uniformly over allowable $D$. Finite-state norm equivalence gives
\[
\|q\|_{\bar\mu,2}+|j|
\le
\|q\|_\infty+|j|
\le
C\|r\|_\infty
\le
C\bar\mu_{\min}^{-1/2}\|r\|_{\bar\mu,2}.
\]
Rearranging proves the stated lower bound with $c=C^{-1}\bar\mu_{\min}^{1/2}$. The fixed one-step support lower bound makes this constant independent of $n$ and $\beta$. For a horizon-dependent pooled law, Assumption~\ref{ass:pooled-support} gives $\inf_H\min_x\bar\mu_H(x)>0$, so the same norm-equivalence argument makes the constant uniform over the relevant horizons, and hence independent of $N$ and $H$ as well.
\end{proof}

\begin{proposition}[Quantitative identification through the Bellman residual]
\label{prop:bellman-residual-identification}
Suppose Assumption~\ref{ass:uniform-signed-aggregation-identification} holds.
Then there exists a constant $c>0$, independent of $n$ and $\beta$, such that, for every $(Q,J)\in\cQ\times\RR$,
\[
\|m_{Q,J}\|_{\bar\mu,2}
\ge
c
(\|Q-Q_\beta\|_{\bar\mu,2}+|J-J_\beta|).
\]
\end{proposition}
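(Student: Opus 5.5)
The plan is to combine the exact secant representation of Lemma~\ref{lem:secant-residual} with the uniform quantitative identification of Lemma~\ref{lem:injectivity-equivalence}. Fix $(Q,J)\in\cQ\times\RR$ and set $q\coloneqq Q-Q_\beta$ and $j\coloneqq J-J_\beta$. Both $Q$ and $Q_\beta$ lie in the anchored space, so $q\in\cQ$.

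Lemma~\ref{lem:secant-residual}, together with $m_{Q_\beta,J_\beta}=0$ from \eqref{eq:self}, gives the exact identity
\[
m_{Q,J}=-j\one-q+PD_{\beta;Q,Q_\beta}q .
\]
The same lemma certifies that the segment-averaged operator $D_{\beta;Q,Q_\beta}$ is allowable. Its rows sum to one because each $\omega_{\beta,Q_t}$ does. Its signed $\ell_1$ envelope is at most $C_{\cA}$ by Lemma~\ref{lem:if-omega-tv-bound}, and integration over $t$ preserves this bound.

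Because $D_{\beta;Q,Q_\beta}$ is allowable, Lemma~\ref{lem:injectivity-equivalence} applies with $D=D_{\beta;Q,Q_\beta}$. It yields
\[
\|m_{Q,J}\|_{\bar\mu,2}=\|-j\one-q+PD_{\beta;Q,Q_\beta}q\|_{\bar\mu,2}\ge c(\|q\|_{\bar\mu,2}+|j|),
\]
which is the claimed bound. The constant $c$ is independent of $n$ and $\beta$ for two reasons. Lemma~\ref{lem:injectivity-equivalence} holds uniformly over the whole allowable family, so the $Q$-dependence of $D_{\beta;Q,Q_\beta}$ is irrelevant. Moreover, its constant depends only on $P$, $|\cA|$ and $\min_x\bar\mu(x)>0$.

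The only delicate point is that the operator $D_{\beta;Q,Q_\beta}$ depends on $Q$ itself, since it is nonlinear along the segment. For that reason one cannot simply invert a fixed linearization at $Q_\beta$. This obstacle is already resolved upstream by two ingredients. The first is the compactness argument in Lemma~\ref{lem:uniform-sup-inverse}, which gives uniformity over allowable weights. The second is the $\beta$-free entropy bound on the signed weights in Lemma~\ref{lem:if-omega-tv-bound}. Given these, the proof reduces to the two-line combination above.
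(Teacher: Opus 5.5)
Your proposal is correct and follows essentially the same route as the paper's proof. You take the exact secant identity of Lemma~\ref{lem:secant-residual} together with $m_{Q_\beta,J_\beta}=0$, then apply Lemma~\ref{lem:injectivity-equivalence} to the allowable segment-averaged operator $D_{\beta;Q,Q_\beta}$, so that uniformity in $n$, $\beta$ and $Q$ comes from the uniform inverse over the whole allowable family.
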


\begin{proof} 

Fix $(Q,J)\in\cQ\times\RR$, and write $\delta Q\coloneqq Q-Q_\beta$ and $\delta J\coloneqq J-J_\beta$. Because both $Q$ and $Q_\beta$ belong to $\cQ$, we have $\delta Q\in\cQ$. Lemma~\ref{lem:secant-residual} gives $m_{Q,J}-m_{Q_\beta,J_\beta}=-\delta J\one-\delta Q+PD_{\beta;Q,Q_\beta}\delta Q$, and $D_{\beta;Q,Q_\beta}$ is allowable. Applying Lemma~\ref{lem:injectivity-equivalence} with $(q,j,D)=(\delta Q,\delta J,D_{\beta;Q,Q_\beta})$ yields the displayed bound. Since $m_{Q_\beta,J_\beta}=0$, this proves the claim.
\end{proof}

\subsection{Bounds for the Empirical Rate Argument}\label{app:fast-rate-empirical-bounds}
For a measurable one step function $f$, write $
\|f\|_2
:=
(\EE[f(Z)^2])^{1/2}$
.

\begin{lemma} 
\label{lem:fast-residual-lipschitz}
There exists a constant $C<\infty$, independent of $n$ and $\beta$, such that, for every $(Q,J)\in\cQ\times\RR$,
\[
\|\delta_{Q,J}-\delta_{Q_\beta,J_\beta}\|_2
\le
C(\|Q-Q_\beta\|_{\bar\mu,2}+|J-J_\beta|).
\]
Moreover, $|\delta_{Q,J}(Z)-\delta_{Q_\beta,J_\beta}(Z)|\le C(\|Q-Q_\beta\|_{\bar\mu,2}+|J-J_\beta|)$ almost surely.
\end{lemma}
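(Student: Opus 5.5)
We write the residual increment explicitly and bound it pointwise before passing to $L_2$. For a transition $Z=(S,A,R,S')$, with $\delta Q\coloneqq Q-Q_\beta$ and $\delta J\coloneqq J-J_\beta$,
\[
\delta_{Q,J}(Z)-\delta_{Q_\beta,J_\beta}(Z)
=
-\delta J
+\bigl(V_\beta(Q)(S')-V_\beta(Q_\beta)(S')\bigr)
-\delta Q(S,A).
\]
The reward $R$ cancels, so no envelope on $R$ is needed. The first term is bounded by $|\delta J|$.

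For the value-map term we use the exact secant identity of Lemma~\ref{lem:softmax-calculus}. It gives $V_\beta(Q)-V_\beta(Q_\beta)=D_{\beta;Q,Q_\beta}\delta Q$, where $D_{\beta;Q,Q_\beta}$ is allowable. Each of its rows therefore has signed $\ell_1$ norm at most $C_{\cA}$, and this bound is uniform in $\beta$ by Lemma~\ref{lem:if-omega-tv-bound}. Hence, for every $s'$,
\[
|V_\beta(Q)(s')-V_\beta(Q_\beta)(s')|\le C_{\cA}\max_{a\in\cA}|\delta Q(s',a)|\le C_{\cA}\|\delta Q\|_\infty .
\]
The remaining term satisfies $|\delta Q(S,A)|\le\|\delta Q\|_\infty$. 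Together these give the pointwise bound
\[
|\delta_{Q,J}(Z)-\delta_{Q_\beta,J_\beta}(Z)|\le (1+C_{\cA})\|\delta Q\|_\infty+|\delta J|
\]
almost surely, with a constant free of $\beta$ and $n$.

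The only real step is converting $\|\delta Q\|_\infty$ into $\|\delta Q\|_{\bar\mu,2}$. Here we use the standing full-support condition $\bar\mu_{\min}\coloneqq\min_{(s,a)}\bar\mu(s,a)>0$ of the one-step setup, which is uniform over horizons by Assumption~\ref{ass:pooled-support} in the trajectory setting. Finite-state norm equivalence gives $\|\delta Q\|_\infty\le\bar\mu_{\min}^{-1/2}\|\delta Q\|_{\bar\mu,2}$. Substituting this proves the almost sure bound with $C=(1+C_{\cA})\bar\mu_{\min}^{-1/2}+1$.

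The $L_2$ statement then follows at once, because an almost sure bound on a random variable bounds its $L_2$ norm by the same constant.

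The next-state marginal never has to be compared with $\bar\mu$, since we pass through the sup norm. The step needing the most care is confirming that the secant operator's row bound is $\beta$-free, as opposed to the $O(\beta)$ Hessian bound. Lemma~\ref{lem:if-omega-tv-bound} supplies exactly this, via the entropy argument.
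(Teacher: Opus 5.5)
Your proof is correct and takes the same route as the paper: the three-term decomposition (with $R$ cancelling), the $\beta$-free signed-envelope bound on the secant operator from Lemma~\ref{lem:softmax-calculus}, and full-support norm equivalence to pass from $\|\cdot\|_\infty$ to $\|\cdot\|_{\bar\mu,2}$. The only cosmetic difference is ordering: the paper bounds the $L_2$ norm termwise (using $\|\delta Q(S,A)\|_2=\|\delta Q\|_{\bar\mu,2}$ exactly) and then notes the almost-sure bound follows the same way, whereas you prove the almost-sure bound first and deduce the $L_2$ bound from it.
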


\begin{proof}
Fix $(Q,J)\in\cQ\times\RR$, and write $G\coloneqq Q-Q_\beta$ and $\Delta_J\coloneqq J-J_\beta$.
By the definition of the one step residual,
\[
\delta_{Q,J}(Z)-\delta_{Q_\beta,J_\beta}(Z)
=
-\Delta_J
+(V_\beta(Q)(S')-V_\beta(Q_\beta)(S'))
-G(S,A).
\]
Lemma~\ref{lem:softmax-calculus}, applied with $(Q_1,Q_2)=(Q,Q_\beta)$, and its signed-envelope bound give
\[
|V_\beta(Q)(s)-V_\beta(Q_\beta)(s)|
\le
C_{\cA}\max_{a\in\cA}|G(s,a)|.
\]
By the full-support condition on $\bar\mu$ from Section~\ref{sec:tabular-fast-rate} and finite support, $\|G\|_\infty\le C\|G\|_{\bar\mu,2}$. Consequently $\|V_\beta(Q)(S')-V_\beta(Q_\beta)(S')\|_2\le C\|Q-Q_\beta\|_{\bar\mu,2}$, and the same direct bound holds almost surely. Moreover, $\|G(S,A)\|_2=\|G\|_{\bar\mu,2}$ and $|G(S,A)|\le C\|G\|_{\bar\mu,2}$ almost surely. Combining the preceding bounds by the triangle inequality, and then absorbing constants into a single $C<\infty$, gives $\|\delta_{Q,J}-\delta_{Q_\beta,J_\beta}\|_2\le C(\|Q-Q_\beta\|_{\bar\mu,2}+|J-J_\beta|)$.

The almost-sure envelope follows from the same decomposition and the finite-support bound on $\|G\|_\infty$.
\end{proof}

We then present the empirical bounds. The three bounds below play different roles in the fast rate proof. The first controls the empirical quadratic test function penalty. The second controls the regularized criterion at $(Q_\beta,J_\beta)$. The third transfers the population moment generated by a residual aligned test function to its empirical counterpart.

\begin{lemma}[Empirical norm and process bounds]
\label{lem:stochastic-verified}

Use the independent one-step setup of Section~\ref{sec:tabular-fast-rate} and the bounded tabular test function class $\mathcal G$ introduced there. Suppose Assumption~\ref{ass:if-envelope} holds, and recall that Section~\ref{sec:tabular-fast-rate} assumes that the fixed one-step state-action marginal $\bar\mu$ has full support. For a sufficiently large fixed constant $C<\infty$, define
\[
\eta_n
\coloneqq
C
\sqrt{\frac{|\cS||\cA|\log(en)+\log(1/\delta)}{n}},
\]
Then, with probability at least $1-\delta$, there is an event $\mathcal E_n$ on which the following bounds hold.
\[
\left|\Pn[g(S,A)^2]-\|g\|_{\bar\mu,2}^2\right|
\le
\frac12\|g\|_{\bar\mu,2}^2+\frac12\eta_n^2,
\qquad g\in\mathcal G,
\]
\[
\sup_{g\in\mathcal G}
\left\{
2(\Pn[g\delta_{Q_\beta,J_\beta}]-\EE[g\delta_{Q_\beta,J_\beta}])
-\rho\Pn[g(S,A)^2]
\right\}
\le
C\eta_n^2,
\]
and, for every $g\in\mathcal G$ and every $Q\in\cQ$ and $J\in\RR$ with $\|Q\|_\infty\le B_Q$ and $|J|\le B_J$,
\[
\left|
\Pn\!\left[g(\delta_{Q,J}-\delta_{Q_\beta,J_\beta})\right]
-\EE\!\left[g(\delta_{Q,J}-\delta_{Q_\beta,J_\beta})\right]
\right|
\le
C\eta_n\|g\|_{\bar\mu,2}
(\|Q-Q_\beta\|_{\bar\mu,2}+|J-J_\beta|)
+
C\eta_n^2.
\]

The constants may depend on the fixed state/action cardinalities, the positive state-action support lower bound of the one-step law, the reward bound, $B_Q$, $B_J$, $B_G$, $\rho$, and the population identification constants, but are independent of $n$, $\beta$, and $\delta$. If the one-step law is a pooled law satisfying Assumption~\ref{ass:pooled-support}, the same constants are uniform over the relevant horizons by the common support lower bound.
\end{lemma}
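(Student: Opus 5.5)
The argument is finite-dimensional throughout. I will prove each of the three bounds on its own event of probability at least $1-\delta/3$ and take $\mathcal E_n$ to be their intersection. Set $d\coloneqq|\cS||\cA|$. Every $g\in\mathcal G$ is a vector in $\RR^{d}$ with $\|g\|_\infty\le B_G$. Every pair $(Q,J)$ in the bounded feasible set lies in a compact subset of $\RR^{d+1}$. The full-support lower bound $\bar\mu_{\min}>0$ gives $\|g\|_\infty\le\bar\mu_{\min}^{-1/2}\|g\|_{\bar\mu,2}$, so sup-norm and $\bar\mu$-norm are interchangeable up to constants. Because we work in fixed dimension, I will use elementary tools (Bernstein's inequality for bounded variables, a union bound over an $\epsilon$-net with $\epsilon=n^{-1}$, and Lipschitz transfer off the net) rather than localized Rademacher complexity. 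Each net has cardinality at most $(Cn)^{d}$ or $(Cn)^{2d+1}$, which produces the $d\log(en)+\log(1/\delta)$ in $\eta_n$.

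\textbf{First bound (empirical norm).} Write $\Pn[g^2]-\|g\|_{\bar\mu,2}^2=\sum_{x}g(x)^2(\widehat\mu_n(x)-\bar\mu(x))$, where $\widehat\mu_n$ is the empirical state-action frequency. Bernstein's inequality and a union bound over the $d$ cells give, with probability at least $1-\delta/3$,
\[
|\widehat\mu_n(x)-\bar\mu(x)|\le\sqrt{\bar\mu(x)L/n}+L/n,\qquad L\coloneqq\log(3d/\delta),\quad x\in\cS\times\cA.
\]
Hence
\[
\bigl|\Pn[g^2]-\|g\|_{\bar\mu,2}^2\bigr|\le\sum_x g(x)^2\bigl(\sqrt{\bar\mu(x)L/n}+L/n\bigr).
\]
Bound each term by AM--GM: $g(x)^2\sqrt{\bar\mu(x)L/n}\le\frac14\bar\mu(x)g(x)^2+g(x)^2L/n$. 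Using $g(x)^2\le B_G^2$, the remaining pieces are at most $CdL/n\le\frac12\eta_n^2$ once the constant in $\eta_n$ is large. Summing gives the first display.

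\textbf{Second bound (criterion at the truth).} Set $\delta_\beta\coloneqq\delta_{Q_\beta,J_\beta}$. It is uniformly bounded by Assumption~\ref{ass:if-envelope} and Corollary~\ref{cor:app-smoothed-target-bounded}, and $\EE[\delta_\beta\mid S,A]=0$. The functional $g\mapsto(\Pn-\EE)[g\delta_\beta]$ is linear, equal to $\langle g,\xi_n\rangle$ with $\xi_n(x)\coloneqq\Pn[\mathbf 1\{(S,A)=x\}\delta_\beta]$. Each coordinate $\xi_n(x)$ is an average of centered bounded variables with variance at most $C\bar\mu(x)$. By Bernstein and a union bound, $|\xi_n(x)|\le C\sqrt{\bar\mu(x)L/n}+CL/n$ for all $x$. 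Cauchy--Schwarz then gives
\[
2\langle g,\xi_n\rangle\le C\|g\|_{\bar\mu,2}\sqrt{dL/n}+C\|g\|_{\bar\mu,2}\,\frac{L}{n}\,\bar\mu_{\min}^{-1/2}\sqrt d .
\]
On the event of the first bound, $\rho\Pn[g^2]\ge\frac\rho2\|g\|_{\bar\mu,2}^2-\frac\rho2\eta_n^2$. Maximizing $a\|g\|-\frac\rho2\|g\|^2$ over $\|g\|$ gives at most $a^2/(2\rho)$, so the supremum is at most $C\eta_n^2$. No net is needed here because of linearity.

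\textbf{Third bound (residual increments).} This is the step I expect to be the main obstacle, since it must hold uniformly over the nonlinear family $(Q,J)\mapsto\delta_{Q,J}-\delta_\beta$ and produce a product-form bound. Put $u\coloneqq\|Q-Q_\beta\|_{\bar\mu,2}+|J-J_\beta|$. By Lemma~\ref{lem:fast-residual-lipschitz}, $h\coloneqq g(\delta_{Q,J}-\delta_\beta)$ satisfies $|h|\le C\|g\|_\infty u\le C\|g\|_{\bar\mu,2}u$ almost surely and $\Var(h)\le C\|g\|_{\bar\mu,2}^2u^2$.
\begin{enumerate}
\item Fix one $(g,Q,J)$. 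Bernstein's inequality gives a deviation of at most $C\|g\|_{\bar\mu,2}u\sqrt{L'/n}+C\|g\|_{\bar\mu,2}uL'/n$.
\item Build an $n^{-1}$-net in sup norm for $g\in\mathcal G$ and for $(Q,J)$ in the bounded feasible set. Its log-cardinality is $C d\log(en)$. Choose $L'=Cd\log(en)+\log(3/\delta)$; a union bound gives the deviation bound simultaneously for all net points, with $\sqrt{L'/n}\asymp\eta_n$. The second Bernstein term is at most $C\eta_n^2$, since $\|g\|_{\bar\mu,2}u$ is bounded.
\item Move off the net. Lemma~\ref{lem:softmax-calculus} makes $Q\mapsto V_\beta(Q)$ Lipschitz in sup norm with constant $C_{\cA}$, independent of $\beta$. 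Hence $(g,Q,J)\mapsto h$ is Lipschitz in sup norm, uniformly in $\beta$. Approximating by a net point changes both the empirical and the population average by $O(n^{-1})\le C\eta_n^2$. It also changes $\|g\|_{\bar\mu,2}u$ by $O(n^{-1})$, which is again absorbed into $C\eta_n^2$.
\end{enumerate}
Finally, intersect the three events and adjust constants, yielding a total probability of at least $1-\delta$. None of the constants depends on $\beta$, because the only $\beta$-dependent inputs are the uniform bounds of Corollary~\ref{cor:app-smoothed-target-bounded} and Lemma~\ref{lem:softmax-calculus}. For a pooled law, the constants are uniform over horizons because Assumption~\ref{ass:pooled-support} gives $\inf_H\bar\mu_{H,\min}>0$.
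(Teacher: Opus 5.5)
Your proposal is correct. The first two bounds follow the paper's argument essentially step for step. You apply cellwise Bernstein with a union bound to the empirical frequencies, pass to the cell moments $\xi_n(x)$, use Cauchy--Schwarz against $\|g\|_{\bar\mu,2}$, and absorb the linear term into the quadratic penalty on the empirical norm event.

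There is one small omission in the second bound. Your linear coefficient contains a term of order $L/n$, so $a^2/(2\rho)$ is bounded by $C\eta_n^2$ only when $L\le n$. When $\log(1/\delta)$ is of order $n$ or larger, use $\|g\|_{\bar\mu,2}\le B_G$ (or the deterministic bound $|\xi_n(x)|\le\|\delta_\beta\|_\infty$) together with $\eta_n^2\ge c\,L/n\ge c$. The paper makes exactly this case split.

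For the third bound your route differs from the paper's.
\begin{itemize}
\item \textbf{Your route.} You net directly over $(g,Q,J)$. You move off the net using the $\beta$-uniform sup-norm Lipschitz continuity of $Q\mapsto V_\beta(Q)$ between arbitrary pairs, which comes from the secant operator with envelope $C_{\mathcal A}$. The Bernstein variance proxy $\|g\|_{\bar\mu,2}u$ is evaluated at the net point and transferred by its own Lipschitz continuity.
\item \textbf{The paper's route.} It divides by $e_{Q,J}$ (your $u$). It then embeds the normalized increments into a larger class of functions $g(S,A)\{-\alpha+v(S')-w(S,A)\}$, indexed by free bounded coordinates $(g,\alpha,w,v)$ on which the parametrization is bilinear, and nets that class.
\end{itemize}

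Your version is shorter and suffices here. The statement only asks for an additive $C\eta_n^2$ remainder, and your $O(n^{-1})$ discretization errors are absorbed because $\eta_n^2\ge c\,n^{-1}$.

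The paper's normalization buys two things. It gives the sharper relative remainder $C\eta_n^2e_{Q,J}$, which vanishes at the truth. Its covering also needs only boundedness of $v$, that is, a Lipschitz bound for $V_\beta$ relative to $Q_\beta$, rather than Lipschitz continuity between arbitrary pairs. It then discards the extra factor via $e_{Q,J}\le C$, so for this lemma the two routes reach the same conclusion.
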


\begin{proof} 
Let $\mathcal X\coloneqq\mathcal S\times\mathcal A$ and $d\coloneqq|\mathcal X|$. We first establish the empirical norm comparison directly from the empirical state-action frequencies. For $x=(s,a)\in\mathcal X$, write $\widehat\mu_n(x)\coloneqq\Pn\{(S,A)=x\}$. For every tabular function $g$,
\[
\Pn[g(S,A)^2]-\|g\|_{\bar\mu,2}^2
=
\sum_{x\in\mathcal X}
(\widehat\mu_n(x)-\bar\mu(x))g(x)^2.
\]
For each $x\in\mathcal X$, a binomial Bernstein bound applies to $\widehat\mu_n(x)$. Taking a union bound over the finite state-action space gives, with probability at least $1-\delta/3$, simultaneously over all $x$,
\[
\left|
\widehat\mu_n(x)-\bar\mu(x)
\right|
\le
\frac12\bar\mu(x)
+
C\frac{\log(en)+\log(d/\delta)}{n}.
\]
Here and below, the constant $C$ depends only on fixed tabular quantities.
Let $B_{\mathcal G}\coloneqq\sup_{g\in\mathcal G}\|g\|_\infty<\infty$. Then, uniformly over $g\in\mathcal G$,
\[
\begin{aligned}
\left|
\Pn[g(S,A)^2]-\|g\|_{\bar\mu,2}^2
\right|
&\le \frac12
\sum_x\bar\mu(x)g(x)^2
+
C\frac{\log(en)+\log(d/\delta)}{n}
\sum_x g(x)^2 \\
&\le \frac12\|g\|_{\bar\mu,2}^2
+
C B_{\mathcal G}^2d
\frac{\log(en)+\log(d/\delta)}{n},
\end{aligned}
\]
Choosing the fixed constant $C$ in the definition of $\eta_n$ sufficiently large makes the second term at most $\eta_n^2/2$. Hence
\[
\left|\Pn[g(S,A)^2]-\|g\|_{\bar\mu,2}^2\right|
\le
\frac12\|g\|_{\bar\mu,2}^2+\frac12\eta_n^2,
\qquad g\in\mathcal G.
\]

Then we consider the second inequality. Let $\delta_\beta\coloneqq\delta_{Q_\beta,J_\beta}$. At the truth, the conditional mean vanishes, and therefore so does the corresponding moment for every $g\in\mathcal G$:
\[
\EE[\delta_\beta(Z)\mid S,A]
=
m_{Q_\beta,J_\beta}(S,A)
=
0.
\]
The corresponding moment therefore vanishes:
\[
\EE[g\delta_\beta]
=
\EE\left[g(S,A)\EE(\delta_\beta(Z)\mid S,A)\right]
=
0.
\]
By Assumption~\ref{ass:if-envelope}, Corollary~\ref{cor:app-smoothed-target-bounded}, and $|V_\beta(Q_\beta)(s)|\le\|Q_\beta\|_\infty$, the residual $\delta_\beta$ is uniformly bounded. Define the cell moments $M_x\coloneqq \Pn[1\{(S,A)=x\}\delta_\beta(Z)]$, $x\in\mathcal X$.
Since $\EE[\delta_\beta(Z)\mid S,A]=0$, each $M_x$ is an average of bounded mean-zero variables with variance bounded by $C\bar\mu(x)/n$. Let $\ell\coloneqq\log(6d/\delta)$. Bernstein's inequality and a union bound over $x\in\mathcal X$ imply that, with probability at least $1-\delta/3$,
\[
|M_x|
\le
C\sqrt{\frac{\bar\mu(x)\ell}{n}}
+
C\frac{\ell}{n},
\qquad x\in\mathcal X .
\]
Let $\underline\mu\coloneqq\min_x\bar\mu(x)>0$. If $\ell/n\le1$, then
\[
\sum_{x\in\mathcal X}\frac{M_x^2}{\bar\mu(x)}
\le
C\sum_{x\in\mathcal X}
\left(
\frac{\ell}{n}
+
\frac{\ell^2}{n^2\bar\mu(x)}
\right)
\le
C\frac{d\ell}{n}
\le
C\eta_n^2.
\]
If $\ell/n>1$, deterministic boundedness gives $\sum_xM_x^2/\bar\mu(x)\le C$, while the fixed constant in $\eta_n$ can be chosen so that $\eta_n^2\ge C\ell/n\ge C$. Thus, in both cases, $\sum_xM_x^2/\bar\mu(x)\le C\eta_n^2$.
The preceding cell-moment bound gives, uniformly over $g\in\mathcal G$,
\[
2(\Pn[g\delta_\beta]-\EE[g\delta_\beta])
=
2\sum_{x\in\mathcal X}g(x)M_x
\le
C\eta_n\|g\|_{\bar\mu,2}.
\]
On the empirical norm event, $\Pn[g(S,A)^2]\ge\frac12\|g\|_{\bar\mu,2}^2-\frac12\eta_n^2$, and taking the supremum over $\|g\|_{\bar\mu,2}$ on the right hand side then gives
\[
2(\Pn[g\delta_\beta]-\EE[g\delta_\beta])
-\rho\Pn[g(S,A)^2]
\le
C\eta_n\|g\|_{\bar\mu,2}
-
\frac{\rho}{2}\|g\|_{\bar\mu,2}^2
+
C\eta_n^2.
\]
Taking the supremum over $g\in\mathcal G$ gives
\[
\sup_{g\in\mathcal G}
\left\{
2(\Pn[g\delta_\beta]-\EE[g\delta_\beta])
-\rho\Pn[g(S,A)^2]
\right\}
\le
C\eta_n^2.
\]
The negative quadratic test function penalty absorbs the linear localized fluctuation and leaves an $O(\eta_n^2)$ bound.

It remains to prove the product process bound. For every $Q\in\cQ$ and $J\in\RR$ with $\|Q\|_\infty\le B_Q$ and $|J|\le B_J$, set $d_{Q,J}(Z)\coloneqq\delta_{Q,J}(Z)-\delta_{Q_\beta,J_\beta}(Z)$ and $e_{Q,J}\coloneqq\|Q-Q_\beta\|_{\bar\mu,2}+|J-J_\beta|$. Lemma~\ref{lem:fast-residual-lipschitz} supplies the $\beta$-uniform residual increment bounds $\|d_{Q,J}\|_2\le Ce_{Q,J}$ and $|d_{Q,J}(Z)|\le Ce_{Q,J}$ almost surely. Since $\|g(S,A)\|_2=\|g\|_{\bar\mu,2}$, we have
\[
\|g d_{Q,J}\|_2
\le
Ce_{Q,J}
\|g(S,A)\|_2
\le
C\|g\|_{\bar\mu,2}e_{Q,J}.
\]
If $e_{Q,J}=0$, then full support of $\bar\mu$ gives $Q=Q_\beta$ and $J=J_\beta$, so $d_{Q,J}=0$. For $e_{Q,J}>0$, introduce the normalized proof-local coordinates $a=(J-J_\beta)/e_{Q,J}$, $u=(Q-Q_\beta)/e_{Q,J}$, and $v=\{V_\beta(Q)-V_\beta(Q_\beta)\}/e_{Q,J}$. Then $|a|\le1$, $\|u\|_{\bar\mu,2}\le1$, and finite support gives $\|u\|_\infty\le C$. The signed-weight envelope in Lemma~\ref{lem:if-omega-tv-bound} gives the $\beta$-uniform Lipschitz bound for $V_\beta$, hence $\|v\|_\infty\le C$. The normalized function is
\[
h(Z)
=
\frac{g(S,A)d_{Q,J}(Z)}{e_{Q,J}}
=
g(S,A)\{-a+v(S')-u(S,A)\}.
\]
The normalized class is therefore contained in an enlarged class $\mathcal H$ obtained by allowing $(g,a,u,v)$ to range over this fixed bounded coordinate set. With $d=|\mathcal S||\mathcal A|$, this enlarged class has coordinate dimension $p=2d+|\mathcal S|+1$: $d$ coordinates for $g$, $d$ for $u$, $|\mathcal S|$ for $v$, and one for $a$. On the fixed bounded coordinate box, the map $(g,a,u,v)\mapsto g(S,A)\{-a+v(S')-u(S,A)\}$ is uniformly Lipschitz into the supremum norm. This containment does not require the inverse map $(Q,J)\mapsto(a,u,v)$ to be Lipschitz at $(Q_\beta,J_\beta)$.

Construct a coordinate grid fine enough that the resulting function net $\mathcal H_n$ has supremum-norm radius at most $n^{-1}$. The Lipschitz constant and coordinate ranges are fixed, so $\log|\mathcal H_n|\le Cp\log(Cn)$.
For $f\in\mathcal H_n$, the envelope is bounded by a fixed constant and $\Var(f(Z))\le\|f\|_2^2$. Bernstein's inequality and a union bound with $t_n=\log(6|\mathcal H_n|/\delta)$ give, with probability at least $1-\delta/3$, simultaneously over $f\in\mathcal H_n$,
\[
\left|(\Pn-\EE)f\right|
\le
C\|f\|_2\sqrt{\frac{t_n}{n}}
+
C\frac{t_n}{n}.
\]
For an arbitrary $h$ in the enlarged normalized class, choose $f\in\mathcal H_n$ with $\|h-f\|_\infty\le n^{-1}$. Then $|(\Pn-\EE)(h-f)|\le2/n$ and $\|f\|_2\le\|h\|_2+n^{-1}$. Since $p\le Cd$, $t_n\le C\{d\log(en)+\log(1/\delta)\}$, and $n\ge2$, the terms $t_n/n$, $n^{-1}\sqrt{t_n/n}$, and $n^{-1}$ are absorbed by $C\eta_n^2$. Thus
\[
\left|(\Pn-\EE)h\right|
\le
C\eta_n\|h\|_2+C\eta_n^2,
\qquad h\in\mathcal H .
\]
Applying this bound to the original normalized functions, multiplying back by $e_{Q,J}$, and using the product-norm bound above gives
\[
\left|
\Pn[g d_{Q,J}]-\EE[g d_{Q,J}]
\right|
\le
C\eta_n\|g d_{Q,J}\|_2
+
C\eta_n^2e_{Q,J}.
\]
For admissible $(Q,J)$, finite support gives $\|Q-Q_\beta\|_{\bar\mu,2}\le\|Q-Q_\beta\|_\infty\le B_Q+\|Q_\beta\|_\infty$ and $|J-J_\beta|\le B_J+|J_\beta|$, while Corollary~\ref{cor:app-smoothed-target-bounded} gives $\sup_\beta\|Q_\beta\|_\infty<\infty$ and $\sup_\beta|J_\beta|<\infty$. Hence $e_{Q,J}\le C$, uniformly in $n$, $N$, $H$, and $\beta$, and the additive term $C\eta_n^2e_{Q,J}$ is absorbed into $C\eta_n^2$.
Using the preceding product-norm bound gives
\[
\left|
\Pn[g d_{Q,J}]-\EE[g d_{Q,J}]
\right|
\le
C\eta_n\|g\|_{\bar\mu,2}e_{Q,J}
+
C\eta_n^2.
\]
Taking a union bound over the empirical norm, cell-moment, and product-process events gives $\PP(\mathcal E_n)\ge 1-\delta$.
\end{proof}

\begin{lemma}[Bellman localization]\label{lem:bellman-localization}
Let $P$ be a one-step law with state-action marginal $\mu$, let $\widehat P$ be an empirical averaging operator, and fix $\eta>0$. Use the Bellman residuals $\delta_{Q,J}$ and conditional moments $m_{Q,J}$ defined above, and write $e_{Q,J}=\|Q-Q_\beta\|_{\mu,2}+|J-J_\beta|$. Suppose the true pair $(Q_\beta,J_\beta)$ belongs to the optimization domain, every pair in that domain satisfies $e_{Q,J}\le C_{\mathrm{dom}}$, and $(\widehat Q,\widehat J)$ is an exact minimizer of the corresponding regularized minimax criterion. Suppose that, on a given realization,
\begin{enumerate}[label=(\roman*)]
\item $\|m_{Q,J}\|_{\mu,2}\ge c e_{Q,J}$;
\item $\widehat L(Q_\beta,J_\beta)\le C_0\eta^2$;
\item $|\widehat P[g(S,A)^2]-\|g\|_{\mu,2}^2|\le \frac12\|g\|_{\mu,2}^2+C_1\eta^2$ for $g\in\mathcal G$;
\item $|(\widehat P-P)[g(\delta_{Q,J}-\delta_{Q_\beta,J_\beta})]|\le C_2\eta\|g\|_{\mu,2}e_{Q,J}+C_2\eta^2$ uniformly over the parameter domain and $g\in\mathcal G$.
\end{enumerate}
Suppose also that $\mathcal G$ is symmetric and star-shaped and contains the $L_2(\mu)$ ball of radius $r_{\mathcal G}>0$ around zero. Then, on the same realization,
\[
\|\widehat Q-Q_\beta\|_{\mu,2}+|\widehat J-J_\beta|
\le C_*\eta,
\]
where $C_*$ depends only on $c,C_0,C_1,C_2,\rho,r_{\mathcal G}$, and $C_{\mathrm{dom}}$, and not on $\eta$.
\end{lemma}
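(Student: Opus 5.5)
The plan is the standard basic-inequality argument for regularized minimax estimators: lower-bound the criterion at the estimator by plugging in a test function aligned with the population residual, upper-bound it by the criterion at the truth, and absorb everything quadratic. Write $\widehat e\coloneqq e_{\widehat Q,\widehat J}$, $\widehat\delta\coloneqq\delta_{\widehat Q,\widehat J}$ and $\delta_\beta\coloneqq\delta_{Q_\beta,J_\beta}$. By exact minimality and (ii), $\widehat L(\widehat Q,\widehat J)\le\widehat L(Q_\beta,J_\beta)\le C_0\eta^2$.

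For the lower bound, take the test function $g_\kappa\coloneqq\kappa\, m_{\widehat Q,\widehat J}/\|m_{\widehat Q,\widehat J}\|_{\mu,2}$, which lies in $\mathcal G$ for $\kappa\le r_{\mathcal G}$ by the ball assumption and star-shapedness. Then
\[
\widehat L(\widehat Q,\widehat J)\ge 2\widehat P[g_\kappa\widehat\delta]-\rho\widehat P[g_\kappa^2].
\]
Split the first term as $\widehat P[g\widehat\delta]=\widehat P[g\delta_\beta]+P[g(\widehat\delta-\delta_\beta)]+(\widehat P-P)[g(\widehat\delta-\delta_\beta)]$. The population middle term equals $P[g\,m_{\widehat Q,\widehat J}]=\kappa\|m_{\widehat Q,\widehat J}\|_{\mu,2}\ge c\kappa\widehat e$ by (i), since $m_{Q_\beta,J_\beta}=0$. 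Condition (iv) bounds the third term by $C_2\eta\kappa\widehat e+C_2\eta^2$, and (iii) gives $\widehat P[g_\kappa^2]\le\frac32\kappa^2+C_1\eta^2$.

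The term $\widehat P[g_\kappa\delta_\beta]$ is the main obstacle, because (ii) controls it only inside the penalized supremum. Rather than rederive the noise bound, I would use (ii) directly: for any $g\in\mathcal G$, $2\widehat P[g\delta_\beta]\le C_0\eta^2+\rho\widehat P[g^2]$. Applying this to $-g_\kappa$ (allowed by symmetry) gives $2\widehat P[g_\kappa\delta_\beta]\ge -C_0\eta^2-\rho(\frac32\kappa^2+C_1\eta^2)$.

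Combining the pieces yields
\[
2c\kappa\widehat e-2C_2\eta\kappa\widehat e\le C\eta^2+3\rho\kappa^2.
\]
Two regimes follow.
\begin{itemize}
\item If $\eta\le c/(2C_2)$, choose $\kappa=\min\{r_{\mathcal G},\,\widehat e\,c/(6\rho)\}$. When $\kappa=\widehat e\,c/(6\rho)$, this gives $\widehat e^2\lesssim\eta^2$. When $\kappa=r_{\mathcal G}$, we have $\widehat e\ge 6\rho r_{\mathcal G}/c$, so $3\rho\kappa^2\le\frac{c}{2}\kappa\widehat e$ is absorbed, leaving $\kappa\widehat e\lesssim\eta^2$ and hence $\widehat e\lesssim\eta^2/r_{\mathcal G}$. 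Combined with the regime restriction, this is $\lesssim\eta$ after using $\widehat e\le C_{\mathrm{dom}}$.
\item If $\eta>c/(2C_2)$, then trivially $\widehat e\le C_{\mathrm{dom}}\le (2C_2C_{\mathrm{dom}}/c)\eta$.
\end{itemize}
In both regimes $\widehat e\le C_*\eta$, with $C_*$ depending only on $c,C_0,C_1,C_2,\rho,r_{\mathcal G},C_{\mathrm{dom}}$. The case $m_{\widehat Q,\widehat J}=0$ is trivial, since (i) then gives $\widehat e=0$.
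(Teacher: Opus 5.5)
Your proof is correct and follows essentially the same basic-inequality argument as the paper. The ingredients match: the residual-aligned test function, condition (ii) applied at the truth with the sign-flipped test function, (iii)--(iv) for the penalty and deviation terms, and a split into small and large $\eta$. The one difference is that the paper fixes the test-function scale at $\kappa=\eta$ for $\eta<\eta_0\le\min\{r_{\mathcal G},c/[2(1+C_2)]\}$, which removes your sub-case on whether the cap $r_{\mathcal G}$ binds.
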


\begin{proof}
Let $e\coloneqq\|\widehat Q-Q_\beta\|_{\mu,2}+|\widehat J-J_\beta|$. Empirical optimality and condition (ii) give $\widehat L(\widehat Q,\widehat J)\le\widehat L(Q_\beta,J_\beta)\le C_0\eta^2$. For each $g\in\mathcal G$, symmetry and star-shapedness imply that $g/2,-g/2\in\mathcal G$. Evaluating the estimated criterion at $g/2$ and the truth criterion at $-g/2$, adding the two inequalities, and multiplying by two gives
\begin{equation}
2\widehat P\!\left[g(\delta_{\widehat Q,\widehat J}-\delta_{Q_\beta,J_\beta})\right]
-\rho\widehat P[g(S,A)^2]
\le 4C_0\eta^2,
\qquad g\in\mathcal G.
\label{eq:fast-basic-ineq}
\end{equation}

If $e=0$, there is nothing to prove. Otherwise condition (i) gives $\|m_{\widehat Q,\widehat J}\|_{\mu,2}\ge ce>0$. Choose a fixed $\eta_0>0$ satisfying $\eta_0\le\min\{r_{\mathcal G},c/[2(1+C_2)]\}$. For $0<\eta<\eta_0$, define $g_\eta\coloneqq\eta m_{\widehat Q,\widehat J}/\|m_{\widehat Q,\widehat J}\|_{\mu,2}$.
Then $g_\eta\in\mathcal G$, $\|g_\eta\|_{\mu,2}=\eta$, and, because $m_{Q_\beta,J_\beta}=0$,
\[
P[g_\eta(\delta_{\widehat Q,\widehat J}-\delta_{Q_\beta,J_\beta})]
=
\eta\|m_{\widehat Q,\widehat J}\|_{\mu,2}
\ge c\eta e.
\]
Condition (iv) therefore gives
\[
\begin{aligned}
\widehat P\!\left[g_\eta(\delta_{\widehat Q,\widehat J}-\delta_{Q_\beta,J_\beta})\right]
&\ge
P\!\left[g_\eta(\delta_{\widehat Q,\widehat J}-\delta_{Q_\beta,J_\beta})\right]
-C_2\eta^2e-C_2\eta^2\\
&\ge c\eta e-C_2\eta^2e-C_2\eta^2.
\end{aligned}
\]
Condition (iii) gives $\widehat P[g_\eta(S,A)^2]\le(3/2+C_1)\eta^2$. Substituting these bounds into \eqref{eq:fast-basic-ineq} yields
\[
2c\eta e
\le
\{4C_0+2C_2+\rho(3/2+C_1)\}\eta^2
+2C_2\eta^2e.
\]
The definition of $\eta_0$ allows the last term to be absorbed into the left-hand side, giving $e\le C\eta$. If $\eta\ge\eta_0$, the deterministic domain bound gives $e\le C_{\mathrm{dom}}\le(C_{\mathrm{dom}}/\eta_0)\eta$. Taking $C_*$ to be the larger of these two fixed constants proves the eventwise implication for every $\eta>0$.
\end{proof}

\subsection{Proof of Theorem~\ref{thm:fast_rate}}\label{app:fast-rate-theorem-proof}
\begin{proof}
Proposition~\ref{prop:bellman-residual-identification} supplies condition (i) of Lemma~\ref{lem:bellman-localization}. On precisely the event $\mathcal E_n$ from Lemma~\ref{lem:stochastic-verified}, its second inequality bounds the criterion at $(Q_\beta,J_\beta)$ by $C\eta_n^2$, its first inequality gives the required empirical norm comparison, and its third inequality gives the localized product fluctuation. The class $\mathcal G$ is symmetric, star-shaped, and contains a fixed $L_2(\bar\mu)$ ball by its construction in Section~\ref{sec:tabular-fast-rate} and the full support of $\bar\mu$. The optimization domain contains the truth and gives the required deterministic error bound. The constants in these conditions are independent of $\delta$ apart from the displayed dependence of $\eta_n$. The deterministic implication in Lemma~\ref{lem:bellman-localization}, with $(P,\widehat P,\mu,\eta)=(\EE,\Pn,\bar\mu,\eta_n)$, therefore gives $\|\widehat Q-Q_\beta\|_{\bar\mu,2}+|\widehat J-J_\beta|\le C\eta_n$ on $\mathcal E_n$. Since $\PP(\mathcal E_n)\ge1-\delta$ for every $n\ge2$, substituting the definition of $\eta_n$ proves the stated finite-sample conclusion in Theorem~\ref{thm:fast_rate}.
\end{proof}

\section{Proofs for Section~\ref{sec:plug-in-adjoint-erm}}\label{app:plug-in-adjoint-erm-proofs}
We first derive a quantitative population lower bound for the adjoint criterion from Assumption~\ref{ass:uniform-signed-aggregation-identification}. We then compare the population criterion with the reference empirical criterion and the feasible criterion. The proof of Theorem~\ref{thm:main} combines these ingredients with the empirical optimality of $\widehat\lambda_\beta$.
For $q\in\cQ$, write $A_\beta q(Z)\coloneqq q(S,A)-\sum_{a\in\cA}\omega_{\beta,Q_\beta}(a\mid S')q(S',a)$. Also write $\widehat A_\beta q(Z)\coloneqq q(S,A)-\sum_{a\in\cA}\widehat\omega_\beta(a\mid S')q(S',a)$.

\subsection{From Primal Injectivity to Dual Quantitative Identification}

The next lemma uses the map $\cL_D$ defined in Lemma~\ref{lem:uniform-sup-inverse} to convert the primal injectivity condition in Assumption~\ref{ass:uniform-signed-aggregation-identification} into a quantitative lower bound for the adjoint moment and normalization discrepancies.

\begin{lemma}[Uniform quantitative identification for the adjoint equation]  
\label{lem:uniform-dual-identification}

Under Assumption~\ref{ass:uniform-signed-aggregation-identification} in the one-step setup, and under Assumptions~\ref{ass:uniform-signed-aggregation-identification} and~\ref{ass:pooled-support} in the pooled-law setup, there exists $c>0$, independent of $n$ and $\beta$, such that, for every allowable aggregation operator $D$ and every $r\in L_2(\bar\mu)$,
\[
\sup_{\substack{q\in\cQ\\ \|q\|_{\bar\mu,2}\le1}}
\left|
\EE\!\left[
r(S,A)(q(S,A)-(Dq)(S'))
\right]
\right|^2
+
|\EE[r(S,A)]|^2
\ge
c\|r\|_{\bar\mu,2}^2.
\]
Under the pooled-law support condition, $c$ is also independent of $N$ and $H$.
\end{lemma}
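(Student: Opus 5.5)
The plan is a finite-dimensional duality argument: convert the primal inverse bound of Lemma~\ref{lem:uniform-sup-inverse} into a bound on the dual (adjoint) map. Throughout, identify $r$ with the signed measure $\rho(x)\coloneqq\bar\mu(x)r(x)$ on $\cX=\cS\times\cA$. Then $\EE[r(S,A)(q(S,A)-(Dq)(S'))]=\rho^\top(I-PD)q$ and $\EE[r(S,A)]=\rho^\top\one$. Full support, that is, $\min_x\bar\mu(x)\ge\underline\mu>0$ (uniformly in $H$ under Assumption~\ref{ass:pooled-support}), gives the norm comparisons
\[
\|r\|_{\bar\mu,2}\le\underline\mu^{-1/2}\|\rho\|_2
\qquad\text{and}\qquad
\|q\|_{\bar\mu,2}\le\|q\|_\infty .
\]
So it suffices to show that $\sup_{q\in\cQ,\|q\|_\infty\le1}|\rho^\top(I-PD)q|+|\rho^\top\one|\ge c\|\rho\|_1$, uniformly over allowable $D$. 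Indeed, the constraint $\|q\|_\infty\le1$ is stronger than $\|q\|_{\bar\mu,2}\le1$, so this supremum is a lower bound for the one in the statement, and $\|\rho\|_1\ge\|\rho\|_2$.

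\textbf{Step 1: the primal map is an isomorphism.} Consider $T_D\colon\cQ\times\RR\to\RR^{\cX}$, $T_D(q,j)=(I-PD)q+j\one=-\cL_D(q,j)$. The dimensions match: $\dim(\cQ\times\RR)=|\cX|$. By Lemma~\ref{lem:uniform-sup-inverse}, $T_D$ is injective, hence bijective, and satisfies $\|T_D^{-1}f\|\le C\|f\|_\infty$ for the norm $\|q\|_\infty+|j|$, with $C$ uniform over allowable $D$.

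\textbf{Step 2: dualize.} For any $f\in\RR^{\cX}$ with $\|f\|_\infty\le1$, write $(q,j)=T_D^{-1}f$. Then $\|q\|_\infty+|j|\le C$, and
\[
\rho^\top f=\rho^\top(I-PD)q+j\,\rho^\top\one .
\]
Choosing $f=\operatorname{sign}(\rho)$ makes the left side equal to $\|\rho\|_1$. Bounding the right side using homogeneity in $q$ gives
\[
\|\rho\|_1\le C\Bigl(\sup_{q\in\cQ,\|q\|_\infty\le1}|\rho^\top(I-PD)q|+|\rho^\top\one|\Bigr).
\]

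\textbf{Step 3: conclude.} Combine Step 2 with the norm comparisons above and then with $(a+b)^2\le2(a^2+b^2)$. This yields the stated inequality with
\[
c=\frac{\underline\mu}{2C^2}\quad\text{(up to fixed constants)}.
\]
The constant $c$ depends only on $P$, $|\cA|$ and $\underline\mu$, so it does not depend on $n$, $\beta$, $N$ or $H$.

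The main obstacle is already absorbed by Lemma~\ref{lem:uniform-sup-inverse}: the inverse bound must be uniform over the whole compact allowable family, and the proof must not use $D_\beta$ specifically. The remaining care is in the sup-norm/$L_2(\bar\mu)$ conversions, which are where the uniform support lower bound is needed.
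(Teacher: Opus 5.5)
Your proof is correct and is essentially the paper's argument. In both, you invert the primal map $\cL_D$ with the uniform inverse bound and pair the solution with $r$. The paper solves $\cL_D(q_r,j_r)=r$ and uses the $L_2(\bar\mu)$ self-pairing via Lemma~\ref{lem:injectivity-equivalence}. You instead solve with right-hand side $\operatorname{sign}(\rho)$, use the $\ell_1$--$\ell_\infty$ pairing with the sup-norm bound of Lemma~\ref{lem:uniform-sup-inverse}, and leave the support-dependent norm conversions to the end.
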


\begin{proof} 
Fix an allowable $D$. Lemma~\ref{lem:injectivity-equivalence} gives the quantitative inverse bound
\[
\|q\|_{\bar\mu,2}+|j|
\le
C\|\cL_D(q,j)\|_{\bar\mu,2},
\qquad (q,j)\in\cQ\times\RR,
\]
uniformly over allowable $D$. In particular, $\cL_D$ is injective. Since its domain and codomain have the same finite dimension, it is bijective. For arbitrary $r$, let $(q_r,j_r)=\cL_D^{-1}r$. The inverse bound gives $\|q_r\|_{\bar\mu,2}+|j_r|\le C\|r\|_{\bar\mu,2}$.

Let $
A_r
\coloneqq
\sup_{\substack{q\in\cQ\\ \|q\|_{\bar\mu,2}\le1}}
\left|
\EE\!\left[
r(S,A)(q(S,A)-(Dq)(S'))
\right]
\right|,$ and 
$B_r\coloneqq|\EE[r(S,A)]|.$
Using $\cL_D(q_r,j_r)=r$ and the definition $\cL_D(q,j)=-j\one-q+PDq$, we obtain
\[
\begin{aligned}
&\|r\|_{\bar\mu,2}^2
=\langle r,\cL_D(q_r,j_r)\rangle_{\bar\mu}=-\EE\!\left[
r(S,A)(q_r(S,A)-(Dq_r)(S'))
\right]
-j_r\EE[r(S,A)]\\
&\le A_r\|q_r\|_{\bar\mu,2}+B_r|j_r|\le C(A_r+B_r)\|r\|_{\bar\mu,2}.
\end{aligned}
\]
For $r\ne0$, divide by $\|r\|_{\bar\mu,2}$ and use $(A_r+B_r)^2\le2(A_r^2+B_r^2)$ to obtain $A_r^2+B_r^2\ge c\|r\|_{\bar\mu,2}^2$. The case $r=0$ is immediate. This is the claimed dual lower bound.

For the fixed one-step law, the constant is independent of $n$ and $\beta$ by Lemma~\ref{lem:injectivity-equivalence}. For the pooled-law case, Assumption~\ref{ass:pooled-support} gives a uniform lower bound on $\min_x\bar\mu_H(x)$, so the norm-equivalence derivation of Lemma~\ref{lem:injectivity-equivalence} makes the primal inverse constant, and therefore the dual constant above, uniform in $N$ and $H$.
\end{proof}

\begin{corollary}[Uniform bound for the adjoint weight]
\label{cor:target-adjoint-bound}

Under Assumption~\ref{ass:uniform-signed-aggregation-identification} in the one-step setup, and under Assumptions~\ref{ass:uniform-signed-aggregation-identification} and~\ref{ass:pooled-support} in the pooled-law setup, there exists a constant $C$, determined by the constant in Lemma~\ref{lem:uniform-dual-identification}, such that
\[
\|\lambda_\beta\|_{\bar\mu,2}
+
\|\lambda_\beta\|_\infty
\le
C.
\]
The constant $C$ is independent of $n$ and $\beta$; under Assumption~\ref{ass:pooled-support}, it is also independent of $N$ and $H$.
\end{corollary}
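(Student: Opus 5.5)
The plan is to apply Lemma~\ref{lem:uniform-dual-identification} with $D=D_\beta$ and $r=\lambda_\beta$. Then we will convert the resulting $L_2(\bar\mu)$ bound into a sup-norm bound using the full-support lower bound on $\bar\mu$.

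First, $D_\beta$ is allowable. Its weights $\omega_{\beta,Q_\beta}(a\mid s)$ sum to one over $a$ and have signed $\ell_1$ mass at most $C_{\cA}$ by Lemma~\ref{lem:if-omega-tv-bound}. Both facts hold uniformly in $\beta$. So the lemma applies with a constant $c>0$ that depends on neither $\beta$ nor $n$, and under Assumption~\ref{ass:pooled-support} not on $N$ or $H$ either.

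Next we evaluate the two terms on the left of the dual bound with $r=\lambda_\beta$, using the defining properties of the adjoint weight (Proposition~\ref{prop:adjoint-existence} in the pooled case, or the population adjoint equation under the one-step law).
\begin{itemize}
\item The supremum over $q\in\cQ$ vanishes identically: for every $q\in\cQ$ the adjoint equation gives $\EE[\lambda_\beta(S,A)(q(S,A)-(D_\beta q)(S'))]=0$.
\item The normalization term equals $|\EE[\lambda_\beta]|^2=1$.
\end{itemize}
The lemma therefore yields $c\|\lambda_\beta\|_{\bar\mu,2}^2\le 1$, that is, $\|\lambda_\beta\|_{\bar\mu,2}\le c^{-1/2}$.

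For the sup norm we use finiteness of $\cS\times\cA$. With $\bar\mu_{\min}:=\min_x\bar\mu(x)>0$, every $x$ satisfies $\lambda_\beta(x)^2\bar\mu(x)\le\|\lambda_\beta\|_{\bar\mu,2}^2$, so $\|\lambda_\beta\|_\infty\le \bar\mu_{\min}^{-1/2}\|\lambda_\beta\|_{\bar\mu,2}$. In the one-step setup $\bar\mu_{\min}$ is a fixed positive constant. In the pooled setup, Assumption~\ref{ass:pooled-support} bounds it below uniformly in $H$. Summing the two bounds gives the claim with $C=c^{-1/2}(1+\bar\mu_{\min}^{-1/2})$.

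The only step needing care is the uniformity of $c$. That uniformity is inherited from Lemma~\ref{lem:uniform-sup-inverse} via Lemma~\ref{lem:injectivity-equivalence}, whose constants are uniform over the whole compact allowable family. Since $D_\beta$ lies in that family for every $\beta$, no $\beta$-dependence can enter. No other obstacle is expected.
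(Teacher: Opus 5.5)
Your proposal is correct and follows the paper's proof: apply Lemma~\ref{lem:uniform-dual-identification} with $D=D_\beta$ and $r=\lambda_\beta$, note that the moment term vanishes and the normalization term equals one to get $\|\lambda_\beta\|_{\bar\mu,2}\le c^{-1/2}$, then pass to the sup norm through the support lower bound. Two additions of yours go beyond the paper's proof and are both fine: you check explicitly via Lemma~\ref{lem:if-omega-tv-bound} that $D_\beta$ is allowable, which the paper leaves implicit, and you give the explicit constant $C=c^{-1/2}(1+\bar\mu_{\min}^{-1/2})$.
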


\begin{proof}
Apply Lemma~\ref{lem:uniform-dual-identification} with $D=D_\beta$ and $r=\lambda_\beta$. The adjoint moment term vanishes by the adjoint equation $\EE\!\left[\lambda_\beta(S,A)(q(S,A)-(D_\beta q)(S'))\right]=0$ for $q\in\cQ$, and the normalization term equals one, so $1\ge c\|\lambda_\beta\|_{\bar\mu,2}^2$ and hence $\|\lambda_\beta\|_{\bar\mu,2}\le c^{-1/2}$. The relevant support lower bound gives $\|\lambda_\beta\|_\infty\le C\|\lambda_\beta\|_{\bar\mu,2}\le Cc^{-1/2}$, and absorbing the support factor and $c^{-1/2}$ into $C$ gives the result.
\end{proof}

\begin{proposition}[Quantitative identification through the adjoint criterion]
\label{prop:adjoint-quantitative-lower}

Under Assumption~\ref{ass:uniform-signed-aggregation-identification}, there exists $c>0$, independent of $n$ and $\beta$, such that, for every $\lambda\in\Lambda$,
\[
c\|\lambda-\lambda_\beta\|_{\bar\mu,2}^2
\le
G_\beta(\lambda).
\]  
\end{proposition}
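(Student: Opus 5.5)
The plan is to compute the inner supremum in $G_\beta$ explicitly, express it as a squared dual norm, and then invoke Lemma~\ref{lem:uniform-dual-identification} with $D=D_\beta$ and $r=\lambda-\lambda_\beta$.

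\emph{Reduction to the difference.} Fix $\lambda\in\Lambda$ and set $r\coloneqq\lambda-\lambda_\beta$. The adjoint equation from Proposition~\ref{prop:adjoint-existence} gives $\EE[\lambda_\beta A_\beta q]=0$ for every $q\in\cQ$. Hence $\EE[\lambda A_\beta q]=\EE[rA_\beta q]$. The normalization $\EE[\lambda_\beta]=1$ gives $\EE[\lambda]-1=\EE[r]$. Therefore
\[
G_\beta(\lambda)=\sup_{q\in\cQ}\bigl\{2\ell_r(q)-\|q\|_{\bar\mu,2}^2\bigr\}+|\EE[r(S,A)]|^2,
\qquad
\ell_r(q)\coloneqq\EE[r(S,A)A_\beta q(Z)].
\]

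\emph{Evaluating the supremum.} The map $\ell_r$ is linear on the finite-dimensional space $\cQ$, and $\|\cdot\|_{\bar\mu,2}$ is a genuine norm on $\cQ$ because $\bar\mu$ has full support. Write $M_r\coloneqq\sup_{q\in\cQ,\|q\|_{\bar\mu,2}\le1}|\ell_r(q)|$. Take $q=tu$ with $\|u\|_{\bar\mu,2}=1$ and $\ell_r(u)\ge0$, and optimize over $t\ge0$. This gives $\sup_q\{2\ell_r(q)-\|q\|^2\}\ge\sup_u\ell_r(u)^2=M_r^2$. Equality holds by the Cauchy--Schwarz/AM--GM bound $2\ell_r(q)\le 2M_r\|q\|\le M_r^2+\|q\|^2$, so the supremum is exactly $M_r^2$. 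Thus
\[
G_\beta(\lambda)=M_r^2+|\EE[r(S,A)]|^2 .
\]

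\emph{Dual identification.} The operator $D_\beta$ is allowable by Lemma~\ref{lem:if-omega-tv-bound}. Its weights are exactly the $\omega_{\beta,Q_\beta}$ appearing in $A_\beta$, so $A_\beta q(Z)=q(S,A)-(D_\beta q)(S')$. Lemma~\ref{lem:uniform-dual-identification} applied with $D=D_\beta$ gives
\[
M_r^2+|\EE r|^2\ge c\|r\|_{\bar\mu,2}^2,
\]
with $c$ independent of $n$ and $\beta$, and uniform over all allowable operators. This is the claimed bound.

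The only delicate point is the exact evaluation of the unconstrained supremum over $q$, and in particular confirming that it is finite and equals the squared dual norm. That step hinges on $\|\cdot\|_{\bar\mu,2}$ being positive definite on $\cQ$, which the full-support assumption provides. Finally, the requirement $\lambda\in\Lambda$ plays no role beyond ensuring $\lambda\in L_2(\bar\mu)$.
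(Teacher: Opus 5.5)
Your proposal is correct and follows the paper's proof essentially step for step. The paper also evaluates the inner supremum as a squared dual norm, via the Riesz representer $a_\lambda\in\cQ$ and completing the square where you use scaling plus AM--GM. It then rewrites both discrepancies in terms of $r=\lambda-\lambda_\beta$ using the adjoint equation and normalization, and applies Lemma~\ref{lem:uniform-dual-identification} with $D=D_\beta$, made allowable by Lemma~\ref{lem:if-omega-tv-bound}.
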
 

\begin{proof}  
By Lemma~\ref{lem:if-omega-tv-bound}, with $Q=Q_\beta$, the row sum identity and signed $\ell_1$ envelope imply that $D_\beta$ is allowable.

Fix $\lambda\in\Lambda$. Since $\cQ$ is finite dimensional and equipped with the $\bar\mu$-inner product, the linear functional $q\mapsto\EE[\lambda(S,A)A_\beta q(Z)]$ has a unique representer $a_\lambda\in\cQ$ satisfying $\EE[\lambda(S,A)A_\beta q(Z)]=\langle a_\lambda,q\rangle_{\bar\mu}$ for $q\in\cQ$. Completing the square, and then using the dual characterization of the norm of $a_\lambda$, gives
\[
\sup_{q\in\cQ}
\left\{
2\EE[\lambda(S,A)A_\beta q(Z)]-\|q\|_{\bar\mu,2}^2
\right\}
=
\|a_\lambda\|_{\bar\mu,2}^2.
\]
The population criterion therefore satisfies
\[
\begin{aligned}
G_\beta(\lambda)
&= \|a_\lambda\|_{\bar\mu,2}^2
+
(\EE[\lambda(S,A)]-1)^2 = \sup_{\substack{q\in\cQ\\ \|q\|_{\bar\mu,2}\le1}}
\left|
\EE[\lambda(S,A)A_\beta q(Z)]
\right|^2
+
(\EE[\lambda(S,A)]-1)^2.
\end{aligned}
\]

Let $r=\lambda-\lambda_\beta$. The adjoint moment equation and normalization of $\lambda_\beta$ give
\[
\EE[\lambda(S,A)A_\beta q(Z)]
=
\EE\!\left[
r(S,A)
\left(
q(S,A)-(D_\beta q)(S')
\right)
\right],
\qquad q\in\cQ.
\]
The normalization gives $\EE[\lambda(S,A)]-1=\EE[r(S,A)]$. Together with the preceding display, this gives the two discrepancies controlled by Lemma~\ref{lem:uniform-dual-identification}. Applying the lemma with $D=D_\beta$ gives
\[
G_\beta(\lambda)
\ge
c\|r\|_{\bar\mu,2}^2
=
c\|\lambda-\lambda_\beta\|_{\bar\mu,2}^2.
\]
At $\lambda=\lambda_\beta$, the same adjoint equation and normalization also give $G_\beta(\lambda_\beta)=0$.
\end{proof}

\subsection{Comparison of the Empirical Criteria}

For $\lambda\in\Lambda$, define the reference empirical criterion
\[
G_{n,\beta}(\lambda)
\coloneqq
\sup_{q\in\cQ}
\left\{
2\Pn[\lambda(S,A)A_\beta q(Z)]-\|q\|_\tau^2
\right\}
+(\Pn\lambda-1)^2 ,
\]
which replaces the population expectation in $G_\beta$ by $\Pn$ and the population norm by $\|\cdot\|_\tau$, while keeping the derivative weights at $Q_\beta$ through $A_\beta$. The feasible criterion $\widehat G_\beta$ of Section~\ref{sec:plug-in-adjoint-erm} instead uses $\widehat A_\beta$, which evaluates the derivative weights at $\widehat Q$.

Define $\Delta_n:=\sup_{\lambda\in\Lambda}|G_{n,\beta}(\lambda)-G_\beta(\lambda)|$ and $\Pi_n:=\sup_{\lambda\in\Lambda}|\widehat G_\beta(\lambda)-G_{n,\beta}(\lambda)|$. Thus, $\Delta_n$ compares the reference empirical and population criteria, whereas $\Pi_n$ isolates the additional plug-in discrepancy.

\subsubsection{Reference Empirical Criterion Deviation}
The inner optimizations in $G_{n,\beta}$ and $G_\beta$ range over the unbounded linear space $\cQ$. On a high-probability norm comparison event, the quadratic penalties dominate their linear terms outside a common deterministic ball. It therefore suffices to control the two criteria on this bounded ball, where a finite dimensional covering argument applies. After this localization, the relevant class is bounded and finite dimensional, and standard concentration applies.

\begin{lemma}[Reference empirical criterion deviation]
\label{lem:delta}

Under the bounded tabular construction of $\Lambda$,
\[
\Delta_n
=
O_P\left(\sqrt{\frac{|\cS||\cA|\log n}{n}}\right).
\]

\end{lemma}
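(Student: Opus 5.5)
We split the criterion into its inner supremum and the normalization term and treat them separately. The normalization part is easy: since $\|\lambda\|_\infty\le B_\Lambda$, we have $|(\Pn\lambda-1)^2-(\EE\lambda-1)^2|\le C|\Pn\lambda-\EE\lambda|$. Moreover $\lambda\mapsto\Pn\lambda-\EE\lambda=\sum_x\lambda(x)(\widehat\mu_n(x)-\bar\mu(x))$ is linear in the empirical cell frequencies. A union bound over the $|\cS||\cA|$ cells (binomial Bernstein, as in Lemma~\ref{lem:stochastic-verified}) therefore gives $\sup_\Lambda|\Pn\lambda-\EE\lambda|=O_P(\sqrt{|\cS||\cA|\log n/n})$.

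For the supremum part we write $F_n(\lambda,q)=2\Pn[\lambda A_\beta q]-\|q\|_\tau^2$ and $F(\lambda,q)=2\EE[\lambda A_\beta q]-\|q\|_{\bar\mu,2}^2$. The plan is to localize $q$ to a deterministic ball and then run a uniform comparison.

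\textbf{Step 1 (localization).} By Lemma~\ref{lem:if-omega-tv-bound}, the weights $\omega_{\beta,Q_\beta}$ have signed $\ell_1$ norm at most $C_\cA$. Combined with $\|\lambda\|_\infty\le B_\Lambda$ and full support of $\bar\mu$, this gives $|\lambda(S,A)A_\beta q(Z)|\le C\|q\|_\infty\le C'\|q\|_{\bar\mu,2}$ almost surely, with $C'$ independent of $\beta$. Hence $|\Pn[\lambda A_\beta q]|$ and $|\EE[\lambda A_\beta q]|$ are both at most $C'\|q\|_{\bar\mu,2}$.

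On the norm-comparison event of Lemma~\ref{lem:stochastic-verified} (first display, extended to all tabular $q$ by homogeneity), we have $\|q\|_\tau^2\ge\Pn q^2\ge\tfrac12\|q\|_{\bar\mu,2}^2$ for $\|q\|_{\bar\mu,2}$ large. To avoid the additive $\eta_n^2$ term, we use the purely multiplicative cell bound $\widehat\mu_n(x)\ge\bar\mu(x)/2$, which holds w.p.\ $\to1$ since $\min\bar\mu>0$. Then any $q$ with $\|q\|_{\bar\mu,2}>M\coloneqq 8C'$ gives a negative objective in both criteria. Since $q=0$ gives value $0$, both suprema can be taken over $\cQ_M\coloneqq\{q\in\cQ:\|q\|_{\bar\mu,2}\le M\}$.

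\textbf{Step 2 (uniform deviation on the ball).} On $\Lambda\times\cQ_M$ we bound
\[
|F_n-F|\le 2|(\Pn-\EE)[\lambda A_\beta q]|+|\Pn q^2-\|q\|_{\bar\mu,2}^2|+(n|\cS||\cA|)^{-1}\textstyle\sum q^2.
\]
\begin{itemize}
\item The last term is $O(1/n)$ on $\cQ_M$.
\item The middle term is $\sum_x(\widehat\mu_n(x)-\bar\mu(x))q(x)^2$, which is $O_P(\sqrt{\log n/n})$ uniformly by the cell frequency bounds.
\item For the first term, the class $\{\lambda(S,A)A_\beta q(Z)\}$ is indexed by $(\lambda,q)$ in a bounded set of dimension at most $2|\cS||\cA|$. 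It is uniformly bounded, and it is Lipschitz in these coordinates in sup norm with $\beta$-free constants, again by Lemma~\ref{lem:if-omega-tv-bound}. An $n^{-1}$-net of log-cardinality $C|\cS||\cA|\log n$, Hoeffding or Bernstein with a union bound, and a discretization error of $O(1/n)$ give $\sup|(\Pn-\EE)[\lambda A_\beta q]|=O_P(\sqrt{|\cS||\cA|\log n/n})$.
\end{itemize}

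\textbf{Conclusion.} Finally, $|\sup_q F_n-\sup_q F|\le\sup_{\Lambda\times\cQ_M}|F_n-F|$, and adding the normalization bound yields the claim.

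The main obstacle is Step 1. We need the localization radius to be deterministic and free of $\beta$, and the empirical quadratic must control the linear term on the \emph{same} event for all $\lambda$ simultaneously. The stabilizing term in $\|\cdot\|_\tau$ does not help here because it is too small. I would rely on the multiplicative lower bound $\widehat\mu_n\ge\bar\mu/2$ on all cells, whose probability tends to one since the state-action space is finite, rather than on the additive form in Lemma~\ref{lem:stochastic-verified}.
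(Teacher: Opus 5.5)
Your proposal is correct and follows the paper's proof essentially step for step. You localize the inner supremum to a deterministic, $\beta$-free ball on the event $\tfrac12\|q\|_{\bar\mu,2}^2\le\Pn[q(S,A)^2]\le\tfrac32\|q\|_{\bar\mu,2}^2$, which is exactly the paper's $\mathcal E_{\mu,n}$ and is equivalent to your multiplicative cell-frequency bound. You then bound the difference of suprema by the supremum of differences and control $(\Pn-\EE)[\lambda A_\beta q]$ with an $n^{-1}$-net in the tabular coordinates, using Hoeffding's inequality and a union bound. The only cosmetic difference is that you treat the quadratic and normalization terms directly through $\widehat\mu_n(x)-\bar\mu(x)$ rather than folding them into the net class $\mathcal H$, which is equally valid.
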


\begin{proof}
Recall that $
G_{n,\beta}(\lambda)
=
\sup_{q\in\cQ}
\{2\Pn[\lambda(S,A)A_\beta q(Z)]-\|q\|_\tau^2\}
+(\Pn\lambda-1)^2.$
The population criterion is $
G_\beta(\lambda)
=
\sup_{q\in\cQ}
\{2\EE[\lambda(S,A)A_\beta q(Z)]-\|q\|_{\bar\mu,2}^2\}
+(\EE\lambda-1)^2. $

\paragraph{Restriction to a common test function ball.}
Let $\widehat\mu_n(s,a)\coloneqq\Pn\{(S,A)=(s,a)\}$.
Since $\cS\times\cA$ is finite, a finite union bound and Hoeffding's inequality imply that, with probability at least $1-e^{-t}$,
\[
\max_{(s,a)\in\cS\times\cA}
|\widehat\mu_n(s,a)-\bar\mu(s,a)|
\le
C
\sqrt{\frac{|\cS||\cA|+t}{n}}.
\]
Let $\mathcal E_{\mu,n}$ be the event on which
\[
\frac12\|q\|_{\bar\mu,2}^2
\le
\Pn[q(S,A)^2]
\le
\frac32\|q\|_{\bar\mu,2}^2,
\qquad q\in\cQ.
\]
The preceding finite union bound implies $\PP(\mathcal E_{\mu,n})\to1$.
In the remainder of the proof, work on this event.

Since $\lambda\in\Lambda$ implies $\|\lambda\|_\infty\le B_\Lambda$, the $\beta$-free signed-envelope bound for $A_\beta$, and finite support imply that there exists $C<\infty$ such that, for every $\lambda\in\Lambda$ and $q\in\cQ$, $|\Pn[\lambda(S,A)A_\beta q(Z)]|\le C\|q\|_{\bar\mu,2}$. The same bound holds for the population moment, $|\EE[\lambda(S,A)A_\beta q(Z)]|\le C\|q\|_{\bar\mu,2}$.
Indeed, finite support and the full-support condition on $\bar\mu$ from Section~\ref{sec:tabular-fast-rate} give $|A_\beta q(Z)|\le C\|q\|_\infty\le C\|q\|_{\bar\mu,2}$. Consequently,
\[
2\Pn[\lambda(S,A)A_\beta q(Z)]-\|q\|_\tau^2
\le
C\|q\|_{\bar\mu,2}
-
\frac{1}{2}\|q\|_{\bar\mu,2}^2.
\]
The population objective satisfies the corresponding bound,
\[
2\EE[\lambda(S,A)A_\beta q(Z)]-\|q\|_{\bar\mu,2}^2
\le
C\|q\|_{\bar\mu,2}
-
\|q\|_{\bar\mu,2}^2.
\]
The same signed $\ell_1$ envelope, evaluated at $\widehat Q$, gives $\left|\Pn[\lambda(S,A)\widehat A_\beta q(Z)]\right|\le C\|q\|_{\bar\mu,2}$, and therefore
\[
2\Pn[\lambda(S,A)\widehat A_\beta q(Z)]-\|q\|_\tau^2
\le
C\|q\|_{\bar\mu,2}
-
\frac{1}{2}\|q\|_{\bar\mu,2}^2.
\]
All three inner objectives are negative outside a sufficiently large ball and equal zero at $q=0$. Hence, after choosing a sufficiently large fixed $C$, the inner suprema in $G_{n,\beta}$, $G_\beta$, and $\widehat G_\beta$ may all be restricted to the common deterministic ball $\mathcal B_Q\coloneqq\left\{q\in\cQ:\|q\|_{\bar\mu,2}\le C\right\}$, where $C<\infty$ is fixed and independent of $n$, $N$, $H$, and $\beta$.

\paragraph{Reduction to empirical averages.} We have
\[
\begin{aligned}
\Delta_n
&\le \sup_{\substack{\lambda\in\Lambda\\ q\in\mathcal B_Q}}
\left|
2(\Pn-\EE)\!\left[
\lambda(S,A)A_\beta q(Z)
\right]
-
(\Pn-\EE)\!\left[q(S,A)^2\right]
-
\frac{1}{n|\cS||\cA|}\sum_{s\in\cS}\sum_{a\in\cA}q(s,a)^2
\right|
\\
&\quad+ \sup_{\lambda\in\Lambda}
\left|
(\Pn\lambda-1)^2-(\EE\lambda-1)^2
\right|.
\end{aligned}
\]
Indeed, for each $\lambda\in\Lambda$,
\[
\begin{aligned}
&
\left|
\sup_{q\in\mathcal B_Q}
\left\{
2\Pn[\lambda(S,A)A_\beta q(Z)]
-
\|q\|_\tau^2
\right\}
-
\sup_{q\in\mathcal B_Q}
\left\{
2\EE[\lambda(S,A)A_\beta q(Z)]
-
\|q\|_{\bar\mu,2}^2
\right\}
\right|
\\
&\le \sup_{q\in\mathcal B_Q}
\left|
2(\Pn-\EE)\!\left[
\lambda(S,A)A_\beta q(Z)
\right]
-
(\Pn-\EE)\!\left[q(S,A)^2\right]
-
\frac{1}{n|\cS||\cA|}\sum_{s\in\cS}\sum_{a\in\cA}q(s,a)^2
\right|.
\end{aligned}
\]
The added finite state-action stabilizer is deterministic and uniformly of order $n^{-1}$ on $\mathcal B_Q$.
For the normalization term,
\[
\left|
(\Pn\lambda-1)^2-(\EE\lambda-1)^2
\right|
=
|(\Pn-\EE)\lambda|
\,|\Pn\lambda+\EE\lambda-2|.
\]
Since $\lambda\in\Lambda$, we have $\|\lambda\|_\infty\le B_\Lambda$, which gives a uniform bound on the second factor.
It therefore remains to control the empirical averages uniformly over $\Lambda\times\mathcal B_Q$.

\paragraph{Finite dimensional concentration.}
Consider
\[
\mathcal H
=
\left\{
\lambda(S,A)A_\beta q(Z),\ q(S,A)^2,\ \lambda(S,A):
\lambda\in\Lambda,\ q\in\mathcal B_Q
\right\}.
\]
Every function in $\mathcal H$ is uniformly bounded by a constant depending only on the finite tabular boundedness constants and the support lower bound.
Moreover, the elements of $\mathcal H$ are Lipschitz in the finite dimensional tabular coordinates of $(\lambda,q)$ on the bounded set $\Lambda\times\mathcal B_Q$. Since $\Lambda=\{\lambda:\mathcal S\times\mathcal A\to\mathbb R:\|\lambda\|_\infty\le B_\Lambda\}$ is a bounded finite-dimensional set, $\log N(\epsilon,\Lambda,L_\infty)\le C|\cS||\cA|\log(C/\epsilon)$ for $0<\epsilon<1$, and $\dim(\cQ)\le|\cS||\cA|$. Hence a finite $n^{-1}$-net in the tabular coordinates of $(\lambda,q)$ has cardinality at most $\exp(C|\cS||\cA|\log n)$. Hoeffding's inequality on the net and a union bound give, with probability at least $1-e^{-t}$,
\[
\sup_{f\in\mathcal H}|(\Pn-\EE)f|
\le
C
\sqrt{\frac{|\cS||\cA|\log n+t}{n}}.
\]
The discretization error is absorbed into the same bound because the net radius is $n^{-1}$ and the class is uniformly Lipschitz and bounded.

Since the tabular space is finite, $\sup_{q\in\mathcal B_Q}(n|\cS||\cA|)^{-1}\sum_{s,a}q(s,a)^2\le Cn^{-1}$, which is absorbed into the same rate.
Finally, since $\lambda$ is uniformly bounded, the normalization term satisfies, uniformly over $\lambda\in\Lambda$, $|(\Pn\lambda-1)^2-(\EE\lambda-1)^2|=|(\Pn-\EE)\lambda|\,|\Pn\lambda+\EE\lambda-2|\le C|(\Pn-\EE)\lambda|$, and this is controlled by the same empirical process bound applied to the subclass $\{\lambda(S,A):\lambda\in\Lambda\}\subset\mathcal H$.
Combining the Fenchel and normalization bounds yields
\[
\Delta_n
\le
C
\sqrt{\frac{|\cS||\cA|\log n+t}{n}}
\]
with probability at least $1-e^{-t}$ on $\mathcal E_{\mu,n}$, so the displayed bound holds on an event of probability at least $1-e^{-t}-\PP(\mathcal E_{\mu,n}^c)$. Fix $\epsilon>0$, choose $t$ with $e^{-t}\le\epsilon/2$, and use $\PP(\mathcal E_{\mu,n}^c)\to0$ to make the second term at most $\epsilon/2$ for all large $n$. Since $\epsilon>0$ was arbitrary, this is the stated $O_P$ rate.
\end{proof}

\subsubsection{Plug-in Criterion Perturbation}
On the event $\mathcal E_{\mu,n}$, replacing $Q_\beta$ by $\widehat Q$ perturbs the derivative weights by an amount of order $\beta\|\widehat Q-Q_\beta\|_{\bar\mu,2}$.  

\begin{lemma}[Plug-in criterion perturbation]
\label{lem:plugin}

Under the bounded tabular construction of $\Lambda$, if the generic first-stage estimator satisfies $\|\widehat Q-Q_\beta\|_{\bar\mu,2}=O_P(r_Q)$,
\[
\Pi_n
=
\sup_{\lambda\in\Lambda}
\left|\widehat G_\beta(\lambda)-G_{n,\beta}(\lambda)\right|
=
O_P(\beta r_Q).
\]
\end{lemma}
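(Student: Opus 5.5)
The plan is to work on the norm-comparison event $\mathcal E_{\mu,n}$ from the proof of Lemma~\ref{lem:delta}, since its probability tends to one. On that event, the inner suprema in both $\widehat G_\beta$ and $G_{n,\beta}$ can be restricted to the common deterministic ball $\mathcal B_Q=\{q\in\cQ:\|q\|_{\bar\mu,2}\le C\}$. This restriction was already justified for all three criteria in that proof, using the $\beta$-free signed envelope of Lemma~\ref{lem:if-omega-tv-bound} evaluated at both $Q_\beta$ and $\widehat Q$. The two criteria share the same normalization term $(\Pn\lambda-1)^2$ and the same penalty $\|q\|_\tau^2$. By the elementary inequality $|\sup_q f-\sup_q g|\le\sup_q|f-g|$, we therefore get
\[
\Pi_n\le 2\sup_{\lambda\in\Lambda}\sup_{q\in\mathcal B_Q}\left|\Pn\!\left[\lambda(S,A)\,(\widehat A_\beta q-A_\beta q)(Z)\right]\right|.
\]

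Next I would write the difference pointwise as
\[
(\widehat A_\beta q-A_\beta q)(Z)=-\sum_{a\in\cA}\{\widehat\omega_\beta(a\mid S')-\omega_{\beta,Q_\beta}(a\mid S')\}\,q(S',a).
\]
Using $\|\lambda\|_\infty\le B_\Lambda$ and $\|q\|_\infty\le C\|q\|_{\bar\mu,2}\le C$ on $\mathcal B_Q$ (full support and finiteness), the task reduces to bounding $\max_{s'}\sum_a|\widehat\omega_\beta(a\mid s')-\omega_{\beta,Q_\beta}(a\mid s')|$.

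This is the Lipschitz step for the derivative weights. By Lemma~\ref{lem:softmax-calculus}, $\omega_{\beta,Q}(\cdot\mid s)=\nabla_q\mathrm{aso}_\beta(Q(s,\cdot))$, and the Hessian satisfies $\|\nabla^2_q\mathrm{aso}_\beta\|_{\mathrm{op}}\le C\beta$ uniformly in $Q$. The mean value theorem along the segment from $Q_\beta$ to $\widehat Q$ then gives
\[
\|\widehat\omega_\beta(\cdot\mid s)-\omega_{\beta,Q_\beta}(\cdot\mid s)\|_2\le C\beta\,\|\widehat Q(s,\cdot)-Q_\beta(s,\cdot)\|_2.
\]
Finite $|\cA|$ converts this to the $\ell_1$ norm. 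The full support of $\bar\mu$ gives $\max_s\|\widehat Q(s,\cdot)-Q_\beta(s,\cdot)\|_2\le C\|\widehat Q-Q_\beta\|_{\bar\mu,2}$. Combining these, we get the deterministic bound $\Pi_n\le C\beta\|\widehat Q-Q_\beta\|_{\bar\mu,2}$ on $\mathcal E_{\mu,n}$. Because $\PP(\mathcal E_{\mu,n})\to1$ and $\|\widehat Q-Q_\beta\|_{\bar\mu,2}=O_P(r_Q)$, this yields $\Pi_n=O_P(\beta r_Q)$.

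The main obstacle is making the ball restriction rigorous for $\widehat G_\beta$ without any condition on $\widehat Q$. I expect this to go through because the signed envelope bound $\sum_a|\widehat\omega_\beta(a\mid s)|\le C_{\cA}$ holds for every $Q$ and $\beta$. That gives a linear term of order $\|q\|_{\bar\mu,2}$ uniformly, so the negative quadratic penalty dominates outside the same fixed ball. Since the Hessian constant is the only place where $\beta$ enters, the final constant depends on $\beta$ only through the factor $\beta$ in the bound.
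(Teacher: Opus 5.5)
Your proposal is correct and follows essentially the same route as the paper: on $\mathcal E_{\mu,n}$ you restrict both inner suprema to the common ball $\mathcal B_Q$, cancel the shared normalization and penalty terms, and bound the derivative-weight discrepancy by $C\beta\|\widehat Q-Q_\beta\|_{\bar\mu,2}$ via the $C\beta$ Hessian bound of Lemma~\ref{lem:softmax-calculus} and the mean value theorem. The only cosmetic difference is that you pass through the state-wise $\ell_2$ norm where the paper uses the sup norm, which is immaterial by finite-dimensional norm equivalence and full support.
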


\begin{proof}
Work on $\mathcal E_{\mu,n}$. By the preceding localization argument, the inner suprema in $G_{n,\beta}(\lambda)$ and $\widehat G_\beta(\lambda)$ can both be restricted to $\mathcal B_Q$, uniformly over $\lambda\in\Lambda$.

We first bound the perturbation of the derivative weights. Since the derivative weights are the gradient of the softmax-smoothed value map, the Hessian bound in Lemma~\ref{lem:softmax-calculus}, together with the mean value theorem and norm equivalence on the fixed action space, gives a constant $C<\infty$, independent of $n$ and $\beta$, such that
\[
\sum_{a\in\mathcal A}
\left|
\widehat\omega_\beta(a\mid s)
-
\omega_{\beta,Q_\beta}(a\mid s)
\right|
\le
C\beta
\|\widehat Q(s,\cdot)-Q_\beta(s,\cdot)\|_\infty,
\qquad
s\in\mathcal S.
\]
The preceding bound gives
\begin{equation}
\begin{aligned}
\sup_{s\in\mathcal S}
\sum_{a\in\mathcal A}
\left|
\widehat\omega_\beta(a\mid s)
-
\omega_{\beta,Q_\beta}(a\mid s)
\right|
&\le C\beta
\|\widehat Q-Q_\beta\|_\infty
\le
C\beta
\|\widehat Q-Q_\beta\|_{\bar\mu,2}.
\end{aligned}
\label{eq:F4}
\end{equation}
The normalization term $(\Pn\lambda-1)^2$ is identical in $\widehat G_\beta$ and $G_{n,\beta}$. Hence, for every $\lambda\in\Lambda$,
\[
\begin{aligned}
\left|
\widehat G_\beta(\lambda)-G_{n,\beta}(\lambda)
\right|
&\le 2\sup_{q\in\mathcal B_Q}
\left|
\Pn\!\left[
\lambda(S,A)
\left(
\widehat A_\beta q(Z)-A_\beta q(Z)
\right)
\right]
\right|.
\end{aligned}
\]
For every $q\in\mathcal B_Q$,
\[
\begin{aligned}
\left|
\widehat A_\beta q(Z)-A_\beta q(Z)
\right|
&= \left|
\sum_{a\in\mathcal A}
\left(
\omega_{\beta,Q_\beta}(a\mid S')
-
\widehat\omega_\beta(a\mid S')
\right)
q(S',a)
\right|
\\
&\le \left[
\sup_{s\in\mathcal S}
\sum_{a\in\mathcal A}
\left|
\widehat\omega_\beta(a\mid s)
-
\omega_{\beta,Q_\beta}(a\mid s)
\right|
\right]
\|q\|_\infty.
\end{aligned}
\]
Moreover, finite support gives $\sup_{q\in\mathcal B_Q}\|q\|_\infty\le C<\infty$, and $\sup_{\lambda\in\Lambda}\|\lambda\|_\infty<\infty$, where the second bound follows from $\|\lambda\|_\infty\le B_\Lambda$ by construction. It follows that
\[
\Pi_n
\le
C
\sup_{s\in\mathcal S}
\sum_{a\in\mathcal A}
\left|
\widehat\omega_\beta(a\mid s)
-
\omega_{\beta,Q_\beta}(a\mid s)
\right|,
\]
so combining with \eqref{eq:F4} yields $\Pi_n\le C\beta\|\widehat Q-Q_\beta\|_{\bar\mu,2}$ on $\mathcal E_{\mu,n}$. Since $\PP(\mathcal E_{\mu,n})\to1$ and $\|\widehat Q-Q_\beta\|_{\bar\mu,2}=O_P(r_Q)$, we conclude that $\Pi_n=O_P(\beta r_Q)$.
\end{proof}

\begin{lemma}[Generic adjoint criterion comparison]\label{lem:adjoint-criterion-comparison}
Let $G$ be a population criterion with $G(\lambda_\beta)=0$ and $c\|\lambda-\lambda_\beta\|_{\bar\mu,2}^2\le G(\lambda)$ on $\Lambda$. Let $G^0$ be a reference empirical criterion, let $\widehat G$ be its plug-in version, and define
\[
\Delta\coloneqq\sup_{\lambda\in\Lambda}|G^0(\lambda)-G(\lambda)|,
\qquad
\Pi\coloneqq\sup_{\lambda\in\Lambda}|\widehat G(\lambda)-G^0(\lambda)|.
\]
If $\widehat\lambda$ minimizes $\widehat G$ over $\Lambda$ and $\lambda_\beta\in\Lambda$, then
\[
G(\widehat\lambda)
\le2\Delta+2\Pi,
\qquad
\|\widehat\lambda-\lambda_\beta\|_{\bar\mu,2}^2
\le C(\Delta+\Pi).
\]
\end{lemma}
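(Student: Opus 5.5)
The argument is the standard basic-inequality comparison for M-estimators, run through the triangle inequality twice. Write $G^0$ for the reference empirical criterion and $\widehat G$ for its plug-in version, and let $\widehat\lambda$ be an exact minimizer of $\widehat G$ over $\Lambda$.

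The first step is to bound $G(\widehat\lambda)$. I would move from $G$ to $G^0$ and then to $\widehat G$, using the definitions of $\Delta$ and $\Pi$:
\[
G(\widehat\lambda)\le G^0(\widehat\lambda)+\Delta\le \widehat G(\widehat\lambda)+\Pi+\Delta .
\]
Because $\lambda_\beta\in\Lambda$ and $\widehat\lambda$ minimizes $\widehat G$, we have $\widehat G(\widehat\lambda)\le\widehat G(\lambda_\beta)$. Reversing the same two comparisons at $\lambda_\beta$ gives
\[
\widehat G(\lambda_\beta)\le G^0(\lambda_\beta)+\Pi\le G(\lambda_\beta)+\Delta+\Pi=\Delta+\Pi ,
\]
where the last equality uses $G(\lambda_\beta)=0$. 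Chaining the two displays yields $G(\widehat\lambda)\le 2\Delta+2\Pi$, which is the first claim.

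For the second claim, $\widehat\lambda\in\Lambda$, so the assumed quadratic lower bound applies at $\widehat\lambda$ and gives
\[
c\|\widehat\lambda-\lambda_\beta\|_{\bar\mu,2}^2\le G(\widehat\lambda)\le 2(\Delta+\Pi).
\]
Dividing by $c$ gives the stated bound with $C=2/c$.

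No step here is a genuine obstacle; the argument is purely deterministic. The points to check are that both $\widehat\lambda$ and $\lambda_\beta$ lie in $\Lambda$, so that each supremum defining $\Delta$ and $\Pi$ covers the evaluation points, and that $\widehat\lambda$ is an exact minimizer, so no optimization-slack term appears.
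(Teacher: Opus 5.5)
Your proposal is correct and is essentially the paper's proof: the same chain $G(\widehat\lambda)\le G^0(\widehat\lambda)+\Delta\le\widehat G(\widehat\lambda)+\Delta+\Pi\le\widehat G(\lambda_\beta)+\Delta+\Pi\le G(\lambda_\beta)+2\Delta+2\Pi=2\Delta+2\Pi$. It is followed, as in the paper, by the quadratic lower bound at $\widehat\lambda$, giving $C=2/c$.
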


\begin{proof}
The definitions of $\Delta$ and $\Pi$, followed by empirical optimality, give
\[
\begin{aligned}
&G(\widehat\lambda)
\le G^0(\widehat\lambda)+\Delta
\le\widehat G(\widehat\lambda)+\Delta+\Pi
\le\widehat G(\lambda_\beta)+\Delta+\Pi \le G^0(\lambda_\beta)+\Delta+2\Pi
\le G(\lambda_\beta)+2\Delta+2\Pi
=2\Delta+2\Pi.
\end{aligned}
\]
The population lower bound applied at $\widehat\lambda$ therefore gives $c\|\widehat\lambda-\lambda_\beta\|_{\bar\mu,2}^2\le2\Delta+2\Pi$, which proves the second conclusion.
\end{proof}

\subsection{Proof of Theorem~\ref{thm:main} and Corollary~\ref{cor:density-first-stage}}

We now combine the population identification bound with the two criterion comparisons.

\begin{proof}[Proof of Theorem~\ref{thm:main}]
Proposition~\ref{prop:adjoint-quantitative-lower} supplies the population quadratic identification required by Lemma~\ref{lem:adjoint-criterion-comparison}. Apply that lemma with $(G,G^0,\widehat G,\widehat\lambda,\Delta,\Pi)=(G_\beta,G_{n,\beta},\widehat G_\beta,\widehat\lambda_\beta,\Delta_n,\Pi_n)$. Lemmas~\ref{lem:delta} and~\ref{lem:plugin} then give the stated stochastic rate.
\end{proof}

\begin{proof}[Proof of Corollary~\ref{cor:density-first-stage}]
Theorem~\ref{thm:fast_rate} gives $\|\widehat Q-Q_\beta\|_{\bar\mu,2}=O_P\left(\sqrt{\frac{|\cS||\cA|\log n}{n}}\right)$. Substituting this rate into Theorem~\ref{thm:main} gives
\[
\|\widehat\lambda_\beta-\lambda_\beta\|_{\bar\mu,2}^2
=
O_P\left( 
\left[
\sqrt{\frac{|\cS||\cA|\log n}{n}}
+
\beta
\sqrt{\frac{|\cS||\cA|\log n}{n}}
\right]
\right).
\]
\end{proof}

\section{Proof for Section \ref{sec:dependent-trajectory-nuisance-rates}}
\label{app:dependent-trajectory-nuisance-estimation}

\subsection{Pooled Empirical Averages under Dependent Trajectories}
\label{app:dependent-pooled-averages}
For a one-step function $f$, write
$
P_{N,H}f
=
\frac1{NH}
\sum_{i=1}^N
\sum_{t=0}^{H-1}
f(Z_{i,t}),$ and $
\bar P_Hf
=
\frac1H
\sum_{t=0}^{H-1}
\EE[f(Z_{i,t})].
$
For trajectory $i$, let $\mathcal F_{i,t}^- \coloneqq \sigma(S_{i,0},A_{i,0},R_{i,1},\ldots,S_{i,t},A_{i,t})$ be the pre-transition filtration at time $t$. For a bounded one-step function $f$, define
$
m_f(s,a)
\coloneqq
\EE[f(Z_{i,t})\mid S_{i,t}=s,A_{i,t}=a].
$
The Markov condition in Section~\ref{sec:formu}, together with the time-homogeneous reward and transition law, implies that this conditional map does not depend on $t$ and that
$$
\EE[f(Z_{i,t})-m_f(S_{i,t},A_{i,t})\mid \mathcal F_{i,t}^-]=0.
$$

\begin{lemma}[Pooled root-$NH$ averaging]
\label{lem:dependent-pooled-root-nh}
Suppose the Markov condition of Section~\ref{sec:formu} and Assumption~\ref{ass:behavior-process-mixing} hold. Let $f_H$ be a possibly $H$-dependent one-step function with no explicit time index and $\sup_H\|f_H\|_\infty<\infty$. Then $P_{N,H}f_H-\bar P_Hf_H=O_P((NH)^{-1/2})$.
The same conclusion holds uniformly over any class admitting a representation
$
f_{\theta,H}(Z)
=
\sum_{r=1}^{D}
a_{r,H}(\theta)h_{r,H}(Z),
$
where $D<\infty$ is fixed, $\sup_{\theta,H,r}|a_{r,H}(\theta)|<\infty$, and $\sup_{H,r}\|h_{r,H}\|_\infty<\infty$.
\end{lemma}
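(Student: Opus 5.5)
The plan is to reuse the martingale-plus-predictable decomposition from Lemma~\ref{lem:growing-empirical-decomposition}, this time with a deterministic integrand. Write
\[
f_H(Z_{i,t})-\EE[f_H(Z_{i,t})]
=\bigl\{f_H(Z_{i,t})-m_{f_H}(S_{i,t},A_{i,t})\bigr\}
+\bigl\{m_{f_H}(X_{i,t})-\EE[m_{f_H}(X_{i,t})]\bigr\},
\]
where $X_{i,t}=(S_{i,t},A_{i,t})$. The tower property gives $\EE[f_H(Z_{i,t})]=\EE[m_{f_H}(X_{i,t})]$, so averaging this identity over $(i,t)$ yields
\[
P_{N,H}f_H-\bar P_Hf_H=M_{N,H}+D_{N,H}.
\]
Here $M_{N,H}$ is the pooled average of the first brackets and $D_{N,H}$ is the pooled average of the second brackets. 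It then suffices to show $\EE[M_{N,H}^2]+\EE[D_{N,H}^2]\le C/(NH)$ and apply Chebyshev's inequality.

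\textbf{Martingale part.} The increment $U_{i,t}\coloneqq f_H(Z_{i,t})-m_{f_H}(X_{i,t})$ satisfies $\EE[U_{i,t}\mid\mathcal F_{i,t}^-]=0$ by the displayed Markov identity. For $t<u$, $U_{i,t}$ is $\mathcal F_{i,u}^-$-measurable, so $\EE[U_{i,t}U_{i,u}]=0$. Trajectories are independent and the increments are centered, so cross-trajectory covariances also vanish. Hence
\[
\EE[M_{N,H}^2]=(NH)^{-2}\sum_{i,t}\EE[U_{i,t}^2]\le 4\|f_H\|_\infty^2/(NH).
\]
This is the argument of Lemma~\ref{lem:pqv-to-rqv-comparison}.

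\textbf{Predictable part.} The function $m_{f_H}$ is a deterministic function of $X_t$ with $\|m_{f_H}\|_\infty\le\|f_H\|_\infty$. Lemma~\ref{lem:doeblin-predictable-residual-covariance}, which uses Assumption~\ref{ass:behavior-process-mixing}, gives
\[
\EE\Bigl[\Bigl(H^{-1}\textstyle\sum_t\{m_{f_H}(X_{i,t})-\EE m_{f_H}(X_{i,t})\}\Bigr)^2\Bigr]\le C\|f_H\|_\infty^2/H
\]
for each trajectory. Independence and centering across the $N$ trajectories then give $\EE[D_{N,H}^2]\le C\|f_H\|_\infty^2/(NH)$. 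The only point needing care is that the mixing coefficients refer to $X_t$ alone, which is exactly why $f_H$ is first projected onto $X_t$ through $m_{f_H}$. The time-homogeneity of $m_{f_H}$ from the Markov condition is what makes this projection legitimate despite the nonstationary behavior policy. The assumption $\sup_H\|f_H\|_\infty<\infty$ makes all constants uniform in $H$.

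\textbf{Uniform version.} For the class $f_{\theta,H}=\sum_{r=1}^D a_{r,H}(\theta)h_{r,H}$, linearity gives
\[
\sup_\theta|(P_{N,H}-\bar P_H)f_{\theta,H}|\le\Bigl(\sup_{\theta,H,r}|a_{r,H}(\theta)|\Bigr)\sum_{r=1}^D|(P_{N,H}-\bar P_H)h_{r,H}|.
\]
Each of the finitely many terms on the right is $O_P((NH)^{-1/2})$ by the scalar case, and since $D$ is fixed the supremum is $O_P((NH)^{-1/2})$ as well. No covering argument is needed. The main obstacle is the predictable part, and it is resolved by Lemma~\ref{lem:doeblin-predictable-residual-covariance}; the rest is bookkeeping.
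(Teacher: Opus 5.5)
Your proposal is correct and follows essentially the same route as the paper's proof. Both split the centered sum through $m_{f_H}$ into the innovation part and the predictable-drift part. Both kill cross-time covariances of the innovations by martingale orthogonality, control the drift with Lemma~\ref{lem:doeblin-predictable-residual-covariance}, use independence across trajectories with Chebyshev, and handle the uniform claim by linearity over the fixed finite basis. The only cosmetic difference is that you bound the two second moments separately at the pooled level, whereas the paper first combines the per-trajectory variances of the two components and then invokes independence.
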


\begin{proof}
Write $X_{i,t}=(S_{i,t},A_{i,t})$, $m_{f_H}(x)=\EE[f_H(Z_{i,t})\mid X_{i,t}=x]$, $\epsilon_{i,t}=f_H(Z_{i,t})-m_{f_H}(X_{i,t})$, and $r_{i,t}=m_{f_H}(X_{i,t})-\EE[m_{f_H}(X_{i,t})]$. The first display above gives $\EE[\epsilon_{i,t}\mid\mathcal F_{i,t}^-]=0$. For $t<u$, $\epsilon_{i,t}$ is $\mathcal F_{i,u}^-$-measurable, so iterated expectation gives $\EE[\epsilon_{i,t}\epsilon_{i,u}]=0$. Uniform boundedness therefore gives
$$
\Var\left(
\frac1H\sum_{t=0}^{H-1}\epsilon_{i,t}
\right)
\le
\frac{C}{H}.
$$
The predictable term $r_{i,t}$ is the centered version of the bounded function $m_{f_H}(X_{i,t})$. Applying Lemma~\ref{lem:doeblin-predictable-residual-covariance} with $f_H=m_{f_H}$ gives
$$
\Var\left(
\frac1H\sum_{t=0}^{H-1}r_{i,t}
\right)
\le
\frac{C}{H}.
$$
Combining the two components with $\Var(U+V)\le2\Var(U)+2\Var(V)$ shows that one trajectory average has variance $O(H^{-1})$. The $N$ trajectories are independent, so Chebyshev's inequality gives the first claim. For the finite-basis class,
$
\sup_\theta |(P_{N,H}-\bar P_H)f_{\theta,H}|
\le
C\sum_{r=1}^D |(P_{N,H}-\bar P_H)h_{r,H}|,
$
and the same rate follows because $D$ is fixed.
\end{proof}

The stochastic terms in the Bellman and adjoint arguments below reduce, after normalization, to fixed finite linear spans generated by bounded local transition coordinates, such as $1\{(S,A)=x\}$, $1\{(S,A)=x,S'=s'\}$, and $1\{(S,A)=x\}R$. The reward coordinate is bounded by Assumption~\ref{ass:if-envelope} when the Bellman residual is involved. The coefficients are uniformly bounded by finite state and action spaces, the fixed bounded parameter domains, Corollary~\ref{cor:app-smoothed-target-bounded} and Corollary~\ref{cor:target-adjoint-bound} when adjoint weights enter, and the $\beta$-uniform signed-derivative envelope in Lemma~\ref{lem:if-omega-tv-bound}. Lemma~\ref{lem:dependent-pooled-root-nh} therefore applies without assuming that the transition sequence $Z_t$ is itself strong mixing.

\subsection{Proof of Theorem~\ref{thm:dependent-trajectory-nuisance-rates}}
\label{app:dependent-nuisance-rates-proof}

\begin{proof}
We first establish the Bellman pair rate. Let $\widehat\mu_H(x)=P_{N,H}1\{X=x\}$. Applying Lemma~\ref{lem:dependent-pooled-root-nh} to the finitely many state-action indicators gives
$
\max_x|\widehat\mu_H(x)-\bar\mu_H(x)|
=
O_P((NH)^{-1/2}).
$
By Assumption~\ref{ass:pooled-support}, with probability tending to one,
$
\frac12\|g\|_{\bar\mu_H,2}^2
\le
P_{N,H}[g(S,A)^2]
\le
\frac32\|g\|_{\bar\mu_H,2}^2
$
uniformly over tabular $g$.
Proposition~\ref{prop:bellman-residual-identification}, applied with the pooled state-action marginal $\bar\mu_H$, gives
$
\|m_{Q,J}\|_{\bar\mu_H,2}
\ge
c
(\|Q-Q_\beta\|_{\bar\mu_H,2}+|J-J_\beta|)
$
for every admissible $(Q,J)$. Its constant is uniform in $H$ because the norm-equivalence proof of Lemma~\ref{lem:injectivity-equivalence} uses the common lower bound $\inf_H\min_x\bar\mu_H(x)>0$ from Assumption~\ref{ass:pooled-support}.

Let $\eta_{N,H}=(NH)^{-1/2}$ and $\delta_\beta(Z)=R-J_\beta+V_\beta(Q_\beta)(S')-Q_\beta(S,A)$. For each state-action cell $x$, the process $1\{X_{i,t}=x\}\delta_\beta(Z_{i,t})$ is a bounded martingale difference by the self-induced Bellman equation, and hence
$
\bar P_H[1\{X=x\}\delta_\beta(Z)]=0.
$
Lemma~\ref{lem:dependent-pooled-root-nh} therefore gives
$
P_{N,H}[1\{X=x\}\delta_\beta(Z)]=O_P(\eta_{N,H})
$
uniformly over the finitely many cells. Let $\widehat L_{N,H}$ be the criterion of Section~\ref{sec:tabular-fast-rate} with $\Pn$ replaced by $P_{N,H}$. Completing the square on the norm-comparison event yields
$
\widehat L_{N,H}(Q_\beta,J_\beta)
=
O_P(\eta_{N,H}^2).
$

For admissible $(Q,J)$, set $d_{Q,J}(Z)=\delta_{Q,J}(Z)-\delta_\beta(Z)$ and $e_{Q,J}=\|Q-Q_\beta\|_{\bar\mu_H,2}+|J-J_\beta|$. The $\beta$-uniform Lipschitz bound in Lemma~\ref{lem:fast-residual-lipschitz}, with the horizon-uniform support from Assumption~\ref{ass:pooled-support}, gives $\|d_{Q,J}\|_\infty\le Ce_{Q,J}$. After normalizing by $e_{Q,J}$ and $\|g\|_{\bar\mu_H,2}$ when these quantities are nonzero, the products $g(S,A)d_{Q,J}(Z)$ belong to a bounded fixed-dimensional tabular class of functions of $(S,A,S')$. Hence Lemma~\ref{lem:dependent-pooled-root-nh} gives
$$
\sup_{g,Q,J}
\frac{|(P_{N,H}-\bar P_H)[g d_{Q,J}]|}
{\eta_{N,H}\|g\|_{\bar\mu_H,2}e_{Q,J}}
=O_P(1),
$$
where the supremum is over nonzero denominators. If $e_{Q,J}=0$, then $d_{Q,J}=0$, while $\|g\|_{\bar\mu_H,2}=0$ implies $g=0$ by pooled support, so the corresponding product is zero; equivalently, set such ratios to zero.

We now apply the eventwise form of Lemma~\ref{lem:bellman-localization}. Fix $\epsilon>0$. The tightness of $\widehat L_{N,H}(Q_\beta,J_\beta)/\eta_{N,H}^2$ and of the preceding supremum provides a deterministic $A_\epsilon<\infty$ such that, for all sufficiently large $NH$, the truth-criterion and uniform product bounds hold with constant $A_\epsilon$, each outside an event of probability at most $\epsilon/3$. Intersect these two events with the empirical norm-comparison event above, whose failure probability is at most $\epsilon/3$ for all sufficiently large $NH$. The resulting event $\mathcal E_{\epsilon,N,H}$ has probability at least $1-\epsilon$.

On $\mathcal E_{\epsilon,N,H}$, all hypotheses of Lemma~\ref{lem:bellman-localization} hold at the original scale $\eta_{N,H}$. The identification constant, deterministic domain bound, and fixed test-ball radius are uniform in $H$ by Assumption~\ref{ass:pooled-support}; in particular, the fixed sup-norm test class contains a common $L_2(\bar\mu_H)$ ball. The remaining constants may depend on $\epsilon$ through $A_\epsilon$, but not on $N,H$, or $\beta$ along the sequence. Applying the lemma with $(P,\widehat P,\mu,\eta)=(\bar P_H,P_{N,H},\bar\mu_H,\eta_{N,H})$ gives, on $\mathcal E_{\epsilon,N,H}$,
$$
\|\widehat Q-Q_\beta\|_{\bar\mu_H,2}+|\widehat J-J_\beta|
\le C_\epsilon(NH)^{-1/2}.
$$
Since $\epsilon$ was arbitrary, this yields
$
\|\widehat Q-Q_\beta\|_{\bar\mu_H,2}+|\widehat J-J_\beta|
=
O_P((NH)^{-1/2}).
$

We next establish the adjoint-weight rate, using the Bellman pair rate above to control the plug-in derivative weights. The population identification argument in Appendix~\ref{app:plug-in-adjoint-erm-proofs} is deterministic. Proposition~\ref{prop:adjoint-quantitative-lower}, applied under the pooled law, therefore gives
$
c\|\lambda-\lambda_\beta\|_{\bar\mu_H,2}^2\le G_\beta(\lambda)
$
uniformly over the relevant horizons and $\lambda\in\Lambda$.
Let $G_{N,H,\beta}^0$ be the reference empirical criterion obtained by using $P_{N,H}$ and the all-transition stabilized quadratic norm, with $n$ replaced by $NH$, while keeping the derivative weights evaluated at $Q_\beta$. On the empirical norm-comparison event established above, the inner supremum over $q\in\cQ$ is restricted to a fixed deterministic ball, as in Lemma~\ref{lem:delta}. On this ball, the criterion is generated by bounded fixed-dimensional tabular classes, uniformly in $\beta$ by Lemma~\ref{lem:if-omega-tv-bound}. Hence Lemma~\ref{lem:dependent-pooled-root-nh} gives
$
\Delta^{\mathrm{adj}}_{N,H}
:=
\sup_{\lambda\in\Lambda}
|G_{N,H,\beta}^0(\lambda)-G_\beta(\lambda)|
=
O_P((NH)^{-1/2}).
$

It remains to control the additional error from replacing $Q_\beta$ by $\widehat Q$ in the derivative weights. Lemma~\ref{lem:plugin}, whose derivative-weight step uses the master Hessian bound in Lemma~\ref{lem:softmax-calculus}, gives the required plug-in perturbation. Assumption~\ref{ass:pooled-support} makes its norm-equivalence constant uniform in $H$, so
$
\sup_s\sum_a|\widehat\omega_\beta(a\mid s)-\omega_{\beta,Q_\beta}(a\mid s)|
\le C\beta\|\widehat Q-Q_\beta\|_{\bar\mu_H,2}.
$
Thus, if $\|\widehat Q-Q_\beta\|_{\bar\mu_H,2}=O_P(r_Q)$, then
$
\Pi^{\mathrm{adj}}_{N,H}
:=
\sup_{\lambda\in\Lambda}
|\widehat G_{N,H,\beta}(\lambda)-G_{N,H,\beta}^0(\lambda)|
=
O_P(\beta r_Q).
$

Applying Lemma~\ref{lem:adjoint-criterion-comparison} with
$
(G,G^0,\widehat G,\widehat\lambda,\Delta,\Pi)
=
(G_\beta,G_{N,H,\beta}^0,\widehat G_{N,H,\beta},\widehat\lambda_\beta,\Delta^{\mathrm{adj}}_{N,H},\Pi^{\mathrm{adj}}_{N,H})
$
therefore gives
$
\|\widehat\lambda_\beta-\lambda_\beta\|_{\bar\mu_H,2}^2
=
O_P((NH)^{-1/2}+\beta r_Q).
$
The Bellman-pair rate proved above gives $r_Q=(NH)^{-1/2}$, and hence
$$
\|\widehat\lambda_\beta-\lambda_\beta\|_{\bar\mu_H,2}
=
O_P\!\left(\sqrt{1+\beta}\,(NH)^{-1/4}\right).
$$
Together with the Bellman pair rate, this proves Theorem~\ref{thm:dependent-trajectory-nuisance-rates}.
\end{proof}
\subsection{Verification of the Nuisance Conditions}
\label{app:dependent-inference-verification}

For trajectory-level cross-fitting with fixed $K$, each training sample contains a fixed fraction of the $N$ independent trajectories, so Theorem~\ref{thm:dependent-trajectory-nuisance-rates} applies foldwise with the same $NH$ order. Thus,
$q_k=\|\widehat Q_{-k}-Q_\beta\|_{\bar\nu_H,2}=O_P((NH)^{-1/2})$,
$j_k=|\widehat J_{-k}-J_\beta|=O_P((NH)^{-1/2})$, and
$\ell_k=\|\widehat\lambda_{-k}-\lambda_\beta\|_{\bar\mu_H,2}
=O_P(\sqrt{1+\beta}\,(NH)^{-1/4})$,
where the rate for $q_k$ follows from finite-state norm equivalence under Assumption~\ref{ass:pooled-support}.

Substituting these rates into the definition of $b_k$ gives
\[
\beta q_k^2=O_P(\beta(NH)^{-1}),\qquad
\ell_kj_k+\ell_kq_k=O_P(\sqrt{1+\beta}\,(NH)^{-3/4}),
\qquad
\beta\ell_kq_k^2=O_P(\beta\sqrt{1+\beta}\,(NH)^{-5/4}).
\]
After multiplication by $(NH)^{1/2}$, the corresponding orders are $
\frac{\beta}{\sqrt{NH}},$ $
\frac{\sqrt{1+\beta}}{(NH)^{1/4}}$,and $
\frac{\beta\sqrt{1+\beta}}{(NH)^{3/4}},$
all of which converge to zero when $\beta=o(\sqrt{NH})$. The same condition gives $\ell_k=o_P(1)$, while $q_k,j_k=o_P(1)$. Since the number of folds is fixed,
$\max_{1\le k\le K}b_k=o_P((NH)^{-1/2})$,
verifying Assumption~\ref{ass:if-growing-rate}, and Assumption~\ref{ass:if-consistency} follows as well.

\end{document}